\documentclass{article}
\usepackage{graphicx} 
\usepackage{hyperref}

\usepackage{enumerate}
\usepackage{color}
\usepackage{amssymb,amsmath,amsthm,mathtools}
\usepackage{fullpage}
\usepackage{tcolorbox, todonotes, nicematrix}
\usepackage{tabularx}
\usepackage{booktabs}
\usepackage{subcaption}
\usepackage{chngcntr}
\usepackage{apptools}

\usepackage{bbm}

\newcommand{\dd}{{\rm d}}
\newcommand{\N}{\mathbb{N}}

\newcommand{\veps}{\varepsilon}

\newcommand{\eps}{\veps}
\newcommand{\M}{\mathcal{M}}
\newcommand{\E}{\mathbb{E}}

\newcommand{\X}{\mathcal{X}}
\newcommand{\T}{T}

\newcommand{\XX}{\bar{\mathcal{X}}}

\newcommand{\R}{\mathbb{R}}
\newcommand{\nc}{\normalcolor}

\numberwithin{equation}{section}

\newtheorem{theorem}{Theorem}
\newtheorem{proposition}{Proposition}
\newtheorem{assumption}{Assumption}
\newtheorem{lemma}{Lemma}
\newtheorem{remark}{Remark}

\newtheorem{corollary}{Corollary}
\newtheorem{setting}{Setting}

\numberwithin{equation}{section}

\AtAppendix{\counterwithin{theorem}{section}}
\AtAppendix{\counterwithin{proposition}{section}}
\AtAppendix{\counterwithin{lemma}{section}}

\title{Consistency of augmentation graph and network approximability in contrastive learning}
\author{Chenghui Li\thanks{Equal contribution. Department of Statistics, University of Wisconsin-Madison, Wisconsin, USA, \texttt{cli539@wisc.edu}} \quad and \quad A. Martina Neuman\thanks{Equal contribution. Faculty of Mathematics, University of Vienna, Vienna, Austria, \texttt{neumana53@univie.at.ac}}}

\begin{document}

\maketitle


\begin{abstract}
Contrastive learning leverages data augmentation to develop feature representation without relying on large labeled datasets. However, despite its empirical success, the theoretical foundations of contrastive learning remain incomplete, with many essential guarantees left unaddressed, particularly the realizability assumption concerning neural approximability of an optimal spectral contrastive loss solution.
In this work, we overcome these limitations by analyzing pointwise and spectral consistency of the augmentation graph Laplacian. 
We establish that, under specific conditions for data generation and graph connectivity, as the augmented dataset size increases, the augmentation graph Laplacian converges to a weighted Laplace-Beltrami operator on the natural data manifold. 
These consistency results ensure that the graph Laplacian spectrum effectively captures the manifold geometry. 
Consequently, they give way to a robust framework for establishing neural approximability, directly resolving the realizability assumption in a current paradigm.
\end{abstract}

\textit{Key words:} contrastive learning, manifold learning, augmentation graph, spectral convergence, pointwise consistency, approximation theory

\section{Introduction}

A major challenge in machine learning is the high cost of collecting large, labeled image datasets, which are essential for the success of many algorithms. To address this, data augmentation techniques have been widely utilized, as noted in \cite{mumuni2022data,shorten2021text}. In computer vision, for example, these techniques include transformations such as rotations, flips, and alterations in lighting or color. 
As a result, augmented data not only diversify the training set but also significantly improve the representational power of neural networks, enabling robust generalization from otherwise limited datasets. 

In self-supervised learning, augmentation plays a pivotal role by generating multiple views of the same image, thus providing the essential self-supervision signals that drive effective learning.
Recent advances have significantly benefited from contrastive learning, a paradigm that harnesses data augmentation to drive feature representation development without the need for extensive labeled datasets. 
In computer vision, contrastive methods aim to maximize the similarity between positive pairs, such as augmented versions of the same image, while minimizing the similarity between negative pairs, such as randomly sampled images.
In text-image representation learning, contrastive methods exploit semantically aligned text-image pairs, using an objective function to capture the underlying semantic relationships.
While empirically successful, a robust theoretical foundation for contrastive learning remains incomplete. Existing theoretical frameworks, including the original work \cite{arora2019theoretical}, and \cite{tosh2020contrastive, tosh2021contrastive}, often assume conditional independence between positive pairs given the class label. However, this assumption is frequently violated in practice, particularly when highly correlated positive pairs, such as image augmentations, are employed. As observed by \cite{haochen2021provable}, the two views of the same natural image exhibit strong correlations that cannot be fully mitigated through conditioning on the label. As a result, these frameworks tend to fall short of explaining the practical success of contrastive learning. 

\cite{haochen2021provable} proposed a framework for contrastive learning that does not rely on the conditional independence of positive pairs. They introduced the notion of an \emph{augmentation graph}, in which edges between augmentations are weighted according to the probability that augmentations originate from the same underlying natural data point (e.g., through Gaussian noise augmentation). 
Leveraging this graph alongside their proposed spectral contrastive loss, they framed contrastive learning as performing spectral embedding on the population-level augmentation graph. This embedding task can then be succinctly formulated as a contrastive learning objective applied directly to neural network representations.

Despite its utility in various applications, the theoretical framework introduced by \cite{haochen2021provable} exhibits several limitations. First, it implicitly assumes (e.g., \cite[Example 3.8]{haochen2021provable}) that similar natural data points (such as images of dogs) lie on the same manifold -- and that data points belonging to distinct categories (e.g., dogs versus cats) occupy separate manifolds. This \textit{separability assumption} underlies the derivation of downstream generalization error bounds within their spectral contrastive framework. 
However, this perspective can be limiting in practice -- for instance, when distinguishing between visually similar dog breeds like Huskies and Alaskan Malamutes, which lie close together on the same overarching dog manifold.
This situation may undermine the \textit{recoverability assumption} (\cite[Assumption 3.5]{haochen2021provable}) and challenge the interpretation of the manifold separability assumption for distinct semantic groups. 
Addressing this issue requires two key relaxations: (i) the ability to resolve fine-grained structures within a shared manifold, and (ii) a more nuanced notion of inter-class separability that extends beyond strict manifold disjointness. Further, the \textit{realizability assumption} \cite[Assumption~3.6]{haochen2021provable} -- which can be paraphrased as 
\begin{center}
    \textit{``There exists a deep neural network capable of achieving a global minimum of the spectral contrastive loss on population data''}
\end{center}
-- implicitly requires knowledge of the regularity of the target solution function. However, such regularity has not been rigorously established, complicating efforts to determine the necessary neural network capacity and thus limiting the scope and strength of the theoretical guarantees.

To address these limitations, we propose a detailed study of pointwise and spectral consistency of the augmentation graph Laplacian.
We show that the augmentation graph Laplacian can be leveraged to unveil the geometry of the underlying natural data manifold. 
Specifically, within a range of data generation and graph connectivity parameters (see e.g. \eqref{def:condprob}, \eqref{def:augmentedLap} below respectively), the graph Laplacian converges to a continuum (weighted) Laplace-Beltrami operator on the manifold, thereby revealing its underlying structure. 
We give a meaning to this ``passage to the continuum'' next. 
We assume that the original natural dataset consists of i.i.d. $N$ samples drawn from an $m$-dimensional smooth, connected, compact manifold embedded in $\mathbb{R}^d$. 
From these natural data points, an augmented dataset of size $n\geq N$ is generated. Each pair of augmented data points is connected by an edge whose weight is determined by their average shared similarity, up to a graph connectivity parameter $\eps$. 
As $n$ tends to infinity and $\eps$ approaches zero sufficiently slowly (compared to $n$), the augmentation graph Laplacian, when applied to a smooth test function within a sufficiently small neighborhood of the manifold, converges with high probability to the continuum operator acting on the test function evaluated over the projections of the augmented data. 
Then, we demonstrate spectral consistency by showing that the graph Laplacian eigenvectors converge to the corresponding weighted Laplace-Beltrami eigenfunctions, measured in an $L^2$-type distance.

The eigenfunctions of a weighted Laplace-Beltrami operator are known to be regular (see, e.g., \cite[Section~6.3]{gilbarg1977elliptic}). Consequently, an immediate implication of our consistency results is the effective relaxation of the recoverability assumption. This is because weak (strong) energetic connections between distinct (identical) augmented subgroups can be mapped to observable variations in the eigenfunctions. 
More significantly, our results also facilitate a relaxation of the realizability assumption. The regularity of the eigenfunctions ensures their approximation by a neural network with bounded complexity, and their proximity to the graph Laplacian eigenvectors further enables the same network to approximate these eigenvectors with high accuracy. 
This offers a concrete estimate of the neural network capacity necessary to replicate an optimal spectral contrastive loss solution and thus concludes the validity of the realizability assumption. 
Finally, our analysis characterizes a precise regime of data generation and graph connectivity parameters in which the spectral proximity necessary for neural approximability holds with high probability.
By relating the graph Laplacian to the Laplace-Beltrami operator, this specification also pinpoints the conditions that enable the graph structure to faithfully capture the geometry of the underlying data manifold.
Such detailed guarantees are absent in \cite{haochen2021provable}, as we illustrate empirically in Section \ref{sec:sim}.

We now outline our novel theoretical contributions, which are twofold. The primary contribution lies in the field of manifold learning, while the secondary, an application of the first, pertains to approximation theory.

\paragraph{Manifold learning.} In one aspect, our work falls within the domain of manifold learning, where the underlying geometry of the graphical dataset, and its subsequent impact on the performance of machine learning algorithms, are explored through the consistency analysis of the corresponding graph Laplacians in the non-asymptotic, large-sample regime.
However, a central assumption in the classical setting is that graphical data are i.i.d. samples drawn from a distribution $\mu$ supported on a low-dimensional manifold (see a wealth of works such as \cite{Dmitri2014Graph, trillos2018error, calder2022improved, Dunson2021Spectral, Wormell2021Spectral, trillos2023large, calder2022lipschitz}). In contrast, augmented graphical data points may deviate from the natural data manifold (e.g. through a Gaussian perturbation), preventing the direct application of traditional manifold learning techniques to analyze the augmentation graph Laplacian.
Moreover, the augmented data points may exhibit dependencies. 
Therefore, ensuring a ratio of augmented to natural data points ($N\leq n\leq CN$)  that guarantees sufficient independence for the desired concentration effects is crucial to the validity of our analysis.
Additionally, existing research on graph Laplacian consistency has predominantly focused on specific graph constructions, often relying on \textit{primary similarity measures} (e.g., the $\eps$-proximity graph).
Our augmentation graph, however, is based on the theoretical principles in \cite{haochen2021provable}, which employs a concept akin to a \textit{secondary similarity measure} (see, e.g., the notion of shared nearest neighbors graph in \cite{ayad2003refined}).
Namely, given a natural data point $x$, we assume that the augmented data points $\bar{x}$ follow a distribution $\mathbb{P}(\bar{x}|x)$.
Two augmented data points are considered similar if both are likely, on average, to be generated from the same natural data point. Accordingly, the weight between two augmented data points is determined by $\mathbb{E}_{x\sim \mu} \big[\mathbb{P}(\bar{x}|x)\mathbb{P}(\bar{x}'|x)\big]$, which reflects a second-order (secondary) similarity derived from the first-order similarities $\mathbb{P}(\bar{x}|x)$ and $\mathbb{P}(\bar{x}'|x)$. 
Thus, in this regard, our work makes a significant theoretical contribution to manifold learning by introducing novel methods to address data augmentation graphs. 
These methods, to the best of our knowledge, have not been previously presented.
Our consistency results achieve the most optimal convergence rates available in the current literature.
Explicitly, if the graph connectivity parameter $\eps$ scales like
\begin{equation} \label{lengthscale}
    C\Big(\frac{\log n}{n}\Big)^{\frac{1}{m+4}} \leq \eps\ll 1, 
\end{equation}
then with high probability, both the pointwise and spectral convergence rates scale linearly in the graph connectivity parameter $\eps$.
Moreover, if $\eps = C\big(\frac{\log n}{n}\big)^{\frac{1}{m+4}}$, then we obtain the convergence rate of $\mathcal{O}(n^{-\frac{1}{m+4}})$, up to a log factor. Since $N \leq n \leq CN$, all these estimates can be expressed in terms of the natural dataset size $N$ in place of the augmented dataset size $n$.



\paragraph{Approximation theory.} 
Another, more application-oriented contribution of our work is a network approximation result for the target classifier function. 
This connects to the emerging approximation theory of spectral neural networks (see \cite{li2023spectral}), which seeks to apply manifold learning to graph analysis.
Building on the framework in \cite{haochen2021provable}, contrastive learning can be interpreted as a parametric variant of the spectral embedding algorithm applied to augmented graphical datasets.
Consequently, the eigenvectors of the augmentation graph Laplacian encapsulate the spectral embedding information of a spectral contrastive loss solution, which corresponds to the target classifier function. 
This perspective enables us to deduce neural approximability of the classifier by leveraging the spectral convergence of the augmentation graph Laplacian to the Laplace-Beltrami operator on the natural data manifold and the regularity of the Laplace-Beltrami eigenfunctions.
Specifically, the regularity of the eigenfunctions allows them to be approximated by a neural network with bounded complexity, and their proximity to the corresponding graph Laplacian eigenvectors further guarantees that the neural network can similarly approximate the latter with high fidelity.
We demonstrate that, in the scale \eqref{lengthscale}, with high probability (\textit{only} due to the randomness of data) the required network architecture has a bounded complexity outlined in \eqref{networkcomplex}, with a depth of $\mathcal{O}(\log (\eps^{-1}) + \log d)$ and a width of $\mathcal{O}(\eps^{-m}+d)$.
Thus, if $\eps = C\big(\frac{\log n}{n}\big)^{\frac{1}{m+4}}$, then the network depth scales at most logarithmically with the data size $n$ (or $N$), while the network width scales sublinearly.
This result is different from standard approximation theory (e.g., \cite{chen2022nonparametric}) and memorization frameworks (e.g., \cite{yun2019small}), as it requires a network to replicate the eigenvectors while also generalizing to the true eigenfunctions of the (weighted) Laplace-Beltrami operator.

\subsection{Simulation} \label{sec:sim}

In this simulation, the natural data $x_i$ are uniformly sampled from the dumbbell.
We expect the first nontrivial eigenvectors of the augmentation graph Laplacian to reveal the bottleneck, enabling geometric recovery via $k$-means clustering when projected onto the dumbbell. This expectation follows from the fact that the first nontrivial eigenfunctions of the limit Laplace-Beltrami operator of a dumbbell capture its shape (see \eqref{Laplace-Beltrami} and Remark~\ref{rem:modification} afterward).
Specifically, we first generate $4000$ samples i.i.d. uniformly from a $2$-dimensional dumbbell.
Our generation scheme for augmentation graph requires three parameters: a graph connectivity parameter $\eps= n^{-\frac{1}{2+4}}$, and different choices of, an edge weight parameter  $\eps_{\rm w}$ and a data generation parameter $\eps_{\rm p}$. A detailed introduction of the latter parameters is given in Section~\ref{sec:prelim}; see also as summary in Table~\ref{tab:symbols}.
We note that the connectivity parameter $\eps=n^{-\frac{1}{2+4}}$ corresponds to the optimal rate for the manifold learning problem discovered in \cite{calder2022improved} -- which is $\eps=n^{-\frac{1}{m+4}}$ with $m=2$ -- up to a logarithmic factor. 
When $\eps_{\rm p}=\eps_{\rm w}=\eps$, the setting reduces to that in \cite{haochen2021provable}, causing the manifold's geometry to become ambiguous; see Figure \ref{fig:eps_p1}. In contrast, selecting the parameters as suggested in Table~\ref{tab:symbols}, the manifold geometry is recovered; see Figure \ref{fig:eps_p3}.

\begin{figure}[htbp]
    \par\medskip
    \centering
		\begin{subfigure}[t]{0.45\textwidth}
		\centering
        \includegraphics[width=0.86\linewidth]{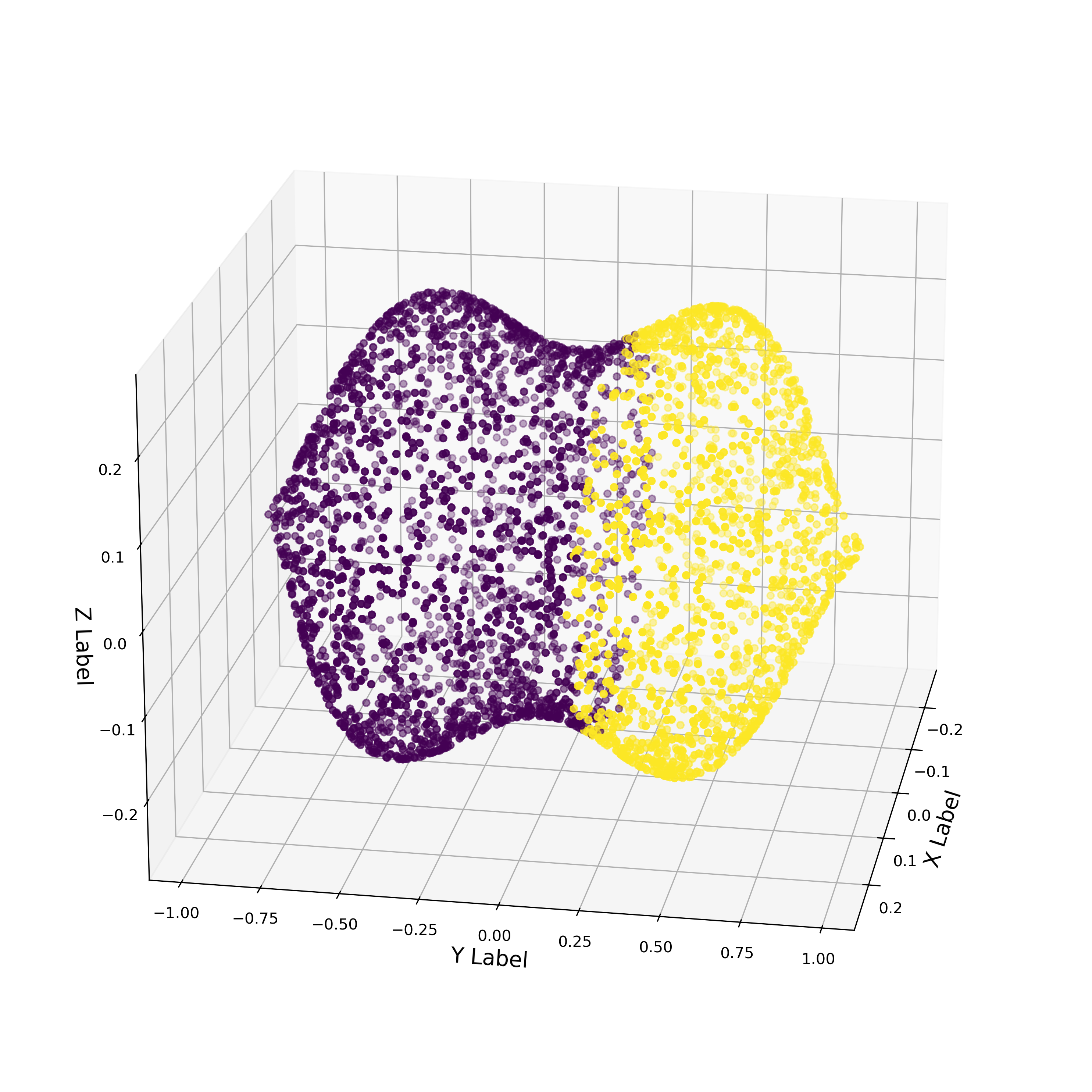}
		\caption{When choosing $\eps_{\rm p},\eps_{\rm w},\eps_{\rm n}$ according to Table \ref{tab:symbols} with $\tau=2$, $\eta=1$.}
		\label{fig:eps_p3}
		\end{subfigure}
		\begin{subfigure}[t]{0.45\textwidth}
		\centering
		\includegraphics[width=0.9\linewidth]{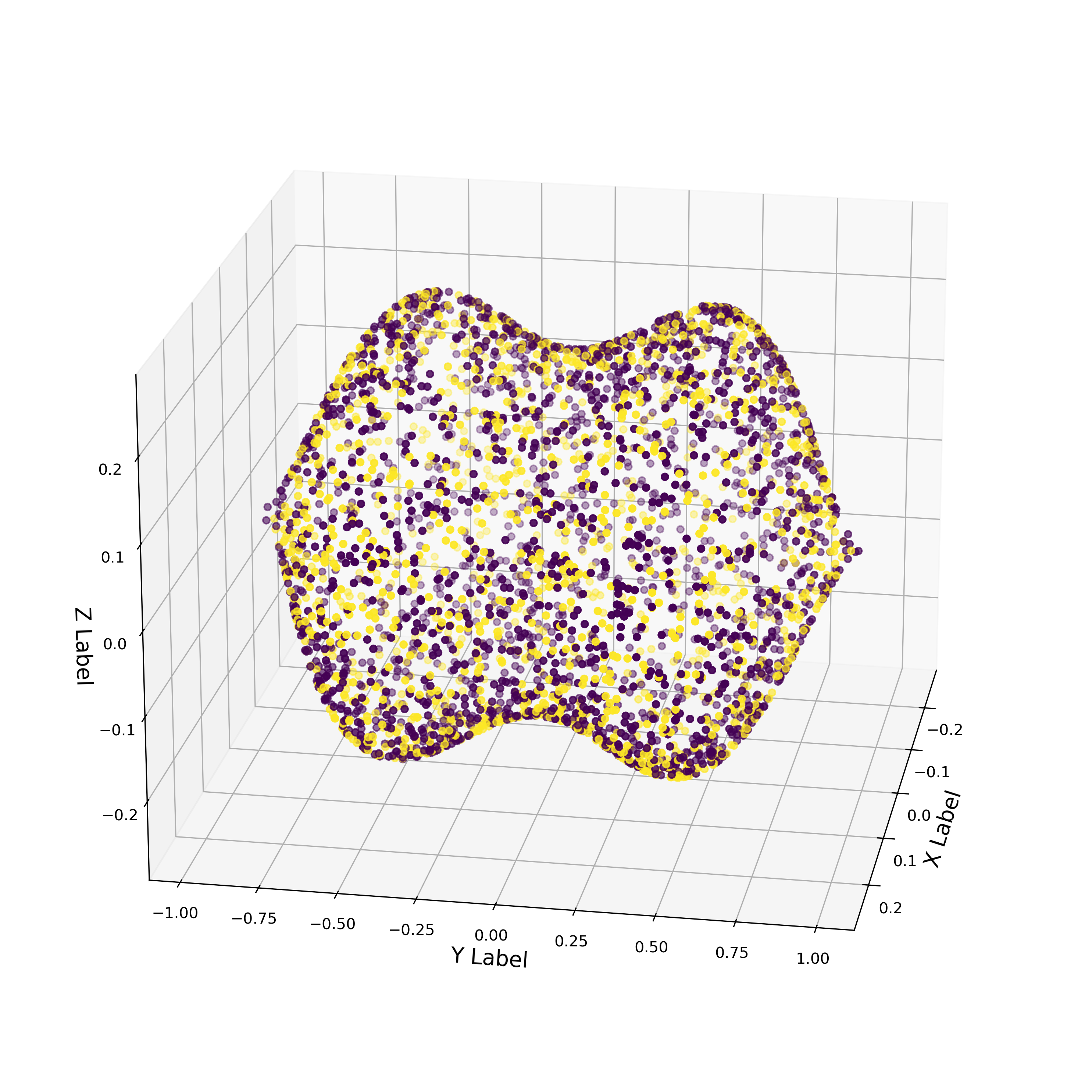}
		\caption{When choosing $\eps_{\rm p}=\eps_{\rm w}=\eps$.}
	\label{fig:eps_p1}
		\end{subfigure}
\caption{(a) When choosing $\eps_{\rm p},\eps_{\rm w},\eps_{\rm n}$ according to Table \ref{tab:symbols} with $\tau=2$, $\eta=1$, the first non-trivial eigenvectors of the augmentation graph Laplacian detects the bottleneck. (b) When choosing $\eps_{\rm p}=\eps_{\rm w}=\eps$, the first non-trivial eigenvectors of the augmentation graph Laplacian cannot detect the bottleneck.}
\end{figure}

\subsection{Outline}

Our paper is organized as follows.
In Section~\ref{sec:prelim}, we provide an overview of data augmentation (Section~\ref{sec:aug}) as well as essential preliminaries, conventions, and assumptions (Section~\ref{sec:assumption}) that will be consistently employed throughout the paper. Section~\ref{sec:setup} is dedicated to describing the framework for each of the main results. Section~\ref{sec:pointwiseconsistency} focuses on establishing prerequisites for pointwise convergence, and Section~\ref{sec:spectralconsistency} on the framework for spectral convergence. In Section~\ref{sec:contrastivelearn}, we provide the setup for spectral contrastive learning. The main results are formally stated in Section~\ref{sec:results}. 
Section~\ref{sec:numerics} presents numerical experiments to validate the choice of parameters that work for contrastive learning.
Section~\ref{sec:pointwisewhole} is devoted to the proof of Theorem~\ref{thm:pointwise}, with the main proof given in Section~\ref{sec:pointwiseproof}. Section~\ref{sec:spectralwhole} is devoted to the proof of Theorem~\ref{thm:spectralconvergence}, with the main proof given in Section~\ref{sec:spectralproof}. Finally, Section~\ref{sec:contrastiveproof} presents the proof of Theorem~\ref{thm:contrastive}, a consequence of Theorem~\ref{thm:spectralconvergence}.
The Appendix contains all supplementary results, including relevant concentration results and helpful geometry, as well as detailed calculations to support the main text.

\section{Preliminary} \label{sec:prelim}

\subsection{Augmented data and augmentation graph Laplacian} \label{sec:aug}

Let $\mathcal{M}\subset\mathbb{R}^{d}$ be an $m$-dimensional smooth, closed (compact), connected, orientable manifold without boundary. Let $\mu$ be a distribution on $\mathcal{M}$, and let $x_1,\dots,x_{N}\sim\mu$ be i.i.d. samples, referred to as natural data points. 
Let $\X:=\{x_1,\dots,x_{N}\}$. The specific ordering $x_{i}$ is immaterial; thus we refer to a sample as $x\in\X$. 
From $\X$, we generate an augmented dataset $\XX$, where each element $\bar{x}\in\XX$ is an augmented datum. 
For each $x\in\M$, let $\mathbb{P}(\cdot|x)$ denote the conditional distribution that $\bar{x}\in\XX$ follows. We specify that $\mathbb{P}(\cdot|x)$ takes the form
\begin{equation} \label{def:condprob}
    \mathbb{P}(\bar{x}|x) = \pi_{\eps_{\rm p}}(\bar{x}|x):=\frac{1}{(\eps_{\rm p}\sqrt{2\pi})^{d}}\exp\Big(-\frac{\|x-\bar{x}\|^2}{2\eps_{\rm p}^2}\Big),
\end{equation}
for some $\eps_{\rm p}>0$ to be chosen later. 
Here, $\|\cdot\|$ denotes the usual Euclidean vector norm in $\mathbb{R}^{d}$.
Intuitively, $\mathbb{P}(\bar{x}|x)$ gives the likelihood of $\bar{x}$ being generated from $x\in\X$. 
Note that, 
\begin{equation*}
    \int_{\mathbb{R}^{d}} \mathbb{P}(y|x)\dd y = \int_{\mathbb{R}^{d}} \pi_{\eps_{\rm p}}(y|x) \dd y = 1.
\end{equation*}
The law \eqref{def:condprob} indicates that the majority of the generated data is not far from $\M$. 
Precisely, there exists $0<r<\infty$ such that the event
\begin{equation} \label{def:dist}
    {\rm dist}(\bar{x},\mathcal{M}) < r
\end{equation}
holds with a high probability, where ${\rm dist}(\bar{x},\mathcal{M}):=\inf\{\|\bar{x}-z\|: z\in\mathcal{M}\}$.
This essential fact is the content of Lemma~\ref{lem:probcompact} below.

Let $\#\XX = n\geq N$. 
Note that $\XX$ itself is not an independent set of random variables but only partially so. 
To ensure sufficient independence from the augmented data cloud for our analysis, we impose the following assumption, which is reasonably aligned with practical settings.

\begin{assumption} \label{assum:partindependent}
Let $C_0\in\mathbb{N}$ be predetermined. We assume either of the following holds:
\begin{enumerate}
    \item for each augmented data point $\bar{x}$, there exist at most $C_0$ other points $\bar{x}'$ dependent on $\bar{x}$;
    \item for any natural data point $x$, at most $C_0$ augmented data points $\bar{x}$ are generated from $x$.
\end{enumerate}
\end{assumption}

Note that point (2) above, which is easier to implement in practice, implies point (1).
Under Assumption~\ref{assum:partindependent}, $N\leq n\leq C_0 N$\footnote{The condition $n \geq N$ reflects practical considerations, while $n \leq C_0 N$ ensures sufficient independence in the augmented data cloud.}. We construct an augmentation graph $G_{\XX}$ from $\XX$ as follows. 
For $\bar{x}\not=\bar{x}'\in\XX$, let
\begin{equation} \label{def:edgeweight}
    \omega(\bar{x},\bar{x}') := (\eps_{\rm w}\sqrt{2\pi})^{2d} \mathbb{E}_{x\sim \mu} \Big[\pi_{\eps_{\rm w}}(\bar{x}|x)\pi_{\eps_{\rm w}}(\bar{x}'|x)\Big],
\end{equation}
for some $\eps_{\rm w}$ related to $\eps_{\rm p}$ in \eqref{def:condprob}, to be specified shortly. 
Let $\eps>0$.
With $\omega(\bar{x}, \bar{x}') \in (0,1)$ representing the connection strength between $\bar{x}$ and $\bar{x}'$, we define the augmentation graph $G_{\XX}$ to consist of a vertex set in $\XX$ and the edge set
\begin{equation*}
    \{(\bar{x},\bar{x}')\in\XX\times\XX: 0<\|\bar{x}-\bar{x}'\|\leq\eps\}.
\end{equation*}
To the graph $G_{\XX}$, we associate an \textit{unnormalized} graph Laplacian operator $\mathcal{L}_{\rm aug}$ as follows\footnote{The subscript ``aug" stands for ``augmentation."}. 
For $f: \XX\to\R$, we let
\begin{equation} \label{def:augmentedLap}
    \mathcal{L}_{\rm aug} f(\bar{x}) := \frac{1}{n\eps_{\rm w}^m\eps^{m+2}}\sum_{\bar{x}'\in\XX: 0<\|\bar{x}'-\bar{x}\|\leq\eps} \omega(\bar{x},\bar{x}')(f(\bar{x}')-f(\bar{x})).
\end{equation}
It is evident from \eqref{def:augmentedLap} that $\eps$ acts as a graph connectivity parameter. 
We now dictate the relationship among the parameters $\eps_{\rm p}, \eps_{\rm w}$ and $\eps$.
Let $\tau>0$, $\eta>0$. Set
\begin{equation*} 
    \eps_{\rm w} := \eps^{\tau} \quad\text{ and }\quad
    \eps_{\rm p} := \eps^{\tau+1}\eta^{1/d}.
\end{equation*}
The choice of $\eps_{\rm p}\ll \eps_{\rm w}$ is made to ensure consistency for $\mathcal{L}_{\rm aug}$, as we shall see. 
In addition, it directly supports the validity of the claim \eqref{def:dist}, whose proof concludes this subsection.

\begin{lemma} \label{lem:probcompact}
Let $\eps>0$ and $0<r<\infty$. 
Then with probability at least $1-2dn\exp(-r^2/(2d\eps_{\rm p}^2))$, ${\rm dist}(\bar{x},\mathcal{M})< r$, for all $\bar{x}\in\XX$. In particular, letting $\eps_{\rm n} := \eps^{\tau+1}$, 
the event
\begin{equation} \label{90dist}
    {\rm dist}(\bar{x},\mathcal{M}) < \eps_{\rm n},
    \quad \forall\,\bar{x}\in\XX,
\end{equation}
with probability at least $1-2dn\exp(-\eta^{-2/d}/(2d))$.
\end{lemma}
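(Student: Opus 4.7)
The plan is to reduce the claim to a standard Gaussian tail bound and then apply a union bound over the $n$ augmented points. For each $\bar{x} \in \XX$, the augmentation scheme specifies a source $x \in \X \subset \M$ such that, conditionally on $x$, $\bar{x}$ is distributed as $\NM(x, \eps_{\rm p}^2 \I_d)$ by \eqref{def:condprob}. Since $x \in \M$, we have $\dist(\bar{x},\M) \leq \|\bar{x} - x\|$, so it suffices to produce a tail bound on $\|\bar{x} - x\|$ and then union bound.

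For the single-point tail estimate, I would write $\bar{x} - x = \eps_{\rm p} Z$ where $Z = (Z_1,\dots,Z_d)$ has i.i.d. standard normal coordinates, so
$$ \P\bigl(\|\bar{x}-x\| \geq r\bigr) = \P\Bigl(\textstyle\sum_{i=1}^d Z_i^2 \geq r^2/\eps_{\rm p}^2\Bigr). $$
The event on the right forces at least one coordinate to satisfy $Z_i^2 \geq r^2/(d\eps_{\rm p}^2)$; combining this pigeonhole observation with a union bound over the $d$ coordinates and the standard scalar Gaussian tail $\P(|Z_1|\geq t) \leq 2 e^{-t^2/2}$ yields
$$ \P\bigl(\|\bar{x}-x\| \geq r\bigr) \leq 2d \exp\Bigl(-\frac{r^2}{2d\,\eps_{\rm p}^2}\Bigr). $$

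A second union bound, this time over the $n$ augmented points in $\XX$, produces the first conclusion of the lemma. For the specialization, substitute $r = \eps_{\rm n} = \eps^{\tau+1}$ and recall that $\eps_{\rm p} = \eps^{\tau+1}\eta^{1/d}$; then $r^2/\eps_{\rm p}^2 = \eta^{-2/d}$, which plugged into the bound above recovers exactly $1 - 2dn\exp(-\eta^{-2/d}/(2d))$.

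There is no serious obstacle here; the argument is an elementary Gaussian concentration calculation. The only design-level subtlety worth flagging is the factor $1/d$ in the exponent, a manifestation of the isotropic noise having to be split coordinatewise. This is precisely what forces the scaling $\eps_{\rm p} \sim \eps^{\tau+1}\eta^{1/d}$ (rather than $\eps_{\rm p} \sim \eps^{\tau+1}$ alone): the additional $\eta^{1/d}$ is tuned so that $r^2/\eps_{\rm p}^2$ is dimension-free, guaranteeing that the failure probability $2dn\exp(-\eta^{-2/d}/(2d))$ can be made small by choosing $\eta$ sufficiently small, uniformly in the ambient dimension $d$.
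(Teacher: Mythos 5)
Your proof is correct and follows essentially the same route as the paper: the coordinatewise pigeonhole reduction of $\|\bar{x}-x\|\geq r$ to a single coordinate exceeding $r/\sqrt{d}$, the sub-Gaussian tail bound (the paper invokes Hoeffding, you use the standard Gaussian tail, which is the same estimate $2\exp(-t^2/(2\eps_{\rm p}^2))$), the bound $\dist(\bar{x},\M)\leq\|\bar{x}-x\|$, and a union bound over the $n$ augmented points, followed by the substitution $r=\eps_{\rm n}$ with $\eps_{\rm p}=\eta^{1/d}\eps^{\tau+1}$. No gaps.
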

    
\begin{proof}
Conditioning on $x\in\M$, we let $\bar{x}\sim \pi_{\eps_{\rm p}}(\cdot|x)$. 
Then it follows from \eqref{def:condprob} that $(\bar{x})_{l}\sim\mathcal{N}((x)_{l},\eps_{\rm p}^2)$ i.i.d, for every $l=1,\dots,d$. 
By Hoeffding's inequality \cite{hoeffding1994probability},
\begin{equation} \label{hoeffding}
    \mathbb{P} \big(|(\bar{x})_{l} - (x)_{l}|^2\geq t^2\big)
    =\mathbb{P} \big(|(\bar{x})_{l} - (x)_{l}|\geq t \big)
    \leq 2\exp\Big(-\frac{t^2}{2\eps_{\rm p}^2}\Big), 
\end{equation}
for every $t>0$. If $r^2\leq \|\bar{x}-x\|^2=\sum_{l=1}^{d} |(\bar{x})_{l} - (x)_{l}|^2$, then there must exist one $l$ such that \eqref{hoeffding} holds with $t=r/\sqrt{d}$. Consequently,
\begin{equation*}
    \mathbb{P} \big(\|\bar{x}-x\|^2\geq r^2 \big)
    = \mathbb{P} \big(\|\bar{x}-x\|\geq r \big)
    \leq 2d\exp\Big(-\frac{r^2}{2d\eps_{\rm p}^2}\Big).
\end{equation*}
Given that $\|\bar{x}-x\|\geq {\rm dist}(\bar{x},\M)$, and the continuity of the distance functions, we conclude
\begin{equation*}
    \mathbb{P} \big({\rm dist}(\bar{x},\mathcal{M})\geq r \big)\leq 2d\exp\Big(-\frac{r^2}{2d\eps_{\rm p}^2}\Big).
\end{equation*}
By performing a union bound over $n$ points of $\bar{x}\in\XX$, we obtain uniformly for all such $\bar{x}$, ${\rm dist}(\bar{x},\mathcal{M}) < r$ with probability at least $1-2dn\exp(-r^2/(2d\eps_{\rm p}^2))$. 
Then \eqref{90dist} automatically follows from the selection $r = \eps_{\rm n} = \eps^{\tau+1}$. 
\end{proof}

\subsection{Basic assumptions and convention} \label{sec:assumption}

\paragraph{Basic notation.} We write $\mathbb{R}_{\geq 0}$ and $\mathbb{R}_+$ to denote the sets of nonnegative and positive real numbers, respectively.
We write $0$ to denote either the scalar zero, the zero vector, or the zero function, with the distinction made clear from context.
For a vector $x$, we write $x^i$ to refer to its $i$th component. 
We use $\|\cdot\|$ to represent either the Euclidean vector norm (with the dimension also inferred from context) or the Frobenius matrix norm, and $|\cdot|$ to represent the absolute value of a scalar.
Additionally, we use $\|\cdot\|$ with a subscript to specify a functional norm, such as $\|\cdot\|_{L^{\infty}(\M)}$. 
In the case of a vector or a matrix, we also use $\|\cdot\|_0$ to specify the \textit{nuclear norm}, counting the number of nonzero entries, and $\|\cdot\|_{\infty}$ to specify the $\ell^{\infty}$-norm.
These last two notations are used exclusively in Section~\ref{sec:contrastivelearn}, the statement of Theorem~\ref{thm:contrastive}, and its proof in Section~\ref{sec:contrastiveproof}.

\paragraph{Basic manifold geometry assumptions.} We endow $\M$ the Riemannian structure induced by the ambient space $\mathbb{R}^d$ and let $\dd\mathcal{V}$ be the volume form of $\M$ with respect to the induced metric tensor.
We assume that all the sectional curvatures of the closed manifold $\mathcal{M}$ are bounded above in absolute value by some $K<\infty$, the manifold reach $R<\infty$, and further, the manifold injectivity radius is bounded below by some $i_0>0$. We regard $K, R, i_0$ to be \textit{intrinsic}\footnote{unrelated to the notions of ``intrinsic'' and ``extrinsic'' in differential geometry} values of the manifold learning problem.

We use $B^m(0,r)$ and $B^d(0,r)$ to denote an $m$-dimensional and $d$-dimensional centered Euclidean ball, respectively,
and $\mathcal{B}(x,r)$ to denote a geodesic ball centered at $x$ on $\mathcal{M}$, all with radius $r$. We write $d(x,y)$ to indicate the geodesic distance on $\mathcal{M}$.
Note that, from \cite[Proposition~2]{trillos2018error},
\begin{equation} \label{eq:distancecompare}
    \|x-y\|\leq d(x,y)\leq \|x-y\| + \frac{8}{R^2}\|x-y\|^3,
\end{equation}
whenever $\|x-y\|\leq R/2$, with the first inequality holding universally.

At $x\in\M$, the affine \textit{tangent plane} to $\M$ is a translated isomorphic copy of $\mathbb{R}^m$. 
We distinguish between such physical plane $\tilde{T}_x\M$, which sits in $\mathbb{R}^d$, and its isomorphic copy, \textit{the tangent space} $T_x\M$, which can be embedded in $\mathbb{R}^m$.
Similarly, we distinguish between the affine \textit{normal plane} $\tilde{N}_x\M$ of $\M$ at $x$ in $\mathbb{R}^d$ and its translated isomorphic copy, \textit{the normal space} $N_x\M\cong\mathbb{R}^{d-m}$. See Figure~\ref{fig:manifoldandtangent} below for an illustration.

\begin{figure}[ht!]
    \begin{subfigure}[b]{0.60\textwidth}
        \centering
        \includegraphics[width=1.0\linewidth]{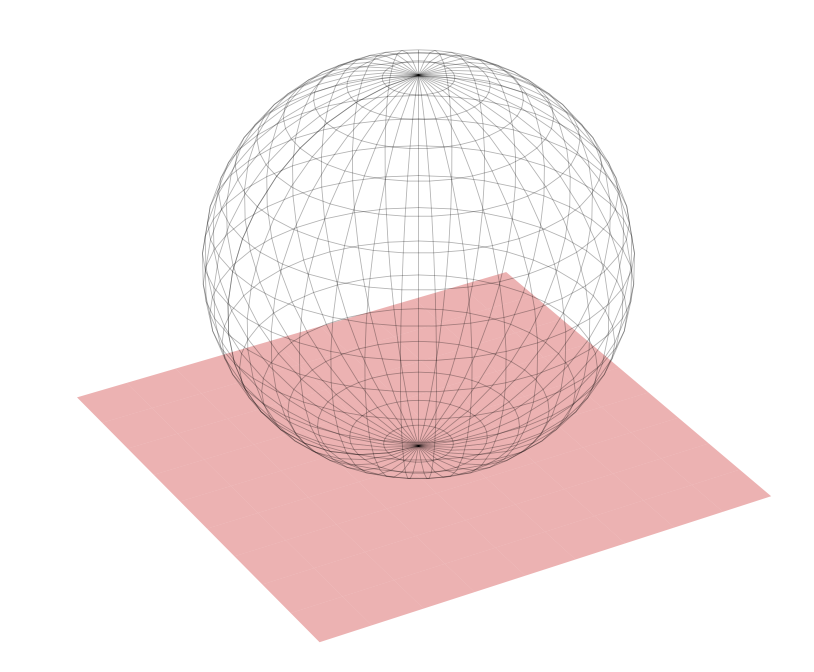}
        \caption{A two-dimensional sphere in $\mathbb{R}^3$ and one of its tangent planes colored in pink}
    \end{subfigure}
    \quad \quad \quad
    \begin{subfigure}[b]{0.35\textwidth}
        \centering
        \includegraphics[width=1.0\linewidth]{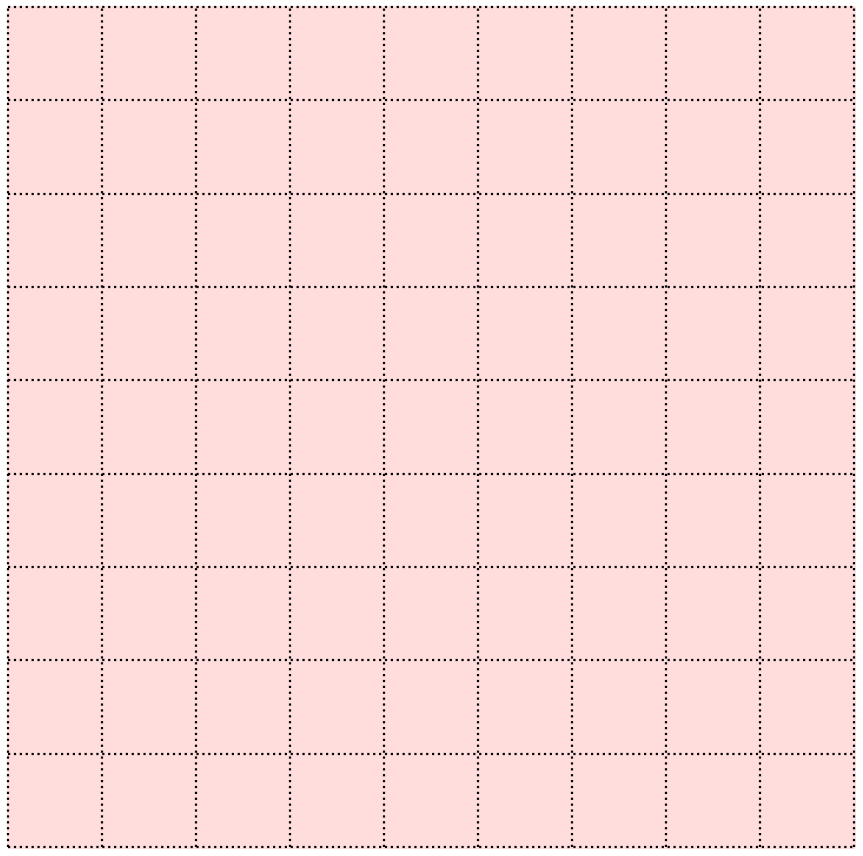}
        \caption{The Euclidean space $\mathbb{R}^2$ as the tangent space}
    \end{subfigure}
    \caption{A two-dimensional sphere, one of its tangent planes, and an isomorphic copy of the latter as the Euclidean space $\mathbb{R}^2$}
\label{fig:manifoldandtangent}
\end{figure}

We will further elaborate on these relationships in Appendix~\ref{appx:dg}, as they will then be necessary for our analysis. 

If $0<r<\min\{K^{-1/2}, i_0\}$, and $x\in\mathcal{M}$, then the exponential map 
\begin{equation} \label{expmap}
    {\rm Exp}_{x}: T_{x}\mathcal{M}\supset B^m(0,r) \to \mathcal{B}(x,r)\subset\mathcal{M},
\end{equation}
where $\T_{x}\mathcal{M}$ denotes the tangent plane of $\mathcal{M}$ at $x$, is a diffeomorphism. Likewise, its inverse, which is the logarithm map
\begin{equation} \label{logmap}
    {\rm Log}_{x}: \mathcal{M}\supset \mathcal{B}(x,r) \to B^m(0,r)\subset T_{x}\mathcal{M},
\end{equation}
is also a diffeomorphism. 

Let $J_{x}(v)$ denote the Jacobian of ${\rm Exp}_{x}$ at $v\in B^{m}(0,r)\subset T_{x}\M$. Then it follows as a consequence of the Rauch comparison theorem that \cite[Chapter~10, Theorem~2.3]{do1992riemannian} (see also \cite[Equation~(2.5)]{Dmitri2014Graph})
\begin{equation} \label{eq:Rauch}
    (1+CmK\|v\|^2)^{-1} \leq J_{x}(v)\leq 1 + CmK\|v\|^2,
\end{equation}
which implies 
\begin{equation} \label{volexchange}
    |\mathcal{V}(\mathcal{B}(x,r))-\alpha_{m}r^{m}|\leq CmKr^{m+2},
\end{equation}
where $\mathcal{V}(S) :=\int_{S} \dd\mathcal{V}(x)$, for $S\subset\M$, and $\alpha_m$ is the volume of the $m$-dimensional unit ball. 

\paragraph{From manifold to tangent plane.} The exponential and logarithm maps, \eqref{expmap} and \eqref{logmap} respectively, offer a framework for translating calculus from the manifold $\M$ to its tangent planes.
We introduce a system of changing variables and modified notations aligned with this translation, which will be employed throughout.

Let $x\in\M$, and let $y\in \mathcal{B}(x,r)$ for some $0<r<\min\{K^{-1/2}, i_0\}$. 
Let $g: \mathbb{R}_{\geq 0}\to \mathbb{R}$, and let $f:\M\to\mathbb{R}$ be measurable.
Then according to \eqref{expmap}, there exists a unique $v\in B^m(0,r)$ such that $y = {\rm Exp}_x(v)$, whence
\begin{equation*}
    g(d(x,y)) = g(\|v\|).
\end{equation*}
Further, we write 
\begin{equation} \label{localtildef}
    \tilde{f}(v) := f\circ {\rm Exp}_x(v).
\end{equation}
In this case, $\tilde{f}: B^m(0,r) \to\mathbb{R}$, and particularly, $\tilde{f}(0) = f(x)$. 

\paragraph{Smooth functions on $\M$.}
Note that \eqref{localtildef} allows a function $f$ on $\M$ to be expressed locally as a function $\tilde{f}$ on $B^m(0,r)$. We say that $f\in\mathcal{C}^i(\M)$ if at every $x\in\M$ its local version satisfies $\tilde{f}\in\mathcal{C}^i(B^m(0,r))$.
We write 
$\|f\|_{\mathcal{C}^i(\M)} := \sum_{j=0}^i \|D^{(j)} f\|_{L^{\infty}(\M)}$, where $D^{(j)}$ is the $j$th-differentiation operator.
Then by expressing the covariant derivatives $D^{(j)}f$ in normal coordinates around $x$ \cite[Chapter~2]{hebey1996sobolev}, and applying standard expansions for the Christoffel symbols and metric tensor in these coordinates, we can establish that
\begin{equation} \label{eq:equivalent}
    c\|\tilde{f}\|_{\mathcal{C}^i(B^m(0,r))} \leq \|f\|_{\mathcal{C}^i(\mathcal{B}(x,r))} \leq C\|\tilde{f}\|_{\mathcal{C}^i(B^m(0,r))},
\end{equation}
for some universal constants $C\geq c>0$. 

\paragraph{Basic density assumption.} We have assumed at the beginning of Section~\ref{sec:aug} that the natural data $x_{i}\sim\mu$ i.i.d. for some distribution $\mu$ supported on $\M$. We posit further that $\dd\mu = \rho  \dd \mathcal{V}$, where $\dd \mathcal{V}$ denotes the Riemannian volume form associated with $\M$, and $\rho:\M\to \R_+$ is a density function such that $\rho\in \mathcal{C}^2(\M)$ and that for every $x\in\M$,
\begin{equation} \label{boundeddensity}
        0<\rho_{\min} \leq \rho(x) \leq \rho_{\max}<\infty.
\end{equation}

\paragraph{Basic assumptions on $\eps$.}
We assume the following regarding the graph connectivity parameter $\eps$ in \eqref{def:augmentedLap}: 
\begin{equation} \label{epsassume}
    \eps\leq \min \Big\{1,K^{-1/2},i_0, \frac{R}{2} \Big\}.
\end{equation}
The requirement $\eps\leq\min\{K^{-1/2}, i_0\}$ is justified by the considerations \eqref{expmap}, \eqref{logmap}. 
Further, that $\eps\leq \frac{R}{2}$ implies \eqref{eq:distancecompare}, along with
\begin{equation*}
    \mathbb{P}\bigg(\bigcup_{\bar{x}\in\XX}\{ {\rm dist}(\bar{x},\mathcal{M})\leq R\}\bigg) \geq \mathbb{P}\bigg(\bigcup_{\bar{x}\in\XX}\{ {\rm dist}(\bar{x},\mathcal{M})\leq \eps_{\rm n}\}\bigg) \geq 1-2dn\exp\big(-\eta^{-2/d}/(2d)\big),
\end{equation*}
according to Lemma \ref{lem:probcompact}. 
Throughout, we maintain the option to rescale $\eps$ to $C\eps$, for some universal constant $C>0$, and assume that $C\eps$ continues to satisfy \eqref{epsassume} in place of $\eps$.

\paragraph{Analytic constants and their dependence.} We adhere to the convention that analytic majorant constants, denoted by $C$ or $c$, may vary between occurrences, with $C\geq 1$ and $0<c\leq 1$, implicitly.
All constants are permitted to depend on at most $\rho$ and the manifold $\M$. 
The former includes dependencies on $\rho_{\rm min}$, $\rho_{\rm max}$, $\|\rho\|_{\mathcal{C}^2(\M)}$, for instances, while the latter involves the intrinsic values $K, R, i_0$, the dimension $m$, as well as the quantity
\begin{equation} \label{mandiam}
    \mathfrak{D} := \sup_{x,x'\in\M} \|x-x'\|.
\end{equation}
For clarity, the dependencies of constants are always stated in result statements. When necessary, the constants are further distinguished in these statements.
Specific dependencies, such as on $\eta$ or a function, will be declared by a subscript.
For instance, for the latter, $C_f>0$ denotes a constant depending on $f$, in addition to $\M$, $\rho$ at most, and the nature of such dependence will be announced.
While most parametric dependencies can be made explicit, we do not pursue this.

\paragraph{The notation $\mathcal{O}$.} Let $a>0$. We write $A=\mathcal{O}(a)$ to mean $|A|\leq Ca$ or $\|A\|\leq Ca$, depending on the nature of $A$, where $C>0$ is either a universal constant or dependent on $\M$, $\rho$ at most.
For instance, if $A$, $B$ are scalars, then $A=B + \mathcal{O}(a)$ means $|A-B| \leq Ca$. 
Since the non-asymptotic settings we consider involve small $\varepsilon>0$, this notation convention does not introduce significant ambiguity. 
Further, a subscript will indicate a special dependence.
For instance, $A=\mathcal{O}_f(a)$ means $|A| \leq C_f a$, where $C_f>0$ depends on $f$.

\section{Setup} \label{sec:setup}

In the following discussions, the basic assumptions from Section~\ref{sec:assumption} concerning the geometry of $\M$, the regularity of the density $\rho$ of the natural data, and, in particular, the microscopic range \eqref{epsassume} of the parameter $\eps$, will be understood throughout.



\subsection{Setting for pointwise consistency} \label{sec:pointwiseconsistency}

Recall that $\eps_{\rm n}=\eps^{\tau+1}$, with $\tau>0$.
Recall further from Section~\ref{sec:assumption} that $\tilde{N}_x\M$ denotes the affine normal plane of $\M$ at $x$ in $\mathbb{R}^d$. 
We define
\begin{equation*} 
    \mathcal{N}_{\eps_{\rm n}}(\M) :=\{x+h: x\in\M, \, h \in \tilde{N}_x\M -x, \text{ and } \|h\| < \eps_{\rm n}\}
\end{equation*}
to be an $\eps_{\rm n}$-\textit{tubular neighborhood} of $\M$. 
For $\bar{x}\in \XX\cap\mathcal{N}_{\eps_{\rm n}}(\M)$, we denote by $Q_{\bar{x}}$ the closest point on $\M$ to $\bar{x}$. 
I.e., $Q_{\bar{x}}$ is the orthogonal projection point of $\bar{x}$ onto $\M$, and $\|\bar{x}-Q_{\bar{x}}\|={\rm dist}(\bar{x},\M)$.
By \eqref{epsassume} and the definition of manifold reach, $Q_{\bar{x}}$ exists uniquely for every such $\bar{x}$. 
Further, recall from Lemma~\ref{lem:probcompact} that $\XX=\XX\cap\mathcal{N}_{\eps_{\rm n}}(\M)$ with probability at least $1-2dn\exp(-\eta^{-2/d}/(2d))$.
Hence, from this point onward, we assume the following.

\begin{assumption} \label{assum:nearness}
$\XX\subset \mathcal{N}_{\eps_{\rm n}}(\M)$.
\end{assumption}

Moreover, we demonstrate that there exists a one-to-one correspondence between a point $\bar{x}$ and its projection $Q_{\bar{x}}$. 

\begin{lemma} \label{lem:fullprob}
Let $\bar{x},\bar{x}'\in \XX$ be such that $\bar{x}\not=\bar{x}'$. 
Let $Q_{\bar{x}}$, $Q_{\bar{x}'}$ denote the orthogonal projections of $\bar{x}$, $\bar{x}'$ onto $\M$, respectively. 
Then with probability $1$, $Q_{\bar{x}}\not=Q_{\bar{x}'}$. 
\end{lemma}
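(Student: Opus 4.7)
The plan is to show that the conditional joint distribution of any two distinct augmented points admits a density with respect to Lebesgue measure on $\mathbb{R}^{2d}$, and that the set of pairs projecting to the same point of $\M$ is Lebesgue-null; the lemma will then follow from a union bound over the finitely many pairs in $\XX\times\XX$.

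First, I would fix any pair of distinct augmented points $\bar{x}\neq\bar{x}'$ in $\XX$. By \eqref{def:condprob}, each is generated from a (possibly shared) natural data point via the nondegenerate Gaussian kernel $\pi_{\eps_{\rm p}}$. Under Assumption~\ref{assum:partindependent}, conditionally on the natural dataset $\X$ and on which $x_i$'s they arise from, $\bar{x}$ and $\bar{x}'$ are independent $d$-dimensional Gaussians, so the conditional law of $(\bar{x},\bar{x}')$ given $\X$ is absolutely continuous with respect to $2d$-dimensional Lebesgue measure.

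Second, I would control the ``collision set''
\[
S := \{(\bar y,\bar y')\in \mathcal{N}_{\eps_{\rm n}}(\M)\times \mathcal{N}_{\eps_{\rm n}}(\M): Q_{\bar y}=Q_{\bar y'}\}.
\]
Because $\eps_{\rm n}\leq R$ by \eqref{epsassume} and the reach assumption, the closest-point projection $Q:\mathcal{N}_{\eps_{\rm n}}(\M)\to\M$ is well defined and smooth. For each fixed $y\in\M$, the fiber $Q^{-1}(y)$ lies in the affine normal plane $\tilde{N}_y\M$, which has dimension $d-m$. Hence $S$ is a smooth fiber bundle over $\M$ with $2(d-m)$-dimensional fibers, of total dimension $m+2(d-m)=2d-m<2d$. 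A standard local parametrization (or, equivalently, the coarea formula applied to $(Q,Q)^{-1}$ of the diagonal $\{(y,y):y\in\M\}\subset\M\times\M$) then shows that $S$ has $2d$-dimensional Lebesgue measure zero.

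Combining these two facts gives $\mathbb{P}(Q_{\bar x}=Q_{\bar x'}\mid \X)=0$ for any such fixed pair, and integrating against the law of $\X$ yields $\mathbb{P}(Q_{\bar x}=Q_{\bar x'})=0$. A union bound over the at most $\binom{n}{2}$ unordered pairs in $\XX$ (with $n$ finite under Assumption~\ref{assum:partindependent}) concludes the proof. The main obstacle is the Lebesgue-nullity of $S$, which reduces to the observation that the smooth map $Q$ has fibers of positive codimension $m\geq 1$, so the diagonal of $\M\times\M$ is codimension-$m$ and its smooth preimage is negligible in $\mathbb{R}^{2d}$; everything else is bookkeeping on conditional independence from \eqref{def:condprob} and Assumption~\ref{assum:partindependent}.
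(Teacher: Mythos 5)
Your argument is correct and arrives at the same conclusion by essentially the same geometric route, but it is more rigorous than the paper's. Both rest on the observation that the fibers of the nearest-point projection $Q$ are the affine normal planes $\tilde N_x\M$, of dimension $d-m<d$. The paper notes that $\mathbb{P}(\bar x,\bar x'\in\tilde N_x\M)=0$ for a fixed $x$ and then ``integrates over $x\sim\mu$''; as written this glosses over the fact that the collision event is a union over uncountably many $x$, and a union of null events need not be null. Your proof closes that gap: you lift to the product space $\mathbb{R}^{2d}$, identify the collision set $S$ as a codimension-$m$ smooth submanifold (either by fibering over $\M$ so that $\dim S = m + 2(d-m)=2d-m$, or as a submersion preimage of the diagonal in $\M\times\M$), and pair this with the absolute continuity of the law of $(\bar x,\bar x')$ coming from the nondegenerate Gaussian kernel~\eqref{def:condprob}, finishing with a union bound over the finitely many pairs. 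One small refinement: Assumption~\ref{assum:partindependent} does not by itself grant conditional independence of $\bar x$ and $\bar x'$ given $\X$, so the cleanest version of the density step is to condition on $\bar x$ (hence on $Q_{\bar x}$) and note that $\bar x'$ still has a density on $\mathbb{R}^d$ while $\tilde N_{Q_{\bar x}}\M$ is Lebesgue-null there; this recovers your conclusion without asserting joint absolute continuity on $\mathbb{R}^{2d}$.
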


\begin{proof}
The event $E_1$ that $Q_{\bar{x}}=Q_{\bar{x}'}$ is equivalent to the event $E_2$ that both $\bar{x}$, $\bar{x}'$ belong to $\tilde{N}_{x}\M$ for some $x\in\M$. However, as $\tilde{N}_{x}\M$ is of dimension $d-m$, and not of full dimension $d$, $\mathbb{P}(E_2)=0$. Integrating this probability over $x\sim\mu$, we obtain that $\mathbb{P}(E_1)=0$ as well.  
\end{proof}

A summary of the relevant parameters and hyperparameters is given in Table~\ref{tab:symbols}.

\begin{table}[htb]
\centering
    \begin{tabular}{@{}llll@{}}
    \toprule
        Symbol & & & Meaning \\
		\midrule
        $\eta$, $\tau$ & & & hyperparameters \\
		$\eps$ & & & graph connectivity parameter \\
		$\eps_{\rm w} = \eps^{\tau}$ & & & edge weight parameter\\
		$\eps_{\rm p} = \eta^{1/d}\eps^{\tau+1}$ & & & variance parameter of the data-generating probability\\
        $\eps_{\rm n} = \eps^{\tau+1}$ & & & neighborhood parameter of the augmented data\\
    \bottomrule
    \end{tabular}
    \caption{Parameters/hyperparameters and their meanings. Note that $\tau,\eta>0$ are treated as hyperparameters, i.e. given a priori and not subject to variation.} 
\label{tab:symbols}
\end{table}

Under Assumption~\ref{assum:nearness}, an augmented point $\bar{x}\in\XX$ can be expressed as $z+h$, where $z\in\M$ and $h\in\tilde{N}_z\M\cap\mathcal{N}_{\eps_{\rm n}}(\M) - z$. 
Here, $z$ is unique and represents the projected point $Q_{\bar{x}}$. 
Such points $z$ are identically distributed 
according to a measure $\nu$, which is supported on $\M$ and has a density
\begin{equation} \label{eqdef:q}
    q(z) := \frac{1}{(\eps_{\rm p}\sqrt{2\pi})^d} \int_{h+z\in \tilde{N}_z\M\cap\mathcal{N}_{\eps_{\rm n}(\M)}} \int_{\M} \exp\Big(-\frac{\|z+h-x\|^2}{2\eps_{\rm p}^2}\Big)\rho(x)\dd\mathcal{V}(x)\dd h.
\end{equation}
Furthermore, the regularity of $q$ can be established for a suitable range of $\eps$.

\begin{proposition} \label{prop:qtrunc}
Let $\tau>0$, $\eps>0$, and $\eps_{\rm n}=\eps^{\tau+1}$.
Let $\eps>0$ be further sufficiently small, so that 
\begin{equation} \label{thecondition1}
    \exp\Big(-\frac{c}{\eps_{\rm n}^{2/3}}\Big) \leq \eps_{\rm n}^{m+2}.
\end{equation}
Let $q$ be given in \eqref{eqdef:q}. 
Then under Assumption~\ref{assum:nearness}, for any $z\in\M$, we have
\begin{equation} \label{eq:qboundedextra}
    |q(z)-\rho(z)|\leq C_{\eta}\eps_{\rm n}^{2/3}.
\end{equation}
Moreover, we have $q\in\mathcal{C}^2(\M)$, which means, specifically, for any $z\in\M$,
\begin{equation} \label{eq:qbounded}
    0< c_{\eta} \leq q(z) \leq C_{\eta} <\infty,
\end{equation}
and for any unit vectors $\tilde{u}$, $\tilde{w}\in\tilde{T}_z\M$,
\begin{align}
    \label{eq:qprimebounded} |\partial_{\tilde{u}}q(z)| &\leq C_{\eta},\\
    \label{eq:qdoubleprimebounded} |\partial_{\tilde{w}}\partial_{\tilde{u}}q(z)| &\leq C_{\eta},
\end{align}
and finally, all the relevant partial derivatives are continuous in a neighborhood of $z$ in $\M$. 
The constant $C_{\eta}>0$ in \eqref{eq:qboundedextra} depends on $\eta$ and $\M$. 
Then constants $C_{\eta}, c_{\eta}>0$ in \eqref{eq:qbounded}, \eqref{eq:qprimebounded}, \eqref{eq:qdoubleprimebounded} depend on $\eta$ and at most $\M$, $\rho$.
\end{proposition}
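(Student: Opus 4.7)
The plan is to prove \eqref{eq:qboundedextra} by a Laplace-type asymptotic analysis of the Gaussian convolution defining $q$, and then to obtain \eqref{eq:qbounded}--\eqref{eq:qdoubleprimebounded} by differentiating this analysis under the integral sign. My starting point is to truncate the inner integration over $x\in\M$ to a geodesic ball $\mathcal{B}(z,r)$ of radius $r\asymp\eps_{\rm n}^{2/3}$; the exterior contribution decays like $\exp(-r^2/(2\eps_{\rm p}^2))=\exp(-c/\eps_{\rm n}^{2/3})$, which by hypothesis \eqref{thecondition1} is $\mathcal{O}(\eps_{\rm n}^{m+2})$, far smaller than the target error. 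A similar tail-truncation will let me treat the integration in $h$ as if it were over the full normal plane, up to an additional exponentially small error.

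On the truncated region I would parametrize $x = \mathrm{Exp}_z(v)$ for $v\in B^m(0,r)\subset T_z\M$, using \eqref{eq:Rauch} for the Jacobian together with the second-order expansion $\mathrm{Exp}_z(v) = z + v + \tfrac{1}{2}\mathrm{II}_z(v,v) + \mathcal{O}(\|v\|^3)$ in $\mathbb{R}^d$, where $\mathrm{II}_z$ denotes the second fundamental form. Since the tangent component $v$ is orthogonal to both $h$ and $\mathrm{II}_z(v,v)$ (which are normal at $z$), this yields the decomposition $\|z+h-x\|^2 = \|v\|^2 + \|h-\tfrac{1}{2}\mathrm{II}_z(v,v)\|^2 + \mathcal{O}(\|v\|^3(\|v\|+\|h\|))$, so that the Gaussian kernel factorizes, modulo controlled multiplicative errors, into a tangent piece in $v$ and a normal piece in $h$. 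Evaluating the normal integral over $\|h\|<\eps_{\rm n}$ reduces to computing a truncated $(d-m)$-dimensional Gaussian mass in the scale $\eps_{\rm n}/\eps_{\rm p} = \eta^{-1/d}$, while evaluating the tangent integral amounts to a standard mollification-of-$\rho$ calculation whose first-order Taylor term vanishes by Gaussian symmetry and whose second-order term is $\mathcal{O}(\eps_{\rm p}^2)$. Tracking all residuals via \eqref{thecondition1} and the basic geometric bounds of Subsection~\ref{sec:assumption} yields \eqref{eq:qboundedextra}; the two-sided bound \eqref{eq:qbounded} then follows once $\eps$ is sufficiently small, using \eqref{boundeddensity}.

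For the regularity claims \eqref{eq:qprimebounded}--\eqref{eq:qdoubleprimebounded} I would differentiate under the integral sign in $z$. The $z$-dependence enters both through the kernel $\exp(-\|z+h-x\|^2/(2\eps_{\rm p}^2))$ and through the integration domain $\tilde{N}_z\M \cap \mathcal{N}_{\eps_{\rm n}}(\M)$. To decouple the two, I would fix a smooth local orthonormal frame $(n_1(z),\dots,n_{d-m}(z))$ of the normal bundle over a neighborhood of the base point in $\M$ (available by the smoothness of the Gauss map once $i_0>0$) and parametrize $h = \sum_j h_j n_j(z)$, converting the integration to one over a fixed Euclidean ball in $\mathbb{R}^{d-m}$. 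First and second directional derivatives along $\tilde u,\tilde w\in\tilde T_z\M$ then land on the kernel and on the frame maps $n_j$. Exploiting the convolution structure in $z$, each such derivative can be traded for a derivative of $\rho$ (up to geometric corrections involving $n_j'$ and the shape operator), after which the same tangent/normal asymptotic analysis as above yields uniform bounds of order $\mathcal{O}_\eta(1)$ depending only on $\|\rho\|_{\mathcal{C}^2(\M)}$ and the geometry of $\M$. Continuity of the partial derivatives in $z$ is inherited from the smoothness of the frame, the kernel, and $\rho\in\mathcal{C}^2(\M)$.

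The main obstacle I expect is bookkeeping: every derivative in $z$ reintroduces geometric factors (the shape operator, Christoffel-type terms from the normal frame), and these must be shown to be uniformly bounded in terms of $K,R,i_0,m,\|\rho\|_{\mathcal{C}^2(\M)}$ and $\eta$, independently of $\eps$. A closely related technical point is to verify that the $\mathcal{O}(\|v\|^2)$ shift $\tfrac{1}{2}\mathrm{II}_z(v,v)$ in the normal integration does not inflate the pointwise error beyond $\eps_{\rm n}^{2/3}$: this requires combining Gaussian tail decay on the normal complement with the $\eps_{\rm n}^{2/3}$-scale truncation of the tangent integral, which is precisely what \eqref{thecondition1} is calibrated to provide.
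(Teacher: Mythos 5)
Your proposal follows essentially the same high-level strategy as the paper for the localization and factorization steps: truncate the $x$-integral to a geodesic ball of radius $\asymp\eps_{\rm n}^{2/3}$ with the exterior controlled by rapid Gaussian decay and \eqref{thecondition1}, use a second-order expansion of the embedding to factor the Gaussian kernel into a tangent piece in $v$ and a normal piece in (a shifted) $h$, parametrize the $z$-dependent normal integration domain via a smooth local orthonormal frame, and read off the leading term by Gaussian symmetry with the first-order Taylor contribution of $\rho$ cancelling. The paper implements the geometric input via Lemma~\ref{lem:quadbehavior} and the projection ${\rm L}_z(\cdot)$ onto $\tilde T_z\M$ rather than an explicit second-fundamental-form ansatz, but this is the same estimate in a different guise, and your factorization $\|z+h-x\|^2 = \|v\|^2 + \|h - \tfrac12\mathrm{II}_z(v,v)\|^2 + \mathcal{O}(\|v\|^3(\|v\|+\|h\|))$ is correct.

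Where you genuinely diverge is in the treatment of \eqref{eq:qprimebounded}--\eqref{eq:qdoubleprimebounded}. The paper differentiates the Gaussian kernel in $z$ directly, producing factors like $\langle z-x,\tilde u\rangle/\eps_{\rm p}^2$, and then defuses the apparent $\eps_{\rm p}^{-1}$ and $\eps_{\rm p}^{-2}$ singularities by two explicit cancellations: odd symmetry of the linear term on the tangent disk (giving $E^{**}_{12}=0$), and a Gaussian second-moment identity (Lemma~\ref{lem:Gaussianrv}) that kills the second-derivative blow-up in the combination $D_{11}+V_1$. You instead propose to transfer derivatives onto $\rho$ by integrating by parts on $\M$, after which the residuals involve only bounded geometric quantities (shape operator, frame derivatives, volume-form corrections) and $\rho\in\mathcal{C}^2$. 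This is a legitimate alternative: it avoids the moment-identity cancellation entirely and is arguably more modular, at the cost of carrying manifold IBP bookkeeping (the parallel-transport discrepancy $\tilde u\mapsto\tilde u'$ between tangent planes at $z$ and $x$, the divergence of the metric volume form, and the derivative of the $z$-dependent $h$-domain must all be tracked, and for the second derivative the cross terms where one derivative lands on these geometric factors and the other stays on the kernel still produce $\eps_{\rm p}^{-1}$ factors that are only tamed after integrating against the Gaussian). Both routes close, but keep in mind the details you wave at as "bookkeeping" are precisely where the paper's proof is longest, so do not underestimate that part.
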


A detailed proof of Proposition~\ref{prop:qtrunc} can be found in Appendix~\ref{appx:qtrunc}.
We briefly remark that \eqref{thecondition1} serves as a crucial condition to ensure the boundedness of the first and second derivatives of $q$ in our proof, which relies heavily on a divide-and-conquer technique.
However, in order to further guarantee the pointwise consistency of $\mathcal{L}_{\rm aug}$, it will be necessary for us to enforce an additional condition, that is
\begin{equation} \label{thecondition2}
    \exp\Big(-\frac{c\eps^{4/3}}{\eps^{2\tau/3}}\Big) \leq \eps_{\rm w}^m\eps^2.
\end{equation}
It is also evident that $\tau>2$ for \eqref{thecondition2} to hold. 
We summarize the key requirements for $\tau$, $\eta$, $\eps$ as follows.

\begin{assumption}[Parameter assumption] \label{assum:para}
Let $\tau\geq 3$, $\eta>0$, and let $\eps>0$ be sufficiently small as to satisfy \eqref{epsassume}, \eqref{thecondition1}, \eqref{thecondition2}.
\end{assumption}

We are now ready to formulate the setting for our first main result.

\begin{setting} \label{set:I}
Let $\eta, \tau>0$ and $\eps>0$. 
Let $\eps_{\rm w}$, $\eps_{\rm p}$, $\eps_{\rm n}$ be given in Table~\ref{tab:symbols}.
Let $\M\subset\mathbb{R}^d$ be a closed manifold of dimension $m$, satisfying the fundamental geometry outlined in Section~\ref{sec:assumption}.
Let $\X = \{x_1,\dots,x_N\}$ be the set of natural data, where $x_i\sim\mu$, whose density $\rho\in\mathcal{C}^2(\M)$ satisfies \eqref{boundeddensity}.
Let $\XX = \{\bar{x}_1,\dots,\bar{x}_n\}$ be the set of augmented data generated from $\X$ according to the law \eqref{def:condprob}. 
Let Assumption~\ref{assum:partindependent} hold.
Let $G_{\XX}$ be the augmentation graph on $\XX$ as described in Section~\ref{sec:aug}, and let the associated augmentation graph Laplacian $\mathcal{L}_{\rm aug}$ be given in \eqref{def:augmentedLap}.
Let Assumptions~\ref{assum:nearness},~\ref{assum:para} hold. 
Let $q\in\mathcal{C}^2(\M)$ be the density \eqref{eqdef:q} of the measure $\nu$ to which the projections of the augmented data are distributed on $\M$.
Finally, let $\Delta_{\rm aug}$ be a Laplace-Beltrami operator defined as follows:
\begin{equation} \label{Laplace-Beltrami}
    \Delta_{\rm aug} f(x) := -\frac{1}{2} {\rm div}(q^2\nabla f)(x),
\end{equation}
for every $f\in\mathcal{C}^2(\M)$ and $x\in\M$.
\end{setting}

\begin{remark} \label{rem:modification}
It can be shown, by extending the techniques from the proof of Proposition~\ref{prop:qtrunc}, that $q$ and $\rho$ are also close in $\mathcal{C}^2(\M)$ sense, i.e. 
\begin{equation} \label{closeC2}
    \|q-\rho\|_{\mathcal{C}^2(\M)} = \mathcal{O}_{\eta}(\eps).
\end{equation}
We choose to omit this due to the already extensive technical details involved.
However, once \eqref{closeC2} holds, then we can further rewrite the limit in \eqref{Laplace-Beltrami} to $-\frac{1}{2} {\rm div}(\rho^2\nabla f)(x)$, making the natural data sampling density explicit in the expression.
\end{remark}

\subsection{Setting for spectral consistency} \label{sec:spectralconsistency}

In this section, we demonstrate that the eigenvalues and eigenvectors of $\mathcal{L}_{\rm aug}$ converge to the corresponding eigenvalues and eigenfunctions of the continuum operator $\Delta_{\rm aug}$.
To lay the groundwork, we review the fundamental properties of a graph Laplacian and Laplace-Beltrami operator, with a specific focus on the respective examples of $\mathcal{L}_{\rm aug}$ and $\Delta_{\rm aug}$.

Define on $\XX=\{\bar{x}_1,\dots,\bar{x}_n\}$ a discrete probability measure $\vartheta_n$, that is
\begin{equation} \label{eqdef:varthetan}
    \vartheta_n := \frac{1}{n} \sum_{i=1}^n \delta_{\bar{x}_i},
\end{equation}
where $\delta$ denotes the Dirac delta measure. 
We then endow $L^2_{\vartheta_n}(\XX)$ with an inner product 
\begin{equation*} 
    \langle  f, g \rangle_{L^2_{\vartheta_n}(\XX)} := \frac{1}{n} \sum_{i=1}^n f(\bar{x}_i)g(\bar{x}_i).
\end{equation*}
It is known that $\mathcal{L}_{\rm aug}$ is a positive semi-definite and self-adjoint operator on $L^2_{\vartheta_n}(\XX)$ \cite{von2007tutorial}. Particularly, we can list its eigenvalues in non-decreasing order (repeated according to multiplicity) as follows:
\begin{equation} \label{spectrumgraph}
    0= \hat{\lambda}^{\rm aug}_1\le  \hat{\lambda}^{\rm aug}_2 \le \dots \le \hat{\lambda}^{\rm aug}_n.
\end{equation}
An orthonormal basis for $L^2_{\vartheta_n}(\XX)$ can be constructed from their associated eigenvectors, denoted $\hat{f}^{\rm aug}_1,\dots,\hat{f}^{\rm aug}_n$, where $\mathcal{L}_{\rm aug}\hat{f}^{\rm aug}_l = \hat{\lambda}^{\rm aug}_l\hat{f}^{\rm aug}_l$.
Furthermore, the eigenvalues in \eqref{spectrumgraph} can also be characterized variationally according to the Courant-Fisher min-max principle: 
\begin{equation}\label{minimax}
    \hat{\lambda}^{\rm aug}_l=\min_{S\in \mathbb{G}^l_{\XX}}\max_{f\in S\setminus \{0\}}\frac{E_{\XX}(f)}{\| f\|^2_{L^2_{\vartheta_n}(\XX)}}.
\end{equation}
Here, $\mathbb{G}^l_{\XX}$ denotes the Grassmannian manifold of all linear subspaces of $L^2_{\vartheta_n}(\XX)$ of dimension $l$, and $E_{\XX}$ represents a \textit{graph-based Dirichlet energy}, defined such that for $f\in L^2_{\vartheta_n}(\XX)$: 
\begin{equation} \label{eqn:GraphDirichlet}
    E_{\XX}(f)
    :=\frac{1}{n^2\eps_{\rm w}^m\eps^{m+2}}
    \sum_{i,j=1}^n
    \omega(\bar{x}_i, \bar{x}_j) \mathbbm{1}_{\{0<\|\bar{x}_i-\bar{x}_j\|\leq \eps\}} (f(\bar{x}_i)-f(\bar{x}_j))^2
    =2 \langle\mathcal{L}_{\rm aug}f,f\rangle_{\ell^2(\XX)},
\end{equation}
with $\mathbbm{1}$ being the indicator function.
Thus, for each orthonormal eigenvector $\hat{f}_l$, its graph-based Dirichlet energy $E_{\XX}(\hat{f}_l)$ is given by 
\begin{equation*}
    E_{\XX}(\hat{f}_l) =  \frac{1}{n^2\eps_{\rm w}^m\eps^{m+2}} \sum_{i,j=1}^n \omega(\bar{x}_i, \bar{x}_j) \mathbbm{1}_{\{0<\|\bar{x}_i-\bar{x}_j\|\leq \eps\}} (\hat{f}_l(\bar{x}_i)-\hat{f}_l(\bar{x}_j))^2.
\end{equation*}

Continuing, we endow $L^2(\M)$ with an inner product
\begin{equation*} 
    \langle  f, g \rangle_{L^2(\M)} := \int_{\M} f(x)g(x) \dd\mathcal{V}(x).
\end{equation*}
Next, recall the Laplace-Beltrami operator $\Delta_{\rm aug}$ defined in \eqref{Laplace-Beltrami}.
Then the operator $\Delta_{\rm aug}$ is a positive semi-definite and self-adjoint operator on $L^2(\M)$. 
Consequently, it has a well-defined point spectrum, allowing its eigenvalues to be listed in increasing order (repeated according to multiplicity) as 
\begin{equation} \label{spectrumman}
    0\leq \lambda_1 \leq\lambda_2\leq\dots,
\end{equation}
such that $\lim_{n\to\infty} \lambda_n = \infty$.
Moreover, we can build an orthonormal basis for $L^2(\M)$ comprising eigenfunctions $f_l$ of $\Delta_{\rm aug}$, where $\Delta_{\rm aug} f_l=\lambda_l f_l$.
Denote by $H^1(\M)$ the Sobolev space of functions possessing a weak gradient in $L^2(\M)$.
Alternatively, $H^1(\M)$ is a subspace of $L^2(\M)$ consisting of functions with a finite \textit{continuum Dirichlet energy}, which is expressed for $f\in L^2(\M)$, as
\begin{equation}\label{eqn:ManDirichlet}
    E_{\M}(f):= 
    \begin{cases} 
    \int_{\M} |\nabla f(x)|^2 q^2(x) \dd\mathcal{V}(x) = 2\langle \Delta_{\rm aug} f, f\rangle_{L^2(\M)} &\quad \text{ if } f \in H^1(\M), \\  
    +\infty  &\quad \text{ otherwise.}
    \end{cases}
\end{equation}
Similarly as before, the eigenvalues in \eqref{spectrumman} of $\Delta_{\rm aug}$ can be written variationally in terms of Dirichlet energy; that is
\begin{equation} \label{minmaxman}
    \lambda_{l}=\min_{S \in \mathbb{G}^l_{\M}} \max _{f\in S\setminus\{0\}} \frac{E_{\M}(f)}{\|f\|^2_{L^2(\M)}},
\end{equation}
where $\mathbb{G}^l_{\M}$ denotes the Grassmannian manifold of all linear subspaces of $L^2(\M)$ of dimension $l$.

Inspired by \cite{calder2022improved}, our spectral convergence result hinges on a crucial interpolation between the graph-based and continuum energies. In our context, this interpolation is achieved through an optimal transport that, roughly speaking, connects the sampling probability measure $\mu$ on $\M$ with the discrete probability measure $\vartheta_n$ on $\XX$, mediated by a suitable discrete probability measure $\upsilon_n$ on $\M$, as will be specified later in Proposition~\ref{prop:inftydistance}. To facilitate the construction of such a map, and to assess the error in our subsequent analysis of discrete-to-continuum approximation, we introduce two additional key parameters: $\iota_1$, $\iota_2$. The first measures the proximity of $\upsilon_n$ to $\vartheta_n$ in an optimal transport sense, and the second reflects the uniform closeness of $\upsilon_n$ to $\rho$. Furthermore, a stricter condition on $\eps$ must be imposed, requiring it to lie within a narrower range than specified in \eqref{thecondition2}, that is
\begin{equation} \label{thecondition3}
    \exp\Big(-\frac{c\eps^{4/3}}{\eps^{2\tau/3}}\Big) \leq \eps_{\rm w}^m\eps^3.
\end{equation}
The following summarizes the technical smallness assumptions on all the involved parameters and hyperparameters, ensuring we operate in the appropriate regime for the validity of our theorems.

\begin{assumption}[Parameter assumption] \label{assum:thetadelta}
Let $\tau>2$, $\eta>0$, and let $\eps>0$ be sufficiently small as to satisfy \eqref{epsassume}, \eqref{thecondition1}, \eqref{thecondition3}.
Further, let $\iota_1, \iota_2>0$ satisfy: 
\begin{enumerate}
    \item $\max\{n^{-1/m},C\eps^{3/2}\} < \iota_2\leq \frac{\eps}{4}$, where $C>0$ depends on $\M$;
    \item $\iota_1+\iota_2\leq c$, where $c>0$ depends on $\M$, $\rho$.
\end{enumerate}
\end{assumption}

Observe that $\iota_1$, $\iota_2$ respectively correspond to $\tilde{\delta}$, $\theta$ in \cite{calder2022improved}.
The additional lower bound $C\eps^{3/2}\leq\iota_2$ in Assumption~\ref{assum:thetadelta}, compared to \cite[Assumption~2.3]{calder2022improved}, reflects an extra step necessary in our proof of Theorem~\ref{thm:spectralconvergence}, specifically in Lemma~\ref{lem:discretetolocal}.

We present the setting for our second main result. 

\begin{setting} \label{set:II}
Let $\eta, \tau>0$ and $\eps,\iota_1,\iota_2>0$. 
Let $\eps_{\rm w}$, $\eps_{\rm p}$, $\eps_{\rm n}$ be given in Table~\ref{tab:symbols}.
Let $\M\subset\mathbb{R}^d$ be a closed manifold of dimension $m$, satisfying the fundamental geometry outlined in Section~\ref{sec:assumption}.
Let $\X = \{x_1,\dots,x_N\}$ be the set of natural data, where $x_i\sim\mu$, whose density $\rho\in\mathcal{C}^2(\M)$ satisfies \eqref{boundeddensity}.
Let $\XX = \{\bar{x}_1,\dots,\bar{x}_n\}$ be the set of augmented data generated from $\X$ according to the law \eqref{def:condprob}. 
Let Assumption~\ref{assum:partindependent} hold.
Let $G_{\XX}$ be the augmentation graph on $\XX$ as described in Section~\ref{sec:aug}.
Let the augmentation graph Laplacian $\mathcal{L}_{\rm aug}$ be given in \eqref{def:augmentedLap}, possessing the spectrum \eqref{spectrumgraph} with the corresponding eigenvectors $\hat{f}^{\rm aug}_l$, $l=1,\dots,n$.
Let Assumptions~\ref{assum:nearness},~\ref{assum:thetadelta} hold. 
Finally, let $\Delta_{\rm aug}$ be the Laplace-Beltrami operator defined in \eqref{Laplace-Beltrami}, possessing the spectrum \eqref{spectrumman} with the corresponding eigenfunctions $\hat{f}_l$, $l\in\mathbb{N}$.
\end{setting}

\subsection{Setting for realizability in contrastive learning} \label{sec:contrastivelearn}

We begin by providing a matrix form for $\mathcal{L}_{\rm aug}$ \eqref{def:augmentedLap}.
Let $\mathsf{A}_{\rm aug}\in\R^{n\times n}$ denote the symmetric adjacency matrix of the graph $G_{\XX}$ having the entries 
\begin{equation*}
    (\mathsf{A}_{\rm aug})_{ij} := \frac{1}{\eps_{\rm w}^m\eps^{m+2}} \omega(\bar{x}_i,\bar{x}_j) \mathbbm{1}_{\{0<\|\bar{x}_i-\bar{x}_j\|\leq\eps\}} \quad\text{ for }\quad i, j= 1,\dots,n.
\end{equation*}
Let $\mathsf{D}_{\rm aug}\in\R^{n\times n}$ be the corresponding diagonal degree matrix. Then $(\mathsf{D}_{\rm aug})_{ij}=0$ if $i\not=j$, and
\begin{equation*}
    (\mathsf{D}_{\rm aug})_{ii} := \frac{1}{n}\sum_{j=1}^n (\mathsf{A}_{\rm aug})_{ij} = \frac{1}{n\eps_{\rm w}^m\eps^{m+2}} \sum_{j=1}^n \omega(\bar{x}_i,\bar{x}_j) \mathbbm{1}_{\{0<\|\bar{x}_i-\bar{x}_j\|\leq\eps\}}.
\end{equation*}
For $f:\XX\to\mathbb{R}$, we identify $f$ with the column vector $f=(f(\bar{x}_1),\dots,f(\bar{x}_n))^{\top}\in\mathbb{R}^n$.
Then under such identification, we verify that
\begin{equation*}
    \mathcal{L}_{\rm aug} f = [\mathsf{D}_{\rm aug} - \mathsf{A}_{\rm aug}] f.
\end{equation*}
Therefore, we will refer to the graph Laplacian operator $\mathcal{L}_{\rm aug}$ as the matrix $\mathsf{D}_{\rm aug} - \mathsf{A}_{\rm aug}\in\mathbb{R}^{n\times n}$ whenever appropriate.

Recall from Section~\ref{sec:assumption} that $\|\cdot\|$ denotes the Euclidean vector norm or the Frobenius matrix norm. 
Let $k\in\mathbb{N}$, which acts as a parameter determining the spectral embedding dimension.
Let $a\geq \max\{(\mathsf{D}_{\rm aug})_{ii}\}$. It follows that $a\mathsf{I}_{n\times n} - \mathcal{L}_{\rm aug}$, where $\mathsf{I}_{n\times n}$ is the $n\times n$ identity matrix, is positive semidefinite.
Then by Eckart-Young-Minsky theorem \cite{eckart1936approximation}, any minimizer $Y^*\in\mathbb{R}^{n\times k}$, of the optimization objective\footnote{Note that \eqref{eq:optimization} can be rewritten as $\min_{Y \in \R^{n \times k}} \ell(Y)$ in terms of the loss function $\ell(Y) := \|YY^{\top} - a\mathsf{I}_{n\times n} + \mathcal{L}_{\rm aug}\|^2$. Then by the reasoning given in \cite{haochen2021provable}, this optimization problem, involving matrix factorization, corresponds to an optimization problem based on their population spectral contrastive loss.}
\begin{align} \label{eq:optimization}
     \min_{Y \in \R^{n \times k}} \|YY^{\top} - a\mathsf{I}_{n\times n} + \mathcal{L}_{\rm aug}\|^2,
\end{align}
contains, as its columns, the scaled versions of the last $k$ normalized (in the Euclidean norm) eigenvectors of $a\mathsf{I}_{n\times n} - \mathcal{L}_{\rm aug}$ (corresponding to the largest $k$ eigenvalues), up to multiplication on the right by an $k\times k$ orthogonal matrix. 
These eigenvectors are in turn, the first $k$ normalized (in the Euclidean norm) eigenvectors of $\mathcal{L}_{\rm aug}$ (corresponding to the smallest $k$ eigenvalues). 
In other words, solving \eqref{eq:optimization} is equivalent to performing spectral embedding of $G_{\XX}$ using $\mathcal{L}_{\rm aug}$.
We aim to learn a minimizer of \eqref{eq:optimization} over a hypothesis set of neural networks (NNs) with parameterizable architectures. 
Specifically, we solve the following optimization problem:
\begin{equation}\label{eq:parameterized optimization}
    \min_{\theta\in\Theta} \| Y_{\theta} Y_{\theta}^{\top} -a\mathsf{I}_{n\times n} + \mathcal{L}_{\rm aug}\|^2,
\end{equation}
where $\theta$ denotes the parameter vector of a NN $f_{\theta}: \R^d \rightarrow \R^k$, and $\Theta$ denotes the space of all admissible parameter vectors. 
Moreover, $Y_{\theta}\in\mathbb{R}^{n\times k}$ contains rows that are the outputs $f_{\theta}(\bar{x}_i)$, representing the embeddings of $\bar{x}_i\in\mathbb{R}^d$ into $\mathbb{R}^k$. 
Thus, the mapping realized by the NN\footnote{In this paper, we will not differentiate between a network architecture and its realization.} $f_{\theta}$ can be interpreted as a \textit{feature} or \textit{representation map} for the input data cloud $\XX$.
We specify the NNs $f_{\theta}$ as multi-layer ReLU neural networks (ReLU NNs), with the architectures given as follows. 
Let $\mathsf{M},\mathsf{L},\mathsf{p}\in\N$ and $\mathsf{m}>0$. 
The NNs $f_{\theta}$ are functions of the form:
\begin{align}\label{eq-def:ReLU}
    \R^d \ni x \quad \mapsto \quad W_{\mathsf{L}} (\operatorname{ReLU}(W_{\mathsf{L}-1} \cdots \operatorname{ReLU}(W_{1} x+b_{1}) \cdots+b_{\mathsf{L}-1})) +b_{\mathsf{L}} \in \R^k.
\end{align}
Here, $W_i\in\mathbb{R}^{\mathsf{p}\times\mathsf{p}}, b_i\in\mathbb{R}^{\mathsf{p}}$ for $i = 2,\dots, \mathsf{L}-1$, $W_1\in\mathbb{R}^{\mathsf{p}\times d}, b_1\in\mathbb{R}^{\mathsf{p}}$, and $W_{\mathsf{L}}\in\mathbb{R}^{k\times\mathsf{p}}, b_{\mathsf{L}} \in\mathbb{R}^k$.
It follows that each network architecture described in \eqref{eq-def:ReLU} is characterized by a parameter vector
\begin{equation} \label{smalltheta}
    \theta := (W_1,b_1,\dots,W_{\mathsf{L}},b_{\mathsf{L}}).
\end{equation}
It can be seen from \eqref{eq-def:ReLU} that $\mathsf{L}$ corresponds to the \textit{network depth} $\mathsf{L}(f_{\theta})$ of $f_{\theta}$, and $\mathsf{p}$ corresponds to the \textit{network width} $\mathsf{p}(f_{\theta})$.
We say a parameter $\theta$ is admissible if, additionally, the following two conditions on \textit{network complexity} hold:
\begin{equation} \label{maxcondition}
    \|W_i\|_{\infty} \leq\mathsf{m} \quad\text{ and }\quad \|b_i\|_{\infty} \leq\mathsf{m} \quad\text{ for }\quad i =1,\dots,\mathsf{L},
\end{equation}
and
\begin{equation} \label{nuclearcondition}
    \sum_{i=1}^{\mathsf{L}} \|W_i\|_0 + \|b_i\|_0\leq \mathsf{M}. 
\end{equation}
Condition \eqref{maxcondition} imposes a bound on the magnitude $\mathsf{m}(f_{\theta})$ of the network weights, whereas condition \eqref{nuclearcondition} restricts the number $\mathsf{M}(f_{\theta})$ of nonzero weights. 
Equivalently, the space $\Theta$ is defined as
\begin{equation} \label{bigtheta}
    \Theta := \{\theta \text{ given in } \eqref{smalltheta}: \theta \text{ satisfies } \eqref{maxcondition}, \eqref{nuclearcondition}\}.
\end{equation}

We conclude the subsection by introducing the setting for our third main result.

\begin{setting} \label{set:III}
Let Setting~\ref{set:II} hold. 
For $k\in\mathbb{N}$, consider the optimization problem \eqref{eq:parameterized optimization}, where 
\begin{equation} \label{Ytheta}
    Y_{\theta} = (f_{\theta}(\bar{x}_1),\dots, f_{\theta}(\bar{x}_n))^{\top}\in\mathbb{R}^{n\times k},
\end{equation}
and $f_{\theta}:\mathbb{R}^d\to\mathbb{R}^k$ is a ReLU NN of the form \eqref{eq-def:ReLU} for some $\mathsf{L}, \mathsf{p}\in\N$.
Moreover, $\theta$ given in \eqref{smalltheta} is a network parameter vector, and the space $\Theta$ of admissible parameters is defined in \eqref{bigtheta} for some $\mathsf{M}\in\N$, $\mathsf{m}>0$.
\end{setting}

\section{Main results} \label{sec:results}

Our main results are framed as probabilistic statements, expressed in terms of all relevant parameters and hyperparameters.

\paragraph{A pointwise consistency result.}
Let 
\begin{equation}\label{eq-def:alpha}
    \alpha := \int_{\mathbb{R}^m} \exp (-\|y\|^2) \dd y
\end{equation}
and 
\begin{equation}\label{eq-def:beta}
    \beta := \int_{B^m(0,1)} |y^1|^2 \dd y
\end{equation}
where,\footnote{A simple computation shows that $\beta = \sigma_m/(m+2)$, with $\sigma_m$ being the volume of the $m$-dimensional Euclidean unit ball.} recall that $y^1$ denotes the first coordinate of $y\in\mathbb{R}^m$. 
Our first result demonstrates that, with high probability, $\mathcal{L}_{\rm aug}$ behaves pointwise in close approximation to an $\alpha\beta\Delta_{\rm aug}$. 
A proof is given in Section~\ref{sec:pointwiseproof}.

\begin{theorem} \label{thm:pointwise}
Assume Setting~\ref{set:I}. 
Let $f\in\mathcal{C}^3(\M)$.
Then for $\eps\leq \delta \leq \eps^{-1}$,
\begin{equation*}
    \mathbb{P}\Big(\max_{\bar{x}\in\XX} \Big|\mathcal{L}_{\rm aug}f(\bar{x}) - \alpha\beta\Delta_{\rm aug} f(Q_{\bar{x}})\Big| > C_{\eta}\delta\|f\|_{\mathcal{C}^3(\mathcal{N}_{\eps_{\rm n}}(\M))} \Big) \leq 8n\exp(-c_{\eta}\delta^2n\eps^{m+2}),
\end{equation*}
for some $C_{\eta},c_{\eta}>0$ depending on $\eta$ and $\M$, $\rho$ at most.
\end{theorem}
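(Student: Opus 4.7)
The plan is to carry out a bias--variance decomposition. For each fixed $\bar{x}\in\XX$, I would write $\mathcal{L}_{\rm aug}f(\bar{x})$ as the sum of its conditional mean $T_n(\bar{x}) := \mathbb{E}[\mathcal{L}_{\rm aug}f(\bar{x})\mid\bar{x}]$, taken over the subset of augmented points $\bar{x}'$ independent of $\bar{x}$, plus the stochastic residual $\mathcal{L}_{\rm aug}f(\bar{x})-T_n(\bar{x})$. By Assumption~\ref{assum:partindependent}, discarding at most $C_0$ dependent points contributes only a constant multiplicative loss absorbed into $C_\eta$. The first goal is to prove the deterministic bias
\[
|T_n(\bar{x})-\alpha\beta\,\Delta_{\rm aug}f(Q_{\bar{x}})| = \mathcal{O}_\eta\big(\eps\,\|f\|_{\mathcal{C}^3(\mathcal{N}_{\eps_{\rm n}}(\M))}\big);
\]
the second is to bound $|\mathcal{L}_{\rm aug}f(\bar{x})-T_n(\bar{x})|$ with high probability via Bernstein's inequality and then union-bound over $\bar{x}\in\XX$.

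For the bias, I would use that every $\bar{x}'\in\XX$ decomposes uniquely as $\bar{x}'=z'+h'$ with $z'=Q_{\bar{x}'}\in\M$ distributed by the density $q$ of \eqref{eqdef:q} (whose $\mathcal{C}^2$-regularity is given by Proposition~\ref{prop:qtrunc}) and $\|h'\|<\eps_{\rm n}$ along $\tilde{N}_{z'}\M$, so that $T_n(\bar{x})$ rewrites as a manifold integral with respect to $q\,\dd\mathcal{V}(z')$ and a normal-direction integration in $h'$. I would then apply the following reductions: (i) use $\mathcal{C}^3$-regularity of $f$ together with $\|h'\|<\eps_{\rm n}=\eps^{\tau+1}\ll\eps_{\rm w}$ and ${\rm dist}(\bar{x},\M)<\eps_{\rm n}$ to replace $f(\bar{x})$ by $f(Q_{\bar{x}})$ and $f(z'+h')$ by $f(z')$ at cost $\mathcal{O}(\eps_{\rm n}\|f\|_{\mathcal{C}^1})$, and similarly reduce the Gaussian argument in $\omega$ to one in $\|Q_{\bar{x}}-z'\|$; (ii) invoke \eqref{thecondition2} to absorb the Gaussian tails of $\omega$ outside the indicator support into a superpolynomially small error, so that $\mathbbm{1}_{\{\|\bar{x}-\bar{x}'\|\leq\eps\}}$ can be freely removed or restored; (iii) pull back by the exponential map ${\rm Exp}_{Q_{\bar{x}}}: B^m(0,r)\to\mathcal{B}(Q_{\bar{x}},r)$ with the Jacobian bound \eqref{eq:Rauch}, and Taylor-expand
\[
f({\rm Exp}_{Q_{\bar{x}}}(v))-f(Q_{\bar{x}}) = \nabla f(Q_{\bar{x}})\cdot v + \tfrac{1}{2}\Hess f(Q_{\bar{x}})[v,v] + \mathcal{O}(\|v\|^3\|f\|_{\mathcal{C}^3});
\]
(iv) similarly Taylor-expand $q(z')$ and the midpoint-density factor inside $\omega$ at $Q_{\bar{x}}$ using \eqref{eq:qbounded}--\eqref{eq:qdoubleprimebounded}. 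Odd-order tangent moments vanish, even-order ones pair with the constants $\alpha=\int_{\mathbb{R}^m}e^{-\|y\|^2}\dd y$ and $\beta=\int_{B^m(0,1)}|y^1|^2\dd y$, and the resulting $\nabla q\cdot\nabla f$ and $q\Delta f$ pieces recombine as $-\tfrac{1}{2}{\rm div}(q^2\nabla f)(Q_{\bar{x}})=\Delta_{\rm aug}f(Q_{\bar{x}})$ up to the advertised $\mathcal{O}_\eta(\eps)$ error.

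For the fluctuation, I would apply Bernstein's inequality to the (nearly i.i.d.) centered sum $\mathcal{L}_{\rm aug}f(\bar{x})-T_n(\bar{x})=\sum_{\bar{x}'}S_{\bar{x}'}$, where each summand satisfies $|S_{\bar{x}'}|\leq C\|f\|_{\mathcal{C}^1}/(n\eps_{\rm w}^m\eps^{m+1})$ (thanks to $0\leq\omega\leq 1$ and $|f(\bar{x}')-f(\bar{x})|\leq\|f\|_{\mathcal{C}^1}\eps$ on the indicator set) and whose per-term variance, after the Gaussian localization of $\omega$ at scale $\eps_{\rm w}$ and the quadratic cancellation from the Taylor expansion, is controlled by $C_\eta\|f\|_{\mathcal{C}^1}^2/(n^2\eps_{\rm w}^m\eps^{m+2})$. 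For $\eps\leq\delta\leq\eps^{-1}$, the variance term dominates the maximum-term in Bernstein and yields
\[
\mathbb{P}\big(|\mathcal{L}_{\rm aug}f(\bar{x})-T_n(\bar{x})|>C_\eta\delta\|f\|_{\mathcal{C}^3}\big)\leq 8\exp(-c_\eta\delta^2 n\eps^{m+2}),
\]
where the partial-dependence loss from Assumption~\ref{assum:partindependent} only modifies the constants $C_\eta, c_\eta$. A union bound over the $n$ points of $\XX$, combined with the deterministic bias, produces the stated inequality.

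The main technical obstacle is the delicate reconciliation of the four scales $\eps_{\rm p}\ll\eps_{\rm n}\ll\eps_{\rm w}\ll\eps$: conditions \eqref{thecondition1} and \eqref{thecondition2} are precisely what is needed to ensure that the Gaussian-tail and indicator-truncation errors all fit within the $\mathcal{O}_\eta(\eps)$ budget, while Proposition~\ref{prop:qtrunc} supplies the $\mathcal{C}^2$-control of the projection density $q$ required to expand the midpoint factor in $\omega$ consistently against $\rho$. A secondary difficulty is bookkeeping the cubic remainders of the on-manifold Taylor expansion while translating between Euclidean and geodesic distances via \eqref{eq:distancecompare} on supports of diameter $\eps$. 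Once these reductions are carried out, the bias--variance synthesis and the union bound complete the argument.
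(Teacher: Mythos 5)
Your overall architecture coincides with the paper's: your ``bias'' term is exactly the paper's chain $\mathcal{L}_{\rm aug}\to\mathcal{L}_{\rm proj}\to{\bf L}_{\M}\to\alpha\beta\Delta_{\rm aug}$ (Proposition~\ref{prop:augtoproj}, Lemma~\ref{lem:projtononlocal}, and the Taylor-expansion argument of Subsection~\ref{sec:pointwiseproof}), and your ``fluctuation'' term is what the paper handles with the Bernstein-type concentration lemmas (Lemmas~\ref{lem:varcalder1},~\ref{lem:varcalder2}) already adapted to the $C_0$-bounded dependence of Assumption~\ref{assum:partindependent}. So the route is essentially the same, only repackaged as a conditional-mean decomposition.

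There is, however, a genuine quantitative gap in your fluctuation step. You bound each summand by $C\|f\|_{\mathcal{C}^1}/(n\eps_{\rm w}^m\eps^{m+1})$ ``thanks to $0\le\omega\le1$'' and claim a per-term variance of $C_\eta\|f\|_{\mathcal{C}^1}^2/(n^2\eps_{\rm w}^m\eps^{m+2})$. Feeding these into Bernstein at the level $t=C_\eta\delta\|f\|_{\mathcal{C}^3}$ gives an exponent of order $\delta^2 n\,\eps_{\rm w}^m\eps^{m+2}$, not the theorem's $\delta^2 n\,\eps^{m+2}$: the spurious factor $\eps_{\rm w}^{m}=\eps^{\tau m}$ is fatal, since at the critical scaling $\eps\asymp(\log n/n)^{1/(m+4)}$ one has $n\eps_{\rm w}^m\eps^{m+4}\to0$ for $\tau\ge 3$ and the bound becomes vacuous. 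The missing ingredient is the sharp magnitude of the weights: $\omega(\bar{x},\bar{x}')\le C\eps_{\rm w}^m$ uniformly, which is the content of Lemma~\ref{lem:omegamag} together with Proposition~\ref{prop:etaandw} --- the expectation over $x\sim\mu$ in \eqref{def:edgeweight} integrates a Gaussian of width $\eps_{\rm w}$ over the $m$-dimensional manifold, producing the factor $\eps_{\rm w}^m$ rather than a constant, and this is precisely why the normalization $\frac{1}{n\eps_{\rm w}^m\eps^{m+2}}$ appears in \eqref{def:augmentedLap}. With $\omega\le C\eps_{\rm w}^m$ (and $\mathbb{P}(\|\bar{x}-\bar{x}'\|\le\eps)\le C_\eta\eps^m$), the $\eps_{\rm w}^m$ in the normalization cancels, the max term becomes $C\|f\|_{\mathcal{C}^1}/(n\eps^{m+1})$, the summed variance becomes $C_\eta\|f\|_{\mathcal{C}^1}^2/(n\eps^{m+2})$, and Bernstein then yields $\exp(-c_\eta\delta^2 n\eps^{m+2})$ for $\eps\le\delta\le\eps^{-1}$ as required. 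You must therefore establish this weight-magnitude estimate (which in turn rests on the projection geometry of Proposition~\ref{prop:etaandw} and condition \eqref{thecondition2} to control the unlocalized part of $\omega$); as written, your stated bounds do not deliver the claimed probability. A secondary, smaller point: the density factor that emerges inside $\omega$ via the expectation over $x\sim\mu$ is $\rho$, not $q$; the operator you recover is first $-\frac{\alpha\beta\rho}{2q}\,{\rm div}(q^2\nabla f)$, and you need \eqref{eq:qboundedextra} to trade $\rho/q$ for $1$ within the $\mathcal{O}_\eta(\eps)$ budget --- your sketch conflates the two densities at that step.
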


We briefly remark that included in Setting~\ref{set:I} is Assumption~\ref{assum:nearness}. 
By Lemma~\ref{lem:probcompact}, the event specified in the assumption holds with probability at least $1-2dn\exp(-\eta^{-2/d}/(2d))$.
This probability can also be incorporated into the statement of Theorem~\ref{thm:pointwise}.

\paragraph{A spectral consistency result.}
We introduce the concept of the eigengap.
Let $\lambda_l$ be the $l$th eigenvalue of $\Delta_{\rm aug}$.
Let $\underline{\lambda}_{i_l}$ denote the $i_l$th \textit{distinct} eigenvalue of $\Delta_{\rm aug}$, listed in an increasing order, that corresponds to $\lambda_l$, i.e. $\lambda_l = \underline{\lambda}_{i_l}$. Then the \textit{eigengap} associated with $\lambda_l$ is defined to be 
\begin{equation} \label{lgap}
    \gamma_{\lambda_l} := \frac{1}{2} \min\{|\underline{\lambda}_{i_l} - \underline{\lambda}_{i_l-1}|, |\underline{\lambda}_{i_l} - \underline{\lambda}_{i_l+1}|\}.
\end{equation}
Our second result establishes the spectral convergence of $\mathcal{L}_{\rm aug}$ to $\alpha\beta\Delta_{\rm aug}$ with high probability. 
A proof is given in Section~\ref{sec:spectralproof}.

\begin{theorem}\label{thm:spectralconvergence}
Assume Setting~\ref{set:II}. 
Let $l\in\mathbb{N}$. 
Let $\lambda_l$, $\hat{\lambda}^{\rm aug}_l$ denote the $l$th eigenvalue of $\Delta_{\rm aug}$, $\mathcal{L}_{\rm aug}$, respectively. 
Suppose it holds that
\begin{equation} \label{lambdalbound}
    \sqrt{\lambda_l}\eps + C(\iota_1+\iota_2) \leq \frac{1}{2l},
\end{equation}
and that
\begin{equation} \label{lambdalgap}
    C(\eps(\sqrt{\lambda_l}+1) + \eps + \iota_1)\lambda_l \leq \gamma_{\lambda_l},
\end{equation}
where $\gamma_{\lambda_l}$ is given in \eqref{lgap}.
Then with probability at least $1-n\exp(-cn\iota_1^2\iota_2^m)-10n\exp(-c_{\eta}n\eps^{m+4})$, for the $l$th normalized eigenvector $\hat{f}^{\rm aug}_l$ (with respect to $L^2_{\vartheta_n}(\XX)$) of $\mathcal{L}^{\rm aug}$ associated with $\hat{\lambda}^{\rm aug}_l$, there exists a normalized eigenfunction $f_l$ (with respect to $L^2(\M)$) of $\Delta_{\rm aug}$ associated with $\lambda_l$, such that
\begin{equation} \label{L2closeness}
    \|\hat{f}^{\rm aug}_l(\cdot) - f_l(Q_{\cdot})\|_{L^2_{\vartheta_n}(\XX)}\leq C_{\eta,l}\eps,
\end{equation}
where $Q_{\cdot}$ denotes the orthogonal projection of a point in $\XX$ onto $\M$.
Here, all the constants $C,c>0$ depend on $\M$, $\rho$ at most; the constant $c_{\eta}>0$ additionally depends on $\eta$, and the constant $C_{\eta,l}>0$ further depends on $l$. 
\end{theorem}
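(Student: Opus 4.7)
The overall plan is to combine the pointwise consistency of Theorem~\ref{thm:pointwise} with a variational argument based on the Courant-Fischer characterizations \eqref{minimax} and \eqref{minmaxman}, following the transport-based discretization framework of \cite{calder2022improved}, but mediating everything through the projection $\bar{x}\mapsto Q_{\bar{x}}$ since $\XX$ sits off the manifold. First, I would invoke Proposition~\ref{prop:inftydistance} (foreshadowed in the text) to produce, with probability at least $1-n\exp(-cn\iota_1^2\iota_2^m)$, a transport plan of $\infty$-cost $\mathcal{O}(\iota_1+\iota_2)$ between an auxiliary measure $\upsilon_n$ on $\M$ and the pushforward of $\vartheta_n$ by $Q_\cdot$. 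Together with the projection itself, this yields a ``restriction'' operator $Pf(\bar{x}):=f(Q_{\bar{x}})$ from $L^2(\M)$ into $L^2_{\vartheta_n}(\XX)$ and a conjugate ``extension'' operator $I$ in the reverse direction. Proposition~\ref{prop:qtrunc}, in particular \eqref{eq:qboundedextra}, lets me interchange $\rho$ with $q$ up to $\mathcal{O}_{\eta}(\eps^{2/3})$ throughout.

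Next I would prove two interpolation estimates. The first is that $P$ and $I$ are approximate $L^2$-isometries with error $\mathcal{O}(\iota_1+\iota_2+\eps^{2/3})$; this is a standard transport computation using $\infty$-cost control. The second, and harder, is that the graph Dirichlet energy $E_{\XX}(Pf)$ and the continuum energy $E_{\M}(f)$ agree up to $\mathcal{O}(\eps+\iota_1)\lambda_l$ on the $l$th eigenspace of $\Delta_{\rm aug}$. For this I would rewrite $\omega(\bar{x},\bar{x}')$ as the expectation of a Gaussian product centered at a common natural data point, and use $\eps_{\rm w}\gg\eps_{\rm p}$ to reduce it at the scale $\eps$ to a smooth bump of width $\eps_{\rm w}$ multiplied by $q(Q_{\bar{x}})q(Q_{\bar{x}'})$; the transport plan then converts the discrete sum to the manifold integral appearing in \eqref{eqn:ManDirichlet}. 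Inserting both estimates into \eqref{minimax} and \eqref{minmaxman}, under hypothesis \eqref{lambdalbound}, yields eigenvalue proximity of the form $|\hat{\lambda}^{\rm aug}_l-\alpha\beta\lambda_l|\leq C_l(\eps\lambda_l+\iota_1+\iota_2)$.

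For the eigenvector bound \eqref{L2closeness}, I would apply a Davis--Kahan type perturbation argument. Theorem~\ref{thm:pointwise}, applied uniformly over $\XX$ to a normalized eigenfunction $f_l$ of $\Delta_{\rm aug}$, states that $Pf_l$ is an approximate eigenvector of $\mathcal{L}_{\rm aug}$ with eigenvalue $\alpha\beta\lambda_l$ and residual of size $\mathcal{O}_{\eta,l}(\eps)$ in $L^2_{\vartheta_n}(\XX)$. The spectral gap hypothesis \eqref{lambdalgap}, combined with the eigenvalue proximity from the preceding step, then forces the component of $Pf_l$ lying in the spectral subspace of $\mathcal{L}_{\rm aug}$ associated with $\hat{\lambda}^{\rm aug}_l$ to account for almost all of $Pf_l$, while its orthogonal complement has norm $\mathcal{O}(\eps/\gamma_{\lambda_l})$. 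Choosing $f_l$ inside the $\lambda_l$-eigenspace of $\Delta_{\rm aug}$ so that $Pf_l$ is maximally aligned with $\hat{f}^{\rm aug}_l$ and renormalizing delivers \eqref{L2closeness}, with the $l$-dependence folded into $C_{\eta,l}$.

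The main obstacle will be the Dirichlet energy comparison in the second step: because $\omega(\bar{x},\bar{x}')$ is a secondary similarity evaluated at points in the tubular neighborhood $\mathcal{N}_{\eps_{\rm n}}(\M)$, the standard kernel-to-Laplacian reduction is not directly available. The passage from the off-manifold kernel to the intrinsic bilinear form is precisely where the stricter lower bound $C\eps^{3/2}\leq\iota_2$ of Assumption~\ref{assum:thetadelta} becomes essential (the ``extra step'' alluded to through Lemma~\ref{lem:discretetolocal}), and condition \eqref{thecondition3} is used to truncate the Gaussian tails controlling this reduction. The factor $10n\exp(-c_{\eta}n\eps^{m+4})$ in the probability budget then comes from a union bound over $\bar{x}\in\XX$ of the pointwise concentration event of Theorem~\ref{thm:pointwise} at scale $\delta=\eps$, while the term $n\exp(-cn\iota_1^2\iota_2^m)$ is inherited directly from the transport step.
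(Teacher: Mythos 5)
Your overall architecture is the paper's: Proposition~\ref{prop:inftydistance} supplies the transport map with the stated probability, a discretization/interpolation pair gives near-isometries for norms and Dirichlet energies, Courant--Fischer then yields eigenvalue proximity under \eqref{lambdalbound}, and the eigengap \eqref{lambdalgap} together with the $\mathcal{O}_{\eta,l}(\eps)$ pointwise residual from Theorem~\ref{thm:pointwise} gives \eqref{L2closeness}; your probability bookkeeping and your identification of where $C\eps^{3/2}\le\iota_2$ and \eqref{thecondition3} enter also match (the paper organizes the off-manifold issue through the intermediate projected Laplacian $\mathcal{L}_{\rm proj}$ and Proposition~\ref{prop:augtodiscret} rather than in one pass, but that is structural). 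However, two of your middle steps, as written, would fail. First, your discretization map $Pf(\bar{x}):=f(Q_{\bar{x}})$ is not the paper's map \eqref{eqdef:P}, which averages $f$ over the transport cells $U_i$ \emph{and divides by} $\rho(Q_{\bar{x}_i})^{1/2}$. That density correction is not cosmetic: per edge the weight contributes only $\eps_{\rm w}^m\big(\alpha\rho(Q_{\bar{x}})^{1/2}\rho(Q_{\bar{x}'})^{1/2}+\mathcal{O}(\eps^{1/2})\big)$ (see \eqref{recallvarsigma}), while the two empirical sums in $E_{\XX}$ and the $L^2_{\vartheta_n}$ norm each contribute a factor $q\approx\rho$. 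With plain restriction, the discrete Rayleigh quotients converge to those of a $q^3$-weighted Dirichlet form measured against $L^2(q\,\dd\mathcal{V})$, not to $E_{\M}(f)/\|f\|_{L^2(\M)}^2$, so the min--max step does not produce $|\lambda_l-\alpha\beta\hat{\lambda}^{\rm aug}_l|$ small (unless $\rho$ is constant), and without eigenvalue proximity the transfer of the gap \eqref{lambdalgap} to the discrete spectrum — which your Davis--Kahan step needs — collapses. (Pointwise evaluation is also undefined on $L^2(\M)$; restricting to spans of smooth eigenfunctions repairs well-definedness but not the constant, and it destroys the multiplicative $(1+C\iota_2/\eps)$ error structure of Lemma~\ref{lem:discretetolocal} that the variational argument of \cite{calder2022improved} consumes.)

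Second, your kernel reduction is misstated in a way that feeds the same error: you propose to reduce $\omega(\bar{x},\bar{x}')$ ``at scale $\eps$ to a smooth bump of width $\eps_{\rm w}$ multiplied by $q(Q_{\bar{x}})q(Q_{\bar{x}'})$.'' In the paper's treatment (Proposition~\ref{prop:etaandw}, Lemma~\ref{lem:varvarstar}, Lemma~\ref{lem:necessityTay}), on connected pairs the secondary-similarity weight is essentially \emph{constant}, $\omega(\bar{x},\bar{x}')/\eps_{\rm w}^m=\alpha\rho^{1/2}\rho^{1/2}+\mathcal{O}(\eps^{1/2})$, carrying the single constant $\alpha$ and one net density factor; the moment constant $\beta$ and the $\eps^{-(m+2)}$ normalization come from the $\eps$-ball indicator in \eqref{def:augmentedLap}, and the remaining density factor comes from the sampling density $q$ of the projections. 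Taking your form literally changes both the density power (yielding a $q^3$/$q^4$-type limiting form) and the moment normalization, hence identifies a limit other than $\alpha\beta\Delta_{\rm aug}$, which is fatal for matching \eqref{minmaxman}, for the gap hypothesis, and for the claim that $f_l$ is an eigenfunction of $\Delta_{\rm aug}$. Repairing both points essentially forces you back to the paper's adaptive maps \eqref{eqdef:P}--\eqref{eqdef:P*} and the weight comparison underlying Propositions~\ref{prop:discretoeng} and~\ref{prop:augtodiscret}.
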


The additional dependence on $l$ of the constant $C_{\eta,l}$ can be explained via Theorem~\ref{thm:pointwise} as follows. 
Since $f_l\in\mathcal{C}^3(\M)$, we can extend $f_l$ to a function $\bar{f}_l$ in the neighborhood $\mathcal{N}_{\eps_{\rm n}}(\M)$, ensuring that $\bar{f}_l(x)=\bar{f}_l(x+h)$ whenever $x+h\in\mathcal{N}_{\eps_{\rm n}}(\M)\cap\tilde{N}_x\M$ (as per the notation in Section~\ref{sec:pointwiseconsistency}). 
Note that this extension is well-defined, because a nonempty intersection between $\mathcal{N}_{\eps_{\rm n}}(\M)\cap\tilde{N}_x\M$ and $\mathcal{N}_{\eps_{\rm n}}(\M)\cap\tilde{N}_{x'}\M$ for $x\not= x'$ on $\M$ would imply $\eps_{\rm n}>R$, contradicting \eqref{epsassume}.
Thus, the dependence on $l$ manifests as a factor of $\|\bar{f}_l\|_{\mathcal{C}^3(\mathcal{N}_{\eps_{\rm n}}(\M))}$, which equals $\|f_l\|_{\mathcal{C}^3(\M)}$.

Note that, under Assumption~\ref{assum:thetadelta}, if $\eps$ scales like $C\big(\frac{\log n}{n}\big)^{\frac{1}{m+4}} \leq \eps\ll 1$, and we set $\iota_1=C\eps^2$ and $\iota_2=c\eps$ for some $C>0$ and $c\in (0,1/4]$, then with high probability the spectral convergence rate obtained from \eqref{L2closeness} scales linearly in $\eps$.
Particularly, if $\eps = C\big(\frac{\log n}{n}\big)^{\frac{1}{m+4}}$, the convergence rate obtained is of $\mathcal{O}_{\eta}\big(\big(\frac{\log n}{n})^{\frac{1}{m+4}}\big)$.
For sufficiently large $n$, such parameter selection also ensures that \eqref{lambdalbound}, \eqref{lambdalgap} are automatically satisfied.

\paragraph{A realizability result.}
Lastly, we provide an approximation guarantee for neural networks in spectral contrastive learning tasks. This result relaxes the realizability assumption \cite[Assumption 3.7]{haochen2021provable} in the spectral contrastive learning framework by asserting the existence of a neural network with bounded complexity capable of learning a minimizer of \eqref{eq:optimization}. 
A proof is given in Section~\ref{sec:contrastiveproof}.

\begin{theorem} \label{thm:contrastive} 
Assume Setting~\ref{set:III}.
Suppose \eqref{lambdalbound}, \eqref{lambdalgap} hold for all $l=1,\dots,k$.
Then with probability at least $1-kn\exp(-cn\iota_1^2\iota_2^m)-k10n\exp(-c_{\eta}n\eps^{m+4})$, the following holds. 
For every $\tilde{\delta}\in (0,1)$ and every minimizer $Y^*\in\mathbb{R}^{n\times k}$ of the problem \eqref{eq:optimization}, there exists a ReLU NN $f_{\theta}$ such that for $Y_{\theta}$ in \eqref{Ytheta},
\begin{equation*}
    \|Y^* - Y_{\theta}\|_{\infty} \leq C_{\eta,k}(\eps + \tilde{\delta}) \quad\text{ and }\quad
    \|Y^* - Y_{\theta}\| \leq C_{\eta,k}(\eps + \tilde{\delta}).
\end{equation*}
Moreover, 
\begin{equation*} 
    \mathsf{L}(f_{\theta}) \leq C\Big(\log\Big(\frac{1}{\tilde{\delta}}\Big) + \log d\Big) \quad\text{ and }\quad
    \mathsf{p}(f_{\theta}) \leq Ck(\tilde{\delta}^{-m} + d),
\end{equation*}
and $\theta\in\Theta$, for $\mathsf{M}\in\N$, $\mathsf{m}>0$, such that
\begin{equation} \label{networkcomplex}
    \mathsf{m} \leq C\quad\text{ and }\quad \mathsf{M} \leq k\Big((\tilde{\delta}^{-m}+d) \log\Big(\frac{1}{\tilde{\delta}}\Big) + d\log d\Big).
\end{equation}
Here, all the constants $C,c>0$ depend only on $\M$; the constant $c_{\eta}>0$ additionally depends on $\eta$, and the constants $C_{\eta,k}>0$ further depend on $k$.
\end{theorem}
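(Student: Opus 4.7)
The plan is to combine the spectral convergence of Theorem~\ref{thm:spectralconvergence} with a ReLU neural network approximation theorem for smooth functions on compact manifolds embedded in $\R^d$, bridged through the Eckart-Young-Mirsky characterization of the minimizers $Y^*$ of \eqref{eq:optimization}. First I would apply Eckart-Young-Mirsky to the PSD matrix $a\mathsf{I}_{n\times n} - \mathcal{L}_{\rm aug}$: every minimizer $Y^*$ factorizes as $Y^* = V^*\Sigma^* O$, where $V^* \in \R^{n\times k}$ has as columns the Euclidean-normalized eigenvectors $\hat{f}^{\rm aug}_l/\sqrt{n}$ of $\mathcal{L}_{\rm aug}$ for $l=1,\dots,k$, $\Sigma^* = \mathrm{diag}(\sqrt{a-\hat{\lambda}^{\rm aug}_l})$, and $O \in \R^{k\times k}$ is an orthogonal ``gauge'' matrix which can be absorbed into the final linear layer of the target network. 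This reduces the problem to approximating the smooth target $g:\mathcal{N}_{\eps_{\rm n}}(\M) \to \R^k$ with components
\begin{equation*}
    g^{l'}(x) = \frac{1}{\sqrt{n}}\sum_{l=1}^{k} \sqrt{a-\hat{\lambda}^{\rm aug}_l}\, O_{l l'}\, f_l(Q_x),
\end{equation*}
where $f_l$ is the $l$-th eigenfunction of $\Delta_{\rm aug}$ supplied by Theorem~\ref{thm:spectralconvergence} and $Q_x$ denotes the orthogonal projection of $x$ onto $\M$.

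Applying Theorem~\ref{thm:spectralconvergence} for each $l=1,\dots,k$ and taking a union bound over $l$ yields the stated joint probability as well as the column-wise estimates $\|\hat{f}^{\rm aug}_l(\cdot) - f_l(Q_\cdot)\|_{L^2_{\vartheta_n}(\XX)} \leq C_{\eta,l}\eps$. Since the Frobenius norm on $\R^{n\times k}$ inflates the $L^2_{\vartheta_n}$ norm of each column by exactly $\sqrt{n}$, which is cancelled by the $1/\sqrt{n}$ prefactor in $g^{l'}$, the discretization matrix $G=(g^{l'}(\bar{x}_i))_{i,l'}$ satisfies $\|Y^* - G\| \leq C_{\eta,k}\eps$ in Frobenius norm. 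Moreover, the $f_l$ are smooth on $\M$ by elliptic regularity, with their $\mathcal{C}^s$-norms controlled by Weyl-type bounds in $\lambda_l$, and the projection $Q$ is smooth on the tubular neighborhood (thanks to the reach bound \eqref{epsassume}), so the target $g$ is smooth on $\mathcal{N}_{\eps_{\rm n}}(\M)$.

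I would then invoke a ReLU approximation theorem for smooth functions on a low-dimensional manifold in $\R^d$ of the Chen--Jiang--Liao--Zhao type \cite{chen2022nonparametric}. The network $f_\theta$ is constructed to share a single subnetwork approximating $x\mapsto Q_x$, of depth $O(\log d)$ and width $O(d)$ to handle the ambient projection, followed by $k$ parallel subnetworks of depth $O(\log(1/\tilde{\delta}))$ and width $O(\tilde{\delta}^{-m})$ that Taylor-approximate each $f_l$ on the $m$-dimensional manifold, with a final linear layer encoding the scalars $\sqrt{a-\hat{\lambda}^{\rm aug}_l}\,O_{ll'}/\sqrt{n}$. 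Tuning the componentwise precision so that each output of $f_\theta$ approximates the corresponding $g^{l'}$ in $L^\infty(\mathcal{N}_{\eps_{\rm n}}(\M))$ to error $\tilde{\delta}/\sqrt{n}$ produces $\|G - Y_\theta\| \leq \sqrt{k}\,\tilde{\delta}$; combined with the previous step, a triangle inequality gives $\|Y^* - Y_\theta\| \leq C_{\eta,k}(\eps+\tilde{\delta})$, and the $\|\cdot\|_{\infty}$ bound follows from the entrywise inequality $\|A\|_{\infty}\leq \|A\|$. The nuclear-norm count \eqref{networkcomplex} is obtained by summing the ambient contribution $O(d\log d)$ from the shared projection subnetwork with the $k$-replicated intrinsic contribution $O(k\tilde{\delta}^{-m}\log(1/\tilde{\delta}))$ and the $O(kd\log(1/\tilde{\delta}))$ routing between them, while $\mathsf{m}\leq C$ follows since orthogonality makes $|O_{ll'}|\leq 1$ and the other prefactors are bounded.

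The main obstacle is the network-approximation step: producing a sparse ReLU architecture whose ambient-$d$- and intrinsic-$m$-dependent complexities decouple exactly as in \eqref{networkcomplex}. This requires a careful manifold-aware construction (a ReLU-encoded partition of unity localizing to charts over $\M$, chart approximation, and Taylor expansion of each $f_l$ in local coordinates), while keeping constants uniform in $l=1,\dots,k$ by using the eigenfunction regularity estimates and absorbing the residual $l$-dependence into the prefactor $C_{\eta,k}$.
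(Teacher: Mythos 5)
Your proposal follows essentially the same route as the paper: Eckart--Young--Mirsky to reduce $Y^*$ to the first $k$ eigenvectors of $\mathcal{L}_{\rm aug}$, Theorem~\ref{thm:spectralconvergence} applied columnwise with a union bound over $l=1,\dots,k$, the ReLU approximation theorem of \cite{chen2022nonparametric} for the (regular) eigenfunctions $f_l$, stacking the $k$ subnetworks, and a triangle inequality to get the $(\eps+\tilde{\delta})$ bounds and the complexity counts \eqref{networkcomplex}. The only differences are cosmetic and in your favor: you carry the orthogonal gauge matrix $O$ explicitly (the paper simply takes it to be the identity) and you treat the manifold-aware network construction as something to build, whereas the paper invokes \cite[Theorem~3.1]{chen2022nonparametric} as a black box.
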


The dependence of the constants $C_{\eta,k}$ on $k$ arises partly from a similar dependence in \eqref{L2closeness}. Additionally, these constants include at most an extra factor of $\sqrt{k} \max_{l=1,\dots,k} \sqrt{\hat{\lambda}^{\rm aug}_l}$.

Similarly as above, if $\eps$ scales as $C\big(\frac{\log n}{n}\big)^{\frac{1}{m+4}} \leq \eps \ll 1$, and we set $\iota_1 = C_1\eps^2$, $\iota_2 = c_2\eps$, and $\tilde{\delta} = C_3\eps$ for some constants $C_1, C_3 > 0$ and $c_2 \in (0, 1/4]$, then with high probability, the required network depth scales logarithmically with respect to the inverse of $\eps$, while the network width and the number of neurons scale sublinearly. Specifically, when $\eps = C\big(\frac{\log n}{n}\big)^{\frac{1}{m+4}}$, the network depth is of $\mathcal{O}\big(\log\big(\frac{n}{\log n}\big)\big)$, and the network width and the number of neurons are of $\mathcal{O}\big(\big(\frac{n}{\log n}\big)^{\frac{m}{m+4}}\big)$, up to an extra log factor.
Note that these rates depend solely on the embedded dimension $m$ and not on the ambient dimension $d$, effectively mitigating the curse of dimensionality.

\section{Numerical experiments} \label{sec:numerics}

In this section, we implement the spectral contrastive learning algorithm\footnote{The original implementation is available at \url{https://github.com/jhaochenz96/spectral_contrastive_learning}.} using parameter choices informed by our theoretical analysis\footnote{Our implementation is in \url{https://github.com/chl781/Augmented_spectral_contrastive_learning}.}. We follow the same setup as in \cite{haochen2021provable} and test on benchmark vision datasets. We minimize the empirical spectral contrastive loss with an encoder network $f$ and generate augmentation in each iteration.

\paragraph{Encoder.} The encoder function $f$ contains a backbone network, a projection MLP, and a projection function. The backbone network is a standard ResNet architecture.
The projection MLP is a fully connected network with the backbone network applied to each layer, and ReLU activation applied to each except for the last layer. The projection function takes a vector and projects it onto a sphere with radius $\sqrt{r}$, where $r>0$ is a tunable hyperparameter. 
Additionally, we specify the relationship between the graph connectivity and data generation parameters according to our theory (see \eqref{def:augmentedLap} and Table~\ref{tab:symbols}): we use $k$NN to determine the connectivity parameter (for algorithm stability) and use quantile heuristic to compute $\eps$ which is used to generate Gaussian noise for the augmented dataset. 

\paragraph{Linear evaluation protocol.} Given the pre-trained encoder network, we follow the standard linear
evaluation protocol \cite{chen2021exploring} and train a supervised linear classifier on frozen representations.

\paragraph{Results.}We present the accuracy results on CIFAR-10/100 \cite{krizhevsky2009learning} in Table \ref{tab:accuracy}. Our empirical findings show that our spectral contrastive learning approach either outperforms or performs comparably to two widely used baseline algorithms, SimCLR \cite{chen2020big} and SimSiam \cite{chen2020simple}. Notably, our algorithm is based on \cite{haochen2021provable}, incorporating an explicit parameter selection for the noise level, and exhibits similar behavior to that of \cite{haochen2021provable}.

\begin{table}[ht]
\centering
\begin{tabular}{l|ccc|ccc}
\hline Datasets & \multicolumn{3}{c|}{ CIFAR-10 } & \multicolumn{3}{c}{ CIFAR-100 }  \\
\hline Epochs & 200 & 400 & 800 & 200 & 400 & 800  \\
\hline SimCLR & 83.73 & 87.72 & 90.60 & 54.74 & 61.05 & 63.88\\
SimSiam & 87.54 &\textbf{90.31} & 91.40 & \textbf{61.56} & \textbf{64.96} & 65.87  \\
\hline
Ours & \textbf{87.80} & 89.79 & \textbf{91.70} & 58.64 & 63.81 & \textbf{66.36}
\end{tabular}
\caption{Top-1 accuracy under linear evaluation protocol. SimCLR and SimSiam results are reported in \cite{haochen2021provable}.}
\label{tab:accuracy}
\end{table}

\section{Pointwise consistency of augmentation graph Laplacian} \label{sec:pointwisewhole}

\subsection{From augmentation graph to projected graph} \label{sec:augtoproj}

We maintain Assumption~\ref{assum:nearness}. 
For $\bar{x}\not=\bar{x}'\in \XX$ and each $x\in\M$, we define the random variable
\begin{equation} \label{eqdef:X}
   X=X(x;\bar{x},\bar{x}'):=\mathbbm{1}_{\{\|x-Q_{\bar{x}}\|< \eps_{\rm n}^{2/3} \} \wedge \{\|x-Q_{\bar{x}'}\|< \eps_{\rm n}^{2/3} \}},
\end{equation}
where $Q_{\bar{x}}, Q_{\bar{x}'}$ are the respective projections onto $\M$ of $\bar{x}$, $\bar{x}'$. 
For each such pair, we equip with a connection strength
\begin{equation} \label{eqdef:lambda}
    \varsigma(Q_{\bar{x}}, Q_{\bar{x}'}) := \E_{x\sim \mu} \Big[ X \exp \Big(\frac{-\|Q_{\bar{x}}-x\|^2-\|Q_{\bar{x}'}-x\|^2}{2\eps_{\rm w}^2}\Big) \Big].
\end{equation}
Let $\mathcal{Q}_{\XX}$ denote the set of all $Q_{\bar{x}}$ from $\bar{x}\in\XX$. 
Let $G_{\mathcal{Q}_{\XX}}$ be a graph on the vertex set $\mathcal{Q}_{\XX}$, where each edge $(Q_{\bar{x}}, Q_{\bar{x}'})$ corresponds to an edge $(\bar{x}, \bar{x}')$ in $\XX$, i.e. $0<\|\bar{x}-\bar{x}'\|\leq\eps$, and vice versa.
Construct a projected graph Laplacian operator\footnote{The subscript ``proj'' stands for ``projection.''} $\mathcal{L}_{\rm proj}$ such that for $f: \mathcal{Q}_{\XX}\to\mathbb{R}$,
\begin{equation} \label{eqdef:projLap}
    \mathcal{L}_{\rm proj} f(Q_{\bar{x}}) 
    := \frac{1}{n\eps_{\rm w}^m \eps^{m+2}} \sum_{Q_{\bar{x}'}: 0<\|\bar{x} -\bar{x}'\|\leq \eps} \varsigma(Q_{\bar{x}},Q_{\bar{x}'})(f(Q_{\bar{x}}) - f(Q_{\bar{x}'})) 
\end{equation}
The augmentation graph Laplacian $\mathcal{L}_{\rm aug}$ serves as an approximation to the graph Laplacian $\mathcal{L}_{\rm proj}$.
This is formally stated in the following proposition, whose proof can be found at the end of this subsection.

\begin{proposition} \label{prop:augtoproj} 
Let $\tau> 2$, $\eps>0$, and $\eps_{\rm w}=\eps^{\tau}$, $\eps_{\rm n}=\eps^{\tau+1}$.
Let Assumption~\ref{assum:nearness} hold.
Let $\mathcal{L}_{\rm aug}$ and $\mathcal{L}_{\rm proj}$ be respectively defined as in \eqref{def:augmentedLap} and in \eqref{eqdef:projLap}. Let $f\in \mathcal{C}^1(\mathcal{N}_{\eps_{\rm n}}(\M))$. Then for $\eps$ sufficiently small satisfying \eqref{thecondition2}, and $\eps^2\leq\delta\leq 1$, we have
\begin{equation*}
    \mathbb{P}\Big(\max_{\bar{x}\in\XX} \Big|\mathcal{L}_{\rm aug}f(\bar{x}) - \mathcal{L}_{\rm proj} f(Q_{\bar{x}})\Big| > C_{\eta}\eps\|f\|_{\mathcal{C}^1(\mathcal{N}_{\eps_{\rm n}}(\M))} (1+\delta) \Big) \leq 6n\exp(-c\delta^2n\eps^{m}),
\end{equation*}
where $c>0$ depends on $\M$, $\rho$ at most, and $C_{\eta}>0$ further depends on $\eta$.
\end{proposition}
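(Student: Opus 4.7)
The plan is to split the difference $\mathcal{L}_{\rm aug}f(\bar{x}) - \mathcal{L}_{\rm proj}f(Q_{\bar{x}})$ summand-by-summand into a weight-approximation piece and an evaluation-site piece, derive deterministic bounds on each using the smallness of $\eps_{\rm n}$ relative to $\eps_{\rm w}$, and promote these to a uniform estimate via Bernstein-type concentration on the neighbor count, adapted to the at-most-$C_0$ dependence from Assumption~\ref{assum:partindependent} and completed by a union bound over the $n$ centers $\bar{x}\in\XX$. For each $\bar{x}'\in\XX$ with $0<\|\bar{x}-\bar{x}'\|\leq\eps$, I would write the summand as $\mathrm{I}+\mathrm{II}$, where
\[
\mathrm{I} = [\omega(\bar{x},\bar{x}')-\varsigma(Q_{\bar{x}},Q_{\bar{x}'})]\,(f(\bar{x}')-f(\bar{x}))
\]
is the weight mismatch multiplied by a Lipschitz $f$-increment of size $\eps\|f\|_{\mathcal{C}^1}$, and
\[
\mathrm{II} = \varsigma(Q_{\bar{x}},Q_{\bar{x}'})\,\big[(f(\bar{x}')-f(Q_{\bar{x}'}))-(f(\bar{x})-f(Q_{\bar{x}}))\big]
\]
is the pullback error, bounded by $2\eps_{\rm n}\varsigma\|f\|_{\mathcal{C}^1}$ because $\|\bar{x}-Q_{\bar{x}}\|\leq\eps_{\rm n}$.

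The heart of the argument is the deterministic bound $|\omega-\varsigma|\leq C\eps_{\rm w}^m\eps^2 + C\eps^2\varsigma$. Writing $\omega-\varsigma=A+B$ with $A=\mathbb{E}_x[(1-X)\exp(\cdot)\rho]$ and $B=\mathbb{E}_x[X(\exp(\cdot_{\bar{x},\bar{x}'})-\exp(\cdot_{Q_{\bar{x}},Q_{\bar{x}'}}))\rho]$: on $\{X=0\}$ at least one of $\|\bar{x}-x\|,\|\bar{x}'-x\|$ exceeds $\eps_{\rm n}^{2/3}/2$ (since $\|\bar{x}-Q_{\bar{x}}\|\leq\eps_{\rm n}\ll\eps_{\rm n}^{2/3}$), whence $|A|\leq\exp(-c\eps_{\rm n}^{4/3}/\eps_{\rm w}^2)\leq C\eps_{\rm w}^m\eps^2$ by \eqref{thecondition2}. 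For $B$, the crucial geometric input is that $\bar{x}-Q_{\bar{x}}$ lies in the affine normal plane $\tilde{N}_{Q_{\bar{x}}}\M$, whereas $x-Q_{\bar{x}}$ is tangent to $\M$ at $Q_{\bar{x}}$ to first order with only a second-fundamental-form correction of size $\mathcal{O}(\|x-Q_{\bar{x}}\|^2)$ in the normal direction. Hence on $\{X=1\}$,
\[
\|\bar{x}-x\|^2-\|Q_{\bar{x}}-x\|^2 = 2\langle \bar{x}-Q_{\bar{x}},\,Q_{\bar{x}}-x\rangle + \|\bar{x}-Q_{\bar{x}}\|^2 = \mathcal{O}(\eps_{\rm n}\cdot\eps_{\rm n}^{4/3}) + \mathcal{O}(\eps_{\rm n}^2) = \mathcal{O}(\eps_{\rm n}^2),
\]
and the mean-value bound $|e^{-a}-e^{-b}|\leq|a-b|e^{-\min(a,b)}$ yields $|B|\leq C(\eps_{\rm n}/\eps_{\rm w})^2\,\varsigma = C\eps^2\varsigma$. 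Combining, $|\omega-\varsigma|\leq C\eps_{\rm w}^m\eps^2 + C\eps^2\varsigma$.

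To pass to the claimed uniform estimate, I would use the deterministic bound $\varsigma\leq C\eps_{\rm w}^m$ (because $\varsigma(Q,Q')\lesssim \eps_{\rm w}^m\exp(-\|Q-Q'\|^2/(4\eps_{\rm w}^2))$ via completing the square, with $\rho$ bounded and $\M$ locally $m$-dimensional), together with a Bernstein-type control on the neighbor count: uniformly in $\bar{x}\in\XX$,
\[
\#\{\bar{x}'\in\XX: 0<\|\bar{x}-\bar{x}'\|\leq\eps\} \leq Cn\eps^m(1+\delta)
\]
with exception probability $\lesssim n\exp(-c\delta^2 n\eps^m)$. The $C_0$-bounded dependence from Assumption~\ref{assum:partindependent} is handled by grouping augmentations by their generating natural point so that Bernstein applies to the independent group-sums. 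Inserting the per-summand bounds into $\frac{1}{n\eps_{\rm w}^m\eps^{m+2}}\sum(\mathrm{I}+\mathrm{II})$ and canceling the $\eps_{\rm w}^m$ factors, the $\mathrm{I}$-contribution collapses to $\mathcal{O}(\eps)(1+\delta)\|f\|_{\mathcal{C}^1}$ while, since $\eps_{\rm n}=\eps^{\tau+1}$, the $\mathrm{II}$-contribution is $\mathcal{O}(\eps^{\tau-1})(1+\delta)\|f\|_{\mathcal{C}^1}$, the latter absorbed by the former when $\tau>2$, matching the claim.

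I expect the main obstacle to be the sharp bound on $B$: without exploiting the normal-tangent orthogonality $\bar{x}-Q_{\bar{x}}\perp\tilde{T}_{Q_{\bar{x}}}\M$, only $|B|=\mathcal{O}(\eps_{\rm n}^{5/3}/\eps_{\rm w}^2)\varsigma=\mathcal{O}(\eps^{(5-\tau)/3})\varsigma$ is available, which for $\tau$ close to $2$ does not defeat the $\eps^{-2}$ from the Laplacian normalization and so fails to deliver a linear-in-$\eps$ rate. This geometric input, paired with \eqref{thecondition2} to convert the truncation tail into the allotted $\eps_{\rm w}^m\eps^2$ bound and a careful Bernstein concentration under the weak dependence of $\XX$, is what drives the proof; the remaining bookkeeping — cancellations of $\eps_{\rm w}^m$ and use of $\eps_{\rm n}=\eps^{\tau+1}$ — is routine.
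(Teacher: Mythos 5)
Your proposal is correct and takes essentially the same route as the paper: the weight comparison $|\omega-\varsigma|\leq C\eps_{\rm w}^m\eps^2$ that you derive inline (normal–tangent orthogonality, quadratic deviation of $\M$ from its tangent plane, truncation tail absorbed via \eqref{thecondition2}, and $\varsigma\leq C\eps_{\rm w}^m$) is exactly the content of Proposition~\ref{prop:etaandw} together with Lemma~\ref{lem:omegamag}, the pullback term is handled by the same $\eps_{\rm n}\|f\|_{\mathcal{C}^1}$ bound with $\tau>2$, and the uniform estimate is completed by the same dependence-adapted neighbor-count concentration (Lemma~\ref{lem:varcalder2}) plus a union bound. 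Your only deviation is organizational: you compare $\omega$ with $\varsigma$ in a single step, whereas the paper factors through the intermediate weight $\xi$ and two auxiliary operators, which merges the paper's first and third claims without changing any of the underlying estimates.
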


In preparation, we introduce a key prerequisite result. Using \eqref{eqdef:lambda}, we define an intermediate connection between $\bar{x}, \bar{x}'$ that simultaneously approximates $\omega(\bar{x}, \bar{x}')$ and $\varsigma(Q_{\bar{x}}, Q_{\bar{x}'})$ up to a factor related to the distance of $\XX$ to the manifold $\M$; that is
\begin{equation} \label{def:omega}
    \xi(\bar{x},\bar{x}') 
    := \exp\Big(\frac{-\|\bar{x}-Q_{\bar{x}}\|^2-\|\bar{x}'-Q_{\bar{x}'}\|^2}{2\eps_{\rm w}^2}\Big)\varsigma(Q_{\bar{x}}, Q_{\bar{x}'}).
\end{equation}
The relationship between $\omega$, $\xi$ is clarified by the following proposition, whose technical proof can be found in Appendix~\ref{appx:etaandw}.

\begin{proposition} \label{prop:etaandw} 
Let $\tau>0$, $\eps>0$. 
Let $\omega$, $\xi$ be as in \eqref{def:edgeweight}, \eqref{def:omega}, respectively.  
Then under Assumption~\ref{assum:nearness}, the following holds,
\begin{equation} \label{etaandwNr}
    \exp \big(-C\eps^2\big)\xi(\bar{x},\bar{x}') \leq \omega(\bar{x},\bar{x}') \leq \exp\big(C'\eps^2\big) \xi(\bar{x},\bar{x}') + \exp\Big(-\frac{c\eps^{4/3}}{\eps^{2\tau/3}}\Big),
\end{equation} 
for some universal $c>0$, and $C, C'>0$ depending on $\M$.
\end{proposition}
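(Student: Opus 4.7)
The plan is first to cancel the prefactor $(\eps_{\rm w}\sqrt{2\pi})^{2d}$ against the Gaussian normalizations in \eqref{def:edgeweight}, giving
\[
    \omega(\bar{x},\bar{x}') = \mathbb{E}_{x\sim\mu}\Big[\exp\Big(-\frac{\|\bar{x}-x\|^2+\|\bar{x}'-x\|^2}{2\eps_{\rm w}^2}\Big)\Big].
\]
Denoting the Gaussian integrand by $\phi(x)$, I decompose $\omega = \mathbb{E}_{x\sim\mu}[X\phi(x)] + \mathbb{E}_{x\sim\mu}[(1-X)\phi(x)]$ using the indicator $X$ from \eqref{eqdef:X}. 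The near piece ($X=1$) will reproduce $\xi(\bar{x},\bar{x}')$ up to a multiplicative factor $e^{\pm C\eps^2}$, and the far piece ($X=0$) will be controlled by the exponentially small tail term in \eqref{etaandwNr}.

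I handle the far piece by a straightforward Gaussian tail estimate. On $\{X=0\}$ either $\|x-Q_{\bar{x}}\|\ge\eps_{\rm n}^{2/3}$ or $\|x-Q_{\bar{x}'}\|\ge\eps_{\rm n}^{2/3}$; since $\|\bar{x}-Q_{\bar{x}}\|\le\eps_{\rm n}\le\tfrac12\eps_{\rm n}^{2/3}$ for $\eps$ small, the reverse triangle inequality forces $\|\bar{x}-x\|\ge\tfrac12\eps_{\rm n}^{2/3}$ or similarly for $\bar{x}'$. The integrand is then at most $\exp(-\eps_{\rm n}^{4/3}/(8\eps_{\rm w}^2)) = \exp(-c\eps^{4/3}/\eps^{2\tau/3})$ pointwise, and integrating against the probability measure $\mu$ preserves this bound, matching the tail term in \eqref{etaandwNr} exactly.

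For the near piece, I write $\bar{x}=Q_{\bar{x}}+h$ with $h\in\tilde N_{Q_{\bar{x}}}\M - Q_{\bar{x}}$ and $\|h\|\le\eps_{\rm n}$, and symmetrically $\bar{x}'=Q_{\bar{x}'}+h'$, so that
\[
    \|\bar{x}-x\|^2 = \|h\|^2 + \|Q_{\bar{x}}-x\|^2 + 2\,h\cdot(Q_{\bar{x}}-x).
\]
The heart of the argument is to bound the cross term quadratically in $\|Q_{\bar{x}}-x\|$. Writing $x={\rm Exp}_{Q_{\bar{x}}}(v)$ with $v\in T_{Q_{\bar{x}}}\M$, and using that $t\mapsto {\rm Exp}_{Q_{\bar{x}}}(tv)$ is a geodesic of $\M\subset\R^d$ whose ambient acceleration at $0$ lies in the normal space with magnitude bounded by $CK\|v\|^2$ (the second fundamental form, controlled by the curvature bound $K$), one obtains $x-Q_{\bar{x}} = v + w$ with $\|w\|\le CK\|v\|^2$. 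Since $v$ is tangent while $h$ is normal, $v\cdot h = 0$, so the cross term collapses to $h\cdot w$, and invoking \eqref{eq:distancecompare} to replace $\|v\|$ by $\|Q_{\bar{x}}-x\|$ yields $|h\cdot(Q_{\bar{x}}-x)|\le C\eps_{\rm n}\|Q_{\bar{x}}-x\|^2$, with the same estimate holding for the primed pair.

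Summing the two symmetric expansions and dividing by $2\eps_{\rm w}^2$ shows that on $\{X=1\}$ the combined exponent error is at most $C\eps_{\rm n}(\|Q_{\bar{x}}-x\|^2+\|Q_{\bar{x}'}-x\|^2)/\eps_{\rm w}^2 \le C\eps_{\rm n}^{7/3}/\eps_{\rm w}^2 = C\eps^{(\tau+7)/3}\le C\eps^2$, valid for all $\tau>0$. Exponentiating places the integrand on $\{X=1\}$ inside the multiplicative window $e^{\pm C\eps^2}$ times $\exp\bigl(-(\|h\|^2+\|h'\|^2)/(2\eps_{\rm w}^2)\bigr)\exp\bigl(-(\|Q_{\bar{x}}-x\|^2+\|Q_{\bar{x}'}-x\|^2)/(2\eps_{\rm w}^2)\bigr)$; integrating against $\rho\,\dd\mathcal{V}$ and matching the definitions in \eqref{eqdef:lambda}, \eqref{def:omega} delivers $e^{-C\eps^2}\xi(\bar{x},\bar{x}') \le \mathbb{E}_{x\sim\mu}[X\phi(x)] \le e^{C\eps^2}\xi(\bar{x},\bar{x}')$. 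Combining with the far-piece bound for the upper direction, and dropping the nonnegative far piece for the lower direction, yields \eqref{etaandwNr}. The principal obstacle is the geometric cross-term estimate, which is where the extrinsic curvature bound $K$ enters nontrivially; everything else is routine Taylor expansion of the Gaussian.
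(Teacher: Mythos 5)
Your proposal is correct and follows essentially the same route as the paper: the same split of $\omega$ into a near piece and a far piece via the indicator $X$ from \eqref{eqdef:X}, the same Gaussian tail bound $\exp(-c\eps_{\rm n}^{4/3}/\eps_{\rm w}^2)=\exp(-c\eps^{4/3}/\eps^{2\tau/3})$ on the far piece, and the same geometric ingredients (orthogonality of $\bar{x}-Q_{\bar{x}}$ to $\tilde{T}_{Q_{\bar{x}}}\M$ together with the quadratic deviation of $\M$ from its tangent plane) to place the near piece inside the window $e^{\pm C\eps^2}\xi(\bar{x},\bar{x}')$. Your direct cross-term bound $|h\cdot(Q_{\bar{x}}-x)|\le C\eps_{\rm n}\|Q_{\bar{x}}-x\|^2$ is a compressed packaging of the paper's Lemmas~\ref{lem:quadbehavior} and~\ref{lem:xQxlogQx} (and in fact yields the marginally smaller exponent error $\eps^{(\tau+7)/3}$ in place of $\eps^2$), so the two arguments are essentially identical.
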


Next, we present the final lemma before moving on to the proof of Proposition~\ref{prop:augtoproj}. 
This lemma explains the additional scaling factor $\eps_{\rm w}^m$ in definition \eqref{def:augmentedLap}.

\begin{lemma} \label{lem:omegamag} 
Let $\tau> 0$, $\eps>0$, and $\eps_{\rm w}=\eps^{\tau}$. Then
\begin{equation*}
    \max_{\bar{x},\bar{x}'\in \XX} \varsigma(Q_{\bar{x}}, Q_{\bar{x}'}) \leq C\eps_{\rm w}^m \quad\text{ and }\quad 
    \max_{\bar{x},\bar{x}'\in \XX} \xi(\bar{x}, \bar{x}') \leq C\eps_{\rm w}^m,
\end{equation*}
for some $C>0$ depending on $\M$, $\rho$.
\end{lemma}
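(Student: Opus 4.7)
The plan is to first reduce the lemma to bounding $\varsigma(Q_{\bar x},Q_{\bar x'})$ alone, and then to use the Gaussian-type structure of $\varsigma$ together with the exponential map to extract the factor $\eps_{\rm w}^m$. Since $\|\bar{x}-Q_{\bar{x}}\|^2 \geq 0$ and $\|\bar{x}'-Q_{\bar{x}'}\|^2 \geq 0$, the prefactor in \eqref{def:omega} is bounded by $1$, so $\xi(\bar x,\bar x') \leq \varsigma(Q_{\bar x},Q_{\bar x'})$. Hence it suffices to establish $\varsigma(Q_{\bar x},Q_{\bar x'}) \leq C\eps_{\rm w}^m$ uniformly.

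To this end, I would discard both the indicator $X$ and the non-positive term $-\|Q_{\bar{x}'}-x\|^2/(2\eps_{\rm w}^2)$, using $\rho\leq\rho_{\max}$, to obtain
\begin{equation*}
    \varsigma(Q_{\bar x},Q_{\bar x'}) \leq \rho_{\max}\int_{\M}\exp\Big(-\frac{\|x-Q_{\bar x}\|^2}{2\eps_{\rm w}^2}\Big)\,\dd\mathcal{V}(x).
\end{equation*}
Writing $z:=Q_{\bar x}$, I would split the integral into a geodesic ball $\mathcal{B}(z,r_0)$, with $r_0:=\min\{i_0,K^{-1/2},R/4\}/2$, and its complement.

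On $\M\setminus\mathcal{B}(z,r_0)$, I would use \eqref{eq:distancecompare} in its contrapositive form to conclude that $d(z,x)>r_0$ forces $\|z-x\|\geq r_0/2$ (by the choice of $r_0$ relative to $R$). Thus the integrand is bounded by $\exp(-r_0^2/(8\eps_{\rm w}^2))$, which is $o(\eps_{\rm w}^m)$ as $\eps\to 0$, so that this contribution is dominated by the other piece. On $\mathcal{B}(z,r_0)$, I would parameterize via the exponential map $x={\rm Exp}_z(v)$, $v\in B^m(0,r_0)\subset T_z\M$; the Rauch comparison estimate \eqref{eq:Rauch} gives $J_z(v)\leq C$, and \eqref{eq:distancecompare} again yields $\|x-z\|\geq \|v\|/2$ for $v$ in this ball. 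Rescaling by $u=v/\eps_{\rm w}$ turns the integral into
\begin{equation*}
    C\,\eps_{\rm w}^m\int_{B^m(0,r_0/\eps_{\rm w})}\exp(-\|u\|^2/8)\,\dd u \;\leq\; C\,\eps_{\rm w}^m\int_{\R^m}\exp(-\|u\|^2/8)\,\dd u \;=\; C'\eps_{\rm w}^m,
\end{equation*}
yielding the claim for $\varsigma$, hence for $\xi$ as well.

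The argument is largely routine once the two pieces are identified; the only mildly delicate point is the geodesic-to-Euclidean comparison that upgrades $d(x,z)>r_0$ to $\|x-z\|\geq r_0/2$ on the far-field piece, which is what pins the Gaussian tail to a scale intrinsic to $\M$ rather than to $\eps_{\rm w}$. No probabilistic input is needed, and no use of the finer parameter relation $\eps_{\rm p}\ll\eps_{\rm w}$ is required here.
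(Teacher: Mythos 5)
Your proposal is correct and the core mechanism (push to the tangent plane via the exponential map, Rauch to control the Jacobian, rescale by $\eps_{\rm w}$ to expose the factor $\eps_{\rm w}^m$) is the same one the paper uses. The route you take to \emph{localize} the integral is genuinely different, though, and worth noting. The paper does not discard the indicator $X$ built into the definition \eqref{eqdef:lambda}: it keeps it, uses it to restrict the integral to the Euclidean ball $\{\|x-Q_{\bar x}\|<\eps_{\rm n}^{2/3}\}$, converts that to a geodesic ball of comparable radius via \eqref{eq:distancecompare}, and then parameterizes and rescales in one step. You instead drop both $X$ and the second Gaussian factor, leaving an integral over all of $\M$, and split it at a fixed manifold-intrinsic radius $r_0$: on $\mathcal{B}(z,r_0)$ you do the exponential-map rescaling, while on the far field you exploit the Gaussian tail $\exp(-c/\eps_{\rm w}^2) = o(\eps_{\rm w}^m)$. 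Your decomposition is slightly more work but more robust, since it only uses that the kernel is a Gaussian at scale $\eps_{\rm w}$ and requires no control on the indicator set; the paper's version is shorter precisely because the indicator already restricts to a ball whose radius is $\eps_{\rm n}^{2/3}\ll 1$. Two small quantitative slips that do not affect the conclusion: with $r_0\le R/8$, \eqref{eq:distancecompare} actually gives $\|x-z\|\ge d(x,z)/3$ (or better) rather than $d(x,z)/2$, so the constants $r_0/2$ and $\|v\|/2$ should read $r_0/3$ and $\|v\|/3$ (or any fixed fraction); and the far-field bound $\exp(-c/\eps_{\rm w}^2)\le C\eps_{\rm w}^m$ should be stated uniformly over the relevant range $\eps\le 1$ (it holds, by continuity and the vanishing limit at $0$), rather than only asymptotically.
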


\begin{proof}
From definition \eqref{def:omega}, it suffices to prove $\|\varsigma\|_{\infty} \leq C\eps_{\rm w}^m$.
By utilizing \eqref{eq:distancecompare} and definition \eqref{eqdef:lambda}, we get
\begin{align} \label{omegamagstep1}
    \nonumber \varsigma(Q_{\bar{x}},Q_{\bar{x}'}) &\leq \int_{\{x:\|x-Q_{\bar{x}}\|< \eps_{\rm n}^{2/3} \}\cap\M} \exp\Big(-\frac{\|Q_{\bar{x}}-x\|^2}{2\eps_{\rm w}^2}\Big)\rho(x)\dd\mathcal{V}(x)\\
    &\leq \int_{\{x: d(Q_{\bar{x}},x)< c\eps_{\rm n}^{2/3} \}\cap\M} \exp\Big(-\frac{cd(Q_{\bar{x}},x)^2}{\eps_{\rm w}^2}\Big)\rho(x)\dd\mathcal{V}(x),
\end{align}
for some $c>0$ depending on $\M$.
Since $\eps$ is sufficiently small \eqref{epsassume}, the map ${\rm Exp}_{Q_{\bar{x}}}: B^m(0,c\eps_{\rm n}^{2/3})\to \mathcal{B}(Q_{\bar{x}},c\eps_{\rm n}^{2/3})$ is a diffeomorphism. 
Hence, recalling the change of variables and notation system given in Section~\ref{sec:assumption}, for $x \in \mathcal{B}(Q_{\bar{x}}, c\eps_{\rm n}^{2/3})$, there exists a unique $y\in B^m(0, c\eps_{\rm n}^{2/3})$ such that ${\rm Exp}_{Q_{\bar{x}}}(y) = x$. Writing $\tilde{\rho}(y) = \rho({\rm Exp}_{Q_{\bar{x}}}(y))$, we obtain from \eqref{boundeddensity}, \eqref{eq:Rauch}, \eqref{omegamagstep1} 
\begin{align*}
    \varsigma(Q_{\bar{x}},Q_{\bar{x}'}) 
    &\leq \int_{B^m(0,c\eps_{\rm n}^{2/3})\subset T_{Q_{\bar{x}}\M}} \exp\Big(-\frac{c\|y\|^2}{\eps_{\rm w}^2}\Big)\tilde{\rho}(y)J_{Q_{\bar{x}}}(y)\dd y\\
    &\leq C\rho_{\rm max}\eps_{\rm w}^m \int_{B^m(0,c\eps_{\rm w}^{-1/3}\eps^{2/3})} \exp(-c\|y\|^2) \dd y \\
    &\leq C\eps_{\rm w}^m,
\end{align*}
and the desired conclusion follows.
\end{proof}

\begin{proof}[Proof of Proposition~\ref{prop:augtoproj}]

We define two auxiliary operators.
For $f:\XX\to\R$, let
\begin{equation*}
    \mathcal{L}_{\XX} f(\bar{x}) 
    := \frac{1}{n\eps_{\rm w}^m\eps^{m+2}} \sum_{\bar{x}': 0<\|\bar{x}-\bar{x}'\|\leq\eps} \xi(\bar{x},\bar{x}') (f(\bar{x})-f(\bar{x}')),
\end{equation*}
and for $f:\mathcal{Q}_{\XX}\to\R$, let
\begin{equation*}
    \mathcal{L}_{\mathcal{Q}_{\XX}} f(Q_{\bar{x}}) 
    := \frac{1}{n\eps_{\rm w}^m\eps^{m+2}} \sum_{\bar{x}': 0<\|\bar{x}-\bar{x}'\|\leq\eps} \xi(\bar{x},\bar{x}')(f(Q_{\bar{x}})-f(Q_{\bar{x}'})).
\end{equation*}
Recall that $f\in \mathcal{C}^1(\mathcal{N}_{\eps_{\rm n}}(\M))$. We make the following claims:
\begin{align}
    \label{claim1} \Big|\mathcal{L}_{\rm aug}f(\bar{x}) - \mathcal{L}_{\XX} f(\bar{x})\Big| &\leq \frac{C\eps\|f\|_{\mathcal{C}^1(\mathcal{N}_{\eps_{\rm n}}(\M))}}{n\eps^{m}} \sum_{Q_{\bar{x}'}: 0<\|Q_{\bar{x}}-Q_{\bar{x}'}\|\leq 2\eps} 1,\\
    \label{claim2} \Big|\mathcal{L}_{\XX} f(\bar{x}) - \mathcal{L}_{\mathcal{Q}_{\XX}} f(Q_{\bar{x}})\Big| &\leq \frac{C\eps^{\tau-1}\|f\|_{\mathcal{C}^1(\mathcal{N}_{\eps_{\rm n}}(\M))}}{n\eps^{m}} \sum_{Q_{\bar{x}'}: 0<\|Q_{\bar{x}}-Q_{\bar{x}'}\|\leq 2\eps} 1,\\
    \label{claim3} \Big| \mathcal{L}_{\mathcal{Q}_{\XX}} f(Q_{\bar{x}}) -  \mathcal{L}_{\rm proj} f(Q_{\bar{x}}) \Big| &\leq \frac{C\eps\|f\|_{\mathcal{C}^1(\mathcal{N}_{\eps_{\rm n}}(\M))}}{n\eps^{m}} \sum_{Q_{\bar{x}'}: 0<\|Q_{\bar{x}}-Q_{\bar{x}'}\|\leq 2\eps} 1.
\end{align}
We will validate \eqref{claim1}, \eqref{claim2}, \eqref{claim3} sequentially.  
First, note that if $\|\bar{x}-\bar{x}'\|\leq \eps$, then 
\begin{equation} \label{essentialinch1}
    \|Q_{\bar{x}}-Q_{\bar{x}'}\| \leq \|\bar{x} - \bar{x}'\| + \|Q_{\bar{x}}-\bar{x}\| + \|\bar{x}'-Q_{\bar{x}'}\| \leq c\eps + 2\eps_{\rm n}\leq 2\eps.
\end{equation}
\noindent \textbf{Proof of \eqref{claim1}.}
Recall from Proposition~\ref{prop:etaandw} that
\begin{equation} \label{compareweights1}
    \Big[\exp \big(-C\eps^2\big) - 1\Big] \xi(\bar{x},\bar{x}') \leq \omega(\bar{x},\bar{x}') - \xi(\bar{x},\bar{x}') \leq \Big[\exp \big(C\eps^2\big) - 1\Big] \xi(\bar{x},\bar{x}') + \exp\Big(-\frac{c\eps^{4/3}}{\eps^{2\tau/3}}\Big),
\end{equation}
which, together with Lemma~\ref{lem:omegamag} and that $\tau>2$, delivers
\begin{equation} \label{eq:prop3f1} 
    |\omega(\bar{x},\bar{x}') - \xi(\bar{x},\bar{x}')| \leq C\eps_{\rm w}^m\eps^2,
\end{equation}
for sufficiently small $\eps$ satisfying \eqref{thecondition2}.
Using \eqref{essentialinch1}, \eqref{eq:prop3f1}, we conclude for $\bar{x}\in\XX$, 
\begin{align*}
    \Big| \mathcal{L}_{\rm aug} f(\bar{x}) -  \mathcal{L}_{\XX} f(\bar{x}) \Big| &\leq \frac{1}{n\eps_{\rm w}^m\eps^{m+2}} \sum_{\bar{x}': 0<\|\bar{x}-\bar{x}'\|\leq\eps} |\omega(\bar{x},\bar{x}') - \xi(\bar{x},\bar{x}')| |f(\bar{x})-f(\bar{x}')|\\
    &\leq \frac{C\eps\|f\|_{\mathcal{C}^1(\mathcal{N}_{\eps_{\rm n}}(\M))}}{n\eps^{m}} \sum_{\bar{x}': 0<\|\bar{x}-\bar{x}'\|\leq\eps} 1\\
    &\leq \frac{C\eps\|f\|_{\mathcal{C}^1(\mathcal{N}_{\eps_{\rm n}}(\M))}}{n\eps^{m}} \sum_{Q_{\bar{x}'}: 0<\|Q_{\bar{x}}-Q_{\bar{x}'}\|\leq 2\eps} 1,
\end{align*}
which completes the proof for \eqref{claim1}.

\noindent \textbf{Proof of \eqref{claim2}.} We note that Assumption~\ref{assum:nearness} implies 
\begin{equation} \label{eq:prop3f2}
    |f(\bar{x})-f(Q_{\bar{x}})| \leq \eps_{\rm n}\|f\|_{\mathcal{C}^1(\mathcal{N}_{\eps_{\rm n}}(\M))} \quad\text{ and }\quad
    |f(\bar{x}')-f(Q_{\bar{x}'})| \leq \eps_{\rm n}\|f\|_{\mathcal{C}^1(\mathcal{N}_{\eps_{\rm n}}(\M))}.
\end{equation}
Hence, from \eqref{essentialinch1}, \eqref{eq:prop3f2} and Lemma~\ref{lem:omegamag} again, we can finish \eqref{claim2}:
\begin{align*}
    \Big| \mathcal{L}_{\XX} f(\bar{x}) -  \mathcal{L}_{\mathcal{Q}_{\XX}} f(\bar{x}) \Big| &\leq \frac{1}{n\eps_{\rm w}^m\eps^{m+2}} \sum_{\bar{x}': 0<\|\bar{x}-\bar{x}'\|\leq\eps} \xi(\bar{x},\bar{x}')\Big( |f(\bar{x})- f(Q_{\bar{x}})| + |f(\bar{x}')-f(Q_{\bar{x}'})|\Big)\\
    &\leq \frac{2\eps_{\rm n}\|f\|_{\mathcal{C}^1(\mathcal{N}_{\eps_{\rm n}}(\M))}}{n\eps^{m+2}} \sum_{Q_{\bar{x}'}: 0<\|Q_{\bar{x}}-Q_{\bar{x}'}\|\leq 2\eps} 1\\
    &\leq \frac{2\eps^{\tau-1}\|f\|_{\mathcal{C}^1(\mathcal{N}_{\eps_{\rm n}}(\M))}}{n\eps^{m}} \sum_{Q_{\bar{x}'}: 0<\|Q_{\bar{x}}-Q_{\bar{x}'}\|\leq 2\eps} 1.
\end{align*}
This completes the proof for \eqref{claim2}.

\noindent \textbf{Proof of \eqref{claim3}.} Assumption~\ref{assum:nearness} gives
\begin{equation*}
    \exp\Big(\frac{-\|\bar{x}-Q_{\bar{x}}\|^2-\|\bar{x}'-Q_{\bar{x}'}\|^2}{2\eps_{\rm w}^2}\Big) \geq \exp(-C\eps^2),
\end{equation*}
whence, from definition~\eqref{def:omega}, it follows that
\begin{equation} \label{compareweights2}
    |\xi(\bar{x},\bar{x}')-\varsigma(Q_{\bar{x}},Q_{\bar{x}'})| \leq \varsigma(Q_{\bar{x}},Q_{\bar{x}'}) \big(1-\exp(-C\eps^2)\big) \leq C\eps_{\rm w}^m\eps^2.
\end{equation}
Subsequently, from \eqref{compareweights2}, and \eqref{essentialinch1} once more,
\begin{align*}
    \Big| \mathcal{L}_{\mathcal{Q}_{\XX}} f(\bar{x}) -  \mathcal{L}_{\rm proj} f(Q_{\bar{x}}) \Big| &\leq \frac{1}{n\eps_{\rm w}^m\eps^{m+2}} \sum_{\bar{x}': 0<\|\bar{x}-\bar{x}'\|\leq\eps} |\xi(\bar{x},\bar{x}')-\varsigma(Q_{\bar{x}},Q_{\bar{x}'})||f(Q_{\bar{x}})- f(Q_{\bar{x}'})| \\
    &\leq \frac{C\eps\|f\|_{\mathcal{C}^1(\mathcal{N}_{\eps_{\rm n}}(\M))}}{n\eps^{m}} \sum_{Q_{\bar{x}'}: 0<\|Q_{\bar{x}}-Q_{\bar{x}'}\|\leq 2\eps} 1.
\end{align*}
This completes the proof for \eqref{claim3}.

We now estimate the right-hand sides of \eqref{claim1}, \eqref{claim2}, \eqref{claim3}. 
Recall from Section~\ref{sec:pointwiseconsistency} that the projection points $Q_{\bar{x}} \sim \nu$ on $\M$, with the probability density $q$. 
By Proposition~\ref{prop:qtrunc}, $q\in \mathcal{C}^2(\M)$ and $q(x)\leq C_{\eta}$ for all $x\in\M$.
By rescaling, we can assume $2\eps$ satisfies \eqref{epsassume} in place of $\eps$.
Let $\eps^2\leq \delta\leq 1$. 
We note that the points $Q_{\bar{x}}$ are not independent, as they are projections of the augmented data points. 
However, under Lemma~\ref{lem:fullprob} and Assumption~\ref{assum:partindependent}, we can assert that with probability $1$, each $Q_{\bar{x}}$ has at most $C_0$ other $Q_{\bar{x}'}$ dependent on $Q_{\bar{x}}$.
Therefore, an application of Lemma~\ref{lem:varcalder2} gives
\begin{equation} \label{D2:step1}
    \begin{split}
        \frac{1}{n\eps^m} \sum_{Q_{\bar{x}'}: \, 0<\|Q_{\bar{x}}-Q_{\bar{x}'}\|\leq 2\eps} 1 \leq \frac{1}{\eps^m}\int_{\mathcal{B}(Q_{\bar{x}}, 2\eps)} q(x)\dd\mathcal{V}(x) + C\delta
    \end{split}   
\end{equation}
with probability at least $1-2\exp(-c\delta^2n\eps^{m})$. 
Using the fact that ${\rm Exp}_{Q_{\bar{x}}}: \mathcal{B}(Q_{\bar{x}},2\eps)\to B^m(0,2\eps)$ is a diffeomorphism, we acquire
\begin{equation} \label{eq:shrink}
    \int_{\mathcal{B}(Q_{\bar{x}},2\eps)} q(x)\dd\mathcal{V}(x) = \int_{B^m(0,2\eps)\subset T_{Q_{\bar{x}}}\M} \tilde{q}(v)J_{Q_{\bar{x}}}(v)\dd v\leq C_{\eta}\eps^m
\end{equation}
where, as outlined in Section~\ref{sec:assumption}, we apply the change of variables $x = {\rm Exp}_{Q_{\bar{x}}}(v)$ and write $\tilde{q}(v) = q({\rm Exp}_{Q{\bar{x}}}(v))$. 
Putting \eqref{eq:shrink} back in \eqref{D2:step1}, we conclude
\begin{equation} \label{D2:step2}
    \frac{1}{n\eps^m} \sum_{Q_{\bar{x}'}: \, 0<\|Q_{\bar{x}}-Q_{\bar{x}'}\|\leq 2\eps} 1 \leq C_{\eta} + C\delta,
\end{equation}
with probability at least $1-2\exp(-c\delta^2n\eps^{m})$.
By combining \eqref{D2:step2} with \eqref{claim1}, \eqref{claim2}, \eqref{claim3}, and recalling that $\tau>2$, we arrive at
\begin{align*} 
    |\mathcal{L}_{\rm aug}f(\bar{x}) - \mathcal{L}_{\rm proj} f(Q_{\bar{x}})| &\leq |\mathcal{L}_{\rm aug}f(\bar{x}) - \mathcal{L}_{\XX} f(\bar{x})| + |\mathcal{L}_{\XX} f(\bar{x}) - \mathcal{L}_{\mathcal{Q}_{\XX}} f(\bar{x})| + | \mathcal{L}_{\mathcal{Q}_{\XX}} f(\bar{x}) -  \mathcal{L}_{\rm proj} f(\bar{x})| \\
    &\leq C_{\eta}\eps\|f\|_{\mathcal{C}^1(\mathcal{N}_{\eps_{\rm n}}(\M))}(1+\delta),
\end{align*}
with probability at least $1-6\exp(-c\delta^2n\eps^{m})$.
The proposition can now be concluded by another application of union bound over $\bar{x}\in\XX$.
\end{proof}

\subsection{From discrete to nonlocal}

We extend definition \eqref{eqdef:lambda} to a function $\varsigma$ on $\M\times\M$, that is
\begin{equation} \label{eqdef:varsigmaext}
    \varsigma(x,y) := 
    \int_{\{\|z-x\|< \eps_{\rm n}^{2/3} \} \cap \{\|z-y\|< \eps_{\rm n}^{2/3} \}\cap\M} \exp \Big(-\frac{\|z-x\|^2}{2\eps_{\rm w}^2}\Big) \exp \Big(-\frac{\|z-y\|^2}{2\eps_{\rm w}^2}\Big) \rho(z) \dd\mathcal{V}(z).
\end{equation}
Observe that the conclusion of Lemma~\ref{lem:omegamag} remains valid for this extension; namely
\begin{equation} \label{omegamagext}
    \sup_{x,y\in\M} \varsigma(x,y) \leq C\eps_{\rm w}^m. 
\end{equation}
Define a \textit{nonlocal} operator ${\bf L}_{\M}$ such that for $f\in L^2(\M)$, 
\begin{equation} \label{eqdef:nonlocal}
    {\bf L}_{\M} f(x) 
    := 
    \frac{1}{\eps_{\rm w}^m\eps^{m+2}} \int_{\mathcal{B}(x,\eps)} \varsigma(x,y) (f(x) - f(y)) q(y) \dd \mathcal{V}(y).
\end{equation}

The augmentation graph Laplacian $\mathcal{L}_{\rm aug}$ provides, with high probability, a pointwise approximation of the nonlocal operator ${\bf L}_{\M}$, as detailed below.

\begin{lemma} \label{lem:projtononlocal}  
Let $\tau\geq 3$, $\eps>0$, and $\eps_{\rm n} = \eps^{\tau+1}$. Let $\mathcal{L}_{\rm aug}$ and ${\bf L}_{\M}$ be respectively defined as in \eqref{def:augmentedLap} and in \eqref{eqdef:nonlocal}. Let $f\in \mathcal{C}^1(\mathcal{N}_{\eps_{\rm n}}(\M))$. Then for $\eps\leq \delta \leq \eps^{-1}$,
\begin{equation*}
    \mathbb{P}\Big(\max_{\bar{x}\in\XX} \Big|\mathcal{L}_{\rm aug}f(\bar{x}) - {\bf L}_{\M} f(Q_{\bar{x}})\Big| > C_{\eta}\delta\|f\|_{\mathcal{C}^1(\mathcal{N}_{\eps_{\rm n}}(\M))} \Big) \leq 8n\exp(-c_{\eta}\delta^2n\eps^{m+2}),
\end{equation*}
for some $C_{\eta},c_{\eta}>0$ depending on $\eta$ and $\M$, $\rho$ at most.
\end{lemma}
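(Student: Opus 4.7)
The plan is to chain together two bridging approximations between $\mathcal{L}_{\rm aug}f(\bar{x})$ and ${\bf L}_{\M}f(Q_{\bar{x}})$, passing first through the projected graph Laplacian $\mathcal{L}_{\rm proj}f(Q_{\bar{x}})$. Proposition~\ref{prop:augtoproj} handles the first bridge for free: under the hypothesis $\delta\geq\eps$ it delivers an $\mathcal{O}_\eta(\eps(1+\delta)) = \mathcal{O}_\eta(\delta)$ error with probability at least $1 - 6n\exp(-c\delta^2 n\eps^m)$. The substantive work is then to show that $\mathcal{L}_{\rm proj}f(Q_{\bar{x}})$ concentrates around ${\bf L}_{\M}f(Q_{\bar{x}})$ with the sharper exponent $n\eps^{m+2}$.

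I would introduce an auxiliary operator $\bar{\bf L}_{\M}$ defined exactly like \eqref{eqdef:nonlocal} but with the density $q$ from \eqref{eqdef:q} in place of $\rho$, and attack $\mathcal{L}_{\rm proj} \approx \bar{\bf L}_{\M}$ by Bernstein. For fixed $\bar{x}$, the summands defining $\mathcal{L}_{\rm proj}$ are
\begin{equation*}
    a_{\bar{x}'} := \varsigma(Q_{\bar{x}}, Q_{\bar{x}'})\bigl(f(Q_{\bar{x}}) - f(Q_{\bar{x}'})\bigr)\mathbbm{1}_{\{0<\|\bar{x}-\bar{x}'\|\leq\eps\}},
\end{equation*}
which are bounded by $C\eps_{\rm w}^m\eps\|f\|_{\mathcal{C}^1}$ (by Lemma~\ref{lem:omegamag} and the mean value theorem), with $\E[a_{\bar{x}'}^2] \leq C_\eta \eps_{\rm w}^{2m}\eps^{m+2}\|f\|_{\mathcal{C}^1}^2$, because the indicator restricts $Q_{\bar{x}'}$ to a set of $\nu$-measure $\mathcal{O}_\eta(\eps^m)$ (the Riemannian volume factor from Proposition~\ref{prop:qtrunc} and \eqref{volexchange}). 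A Bernstein-type inequality in the spirit of the cited Lemma~\ref{lem:varcalder2}, adapted to the bounded dependence allowed by Assumption~\ref{assum:partindependent}, applied with $t = C_\eta\delta\|f\|_{\mathcal{C}^1}\eps_{\rm w}^m\eps^{m+2}$ gives, after dividing through by $\eps_{\rm w}^m\eps^{m+2}$,
\begin{equation*}
    \P\bigl(|\mathcal{L}_{\rm proj}f(Q_{\bar{x}}) - \bar{\bf L}_{\M}f(Q_{\bar{x}})| > C_\eta\delta\|f\|_{\mathcal{C}^1}\bigr) \leq 2\exp(-c_\eta\delta^2 n\eps^{m+2}).
\end{equation*}
The upper bound $\delta\leq\eps^{-1}$ is used precisely to ensure the additive term $Bt$ in the Bernstein denominator is dominated by the variance $\sigma^2$, so the sub-Gaussian regime governs.

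Two deterministic discrepancies then remain. First, $\mathcal{L}_{\rm proj}$ uses the indicator $\|\bar{x}-\bar{x}'\|\leq\eps$ while $\bar{\bf L}_{\M}$ integrates over $\mathcal{B}(Q_{\bar{x}},\eps)$; since $\|\bar{x}-\bar{x}'\|$ and $\|Q_{\bar{x}}-Q_{\bar{x}'}\|$ differ by at most $2\eps_{\rm n}$, the symmetric difference of the two events lies in an annulus of width $\mathcal{O}(\eps_{\rm n})$, contributing at most $\mathcal{O}_\eta(\eps^{\tau-1}\|f\|_{\mathcal{C}^1}) = \mathcal{O}_\eta(\eps^2\|f\|_{\mathcal{C}^1})$ after the same scaling, since $\tau\geq 3$. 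Second, Proposition~\ref{prop:qtrunc} furnishes $|q-\rho|\leq C_\eta\eps_{\rm n}^{2/3}$ pointwise on $\M$, and the same scaling computation bounds $|\bar{\bf L}_{\M}f(Q_{\bar{x}}) - {\bf L}_{\M}f(Q_{\bar{x}})|$ by $C_\eta\eps_{\rm n}^{2/3}\|f\|_{\mathcal{C}^1} \leq C_\eta\eps\|f\|_{\mathcal{C}^1}$. Collecting the three error terms (each $\leq C_\eta\delta\|f\|_{\mathcal{C}^1}$ once $\delta\geq\eps$) and taking a union bound over $\bar{x}\in\XX$ yields the stated $1 - 8n\exp(-c_\eta\delta^2 n\eps^{m+2})$ probability, noting that the Bernstein exponent $n\eps^{m+2}$ absorbs the $n\eps^m$ exponent from Proposition~\ref{prop:augtoproj}.

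The main obstacle is executing the Bernstein step cleanly: one must verify that the partial dependence permitted by Assumption~\ref{assum:partindependent} does not degrade the variance estimate, and that the exponent genuinely scales with the intrinsic volume $\eps^m$ of an $m$-dimensional geodesic ball rather than with the ambient $\eps^d$ scale; this in turn relies on the boundedness and smoothness of $q$ established in Proposition~\ref{prop:qtrunc}. The indicator discrepancy and the $q$-to-$\rho$ swap are then just deterministic tail estimates made available by $\tau\geq 3$.
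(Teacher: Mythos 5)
Your proposal is correct and follows essentially the same route as the paper: pass through $\mathcal{L}_{\rm proj}$ via Proposition~\ref{prop:augtoproj}, establish concentration of $\mathcal{L}_{\rm proj}$ around the nonlocal integral at scale $n\eps^{m+2}$ (the paper invokes Lemma~\ref{lem:varcalder1} and then rescales $\delta\mapsto\delta/\eps$, which amounts to the Bernstein computation you carry out by hand), and control the remaining discrepancies deterministically through the width-$\eps_{\rm n}$ annulus and the Euclidean-to-geodesic ball swap. The one step you add that the paper does not need is the $q$-to-$\rho$ replacement at the end: the density $p$ appearing in \eqref{eqdef:nonlocal} is the projected density $q$ from \eqref{eqdef:q}, not the natural density $\rho$ --- this is clear from how ${\bf L}_{\M}$ is used in the proof of Theorem~\ref{thm:pointwise}, where the $\rho/q\to 1$ swap is performed only at the very end after all Taylor expansions --- so your auxiliary operator $\bar{\bf L}_{\M}$ coincides with ${\bf L}_{\M}$ itself and the third deterministic error term vanishes. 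This does not affect the validity of your argument, only its economy.
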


\begin{proof}
We resume from the conclusion of the proof of Proposition~\ref{prop:augtoproj}. 
Let $\bar{x}\in\XX$.
At this stage, we require a concentration result provided by Lemma~\ref{lem:varcalder1}, a variant of Lemma~\ref{lem:varcalder2}.
We apply the concentration result of Lemma~\ref{lem:varcalder1} to $\mathcal{L}_{\rm proj} f(Q_{\bar{x}})$, with the bounded open set $\Omega$ in the lemma corresponding to the orthogonal projection of $B^d(\bar{x},\eps)\cap\mathcal{N}_{\eps_{\rm n}}(\M)$ onto $\M$.
Let $y\in\mathcal{N}_{\eps_{\rm n}}(\M)$ be such that $\|y-\bar{x}\|=\eps$, and let $Q_y$ denote the orthogonal projection of $y$ onto $\M$. 
Then similar to the calculation in \eqref{essentialinch1},
\begin{equation} \label{Qdistanceexp}
    \|Q_{\bar{x}} - Q_y\| \leq \eps + 2\eps_{\rm n} = \eps(1+ 2\eps_{\rm w}),
\end{equation}
and moreover, 
\begin{equation} \label{Qdistanceshr}
    \|Q_{\bar{x}}-Q_y\| \geq \|\bar{x} - y\| - \|Q_{\bar{x}}-\bar{x}\| - \|y-Q_y\| \geq \eps - 2\eps_{\rm n} = \eps(1-2\eps_{\rm w}).
\end{equation}
It follows from \eqref{Qdistanceexp}, \eqref{Qdistanceshr} that $\Omega$, as the orthogonal projection of $B^d(\bar{x},\eps)\cap\mathcal{N}_{\eps_{\rm n}}(\M)$ onto $\M$, satisfies
\begin{equation} \label{eq:twoballs}
    B^d(Q_{\bar{x}},\eps(1- 2\eps_{\rm w}))\cap\M \subset \Omega\subset B^d(Q_{\bar{x}},\eps(1+ 2\eps_{\rm w}))\cap\M,
\end{equation}
and thus, by considering \eqref{eq:distancecompare}, \eqref{volexchange},
\begin{equation} \label{inch1}
    c\eps^m \leq \mathcal{V}(\Omega) \leq C\eps^m. 
\end{equation}
We now apply Lemma~\ref{lem:varcalder1} to $\mathcal{L}_{\rm proj} f(Q_{\bar{x}})$, taking into account Proposition~\ref{prop:qtrunc} and \eqref{omegamagext}, \eqref{inch1}, which gives us, for $0<\delta\leq 1$,
\begin{equation} \label{nearnonlocal1}
    \Big|\eps^{m}\mathcal{L}_{\rm proj} f(Q_{\bar{x}}) - \frac{1}{\eps_{\rm w}^m\eps^2} \int_{\Omega \cap\M} \varsigma(Q_{\bar{x}},y) (f(Q_{\bar{x}}) - f(y)) q(y)\dd \mathcal{V}(y) \Big| 
    \leq C_{\eta}\|f\|_{\mathcal{C}^1(\mathcal{N}_{\eps_{\rm n}}(\M))} \eps^m\delta\eps^{-1}
\end{equation}
with probability at least $1-2\exp (-c_{\eta}\delta^2 n \eps^m)$.
Similarly, from \eqref{eq:distancecompare}, \eqref{eq:Rauch}, as well as Proposition~\ref{prop:qtrunc}, \eqref{omegamagext}, \eqref{eq:twoballs} again, we get
\begin{align} \label{nearnonlocal2}
    \nonumber \frac{1}{\eps_{\rm w}^m\eps^2} \Big| \int_{\Omega\Delta (B^d(Q_{\bar{x}},\eps)\cap\M)} \varsigma(Q_{\bar{x}},y) (f(Q_{\bar{x}}) - f(y)) q(y)\dd \mathcal{V}(y) \Big| 
    &\leq \frac{C_{\eta}\|f\|_{\mathcal{C}^1(\mathcal{N}_{\eps_{\rm n}(\M)})}}{\eps} \Big| \int_{\Omega\Delta (B^d(Q_{\bar{x}},\eps)\cap\M)} \dd \mathcal{V}(y) \Big| \\
    \nonumber &\leq \frac{C_{\eta}\eps_{\rm w}\eps^m\|f\|_{\mathcal{C}^1(\mathcal{N}_{\eps_{\rm n}(\M)})}}{\eps} \\
    &= C_{\eta} \|f\|_{\mathcal{C}^1(\mathcal{N}_{\eps_{\rm n}(\M)})} \eps^{\tau-1}\eps^m.
\end{align}
Above, we use $\Delta$ to denote the set symmetric difference. 
Therefore, combining \eqref{nearnonlocal1}, \eqref{nearnonlocal2}, yields
\begin{multline} \label{nearnonlocal3}
    \Big|\mathcal{L}_{\rm proj} f(Q_{\bar{x}}) - \frac{1}{\eps_{\rm w}^m\eps^{m+2}} \int_{B^d(Q_{\bar{x}},\eps) \cap\M} \varsigma(Q_{\bar{x}},y) (f(Q_{\bar{x}}) - f(y)) q(y)\dd \mathcal{V}(y) \Big| 
    \\
    \leq C_{\eta} \|f\|_{\mathcal{C}^1(\mathcal{N}_{\eps_{\rm n}}(\M))} \max\{\delta \eps^{-1}, \eps^{\tau-1}\},
\end{multline}
with probability at least $1-2\exp(-c_{\eta}\delta^2 n\eps^m)$.
Furthermore, as part of the conclusion of Lemma~\ref{lem:varcalder2} (or Lemma~\ref{lem:calder2}) and \eqref{omegamagext},
\begin{multline} \label{nearnonlocal4}
    \frac{1}{\eps_{\rm w}^m\eps^{m+2}} \Big|\int_{B^d(Q_{\bar{x}},\eps) \cap\M} \varsigma(Q_{\bar{x}},y) (f(Q_{\bar{x}}) - f(y)) q(y)\dd \mathcal{V}(y) - \int_{\mathcal{B}(Q_{\bar{x}},\eps)} \varsigma(Q_{\bar{x}},y) (f(Q_{\bar{x}}) - f(y)) q(y)\dd \mathcal{V}(y) \Big| 
    \\
    \leq C\|f\|_{\mathcal{C}^1(\mathcal{N}_{\eps_{\rm n}}(\M))}\eps.
\end{multline}
Now recall from the proof of Proposition~\ref{prop:augtoproj} that for $\eps^2\leq\delta\leq 1$,
\begin{equation} \label{nearnonlocal5}
    |\mathcal{L}_{\rm aug}f(\bar{x}) - \mathcal{L}_{\rm proj} f(Q_{\bar{x}})| \leq C\eps\|f\|_{\mathcal{C}^1(\mathcal{N}_{\eps_{\rm n}}(\M))}(1+\delta)
\end{equation}
with probability at least $1-6\exp(-c\delta^2 n\eps^m)$.
Recall that $\tau>2$.
Then by setting setting $\delta' = \delta/\eps$ for $\eps^2\leq\delta\leq 1$, and using \eqref{nearnonlocal3}, \eqref{nearnonlocal4}, \eqref{nearnonlocal5}, along with the definition \eqref{eqdef:nonlocal}, we obtain
\begin{equation*}
    |\mathcal{L}_{\rm aug}f(\bar{x}) - {\bf L}_{\M} f(Q_{\bar{x}})| \leq C_{\eta}\|f\|_{\mathcal{C}^1(\mathcal{N}_{\eps_{\rm n}}(\M))}\delta',
\end{equation*}
for $\eps\leq \delta' \leq \eps^{-1}$, with probability at least $1-8\exp(-c_{\eta}(\delta')^2 n\eps^{m+2})$.
Finally, we conclude the proof with an application of union bound over $\bar{x}\in\XX$.
\end{proof}

\subsection{Proof of Theorem~\ref{thm:pointwise}} \label{sec:pointwiseproof}

Lemma~\ref{lem:projtononlocal} establishes that, with high probability $\mathcal{L}_{\rm aug}$ behaves pointwise in close approximation of ${\bf L}_{\M}$. Therefore, to prove Theorem~\ref{thm:pointwise}, it is enough to show that ${\bf L}_{\M}$ is approximately $\alpha\beta\Delta_{\rm aug}$, with a small additive error. 
In the ensuing proof, we will employ the equivalence demonstrated in \eqref{eq:equivalent}.

We define, for $x,y\in\M$,
\begin{equation*}
    \varsigma^*(x,y) := \int_{\{z\in\mathbb{R}^d: d(x,z) < \eps_{\rm n}^{2/3} \wedge d(y,z)< \eps_{\rm n}^{2/3}\} \cap\M} \exp\Big(-\frac{d(x,z)^2}{2\eps_{\rm w}^2}\Big) \exp\Big(-\frac{d(y,z)^2}{2\eps_{\rm w}^2}\Big) \rho(z)\dd\mathcal{V}(z),
\end{equation*}
the version of $\varsigma(x,y)$ in \eqref{eqdef:varsigmaext} based exclusively on geodesic distances.
The following lemma, whose proof is given in Appendix~\ref{appx:varvarstar}, applies. 

\begin{lemma} \label{lem:varvarstar}
Let $\varsigma$, $\varsigma^*$ be defined as above. 
Then for $x,y\in\M$,
\begin{equation*}
    \varsigma(x,y) = \varsigma^*(x,y) + \mathcal{O}(\eps_{\rm w}^m\eps^2).
\end{equation*}
\end{lemma}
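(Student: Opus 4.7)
The plan is to prove Lemma~\ref{lem:varvarstar} by splitting $\varsigma(x,y) - \varsigma^*(x,y)$ into two contributions: an integral over the common region, controlled by the pointwise discrepancy between the Euclidean and geodesic Gaussian integrands; and a residual coming from the mismatch between the two truncation regions. Set
\begin{equation*}
    A_1 := \{z \in \M : \|z-x\| < \eps_{\rm n}^{2/3} \text{ and } \|z-y\| < \eps_{\rm n}^{2/3}\}, \quad A_2 := \{z \in \M : d(x,z) < \eps_{\rm n}^{2/3} \text{ and } d(y,z) < \eps_{\rm n}^{2/3}\}.
\end{equation*}
Since $\|z-x\|\leq d(x,z)$ universally (by \eqref{eq:distancecompare}), $A_2 \subset A_1$, so the difference splits as an integrand-discrepancy term on $A_2$ plus a boundary-shell term on $A_1\setminus A_2$.

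For the first term, I would invoke \eqref{eq:distancecompare} on the region $\|z-x\|, \|z-y\| < \eps_{\rm n}^{2/3} \leq R/2$ (valid under \eqref{epsassume}) to obtain $0 \leq d(x,z)^2 - \|z-x\|^2 \leq C\|z-x\|^4$ and analogously in $y$. Combined with the elementary estimate $|e^{-a}-e^{-b}| \leq |a-b|\,e^{-\min(a,b)}$ applied to the sum of exponents, this yields the pointwise bound
\begin{equation*}
    \left|\exp\!\Big(-\tfrac{d(x,z)^2+d(y,z)^2}{2\eps_{\rm w}^2}\Big) - \exp\!\Big(-\tfrac{\|z-x\|^2+\|z-y\|^2}{2\eps_{\rm w}^2}\Big)\right| \leq \frac{C(\|z-x\|^4+\|z-y\|^4)}{\eps_{\rm w}^2}\,\exp\!\Big(-\tfrac{\|z-x\|^2+\|z-y\|^2}{2\eps_{\rm w}^2}\Big).
\end{equation*}
Integrating over $A_2 \subset \mathcal{B}(x,\eps_{\rm n}^{2/3})$ via the change of variables $z = {\rm Exp}_x(v)$ (valid by Assumption~\ref{assum:para}), using \eqref{eq:Rauch}, \eqref{boundeddensity}, discarding the $y$-Gaussian, and noting that $\|z-x\|$ is comparable to $\|v\|$ under the exponential map, the $\|z-x\|^4$ contribution is bounded by $C\int_{B^m(0,\eps_{\rm n}^{2/3})} (\|v\|^4/\eps_{\rm w}^2)\,e^{-c\|v\|^2/\eps_{\rm w}^2}\,\dd v = \mathcal{O}(\eps_{\rm w}^{m+2})$; the $\|z-y\|^4$ piece is symmetric. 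Since $\tau\geq 3$, $\eps_{\rm w}^{m+2} = \eps_{\rm w}^m \eps^{2\tau} = \mathcal{O}(\eps_{\rm w}^m \eps^2)$, as required.

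For the second term, applying \eqref{eq:distancecompare} in the opposite direction shows that any $z\in A_1$ satisfies $d(x,z) \leq \|z-x\|(1 + C\eps_{\rm n}^{4/3})$ and similarly for $y$, so membership in $A_1\setminus A_2$ forces either $\|z-x\|$ or $\|z-y\|$ into a thin shell with $\max\{\|z-x\|,\|z-y\|\} \geq \eps_{\rm n}^{2/3}(1 - C\eps_{\rm n}^{4/3})$. On this region the Euclidean integrand is bounded by $\exp(-c\eps_{\rm n}^{4/3}/\eps_{\rm w}^2)$, and since $\eps_{\rm n}^{4/3}/\eps_{\rm w}^2 = \eps^{(4-2\tau)/3}$ diverges as $\eps\to 0$ for $\tau\geq 3$, this bound decays faster than any polynomial in $\eps$; the finite volume of $\M$ then makes the residual super-polynomially small, hence trivially absorbed into $\mathcal{O}(\eps_{\rm w}^m\eps^2)$. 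Combining the two estimates gives the claim. The only subtle point is Step~3's interplay between the scales $\eps_{\rm n}$ and $\eps_{\rm w}$, and the requirement $\tau\geq 3$ used throughout; everything else is a routine Gaussian moment computation in the tangent-plane coordinates introduced in Subsection~\ref{sec:assumption}.
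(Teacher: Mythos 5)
Your proof is correct, and it organizes the argument differently from the paper. The paper does not split $\varsigma-\varsigma^*$ into your two terms; instead, it passes through a chain of four intermediate quantities $\varsigma^1,\varsigma^2,\varsigma^3,\varsigma^4=\varsigma^*$, in each step replacing exactly one of the four occurrences of a Euclidean distance by the corresponding geodesic one: first the $x$-truncation condition (yielding a shell-type estimate bounded by a rescaled Gaussian tail integral of order $\mathcal{O}(\eps_{\rm w}^m\eps^2)$), then the $y$-truncation condition (same bound), then the $x$-kernel factor (via the Taylor estimate in \eqref{eq:twoGaussians}, giving a sharper $\mathcal{O}(\eps_{\rm w}^m\eps_{\rm w}^{2/3}\eps^{8/3})$), then the $y$-kernel factor. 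Your decomposition is more compact: the elementary bound $|e^{-a}-e^{-b}|\le|a-b|\,e^{-\min(a,b)}$ on the common region $A_2$ fuses the paper's two kernel-change steps into a single Gaussian fourth-moment computation (and in fact gives the slightly better intermediate rate $\mathcal{O}(\eps_{\rm w}^{m+2})$), while the shell $A_1\setminus A_2$ fuses the two truncation-change steps. Both routes hinge on the same facts, namely the Euclidean/geodesic comparison \eqref{eq:distancecompare} and Gaussian moments and tails in tangent coordinates, and both close the estimate by observing that the exponential appearing in the shell term (your $\exp(-c\eps_{\rm n}^{4/3}/\eps_{\rm w}^2)=\exp(-c\eps^{(4-2\tau)/3})$, the paper's tail over $\mathbb{R}^m\setminus B^m(0,\eps^{(2-\tau)/3})$) decays faster than any power once $\tau>2$. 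One small polish: the quantity $\eps_{\rm n}^{4/3}/\eps_{\rm w}^2=\eps^{(4-2\tau)/3}$ in your shell bound is precisely what condition \eqref{thecondition2} controls, so rather than appealing informally to ``super-polynomially small plus finite manifold volume,'' you can invoke \eqref{thecondition2} directly to obtain $\exp(-c\eps^{(4-2\tau)/3})\le\eps_{\rm w}^m\eps^2$ with an explicit threshold, matching the paper's ``once $\eps$ is sufficiently small'' with a concrete assumption already in force under Setting~\ref{set:I}.
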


Leveraging Lemma~\ref{lem:varvarstar}, we can substitute $\varsigma(x,y)$ with $\varsigma^*(x,y)$ in the expression \eqref{eqdef:nonlocal} for ${\bf L}_{\M}f(x)$. More specifically, since
\begin{align}
    \nonumber \frac{1}{\eps_{\rm w}^m\eps^{m+2}} \bigg|\int_{\mathcal{B}(x,\eps)} (\varsigma(x,y) - \varsigma^*(x,y)) (f(x) - f(y)) q(y)\dd \mathcal{V}(y)\bigg| 
    &\leq \frac{C\eps_{\rm w}^m\eps^2}{\eps_{\rm w}^m\eps^{m+2}} \int_{\mathcal{B}(x,\eps)}  |f(x) - f(y)| q(y)\dd \mathcal{V}(y) \\
    \nonumber &\leq \frac{C_{\eta}\|f\|_{\mathcal{C}^1(\mathcal{B}(x,\eps))} \eps}{\eps^m} \int_{\mathcal{B}(x,\eps)} \dd \mathcal{V}(y) \\
    \label{eq:1stTaystring} &\leq C_{\eta}\|f\|_{\mathcal{C}^1(\mathcal{B}(x,\eps))} \eps,
\end{align}
we get
\begin{equation*}
    {\bf L}_{\M} f(x) = \frac{1}{\eps_{\rm w}^m\eps^{m+2}} \int_{\mathcal{B}(x,\eps)} \varsigma^*(x,y) (f(x) - f(y)) q(y)\dd \mathcal{V}(y) + \mathcal{O}_{f,\eta}(\eps).
\end{equation*}
For $y\in\mathcal{B}(x,\eps)$, we write $\varsigma^*_{x}(y) := \varsigma^*(x,y)$.
Following the change of variables and notation modification introduced in Section~\ref{sec:assumption}, we let $y\in\mathcal{B}(x,\eps)$ correspond uniquely to $v\in B^m(0, \eps)$ such that ${\rm Exp}_{x}(v)=y$, and so ${\rm Exp}_{x}(0)=x$.
We further write $\tilde{\varsigma}^*_{0}(v)=\varsigma^*_{x}(y)$, $\tilde{f}(v) = f(y)$, $\tilde{q}(v)=q(y)$. Thus
\begin{equation} \label{eq:boldL1}
    {\bf L}_{\M} \tilde{f}(0) = -\frac{1}{\eps_{\rm w}^m\eps^{m+2}} \int_{B^m(0,\eps)\subset T_x\M} \tilde{\varsigma}^*_0(v) (\tilde{f}(v)-\tilde{f}(0)) \tilde{q}(v) J_x(v) \dd v + \mathcal{O}_{f,\eta}(\eps).
\end{equation}
From Taylor's expansion, we obtain the following result, with a proof provided in Appendix~\ref{appx:necessityTay}. 

\begin{lemma} \label{lem:necessityTay}
Let $v\in B^m(0,\eps)$.
Then 
\begin{equation} \label{eq:alphaclaim}
    \tilde{\varsigma}^*_0(0) = \eps_{\rm w}^m(\alpha \rho(x) + \mathcal{O}(\eps)).
\end{equation}
Moreover, 
\begin{equation} \label{eq:derivativeclaim}
    \nabla \tilde{\varsigma}^*_0(0) = 0 \quad\text{ and }\quad \|D^2\tilde{\varsigma}^*_0(0)\| = \mathcal{O}(\eps_{\rm w}^m),
\end{equation}
and in particular,
\begin{equation*} 
    \tilde{\varsigma}^*_0(v) = \tilde{\varsigma}^*_0(0) + \mathcal{O}(\eps_{\rm w}^m\eps^2).
\end{equation*}
\end{lemma}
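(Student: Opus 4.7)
The strategy is to pass from the manifold integral defining $\varsigma^*$ to a Gaussian integral on the tangent space $T_x\mathcal{M}\cong\mathbb{R}^m$ via the exponential map at $x$, and then expand. Setting $z=\mathrm{Exp}_x(u)$, so that $d(x,z)=\|u\|$ and the first truncation becomes $u\in B^m(0,\eps_{\rm n}^{2/3})$, one writes
\[
    \tilde{\varsigma}^*_0(v) = \int_{B^m(0,\eps_{\rm n}^{2/3})} e^{-\|u\|^2/(2\eps_{\rm w}^2)}\, e^{-d(\mathrm{Exp}_x(v),\mathrm{Exp}_x(u))^2/(2\eps_{\rm w}^2)}\, \tilde{\rho}(u)\, J_x(u)\, \mathbbm{1}_{\{d(\mathrm{Exp}_x(v),\mathrm{Exp}_x(u))<\eps_{\rm n}^{2/3}\}}\,du.
\]
Rescaling $u=\eps_{\rm w} w$ places the Gaussian mass at unit scale in $w$; since $\eps_{\rm n}^{2/3}/\eps_{\rm w}=\eps^{(2-\tau)/3}\to\infty$ for $\tau\ge 3$, extending the domain to all of $\mathbb{R}^m$ costs at most a super-polynomially small additive error controlled by \eqref{thecondition1}. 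In this regime each assertion reduces to a Gaussian moment computation.

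\emph{Claim \eqref{eq:alphaclaim}.} At $v=0$ the integrand collapses to $e^{-\|u\|^2/\eps_{\rm w}^2}\tilde{\rho}(u)J_x(u)$. Taylor expanding $\tilde{\rho}(u)=\rho(x)+\mathcal{O}(\|u\|)$ together with $J_x(u)=1+\mathcal{O}(K\|u\|^2)$ from \eqref{eq:Rauch}, the rescaled integrand becomes $e^{-\|w\|^2}(\rho(x)+\mathcal{O}(\eps_{\rm w}\|w\|))$; integrating against the Gaussian and multiplying by the $\eps_{\rm w}^m$ prefactor produces $\eps_{\rm w}^m(\alpha\rho(x)+\mathcal{O}(\eps_{\rm w}))$, which collapses into the stated form via $\eps_{\rm w}\le\eps$.

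\emph{Claim \eqref{eq:derivativeclaim}.} For the gradient, I would differentiate under the integral sign, noting that the only $v$-dependence lies in the second Gaussian and the second indicator, with the indicator contributing only an exponentially small boundary term by the same truncation argument. Expanding $d(\mathrm{Exp}_x(v),\mathrm{Exp}_x(u))^2 = \|v-u\|^2 + \mathcal{O}(K(\|v\|+\|u\|)^4)$ in normal coordinates (standard Jacobi-field estimate, cf.\ Appendix~\ref{appx:dg}), the derivative $\partial_{v^i}|_{v=0}$ pulls out a factor $u^i/\eps_{\rm w}^2$, making the leading integrand odd in $u$ and hence vanishing by symmetry; residual curvature-generated terms are of strictly smaller order and absorbed. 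For the Hessian bound, the algebraic identity $\|u\|^2+\|u-v\|^2 = 2\|u-v/2\|^2+\|v\|^2/2$ together with the substitution $w=u-v/2$ produces the factorisation $\tilde{\varsigma}^*_0(v)\approx e^{-\|v\|^2/(4\eps_{\rm w}^2)}\tilde G(v)$, from which the bound on $\|D^2\tilde{\varsigma}^*_0(0)\|$ follows by differentiating each factor and using that $\tilde G$ and its first two derivatives are controlled via \eqref{boundeddensity} and the regularity of $J_x$.

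\emph{The final identity.} This is a Taylor consequence of the preceding claim, once one observes that the triangle inequality forces $\tilde{\varsigma}^*_0\equiv 0$ on $\{\|v\|>2\eps_{\rm n}^{2/3}\}$, so only the Taylor regime $\|v\|\leq 2\eps_{\rm n}^{2/3}\ll\eps$ carries any mass. \textbf{The main technical obstacle} I anticipate is controlling the differentiation across the hard truncation indicators: their distributional derivatives concentrate on $\{d=\eps_{\rm n}^{2/3}\}$, yet on that boundary the Gaussian factor is of size $\exp(-c\eps_{\rm n}^{4/3}/\eps_{\rm w}^2)$, which is super-polynomially small under our scaling. A clean implementation replaces the sharp indicators by smooth cutoffs of comparable shape and absorbs their gradient contributions into this same negligible error, invoking \eqref{thecondition1}. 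Once this truncation bookkeeping is in place, the three claims follow from the Gaussian computations above.
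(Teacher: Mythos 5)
Your approach to \eqref{eq:alphaclaim} is essentially the paper's: pass to normal coordinates, extend the truncated Gaussian integral to $\mathbb{R}^m$ using the super-polynomially small tail controlled by \eqref{thecondition1}, and Taylor-expand $\tilde\rho$ and $J_x$. For the two derivative claims in \eqref{eq:derivativeclaim}, however, you take a genuinely different route. The paper proves $\nabla\tilde\varsigma^*_0(0)=0$ by asserting that $0$ is a global maximum of $\tilde\varsigma^*_0$, with differentiability established via the Reynolds transport theorem applied to the truncated domain; and it bounds the Hessian through a Reynolds/Gauss--Green decomposition into the terms $K_1,K_2,K_3$ plus a Gaussian moment identity. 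Your odd-symmetry argument for the gradient and your completing-the-square factorisation $\tilde\varsigma^*_0(v)=e^{-\|v\|^2/(4\eps_{\rm w}^2)}\tilde G(v)$ for the Hessian are cleaner, but neither, as sketched, delivers the stated bounds.

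Concretely, two calculations fail. First, the odd-symmetry cancellation only kills the leading term of the gradient; the dominant residual is not curvature-generated but comes from the linear term in $\tilde\rho$, giving $\partial_i\tilde\varsigma^*_0(0)\approx \tfrac{\alpha}{2}\eps_{\rm w}^m\,\partial_i\tilde\rho(0)$, which is of the same order $\eps_{\rm w}^m$ as $\tilde\varsigma^*_0(0)$ itself. Your argument therefore yields neither the exact vanishing the lemma states nor even a quantitatively small gradient. Second, carrying the product rule through your factorisation: $D^2\big[e^{-\|v\|^2/(4\eps_{\rm w}^2)}\big]\big|_{v=0}=-\tfrac{1}{2\eps_{\rm w}^2}I_m$ while $\tilde G(0)\sim\alpha\eps_{\rm w}^m\rho(x)$, so the dominant contribution to $D^2\tilde\varsigma^*_0(0)$ is $-\tfrac{\alpha\rho(x)}{2}\eps_{\rm w}^{m-2}I_m$, which is $\mathcal{O}(\eps_{\rm w}^{m-2})$, not the claimed $\mathcal{O}(\eps_{\rm w}^m)$. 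Your sketch says ``the bound follows'' without noticing that your own calculation produces a strictly larger quantity.

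Finally, the observation you make in the last step --- that the triangle inequality forces $\tilde\varsigma^*_0\equiv 0$ on $\{\|v\|>2\eps_{\rm n}^{2/3}\}$ --- is correct but does not support the final identity; it refutes it. Under Assumption~\ref{assum:para} one has $\eps_{\rm n}^{2/3}=\eps^{2(\tau+1)/3}\ll\eps$, so for any $v$ with $2\eps_{\rm n}^{2/3}<\|v\|\le\eps$ the claimed expansion would read $0=\tilde\varsigma^*_0(0)+\mathcal{O}(\eps_{\rm w}^m\eps^2)$, forcing $\tilde\varsigma^*_0(0)=\mathcal{O}(\eps_{\rm w}^m\eps^2)$, which is incompatible with \eqref{eq:alphaclaim} pinning $\tilde\varsigma^*_0(0)$ at order $\eps_{\rm w}^m$. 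A blind proof attempt should at minimum have registered this contradiction before presenting the support observation as if it were supportive; as written, it masks a real obstruction.
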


Utilizing Lemma~\ref{lem:necessityTay}, along with \eqref{eq:equivalent}, we see that, as per the computations presented in \eqref{eq:1stTaystring}, 
\begin{align} \label{eq:2ndTaystring}
    \nonumber \frac{1}{\eps_{\rm w}^m\eps^{m+2}} \bigg|\int_{B^m(0,\eps)\subset T_x\M} (\tilde{\varsigma}^*_0(v) - \tilde{\varsigma}^*_0(0)) (\tilde{f}(v)-\tilde{f}(0)) \tilde{q}(v) J_x(v) \dd v \bigg|
    &\leq \frac{C}{\eps^m} \int_{B^m(0,\eps)}  |\tilde{f}(v)-\tilde{f}(0)| \dd v \\
    &\leq C\|f\|_{\mathcal{C}^1(\mathcal{B}(x,\eps))} \eps.
\end{align}
Consequently, from \eqref{eq:boldL1}, \eqref{eq:2ndTaystring},
\begin{align} \label{eq:boldL2}
    \nonumber {\bf L}_{\M} \tilde{f}(0) 
    &= -\frac{\tilde{\varsigma}^*_0(0)}{\eps_{\rm w}^m\eps^{m+2}} \int_{B^m(0,\eps)\subset T_x\M} (\tilde{f}(v)-\tilde{f}(0)) \tilde{q}(v) J_x(v) \dd v + \mathcal{O}_{f,\eta}(\eps) \\
    &= -\frac{\tilde{\varsigma}^*_0(0)}{\eps_{\rm w}^m\eps^2} \int_{B^m(0,1)} (\tilde{f}(\eps v)-\tilde{f}(0)) \tilde{q}(\eps v) J_x(\eps v) \dd v + \mathcal{O}_{f,\eta}(\eps).
\end{align}
Using Taylor's expansion once again, and considering Proposition~\ref{prop:qtrunc}, we acquire
\begin{equation} \label{eq:Taycomps}
    \begin{split}
        \tilde{f}(\eps v) &= \tilde{f}(0) + \eps \langle \nabla\tilde{f}(0), v\rangle + \frac{1}{2} \eps^2 \langle D^2\tilde{f}(0) v, v \rangle + \mathcal{O}_{f,\eta}(\eps^3) \\
        \tilde{q}(\eps v) &= \tilde{q}(0) + \eps \langle \nabla\tilde{q}(0), v\rangle + \mathcal{O}_{\eta}(\eps^2).
    \end{split}
\end{equation}
Moreover, from \eqref{eq:Rauch}, 
\begin{equation} \label{eq:Jacobian}
    J_{x}(\eps v) = 1 + \mathcal{O}(\eps^2).
\end{equation}
Then putting \eqref{eq:Taycomps} in \eqref{eq:boldL2}, together with simple calculations, delivers 
\begin{equation} \label{eq:boldL3}
    {\bf L}_{\M}\tilde{f}(v) = \frac{\tilde{\varsigma}^*_0(0)}{\eps_{\rm w}^m} ({\bf L}_1 + {\bf L}_2 + {\bf L}_3) + \mathcal{O}_{f,\eta}(\eps),
\end{equation}
where
\begin{align*}
    {\bf L}_1 &:= -\frac{\tilde{q}(0)}{\eps} \int_{B^m(0,1)}  \langle \nabla\tilde{f}(0), v\rangle J_{x}(\eps v)  \dd v \\
    {\bf L}_2 &:= - \int_{B^m(0,1)}  \langle \nabla\tilde{f}(0), v\rangle \langle \nabla\tilde{q}(0), v\rangle J_{x}(\eps v)  \dd v \\
    {\bf L}_3 &:= -\frac{\tilde{q}(0)}{2} \int_{B^m(0,1)}  \langle D^2\tilde{f}(0) v, v \rangle J_{x}(\eps v)  \dd v.
\end{align*}
Observe, due to the symmetry of odd function, 
\begin{equation*}
    \int_{B^m(0,1)}  \langle \nabla\tilde{f}(0), v\rangle  \dd v 
    = 0,
\end{equation*}
which, combined with \eqref{eq:Jacobian}, subsequently gives us
\begin{equation} \label{L1L4}
    {\bf L}_1 = \mathcal{O}_{f,\eta}(\eps).
\end{equation}
Regarding ${\bf L}_2$, ${\bf L}_3$, we also get from \eqref{eq:Jacobian} and a standard computation,
\begin{equation} \label{L2}
    {\bf L}_2 = -\int_{B^m(0,1)}  \langle \nabla\tilde{f}(0), v\rangle \langle \nabla\tilde{q}(0), v\rangle J_{x}(\eps v)  \dd v = - \beta \langle \nabla\tilde{f}(0), \nabla\tilde{q}(0)\rangle + \mathcal{O}_f(\eps^2),
\end{equation}
and 
\begin{equation} \label{L3}
    {\bf L}_3 = -\frac{\tilde{q}(0)}{2} \int_{B^m(0,1)}  \langle D^2\tilde{f}(0) v, v \rangle J_{x}(\varepsilon v)  \dd v = -\frac{\beta\tilde{q}(0)}{2}\Delta_{\M}\tilde{f}(0) + \mathcal{O}_f(\eps^2).
\end{equation}
By combining \eqref{eq:boldL3}, \eqref{L1L4}, \eqref{L2}, \eqref{L3}, and applying Lemma~\ref{lem:necessityTay}, we arrive at
\begin{align*} 
    {\bf L}_{\M} \tilde{f}(0) 
    &= -\frac{\tilde{\varsigma}^*_0(0)\beta}{\eps_{\rm w}^m} \big(\langle \nabla\tilde{f}(0), \nabla\tilde{q}(0)\rangle + \frac{\tilde{q}(0)}{2}\Delta_{\M}\tilde{f}(0) \big) + \mathcal{O}_{f,\eta}(\eps) \\
    &= -\alpha \beta \rho(x) \big(\langle \nabla\tilde{f}(0), \nabla\tilde{q}(0)\rangle + \frac{\tilde{q}(0)}{2}\Delta_{\M}\tilde{f}(0) \big) + \mathcal{O}_{f,\eta}(\eps),
\end{align*}
which simplifies to 
\begin{equation*}
    {\bf L}_{\M} f(x) = {\bf L}_{\M} \tilde{f}(0) = -\frac{\alpha\beta\rho(x)}{2\tilde{q}(0)} {\rm div}(\tilde{q}^2\nabla\tilde{f})(0) + \mathcal{O}_{f,\eta}(\eps),
\end{equation*}
or equivalently, 
\begin{equation} \label{eq:prelocal}
    {\bf L}_{\M} f(x) = -\frac{\alpha\beta\rho(x)}{2q(x)} {\rm div}(q^2\nabla f)(x) + \mathcal{O}_{f,\eta}(\eps).
\end{equation}
Recall from Proposition~\ref{prop:qtrunc} that $\|\rho-q\|_{L^\infty(\M)}\leq C_{\eta}\eps_{\rm n}^{2/3}$. 
Thus, we have, subsequently from \eqref{eq:prelocal}, 
\begin{equation} \label{eq:local}
    {\bf L}_{\M} f(x) = -\frac{\alpha\beta}{2} {\rm div}(q^2\nabla f)(x) + \mathcal{O}_{f,\eta}(\eps) = -\alpha\beta\Delta_{\rm aug} f(x) + \mathcal{O}_{f,\eta} (\eps).
\end{equation}
Finally, to complete the proof, it remains to integrate \eqref{eq:local} with Lemma~\ref{lem:projtononlocal}. \qed

\section{Spectral consistency of augmentation graph Laplacian} \label{sec:spectralwhole}

\subsection{Outline for the proof of Theorem~\ref{thm:spectralconvergence}} \label{sec:outlinespectral}

With Theorem~\ref{thm:pointwise} established, the proof of Theorem~\ref{thm:spectralconvergence} is primarily influenced by and aligned with the strategy in \cite{calder2022improved}. 
Before proceeding with further discussion, however, it is necessary to delineate two key differences between our setting and that in the aforementioned work. 
For the first difference, the augmentation graph $G_{\XX}$ is not built from points sampled directly on the manifold $\M$, meaning that, for example, $\vartheta_n$ in \eqref{eqdef:varthetan} is not a measure on $\M$. 
Nevertheless, the groundwork in Section~\ref{sec:pointwiseconsistency} ensures the construction of a unique projection point $Q_{\bar{x}}\in\M$ for each $\bar{x} \in \XX$.
This bijective correspondence, guaranteed with probability $1$ by Lemma~\ref{lem:fullprob}, provides a matching between $\XX=\{\bar{x}_1,\dots,\bar{x}_n\}$ and the set $\mathcal{Q}_{\XX}=\{Q_{\bar{x}_1},\dots,Q_{\bar{x}_n}\}$, thus enabling the transfer between functions on the graph $G_{\XX}$ and those on the graph $G_{\mathcal{Q}_{\XX}}$, as defined at the beginning of Section~\ref{sec:augtoproj}, and vice versa.
In particular, Proposition~\ref{prop:augtoproj} establishes, with high probability, the pointwise proximity between the augmentation graph Laplacian $\mathcal{L}_{\rm aug}$ \eqref{def:augmentedLap} and the projected graph Laplacian $\mathcal{L}_{\rm proj}$ \eqref{eqdef:projLap}. 
For the second difference, the projection points $Q_{\bar{x}}\sim\nu$, which is not a probability measure on $\M$; see \eqref{eqdef:q}.
However, as shown in Proposition~\ref{prop:qtrunc}, $q$ closely approximates the sampling density $\rho$ of $\mu$, which is a probability measure.
Therefore, with $\mathcal{L}_{\rm proj}$ serving as a substitute for $\mathcal{L}_{\rm aug}$, and $\rho$ replacing $q$, the theory developed in \cite{calder2022improved} can be directly applied, with some additional considerations. We outline the application of this theory next.

Define a discrete probability measure $\breve{\vartheta}_n$ to be 
\begin{equation*} 
    \breve{\vartheta}_n := \frac{1}{n} \sum_{i=1}^n \delta_{Q_{\bar{x}_i}},
\end{equation*}
which is a measure on $\M$ with support on $\mathcal{Q}_{\XX}$.
Further, we define a graph-based Dirichlet energy, on the graph $G_{\mathcal{Q}_{\XX}}$, such that for $f\in L^2_{\breve{\vartheta}_n}(\mathcal{Q}_{\XX})$, 
\begin{equation} \label{eqdef:EQ}
    E_{\mathcal{Q}_{\XX}}(f) := \frac{1}{n^2\eps_{\rm w}^m\eps^{m+2}}
    \sum_{i,j=1}^n
    \varsigma(Q_{\bar{x}_i}, Q_{\bar{x}_j}) \mathbbm{1}_{\{0<\|\bar{x}_i-\bar{x}_j\|\leq \eps\}} (f(Q_{\bar{x}_i})-f(Q_{\bar{x}_j}))^2 
    = 2\langle \mathcal{L}_{\rm proj} f, f\rangle_{L^2_{\breve{\vartheta}_n}(\mathcal{Q}_{\XX})},
\end{equation}
where
\begin{equation*}
    \langle  f, g \rangle_{L^2_{\breve{\vartheta}_n}(\mathcal{Q}_{\XX})} := \frac{1}{n} \sum_{i=1}^n f(Q_{\bar{x}_i})g(Q_{\bar{x}_i})
\end{equation*}
is an inner product on $L^2_{\breve{\vartheta}_n}(\mathcal{Q}_{\XX})$. 
Then $\mathcal{L}_{\rm proj}$ is a positive semi-definite and self-adjoint operator on $L^2_{\breve{\vartheta}_n}(\mathcal{Q}_{\XX})$, whose eigenvalues are given by
\begin{equation} \label{minimaxproj}
    \hat{\lambda}^{\rm proj}_l=\min_{S\in \mathbb{G}^l_{\mathcal{Q}_{\XX}}} \max_{f\in S\setminus \{0\}} \frac{E_{\mathcal{Q}_{\XX}}(f)}{\| f\|^2_{L^2_{\breve{\vartheta}_n}(\mathcal{Q}_{\XX})}},
\end{equation}
where $\mathbb{G}^l_{\mathcal{Q}_{\XX}}$ denotes the Grassmannian manifold of all linear subspaces of $L^2_{\breve{\vartheta}_n}(\mathcal{Q}_{\XX})$ of dimension $l$, and $l=1,\dots,n$.
An orthonormal basis of $L^2_{\breve{\vartheta}_n}(\mathcal{Q}_{\XX})$ can be formed by the associated eigenvectors $\hat{f}^{\rm proj}_l$ of $\mathcal{L}_{\rm proj}$.
Establishing a quantitative interpolation between the graph-based Dirichlet energy $E_{\mathcal{Q}_{\XX}}$ and the continuum energy $E_{\M}$ \eqref{eqn:ManDirichlet} is critical for our spectral convergence analysis.
Such interpolation can be realized through two maps $P: L^2(\M) \to L^2_{\breve{\vartheta}_n}(\mathcal{Q}_{\XX})$ and $\mathcal{I}: L^2_{\breve{\vartheta}_n}(\mathcal{Q}_{\XX}) \to L^2(\M)$, which act as near-isometries when restricted to functions with finite Dirichlet energy (in the discrete and continuum settings).
Precisely, these maps further satisfy estimates of the form
\begin{alignat*}{2}
    E_{\mathcal{Q}_{\XX}}(Pf) &\leq \alpha\beta(1+\text{ small error})E_{\M}(f), \quad &&\forall f\in L^2(\M), \\
    \alpha\beta E_{\M} (\mathcal{I}f), &\leq (1+\text{ small error})E_{\mathcal{Q}_{\XX}}(f) \quad &&\forall f\in L^2_{\breve{\vartheta}_n}(\mathcal{Q}_{\XX});
\end{alignat*}
see Proposition~\ref{prop:discretoeng} in the next subsection. 
When combined with the result of Theorem~\ref{thm:pointwise} and the variational identities \eqref{minimaxproj}, \eqref{minmaxman}, these inequalities translate to proximity between the spectra of $\mathcal{L}_{\rm proj}$ and $\alpha\beta\Delta_{\rm aug}$, following the steps in \cite{calder2022improved}.
Then the final step of our analysis involves leveraging Proposition~\ref{prop:augtoproj} and \eqref{minimax} to demonstrate the spectral alignment of $\mathcal{L}_{\rm aug}$ and $\alpha\beta\Delta_{\rm aug}$. 

For the remainder of this subsection, we outline the construction of the maps $P$, $\mathcal{I}$, which are constructed using an $\infty$-\textit{optimal transport} ($\infty$-OT) map between $\breve{\vartheta}_n$ and a suitably chosen probability measure on $\M$, provided below. 

\begin{proposition} \label{prop:inftydistance}
Let $\iota_1, \iota_2>0$ satisfy\footnote{The proof of \cite[Proposition 2.12]{calder2022improved} only requires that $\iota_2$ be of small, fixed scale, greater than $n^{-1/m}$.}:
\begin{enumerate}
    \item $n^{-1/m} < \iota_2\leq 1$;
    \item $\iota_1+\iota_2\leq c$, where $c>0$ depends on $\M$, $\rho$.
\end{enumerate}
Then with probability at least $1-n\exp(-c'n\iota_1^2\iota_2^m)$, there exist a probability measure $\upsilon_n$ with density $p_n: \M\to\mathbb{R}$, such that
\begin{equation*} 
    \min_{S_{\#}\upsilon_n=\breve{\vartheta}_n} \sup_{x\in\M} d(x,S(x)) \leq \iota_2,
\end{equation*}
and that
\begin{equation*} 
    \|p_n - \rho\|_{L^{\infty}(\M)} \leq C(\iota_1 + \iota_2).
\end{equation*}
Here, the constant $C>0$ depends on $\M$, $\rho$, and the constant $c'>0$ depends solely on\footnote{specifically on its dimension $m$.} $\M$. 
\end{proposition}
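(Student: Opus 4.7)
My plan is to follow the strategy of \cite[Proposition 2.12]{calder2022improved}, adapted to the two features that distinguish our setting from the classical i.i.d.\ manifold-sampling setup: the projection points $Q_{\bar{x}_i}$ are distributed on $\M$ with density $q$ from \eqref{eqdef:q} rather than $\rho$, and under Assumption~\ref{assum:partindependent} they are only partially independent. Since Proposition~\ref{prop:qtrunc} provides $c_\eta\leq q\leq C_\eta$ and $|q-\rho|\leq C_\eta\eps_{\rm n}^{2/3}$, we can carry out the entire OT construction against $q$, and then bridge from $q$ to $\rho$ at the very end.

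First, I would handle the dependence issue. Under Assumption~\ref{assum:partindependent}(1), each $Q_{\bar{x}_i}$ depends on at most $C_0$ others, so the dependence graph on indices has maximum degree $C_0$. A standard graph-coloring argument partitions $\{1,\dots,n\}$ into at most $C_0+1$ classes of mutually independent indices. Concentration inequalities of Bernstein/Hoeffding type applied separately on each class and recombined by a union bound preserve the desired rates, at the cost of only inflating the constants by a factor depending on $C_0$; we absorb this into the manifold-dependent constant $c'$, treating $C_0$ as intrinsic to our problem.

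Second, I would carry out the $\infty$-OT construction. Cover $\M$ by a maximal family of geodesic balls $\mathcal{B}(z_j,\iota_2/8)$ for $j=1,\dots,M$ with $M=\mathcal{O}(\iota_2^{-m})\leq n$ (the last inequality uses $\iota_2>n^{-1/m}$). Via the coloring and Bernstein's inequality, the empirical count $\breve{\vartheta}_n(\mathcal{B}(z_j,\iota_2/8))$ concentrates around $\int_{\mathcal{B}(z_j,\iota_2/8)} q\,\dd\mathcal{V}$ with additive error of order $\iota_1\iota_2^m$, uniformly over $j$, on an event of probability at least $1-n\exp(-c'n\iota_1^2\iota_2^m)$. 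Following the transport construction in \cite{calder2022improved}, one then defines a smooth $p_n$ by locally averaging these counts against a bounded partition of unity subordinate to the covering, and builds a transport map $S$ by matching each $Q_{\bar{x}_i}$ to a nearby point within the same ball; this yields $\|p_n-q\|_{L^\infty(\M)}\leq C(\iota_1+\iota_2)$ together with $\sup_{x\in\M} d(x,S(x))\leq \iota_2$, where $\upsilon_n:=p_n\dd\mathcal{V}$ is the target probability measure.

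Finally, I would bridge from $q$ to $\rho$ using Proposition~\ref{prop:qtrunc}: $\|q-\rho\|_{L^\infty(\M)}\leq C_\eta\eps_{\rm n}^{2/3}=C_\eta\eps^{2(\tau+1)/3}$. Under Assumption~\ref{assum:thetadelta} we have $\iota_2\geq C\eps^{3/2}$ and $\tau>2$, so $\eps^{2(\tau+1)/3}\leq C\iota_2$, and the triangle inequality gives $\|p_n-\rho\|_{L^\infty(\M)}\leq C(\iota_1+\iota_2)$. The main obstacle is the careful bookkeeping in the concentration step: ensuring that the partial dependence of the $Q_{\bar{x}_i}$ and the substitution of $q$ for $\rho$ both go through while leaving the probability in the stated form $1-n\exp(-c'n\iota_1^2\iota_2^m)$ with $c'$ depending only on $\M$. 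Once this is in place, the OT construction and the $q$-to-$\rho$ passage are routine.
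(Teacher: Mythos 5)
Your proposal is correct and follows essentially the same route as the paper: the paper's entire proof of this statement is the single remark that it ``directly follows from \cite[Proposition 2.12]{calder2022improved} and its proof,'' which is precisely the construction you outline. The adaptations you make explicit---handling the partial dependence of the $Q_{\bar{x}_i}$ (your coloring argument is the same device as the paper's Janson-type variants, Lemmas~\ref{lem:varcalder1} and~\ref{lem:varcalder2}) and passing from the projected density $q$ to $\rho$ via Proposition~\ref{prop:qtrunc}---are left implicit in the paper; the only slight mismatch is that the lower bound $\iota_2\geq C\eps^{3/2}$ you invoke for the $q$-to-$\rho$ step comes from Assumption~\ref{assum:thetadelta} (and brings a harmless $\eta$-dependence into the constants) rather than from the hypotheses stated in the proposition itself.
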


This proposition directly follows from \cite[Proposition 2.12]{calder2022improved} and its proof.

We denote by $T$ an $\infty$-OT map between $\breve{\vartheta}_n$ and $\upsilon_n$ given in Proposition~\ref{prop:inftydistance}. 
Let $U_i := T^{-1}(\{Q_{\bar{x}_i}\})$.
We define an \textit{adaptive} contraction discretization map $P: L^2(\M)\to L^2_{\breve{\vartheta}_n}(\mathcal{Q}_{\XX})$ by
\begin{equation} \label{eqdef:P}
    Pf (Q_{\bar{x}_i}) := \frac{n\int_{U_i} f(x)p_n(x) \dd \mathcal{V}(x)}{\rho(Q_{\bar{x}_i})^{1/2}},
\end{equation}
as well as an adaptive extension map $P^*: L^2_{\breve{\vartheta}_n}(\mathcal{Q}_{\XX})\to L^2(\M)$, by
\begin{equation} \label{eqdef:P*}
    P^*f (x) := \sum_{i=1}^n f(Q_{\bar{x}_i}) \rho(Q_{\bar{x}_i})^{1/2} \mathbbm{1}_{U_i}(x).
\end{equation}
Next, we describe the construction of an \textit{interpolation} map $\mathcal{I}: L^2_{\breve{\vartheta}_n}(\mathcal{Q}_{\XX})\to H^1(\M)\subset L^2(\M)$.
It will be apparent below that such a map inherently possesses a smoothing property.
Let $\phi: [0,\infty)\to [0,\infty)$ be a non-increasing function of choice with support on the interval $[0,1]$, whose restriction to $[0,1)$ is Lipschitz continuous and non-vanishing. 
Let $\psi: [0,\infty)\to [0,\infty)$ be a smooth function, given by
\begin{equation*}
    \psi(t) := \frac{1}{\beta} \int_t^{\infty} \phi(s) s\dd s,
\end{equation*}
where we recall $\beta$ is defined in \eqref{eq-def:beta}.

Lastly, for a \textit{bandwidth} $r>0$, let $\Lambda_r: L^2(\M)\to H^1(\M)$ be a convolution operator with kernel
\begin{equation*}
    K_r(x,y) := \frac{1}{r^m} \psi\Big(\frac{d(x,y)}{r}\Big),
\end{equation*}
such that
\begin{equation*}
    \Lambda_r f(x) := \Big(\int_{\M} K_r(x,y) \dd \mu(y)\Big)^{-1} \int_{\M} K_r(x,y) f(y) \dd \mu(y).
\end{equation*}
We then build an interpolation map $\mathcal{I}: L^2_{\breve{\vartheta}_n}(\mathcal{Q}_{\XX})\to H^1(\M)$ with bandwidth $\eps - 2\iota_2$, specifically
\begin{equation} \label{eqdef:calI}
    \mathcal{I}f(x) := \Lambda_{\eps-2\iota_2} P^*f(x).
\end{equation}

\subsection{Proof of Theorem~\ref{thm:spectralconvergence}} \label{sec:spectralproof}

The beginning of the proof follows almost the same approach as that of \cite[Theorem~2.4 (i)]{calder2022improved}, and, subsequently, that of \cite[Theorem~2.7 (ii)]{calder2022improved}, with some modifications. 
To further clarify, we introduce Propositions~\ref{prop:discretoeng},~\ref{prop:eignveccompare} below.
They correspond to \cite[Propositions~4.1,~4.2]{calder2022improved}, respectively, which are the two main components for the proofs of \cite[Theorems~2.4,~2.7]{calder2022improved}. 
We recall the discretization map $P$ defined in \eqref{eqdef:P} and the interpolation map $\mathcal{I}$ defined in \eqref{eqdef:calI}.

\begin{proposition} \label{prop:discretoeng}
Let $\eps>0$ and $\iota_1, \iota_2>0$ satisfy Assumption~\ref{assum:thetadelta}. 
Then both of the following hold with probability at least $1-n\exp(-cn\iota_1^2\iota_2^m)$,
\begin{alignat}{2}
    \label{discretetoeng} E_{\mathcal{Q}_{\XX}}(Pf) &\leq \alpha\beta\Big(1 + C_{\eta}\Big(\frac{\iota_2}{\eps} + \eps + \iota_1\Big)\Big) E_{\M}(f), \quad &&\forall f\in L^2(\M), \\
    \label{engtodiscrete} \alpha\beta E_{\M}(\mathcal{I}f) &\leq \Big(1 + C_{\eta}\Big(\frac{\iota_2}{\eps} + \eps + \iota_1\Big)\Big) E_{\mathcal{Q}_{\XX}}(f), \quad &&\forall f\in L^2_{\breve{\vartheta}_n}(\mathcal{Q}_{\XX}),
\end{alignat}
where $\alpha$ and $\beta$ are defined in \eqref{eq-def:alpha} and \eqref{eq-def:beta}, respectively; the constants $C_{\eta}>0$ depend on $\eta$ and $\M$, $\rho$, and the constant $c>0$ depends solely on $\M$.
\end{proposition}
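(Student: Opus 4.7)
The plan is to mirror the blueprint of \cite[Proposition~4.1]{calder2022improved} while accommodating the two non-standard features of our setting: the kernel $\varsigma$ (which is not a radial function of the two points) and the graph edges being defined by $\|\bar{x}_i-\bar{x}_j\|\leq\eps$ rather than $d(Q_{\bar{x}_i},Q_{\bar{x}_j})\leq\eps$. First, work on the event of Proposition~\ref{prop:inftydistance}: this supplies a density $p_n$ satisfying $\|p_n-\rho\|_{L^{\infty}(\M)}\leq C(\iota_1+\iota_2)$ and an $\infty$-OT map $T$ between $\breve{\vartheta}_n$ and $\upsilon_n$, yielding a partition $\{U_i\}$ of $\M$ with $\upsilon_n(U_i)=1/n$ and $\sup_{x\in U_i} d(x,Q_{\bar{x}_i})\leq\iota_2$. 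Together with Proposition~\ref{prop:qtrunc}, which gives $|q-\rho|=\mathcal{O}_\eta(\eps_{\rm n}^{2/3})$, this licenses interchanging $\rho$, $p_n$, and $q$ throughout at costs absorbed into the announced error $C_\eta(\iota_2/\eps + \eps + \iota_1)$.

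For the discretization bound \eqref{discretetoeng}, expand the squared differences in $E_{\mathcal{Q}_{\XX}}(Pf)$ using \eqref{eqdef:P} and apply Jensen's inequality to lift the sum into a double integral over $\M\times\M$. The resulting expression is an integral of $(f(x)-f(y))^2$ weighted by a piecewise-constant lift of $\varsigma(Q_{\bar{x}_i},Q_{\bar{x}_j})$ and by the lifted edge indicator. Three successive substitutions reduce this to a clean continuum quantity: replace the piecewise-constant $\varsigma$ by $\varsigma(x,y)$ using the Lipschitz regularity of $\varsigma$ on the near-diagonal, derivable by differentiating \eqref{eqdef:varsigmaext} under the integral in the same spirit as Proposition~\ref{prop:qtrunc}; replace $\mathbbm{1}_{\{\|\bar{x}_i-\bar{x}_j\|\leq\eps\}}$ by $\mathbbm{1}_{\{d(x,y)\leq\eps\}}$ at cost $\mathcal{O}(\iota_2/\eps+\eps)$, using \eqref{essentialinch1} and $\iota_2$-transport to bridge $\|\bar{x}-\bar{x}'\|$, $d(Q_{\bar{x}},Q_{\bar{x}'})$, and $d(x,y)$; and replace the ratio $p_n/\rho$ by $1$ via Proposition~\ref{prop:inftydistance}. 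Then pass to tangent coordinates around $x$ as in the proof of Theorem~\ref{thm:pointwise}: Lemma~\ref{lem:necessityTay} supplies the diagonal scaling $\varsigma(x,\cdot)/\eps_{\rm w}^m\to\alpha\rho(x)$, the second-order Taylor term against $|y^1|^2$ yields the factor $\beta$, and the $q^2\dd\mathcal{V}$ weight on the right-hand side emerges from combining these with the $\rho$ absorbed through $p_n$.

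For the interpolation bound \eqref{engtodiscrete}, differentiate under the integral sign in \eqref{eqdef:calI}; since $\psi'(t)=-\phi(t)t/\beta$, this expresses $\nabla\mathcal{I}f(x)$ as a weighted integral of $P^*f$. Cauchy-Schwarz and the standard symmetrization trick control $|\nabla\mathcal{I}f(x)|^2$ by a double integral of $(f(T(u))-f(T(v)))^2$ against $\phi(d(u,v)/(\eps-2\iota_2))/(\eps-2\iota_2)^{m+2}$. Integrate over $x\in\M$, swap the order of integration, and collapse the $T$-preimages back to graph vertices; the bandwidth $\eps-2\iota_2$ together with \eqref{essentialinch1} and $\iota_2\leq\eps/4$ from Assumption~\ref{assum:thetadelta} ensures that $d(Q_{\bar{x}_i},Q_{\bar{x}_j})\leq\eps-2\iota_2$ forces $\|\bar{x}_i-\bar{x}_j\|\leq\eps$, so the resulting sum is dominated by $E_{\mathcal{Q}_{\XX}}(f)$. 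Matching the kernel $\phi$ against $\varsigma$ via Lemma~\ref{lem:necessityTay} recovers the factor $\alpha\beta$ on the left-hand side.

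The principal obstacle will be tracking, with sharp constants, the interplay among the three length scales $\eps$, $\iota_2$, and $\eps_{\rm n}$ without inflating the error beyond $C_\eta(\iota_2/\eps+\eps+\iota_1)$. In particular, the asymmetry of $\varsigma$ requires a bespoke regularity estimate playing the role that \cite{calder2022improved} obtain for free from a radial kernel, and the lower bound $\iota_2\geq C\eps^{3/2}$ in Assumption~\ref{assum:thetadelta} is precisely what is needed so that the $\iota_2/\eps$ term dominates the $\mathcal{O}(\eps^{1/2})$ losses generated when switching between the Euclidean and geodesic conditions through the cubic distortion \eqref{eq:distancecompare}.
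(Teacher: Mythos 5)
Your blueprint matches the paper's at the structural level: both adapt \cite[Proposition~4.1]{calder2022improved} by working on the event of Proposition~\ref{prop:inftydistance}, transporting through the cells $U_i$ with Jensen's inequality, exchanging the densities $p_n$, $\rho$, $q$ via Propositions~\ref{prop:inftydistance} and~\ref{prop:qtrunc}, and finishing with nonlocal-to-continuum comparisons at bandwidths $\eps\pm 2\iota_2$. One caveat on your forward direction: the paper routes this last step through \cite[Lemmas~5,~9]{garcia2020error} rather than a Taylor expansion in tangent coordinates, and this is not merely a shortcut — \eqref{discretetoeng} must hold for every $f\in L^2(\M)$ with finite Dirichlet energy, i.e.\ for $H^1$ functions, which admit no pointwise second-order expansion; the factor $\beta$ is extracted there from geodesic-integral estimates of $|\nabla f|^2$, not from a Taylor term of $f$.

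The genuine gap is in your kernel substitution. You replace the discretely evaluated $\varsigma(Q_{\bar{x}_i},Q_{\bar{x}_j})$ by $\varsigma(x,y)$ at transported points, charging the move to a Lipschitz bound on $\varsigma$ obtained by differentiating \eqref{eqdef:varsigmaext} under the integral, and you fold the cost into $\mathcal{O}(\iota_2/\eps+\eps)$. But $\varsigma$ is built from Gaussians of width $\eps_{\rm w}=\eps^{\tau}$ with $\tau>2$, so the naive gradient obtained this way is of size $\eps_{\rm w}^{m-1}$ off the diagonal; displacing an argument by $\iota_2\gg\eps_{\rm w}$ then costs $\eps_{\rm w}^m(\iota_2/\eps_{\rm w})$, which is not a relative error of order $\iota_2/\eps$. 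The paper never compares $\varsigma$ at transported points: in Lemma~\ref{lem:discretetolocal} it first replaces $\varsigma(Q_{\bar{x}_i},Q_{\bar{x}_j})$ by the separable kernel $\alpha\eps_{\rm w}^m\rho(Q_{\bar{x}_i})^{1/2}\rho(Q_{\bar{x}_j})^{1/2}$ with an additive error $\mathcal{O}(\eps_{\rm w}^m\eps^{1/2})$ coming from \eqref{recallvarsigma}; this spawns a second energy term that is dominated by the first through the bounded density, and only the genuinely $\mathcal{O}(1)$-Lipschitz factors $\rho^{1/2}$ are then transported — they are exactly what the $\rho^{1/2}$ weights built into $P$ and $P^*$ in \eqref{eqdef:P}, \eqref{eqdef:P*} are designed to cancel, a role your sketch does not engage. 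Correspondingly, your diagnosis of the constraint $\iota_2\geq C\eps^{3/2}$ is misplaced: the Euclidean/geodesic switch via \eqref{eq:distancecompare} only produces relative errors of order $\eps^2$, and the fluctuating edge length $\eps(1\pm 2\eps_{\rm w})$ is absorbed by \cite[Lemma~5]{garcia2020error}; the $\eps^{1/2}$ loss that the lower bound on $\iota_2$ is introduced to dominate (so that it can be absorbed into $\iota_2/\eps$) arises from the kernel approximation \eqref{recallvarsigma} — precisely the step your plan replaces by the unjustified Lipschitz argument.
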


\begin{proposition} \label{prop:eignveccompare}
Let $\eps>0$ and $\iota_1, \iota_2>0$ satisfy:
\begin{enumerate}
    \item $n^{-1/m} < \iota_2\leq \frac{\eps}{4}$;
    \item $\iota_1+\iota_2\leq c$, where $c>0$ depends on $\M$, $\rho$.
\end{enumerate}
Then both of the following hold with probability at least $1-n\exp(-c'n\iota_1^2\iota_2^m)$,
\begin{alignat}{2}
    \label{eignveccompare1} 
    \Big| \|f\|_{L^2(\M)}^2 - \|Pf\|_{L^2_{\breve{\vartheta}_n}(\mathcal{Q}_{\XX})}^2 \Big| &\leq C\iota_2\|f\|_{L^2(\M)}\sqrt{E_{\M}(f)} + C(\iota_1+\iota_2)\|f\|_{L^2(\M)}^2, \quad &&\forall f\in L^2(\M), \\
    \label{eignveccompare2} \Big| \|f\|_{L^2_{\breve{\vartheta}_n}(\mathcal{Q}_{\XX})}^2 - \|\mathcal{I}f\|_{L^2(\M)}^2 \Big| &\leq C\eps\|f\|_{L^2_{\breve{\vartheta}_n}(\mathcal{Q}_{\XX})}\sqrt{E_{\mathcal{Q}_{\XX}}(f)} + C(\iota_1+\iota_2)\|f\|_{L^2_{\breve{\vartheta}_n}(\mathcal{Q}_{\XX})}^2, \quad &&\forall f\in L^2_{\breve{\vartheta}_n}(\mathcal{Q}_{\XX}).
\end{alignat}
Here, all the constants $C>0$ depend on $\M$, $\rho$, and the constant $c'>0$ depends solely on $\M$.
\end{proposition}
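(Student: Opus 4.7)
The strategy is to adapt \cite[Proposition~4.2]{calder2022improved} to the present projected‐graph setting, where the empirical measure $\breve{\vartheta}_n$ on $\mathcal{Q}_{\XX}$ is transported onto the probability measure $\upsilon_n$ (of density $p_n$) via the $\infty$‐OT map $T$ from Proposition~\ref{prop:inftydistance}. The cells $U_i := T^{-1}(\{Q_{\bar{x}_i}\})$ form a measurable partition of $\M$ satisfying $\upsilon_n(U_i)=1/n$ and $\sup_{x \in U_i} d(x,Q_{\bar{x}_i}) \leq \iota_2$. The uniform bound $\|p_n-\rho\|_{L^{\infty}(\M)} \leq C(\iota_1+\iota_2)$ from Proposition~\ref{prop:inftydistance}, combined with the $\mathcal{C}^2$‐regularity of $\rho$, allows us to replace $p_n(x)$ and $\rho(Q_{\bar{x}_i})$ by $\rho(x)$ on each $U_i$ up to multiplicative error $C(\iota_1+\iota_2)$. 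I would establish \eqref{eignveccompare1} and \eqref{eignveccompare2} by first proving the companion norm‐difference estimates
\[
\bigl|\|Pf\|_{L^2_{\breve{\vartheta}_n}(\mathcal{Q}_{\XX})} - \|f\|_{L^2(\M)}\bigr| \leq C\iota_2\sqrt{E_{\M}(f)}+C(\iota_1+\iota_2)\|f\|_{L^2(\M)},
\]
and the analogue for $\bigl|\|\mathcal{I}f\|_{L^2(\M)} - \|f\|_{L^2_{\breve{\vartheta}_n}(\mathcal{Q}_{\XX})}\bigr|$, and then passing to squared norms via $a^2-b^2=(a-b)(a+b)$.

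For \eqref{eignveccompare1}, I would expand
\[
\|Pf\|^2_{L^2_{\breve{\vartheta}_n}(\mathcal{Q}_{\XX})} = \sum_{i=1}^n\frac{n\bigl(\int_{U_i} f(x)p_n(x)\,\dd\mathcal{V}(x)\bigr)^2}{\rho(Q_{\bar{x}_i})}.
\]
Cauchy--Schwarz with weight $p_n$ on each $U_i$ yields the identity $n\bigl(\int_{U_i} f p_n\,\dd\mathcal{V}\bigr)^2 = \int_{U_i} f^2 p_n\,\dd\mathcal{V} - \int_{U_i}(f-\bar f_i)^2 p_n\,\dd\mathcal{V}$ with $\bar f_i:=n\int_{U_i} fp_n\,\dd\mathcal{V}$. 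The summed defect is controlled via a local Poincar\'e inequality on cells of diameter $\leq 2\iota_2$ by $C\iota_2^2 E_{\M}(f)$, and the substitution of $p_n/\rho(Q_{\bar{x}_i})$ by $1$ on each $U_i$ contributes the multiplicative $C(\iota_1+\iota_2)$ error.

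For \eqref{eignveccompare2}, disjointness of the $U_i$ gives directly $\|P^*f\|^2_{L^2(\M)}=\sum_i f(Q_{\bar{x}_i})^2 \rho(Q_{\bar{x}_i})|U_i|$, and $|U_i|=(1+\mathcal{O}(\iota_1+\iota_2))/(n\rho(Q_{\bar{x}_i}))$ forces $\|P^*f\|^2_{L^2(\M)}=(1+\mathcal{O}(\iota_1+\iota_2))\|f\|^2_{L^2_{\breve{\vartheta}_n}(\mathcal{Q}_{\XX})}$. It then remains to bound $\|\mathcal{I}f\|^2_{L^2(\M)}-\|P^*f\|^2_{L^2(\M)}$ with $\mathcal{I}f=\Lambda_{\eps-2\iota_2}P^*f$. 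Using the standard averaging estimate
\[
\|\Lambda_r g - g\|^2_{L^2(\M)} \leq Cr^2\iint_{\{d(x,y)\leq r\}} K_r(x,y)(g(x)-g(y))^2\,\dd\mu(x)\,\dd\mu(y),
\]
applied to $g=P^*f$, I would bound the right-hand side by $(1+\mathcal{O}(\iota_2/\eps+\iota_1+\eps))\,E_{\mathcal{Q}_{\XX}}(f)$, exploiting that whenever $x\in U_i$, $y\in U_j$ with $d(x,y)\leq\eps-2\iota_2$, then $d(Q_{\bar{x}_i},Q_{\bar{x}_j})\leq\eps$, placing $(\bar{x}_i,\bar{x}_j)$ among the edges of $G_{\mathcal{Q}_{\XX}}$.

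The principal obstacle will be the quantitative transfer of the smoothing weight $K_{\eps-2\iota_2}(x,y)$ to the graph weight $\varsigma(Q_{\bar{x}_i},Q_{\bar{x}_j})/(\eps_{\rm w}^m\eps^{m+2})$ inside that double integral. The displacement of $x$ from $Q_{\bar{x}_i}$ is at most $\iota_2$, and a Taylor expansion of $\psi$ produces a relative error of order $\iota_2/\eps$, matching the leading term in Proposition~\ref{prop:discretoeng}; meanwhile, the secondary‐similarity error incurred when swapping the smoothed weights for $\varsigma$ (cf.\ Proposition~\ref{prop:etaandw}) must be absorbed into this same Taylor remainder, which is precisely what the strengthened condition $C\eps^{3/2}\leq\iota_2$ in Assumption~\ref{assum:thetadelta}, absent from the corresponding hypothesis in \cite{calder2022improved}, accomplishes.
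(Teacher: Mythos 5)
Your high-level plan is the right one and matches what the paper does: transport via the $\infty$-OT map from Proposition~\ref{prop:inftydistance}, work cell-by-cell on $U_i$, use the variance identity $\frac{1}{n}\bar f_i^2 = \int_{U_i} f^2 p_n - \int_{U_i}(f-\bar f_i)^2 p_n$, swap $p_n(x)$ and $\rho(Q_{\bar{x}_i})$ for $\rho(x)$ at cost $C(\iota_1+\iota_2)$ (this is the adaptive-density modification the paper singles out, and it is exactly what makes the $\rho(Q_{\bar{x}_i})^{1/2}$ factor in $P$, $P^*$ cancel), and handle the $\eps(1\pm 2\eps_{\rm w})$ edge-length fluctuation via the technical \cite[Lemma~5]{garcia2020error}. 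These are precisely the two modifications the paper says are needed on top of \cite[Proposition~4.2]{calder2022improved}.

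There is, however, a genuine gap in how you bound the cell-variance defect. You control $\sum_i\int_{U_i}(f-\bar f_i)^2 p_n\,\dd\mathcal{V}$ by a local Poincar\'e inequality on cells of diameter $\leq 2\iota_2$, obtaining $C\iota_2^2 E_{\M}(f)$. This yields
\begin{equation*}
\bigl|\|f\|^2_{L^2(\M)} - \|Pf\|^2_{L^2_{\breve{\vartheta}_n}(\mathcal{Q}_{\XX})}\bigr| \leq C\iota_2^2 E_{\M}(f) + C(\iota_1+\iota_2)\|f\|^2_{L^2(\M)},
\end{equation*}
which is not \eqref{eignveccompare1}: $\iota_2^2 E_{\M}(f)\leq \iota_2\|f\|_{L^2(\M)}\sqrt{E_{\M}(f)}$ requires $\iota_2\sqrt{E_{\M}(f)}\leq\|f\|_{L^2(\M)}$, which is not part of the hypothesis. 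The missing step is an additional Cauchy--Schwarz at cell level: since $\int_{U_i}(f-\bar f_i)p_n\,\dd\mathcal{V}=0$, one has $\int_{U_i}(f-\bar f_i)^2 p_n = \int_{U_i}(f-\bar f_i)\,f\,p_n \leq \big(\int_{U_i}(f-\bar f_i)^2 p_n\big)^{1/2}\big(\int_{U_i}f^2 p_n\big)^{1/2}$; applying Poincar\'e to the first factor alone gives $\int_{U_i}(f-\bar f_i)^2 p_n \leq C\iota_2\big(\int_{U_i}|\nabla f|^2 p_n\big)^{1/2}\big(\int_{U_i}f^2 p_n\big)^{1/2}$, and a further Cauchy--Schwarz in $i$ produces $C\iota_2\sqrt{E_{\M}(f)}\,\|f\|_{L^2(\M)}$. (Equivalently, combine the Poincar\'e bound $\iota_2^2 E_\M(f)$ with the trivial bound $C\|f\|^2_{L^2(\M)}$ via $\min\{a,b\}\leq\sqrt{ab}$.) Your ``prove the first-power estimate and then square'' framing also relies on this sharper input; without it, passing through $a^2-b^2=(a-b)(a+b)$ leaves an $\iota_2^2 E_{\M}(f)$ cross term that cannot be discarded.

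Two smaller points. First, the strengthened requirement $C\eps^{3/2}\leq\iota_2$ from Assumption~\ref{assum:thetadelta} is not part of this proposition's hypotheses (here one only needs $n^{-1/m}<\iota_2\leq\eps/4$), and it is not what absorbs the secondary-similarity weight; that condition is used in Lemma~\ref{lem:discretetolocal}, to control the auxiliary term $(\epsilon^*)^{1/2}\bar E_2(Pf)$. In the present proposition, the weight $\varsigma/(\eps_{\rm w}^m\eps^{m+2})$ does not even appear in the two $L^2$-norm comparisons \eqref{eignveccompare1}, \eqref{eignveccompare2}; only the smoothing kernel $K_{\eps-2\iota_2}$ and the cells $U_i$ enter. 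Second, the identity you cite is the usual variance decomposition rather than Cauchy--Schwarz, but that is purely terminological.
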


Note that these are the near-isometric inequalities discussed at the beginning of Section~\ref{sec:outlinespectral}.

The proof of Proposition~\ref{prop:discretoeng} mirrors that of \cite[Proposition~4.1]{calder2022improved}, and the proof of Proposition~\ref{prop:eignveccompare} mirrors that of \cite[Proposition~4.2]{calder2022improved}. The former, given in Section~\ref{sec:discretoeng}, is more involved than the latter, outlined in Section~\ref{sec:eignveccompare}.
One modification in these proofs arises from our definition of discretization and extension maps, \eqref{eqdef:P} and \eqref{eqdef:P*}, which explicitly incorporate a local density factor. 
This definition is guided by the specific formulation \eqref{eqdef:EQ} of the energy $E_{\mathcal{Q}_{\XX}}$ and contrasts with the discretization and extension maps in \cite{calder2022improved}.
For instance, see the proof of Lemma~\ref{lem:discretetolocal} for how this difference is addressed.
Another modification stems from the fact that an edge $(Q_{\bar{x}},Q_{\bar{x}'})$ in the graph $G_{\mathcal{Q}_{\XX}}$ is designed to correspond to the edge $(\bar{x},\bar{x}')$ in the graph $G_{\XX}$, thus causing the Euclidean distance between $Q_{\bar{x}}, Q_{\bar{x}'}$ to vary from $\eps(1-2\eps_{\rm w})$ to $\eps(1+2\eps_{\rm w})$. 
This differs from the $\eps$- and $k$NN graphs considered in \cite{calder2022improved}, where all edges have a fixed length.
To handle this, we will take advantage of the negligible difference between $\eps(1-2\eps_{\rm w})\approx \eps \approx \eps(1+2\eps_{\rm w})$, and refer to the technical result in \cite[Lemma~5]{garcia2020error} for further details.

Given Propositions~\ref{prop:discretoeng},~\ref{prop:eignveccompare}, we can follow the reasoning in \cite[Theorem 2.4 (i)]{calder2022improved} (minmax principle) to conclude that, for the first step, if
\begin{equation} \label{lambdalcondition1}
    \sqrt{\lambda_l}\iota_2 + C(\iota_1+\iota_2) \leq \sqrt{\lambda_l}\eps + C(\iota_1+\iota_2) \leq \frac{1}{2l},
\end{equation}
then with probability at least $1-n\exp(-cn\iota_1^2\iota_2^m)$, 
\begin{equation} \label{projeigvalprox}
    |\lambda_l - \alpha\beta\hat{\lambda}^{\rm proj}_l|\leq C\Big(\eps(\sqrt{\lambda_l}+1) + \eps + \iota_1\Big)\lambda_l.
\end{equation}
Note that the probability derived here matches the probability that the conclusions of Propositions~\ref{prop:discretoeng} and~\ref{prop:eignveccompare} hold, which, in turn, is guaranteed whenever the conclusion of Proposition~\ref{prop:inftydistance} holds.
For the second step, we suppose in addition that 
\begin{equation} \label{lambdalcondition2}
    C(\eps(\sqrt{\lambda_l}+1) + \eps + \iota_1)\lambda_l \leq \gamma_{\lambda_l}
\end{equation}
holds.
We infer from the reasoning near the end of the proof in \cite[Theorem 2.7(ii)]{calder2022improved} (Davis-Kahan-type expansion), and the proof of Lemma~\ref{lem:projtononlocal}, the existence of an eigenfunction $f_l$ of $\Delta_{\rm aug}$, with probability at least $1-n\exp(-cn\iota_1^2\iota_2^m)-2n\exp(-c_{\eta} n\eps^{m+4})$, such that
\begin{equation} \label{projspectra}
    \|f_l - \hat{f}^{\rm proj}_l\|_{L^2_{\breve{\vartheta}_n}(\mathcal{Q}_{\XX})} \leq C_{\eta,l}\eps. 
\end{equation}
The additional reduction in probability stems from the pointwise consistency probability, transitioning from $\mathcal{L}_{\rm proj}$ to the nonlocal operator ${\bf L}_{\M}$ \eqref{eqdef:nonlocal} (with $\delta=\eps$) in the proof of Lemma~\ref{lem:projtononlocal}, and ultimately to $\Delta_{\rm aug}$. 

With \eqref{projspectra}, we complete the first part of our analysis, which establishes the spectral alignment between $\mathcal{L}_{\rm proj}$ and $\Delta_{\rm aug}$, as discussed at the beginning of Section~\ref{sec:outlinespectral}.
In order to further extend \eqref{projspectra} to a conclusion about the spectral proximity between $\mathcal{L}_{\rm aug}$ and $\alpha\beta\Delta_{\rm aug}$, we define a \textit{matching} map ${\rm M}$ that sends $\bar{x}\in\XX$ to $Q_{\bar{x}}\in\mathcal{Q}_{\XX}$. 
By Lemma~\ref{lem:fullprob}, this map ${\rm M}$ establishes a bijection between $\XX$ and $\mathcal{Q}_{\XX}$ with probability $1$.
Thus, if $f\in L^2_{\breve{\vartheta}_n}(\mathcal{Q}_{\XX})$, then $f\circ {\rm M}\in L^2_{\vartheta_n}(\XX)$, and conversely, if $f\in L^2_{\vartheta_n}(\XX)$, then $f\circ {\rm M}^{-1}\in L^2_{\breve{\vartheta}_n}(\mathcal{Q}_{\XX})$. 
Analogous to Propositions~\ref{prop:discretoeng},~\ref{prop:eignveccompare}, we have the following proposition.

\begin{proposition} \label{prop:augtodiscret}
Under Assumption~\ref{assum:nearness}, it holds with probability $1$ that 
\begin{alignat}{2}
    \label{eignveccompare31} \|f\|_{ L^2_{\vartheta_n}(\XX)} &= \|f\circ {\rm M}^{-1}\|_{ L^2_{\breve{\vartheta}_n}(\mathcal{Q}_{\XX})}, \quad &&\forall f\in  L^2_{\vartheta_n}(\XX), \\
    \label{eignveccompare32} \|f\|_{ L^2_{\breve{\vartheta}_n}(\mathcal{Q}_{\XX})} &= \|f\circ {\rm M} \|_{ L^2_{\vartheta_n}(\XX)}, \quad  &&\forall f\in  L^2_{\breve{\vartheta}_n}(\mathcal{Q}_{\XX}).
\end{alignat}
Moreover, let $\eps>0$ be sufficiently small to satisfy \eqref{thecondition3}. 
Then both of the following hold with probability at least $1-2n\exp(-c_{\eta}n\eps^{m+4})$,
\begin{alignat}{2}
    \label{augtodiscrete} 
    E_{\XX}(f) &\leq (1+C\eps^2) E_{\mathcal{Q}_{\XX}}(f\circ {\rm M}^{-1}) + C_{\eta}\eps\|f\|_{ L^2_{\vartheta_n}(\XX)}^2, \quad &&\forall f\in  L^2_{\vartheta_n}(\XX), \\
    \label{discretetoaug} 
    E_{\mathcal{Q}_{\XX}}(f) &\leq (1+C\eps^2) E_{\XX}(f\circ {\rm M}), \quad &&\forall f\in  L^2_{\breve{\vartheta}_n}(\mathcal{Q}_{\XX}).
\end{alignat}
Here, the constants $C_{\eta},c_{\eta}>0$ depend on $\eta$ and $\M$, $\rho$ at most, while the constants $C>0$ in \eqref{augtodiscrete}, \eqref{discretetoaug} depend on $\M$.
\end{proposition}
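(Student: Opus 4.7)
The norm identities \eqref{eignveccompare31} and \eqref{eignveccompare32} I would dispatch immediately, directly from Lemma~\ref{lem:fullprob}: with probability one the map ${\rm M}: \XX \to \mathcal{Q}_\XX$ sending $\bar{x}_i \mapsto Q_{\bar{x}_i}$ is a bijection, so the pushforward of $\vartheta_n$ under ${\rm M}$ is exactly $\breve{\vartheta}_n$, and the discrete $L^2$-norms transfer verbatim under composition with ${\rm M}^{\pm 1}$.

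For the energy inequalities, the crucial observation is that the indicator $\mathbbm{1}_{\{0<\|\bar{x}_i-\bar{x}_j\|\leq\eps\}}$ appears identically in \eqref{eqn:GraphDirichlet} and \eqref{eqdef:EQ}, so the edge sets of $G_\XX$ and $G_{\mathcal{Q}_\XX}$ coincide under ${\rm M}$. The comparison $E_\XX(f)$ versus $E_{\mathcal{Q}_\XX}(f\circ{\rm M}^{-1})$ therefore reduces to an edgewise comparison of the weights $\omega(\bar{x},\bar{x}')$ and $\varsigma(Q_{\bar{x}},Q_{\bar{x}'})$. Combining Proposition~\ref{prop:etaandw} with the observation that the exponential prefactor in the definition \eqref{def:omega} of $\xi$ lies in $[\exp(-\eps^2),1]$---this uses Assumption~\ref{assum:nearness} together with $\eps_{\rm n}/\eps_{\rm w}=\eps$---I would derive
\[
\exp(-C\eps^2)\,\varsigma(Q_{\bar{x}},Q_{\bar{x}'}) \;\leq\; \omega(\bar{x},\bar{x}') \;\leq\; \exp(C'\eps^2)\,\varsigma(Q_{\bar{x}},Q_{\bar{x}'}) + \exp\!\Bigl(-\tfrac{c\eps^{4/3}}{\eps^{2\tau/3}}\Bigr).
\]
Summing the lower bound edge by edge and using $\exp(C'\eps^2)-1\leq C\eps^2$ for small $\eps$ delivers \eqref{discretetoaug} with no additive residual. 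Summing the upper bound splits $E_\XX(f)$ into the desired multiplicative piece $(1+C\eps^2)E_{\mathcal{Q}_\XX}(f\circ{\rm M}^{-1})$ plus a residual
\[
R(f) \;\leq\; \frac{\exp(-c\eps^{4/3}/\eps^{2\tau/3})}{n^2\eps_{\rm w}^m\eps^{m+2}}\sum_{i,j}\mathbbm{1}_{\{0<\|\bar{x}_i-\bar{x}_j\|\leq\eps\}}\bigl(f(\bar{x}_i)-f(\bar{x}_j)\bigr)^2.
\]

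The main obstacle is squeezing $R(f)$ down to the claimed $C_\eta\eps\|f\|_{L^2_{\vartheta_n}(\XX)}^2$. I would expand $(f(\bar{x}_i)-f(\bar{x}_j))^2\leq 2(f(\bar{x}_i)^2+f(\bar{x}_j)^2)$ and apply the uniform degree estimate $\max_i\#\{j:0<\|\bar{x}_i-\bar{x}_j\|\leq\eps\}\leq C_\eta n\eps^m$, which follows via \eqref{essentialinch1} from the counting bound \eqref{D2:step2} applied with $\delta$ of order $\eps^2$, then union-bounded over the $n$ vertices; this produces precisely the failure probability $2n\exp(-c_\eta n\eps^{m+4})$ in the statement. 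The resulting estimate is $R(f)\leq C_\eta\,[\exp(-c\eps^{4/3}/\eps^{2\tau/3})/(\eps_{\rm w}^m\eps^2)]\,\|f\|_{L^2_{\vartheta_n}(\XX)}^2$, and the strengthened smallness condition \eqref{thecondition3}---rather than the weaker \eqref{thecondition2} that sufficed in the pointwise theorem---is exactly what forces $\exp(-c\eps^{4/3}/\eps^{2\tau/3})\leq\eps_{\rm w}^m\eps^3$, yielding the $O(\eps)$ scaling in the additive error required for the subsequent spectral convergence analysis.
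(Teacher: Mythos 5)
Your proposal is correct and follows essentially the same route as the paper: the norm identities via the a.s.\ bijection ${\rm M}$, the edgewise weight comparison $\exp(-C\eps^2)\varsigma \leq \omega \leq \exp(C\eps^2)\varsigma + \exp(-c\eps^{4/3}/\eps^{2\tau/3})$ obtained from Proposition~\ref{prop:etaandw} together with the prefactor bound in \eqref{def:omega}, a concentration-based degree bound with $\delta\sim\eps^2$ union-bounded over the $n$ vertices (giving $2n\exp(-c_\eta n\eps^{m+4})$), and \eqref{thecondition3} to reduce the residual to $C_\eta\eps\|f\|_{L^2_{\vartheta_n}(\XX)}^2$. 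The only difference is cosmetic: the paper organizes the comparison through $\langle\mathcal{L}_{\rm aug}f-(\mathcal{L}_{\rm proj}(f\circ{\rm M}^{-1}))\circ{\rm M},f\rangle_{L^2_{\vartheta_n}(\XX)}$ rather than your direct edgewise bound $(f(\bar{x}_i)-f(\bar{x}_j))^2\leq 2(f(\bar{x}_i)^2+f(\bar{x}_j)^2)$, but the estimates are the same.
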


The proof of Proposition~\ref{prop:augtodiscret} is presented in Section~\ref{sec:augtodiscret}.
By combining \eqref{discretetoeng}, \eqref{augtodiscrete}, we obtain, for $f\in L^2(\M)$,
\begin{equation} \label{thm2key1}
    E_{\XX}(Pf\circ {\rm M}) \leq \alpha\beta (1+C\eps^2) \Big(1 + C_{\eta}\Big(\frac{\iota_2}{\eps} + \eps + \iota_1\Big)\Big) E_{\M}(f) + C_{\eta}\eps\|Pf\circ {\rm M}\|_{L^2_{\vartheta_n}(\XX)}^2,
\end{equation}
and by combining \eqref{engtodiscrete}, \eqref{discretetoaug}, we get, for $f\in  L^2_{\vartheta_n}(\XX)$,
\begin{equation} \label{thm2key2}
    \alpha\beta E_{\M}(\mathcal{I}(f\circ {\rm M}^{-1})) \leq (1+C\eps^2) \Big(1 + C_{\eta}\Big(\frac{\iota_2}{\eps} + \eps + \iota_1\Big)\Big) E_{\XX}(f).
\end{equation}
Similarly, by combining \eqref{eignveccompare1} with \eqref{eignveccompare32} and \eqref{eignveccompare2} with \eqref{eignveccompare31}, \eqref{discretetoaug}, we respectively derive 
\begin{alignat}{2}
    \label{thm2key3} \Big| \|f\|_{L^2(\M)}^2 - \|Pf\circ {\rm M}\|_{L^2_{\vartheta_n}(\XX)}^2 \Big| &\leq C\iota_2\|f\|_{L^2(\M)}\sqrt{E_{\M}(f)} + C(\iota_1+\iota_2)\|f\|_{L^2(\M)}^2, \quad &&\forall f\in L^2(\M), \\
    \label{thm2key4} \Big| \|f\|_{L^2_{\vartheta_n}(\XX)}^2 - \|\mathcal{I}(f\circ {\rm M}^{-1})\|_{L^2(\M)}^2 \Big| &\leq C\eps\|f\|_{ L^2_{\vartheta_n}(\XX)}\sqrt{E_{\XX}(f)} + C(\iota_1+\iota_2)\|f\|_{ L^2_{\vartheta_n}(\XX)}^2, \quad &&\forall f\in  L^2_{\vartheta_n}(\XX).
\end{alignat}
With \eqref{thm2key1}, \eqref{thm2key2}, \eqref{thm2key3}, \eqref{thm2key4} established, we proceed similarly to before, directly adapting the remainder of the proof to the variational argument in \cite[Theorem 2.4 (i)]{calder2022improved}, and subsequently to \cite[Theorem 2.7 (ii)]{calder2022improved}, line by line.
In particular, if \eqref{lambdalcondition1} holds, we derive that in place of \eqref{projeigvalprox}, 
\begin{equation*} 
    |\lambda_l - \alpha\beta\hat{\lambda}^{\rm aug}_l|\leq C\Big(\eps(\sqrt{\lambda_l}+1) + \eps + \iota_1\Big)\lambda_l,
\end{equation*}
with probability at least $1-n\exp(-cn\iota_1^2\iota_2^m)-2n\exp(-c_{\eta}n\eps^{m+4})$.
The additional reduction in probability, relative to what is obtained previously, arises from the probability that both \eqref{augtodiscrete} and \eqref{discretetoaug} hold.
Finally, if \eqref{lambdalcondition2} is also met, we are guaranteed with probability at least $1-n\exp(-cn\iota_1^2\iota_2^m)-10n\exp(-c_{\eta}n\eps^{m+4})$, the existence of an eigenfunction $f_l$ of $\Delta_{\rm aug}$ such that
\begin{equation*}
    \|\hat{f}^{\rm aug}_l(\cdot) - f_l(Q_{\cdot})\|_{ L^2_{\breve{\vartheta}_n}(\mathcal{Q}_{\XX})} \leq C_{\eta,l}\eps. 
\end{equation*}
The further reduction in probability comes from leveraging Proposition~\ref{prop:augtoproj}.
The proof is now concluded. \qed

\subsection{Proof of Proposition~\ref{prop:discretoeng}} \label{sec:discretoeng}

Let $r>0$ be sufficiently small. We define two non-local energy forms for $f\in L^2(\M)$, 
\begin{align*}
    {\bf E}_{\M; r}(f) &:= \frac{\alpha}{r^{m+2}} \int_{\M} \int_{\M} \mathbbm{1}_{\{d(x,y)\leq r\}} (f(x)-f(y))^2 p_n(x) p_n(y) d\mathcal{V}(x) d\mathcal{V}(y), \\
    {\bf E}^*_{\M;r}(f) &:= \frac{\alpha}{r^{m+2}} \int_{\M} \int_{\M} \mathbbm{1}_{\{d(x,y)\leq r\}} (f(x)-f(y))^2 q(x)q(y) d\mathcal{V}(x) d\mathcal{V}(y).
\end{align*}
Moreover, we temporarily assume the validity of the following lemma, whose proof is given shortly after.

\begin{lemma} \label{lem:discretetolocal}
Let $\eps>0$ and $\iota_1, \iota_2>0$ satisfy Assumption~\ref{assum:thetadelta}. 
Let the discretization map $P$, the extension map $P^*$ be given in \eqref{eqdef:P}, \eqref{eqdef:P*}, respectively.
Then
\begin{alignat}{2}
    \label{discretetononlocal} E_{\mathcal{Q}_{\XX}}(Pf) &\leq \Big(1 + \frac{C\iota_2}{\eps}\Big){\bf E}_{\M;\eps+2\iota_2}(f), \quad &&\forall f\in L^2(\M), \\
    \label{nonlocaltodiscrete} {\bf E}_{\M;\eps-2\iota_2}(P^*f) &\leq \Big(1 + \frac{C\iota_2}{\eps}\Big)E_{\mathcal{Q}_{\XX}}(f), \quad &&\forall f\in  L^2_{\breve{\vartheta}_n}(\mathcal{Q}_{\XX}). 
\end{alignat}
Here, the constants $C>0$ depend on $\M$, $\rho$. 
\end{lemma}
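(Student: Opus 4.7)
The plan is to adapt the proof of \cite[Proposition~4.1]{calder2022improved} to the augmentation-graph setting, handling three new features: the $\rho^{1/2}$-normalization present in $P, P^*$; the multiscale structure of the kernel $\varsigma$, which effectively lives on the microscale $\eps_{\rm w}$ within the graph-scale ball of radius $\eps$; and the gap between the Euclidean graph edge condition $\|\bar{x}_i-\bar{x}_j\|\leq\eps$ and the geodesic ball defining ${\bf E}_{\M;\cdot}$.

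For \eqref{discretetononlocal}, I would first use the identity $n\int_{U_i} p_n\dd\mathcal{V}=1$ (which holds because $T_\#\upsilon_n=\breve{\vartheta}_n$) to write
\begin{equation*}
  Pf(Q_{\bar{x}_i})-Pf(Q_{\bar{x}_j}) \;=\; n^2 \iint_{U_i\times U_j} \Bigl[\tfrac{f(x)}{\sqrt{\rho(Q_{\bar{x}_i})}}-\tfrac{f(y)}{\sqrt{\rho(Q_{\bar{x}_j})}}\Bigr] p_n(x)p_n(y)\dd\mathcal{V}(x)\dd\mathcal{V}(y),
\end{equation*}
and apply Cauchy--Schwarz to control $(Pf(Q_{\bar{x}_i})-Pf(Q_{\bar{x}_j}))^2$ by the squared integrand against the probability measure $n^2 p_n(x)p_n(y)\dd\mathcal{V}(x)\dd\mathcal{V}(y)$ on $U_i\times U_j$. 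Summing over $(i,j)$ and using that $\{U_i\}$ partitions $\M$ with $T(x)=Q_{\bar{x}_{i(x)}}$ turns the discrete double sum into an integral $\iint_{\M\times\M}$. The key reductions then are: (i) an indicator bound $\mathbbm{1}_{\{\|\bar{x}_{i(x)}-\bar{x}_{j(y)}\|\leq\eps\}}\leq \mathbbm{1}_{\{d(x,y)\leq\eps+2\iota_2\}}$, obtained by chaining $\|\bar{x}_i-\bar{x}_j\|\leq 2\eps_{\rm n}+\|Q_{\bar{x}_i}-Q_{\bar{x}_j}\|$, the Euclidean--geodesic comparison \eqref{eq:distancecompare}, and the optimal transport bound $d(x,T(x))\leq\iota_2$, and then absorbing the $\mathcal{O}(\eps^3+\eps_{\rm n})$ correction into $2\iota_2$ via the condition $\iota_2\geq C\eps^{3/2}$ of Assumption~\ref{assum:thetadelta}; (ii) the weight identity $\varsigma(T(x),T(y))=\alpha\eps_{\rm w}^m\rho(T(x))(1+\mathcal{O}(\eps))$ supplied by Lemma~\ref{lem:necessityTay} (bridged from $\varsigma^*$ to $\varsigma$ via Lemma~\ref{lem:varvarstar}), whose $\rho(T(x))$ exactly cancels the $1/\rho(Q_{\bar{x}_{i(x)}})$ produced by the main term $(f(x)-f(y))^2/\rho(Q_{\bar{x}_{i(x)}})$ in the expansion of the squared integrand (the cross-density terms are $\mathcal{O}(\eps)$ by smoothness of $\rho$ and $\|Q_{\bar{x}_i}-Q_{\bar{x}_j}\|\leq 2\eps$). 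Reassembling and comparing the scalings $\alpha/\eps^{m+2}$ versus $\alpha/(\eps+2\iota_2)^{m+2}$ yields the advertised multiplicative factor $(1+2\iota_2/\eps)^{m+2}\leq 1+C\iota_2/\eps$ under $\iota_2\leq\eps/4$.

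For \eqref{nonlocaltodiscrete}, the reverse inequality is structurally cleaner because $P^*f$ is piecewise constant on $\{U_i\}$: for $x\in U_{i(x)}, y\in U_{j(y)}$ one has $(P^*f(x)-P^*f(y))^2 = (f(Q_{\bar{x}_{i(x)}})\sqrt{\rho(Q_{\bar{x}_{i(x)}})}-f(Q_{\bar{x}_{j(y)}})\sqrt{\rho(Q_{\bar{x}_{j(y)}})})^2$, so the double integral on the left-hand side collapses directly to a weighted sum over index pairs $(i,j)$. The indicator chain now runs in the opposite direction: $d(x,y)\leq\eps-2\iota_2$ forces $d(T(x),T(y))\leq\eps$, hence $\|Q_{\bar{x}_{i(x)}}-Q_{\bar{x}_{j(y)}}\|\leq\eps$, and thus $\|\bar{x}_{i(x)}-\bar{x}_{j(y)}\|\leq\eps$ up to the same $\mathcal{O}(\eps^3+\eps_{\rm n})$ slack reabsorbed via $\iota_2\geq C\eps^{3/2}$. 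Reapplying Lemma~\ref{lem:necessityTay} converts the nonlocal kernel $\alpha/(\eps-2\iota_2)^{m+2}$ into the graph kernel $\varsigma/(\eps_{\rm w}^m\eps^{m+2})$; the $\sqrt{\rho(Q_{\bar{x}_{i(x)}})\rho(Q_{\bar{x}_{j(y)}})}$ that $P^*$ produces once again combines with the $\rho$-factor from $\varsigma$ to yield the correct graph-energy summand, and the $(1+C\iota_2/\eps)$ factor arises from the radii shift $(\eps-2\iota_2)\to\eps$.

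The principal obstacle is coordinating all three scales at once: I must exploit that $\varsigma(\cdot,\cdot)/\eps_{\rm w}^m$ is approximately $\alpha\rho$ on the macroscale ball of radius $\eps$ (Lemma~\ref{lem:necessityTay}), so that the $\rho$-factor generated there exactly cancels the $\rho^{1/2}$-normalization in $P, P^*$, while simultaneously absorbing all Euclidean-versus-geodesic and $\bar{x}$-versus-$Q_{\bar{x}}$ perturbations into the single parameter $\iota_2$. This absorption is what forces the strengthened lower bound $\iota_2\geq C\eps^{3/2}$ in Assumption~\ref{assum:thetadelta}, beyond the baseline condition $\iota_2>n^{-1/m}$ inherited from \cite[Assumption~2.3]{calder2022improved}.
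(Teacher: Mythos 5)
Your overall architecture matches the paper's (two successive Jensen averagings over the transport cells $U_i$ in the style of \cite[Lemmas~13--14]{garcia2020error}, a kernel expansion for $\varsigma$, and distance chaining through $Q_{\bar{x}}$, the Euclidean--geodesic comparison and the $\infty$-OT bound), but there is a genuine gap at the crux step. You use the asymmetric expansion $\varsigma(T(x),T(y))=\alpha\eps_{\rm w}^m\rho(T(x))(1+\mathcal{O}(\eps))$ and then assert that the ``cross-density terms'' created by the mismatch between the normalizations $\rho(Q_{\bar{x}_i})^{-1/2}$ and $\rho(Q_{\bar{x}_j})^{-1/2}$ in \eqref{eqdef:P} are $\mathcal{O}(\eps)$ and may be dropped. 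They may not: expanding $\bigl[f(x)\rho(Q_{\bar{x}_i})^{-1/2}-f(y)\rho(Q_{\bar{x}_j})^{-1/2}\bigr]^2$ produces, besides the main term, additive contributions of pointwise size $\eps\,|f(y)|\,|f(x)-f(y)|$ and $\eps^2 f(y)^2$; after multiplying by the $\eps^{-(m+2)}$ normalization and integrating over the $\mathcal{O}(\eps)$-neighborhood these contribute terms of order $\|f\|_{L^2(\M)}\sqrt{{\bf E}_{\M;\eps+2\iota_2}(f)}$ and $\|f\|_{L^2(\M)}^2$, which cannot be absorbed into a purely multiplicative factor $(1+C\iota_2/\eps)$ in front of ${\bf E}_{\M;\eps+2\iota_2}(f)$ (already for $f\equiv 1$ and non-constant $\rho$ your discarded terms are nonzero while the right-hand side vanishes). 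The paper's proof is organized precisely to avoid an asymmetric split: it invokes the symmetric form \eqref{recallvarsigma}, $\varsigma(x,y)/\eps_{\rm w}^m=\alpha\rho(x)^{1/2}\rho(y)^{1/2}+\mathcal{O}(r^{1/2})$, so that one $\rho^{1/2}$ is paired with the normalization at $Q_{\bar{x}_i}$ in the averaging over $U_i$ and the other with the normalization at $Q_{\bar{x}_j}$ in the averaging over $U_j$, and it isolates the $\mathcal{O}(r^{1/2})$ kernel remainder as a separate energy $\bar{E}_2$, bounded by $C\bar{E}_1$ via \eqref{boundeddensity}. Even in that symmetric bookkeeping the ratios $\sqrt{\rho(Q_{\bar{x}_i})/\rho(Q_{\bar{x}_j})}=1+\mathcal{O}(\eps)$ appear inside the squares and must be tracked through both averaging steps (this is what the paper compresses into ``proceed identically to \cite[Lemma~13]{garcia2020error}''), so a repair of your argument has to carry these factors explicitly rather than discard cross terms.

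A second, smaller misattribution: you invoke the strengthened lower bound $\iota_2\geq C\eps^{3/2}$ of Assumption~\ref{assum:thetadelta} to absorb the $\mathcal{O}(\eps^3+\eps_{\rm n})$ slack between the edge condition $\|\bar{x}_i-\bar{x}_j\|\leq\eps$ and the geodesic radius. In the paper that slack is handled by enlarging or shrinking the radius to $\eps(1\pm2\eps_{\rm w})$ (and its geodesic correction) and then exchanging radii via the technical \cite[Lemma~5]{garcia2020error}; the condition $\iota_2\geq C\eps^{3/2}$ is used for a different purpose, namely to dominate the prefactor $(\eps(1+2\eps_{\rm w})+\tfrac{8}{R^2}\eps^3(1+2\eps_{\rm w})^3)^{1/2}$ of the kernel-error term $\bar{E}_2$ by $\iota_2/\eps$, which is exactly how the error from \eqref{recallvarsigma} gets folded into the $(1+C\iota_2/\eps)$ factor in \eqref{discretetononlocal} and, mutatis mutandis (with the lower bound $\varsigma/\eps_{\rm w}^m\geq\alpha\rho^{1/2}\rho^{1/2}-C\eps^{1/2}(1-2\eps_{\rm w})^{1/2}$), in \eqref{nonlocaltodiscrete}. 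Your write-up would need to re-route the $\eps^{3/2}$ assumption to this kernel-error term and handle the geometric slack by the radius-exchange argument instead.
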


The remainder of the proof now follows closely the approach in the proof of \cite[Proposition~4.1]{calder2022improved}.
For instance, given Lemma~\ref{lem:discretetolocal}, we can apply Propositions~\ref{prop:qtrunc},~\ref{prop:inftydistance} to obtain
\begin{alignat}{2}
    \label{discretetolocal} E_{\mathcal{Q}_{\XX}}(Pf) &\leq \Big(1 + \frac{C\iota_2}{\eps}\Big)\Big(1 + C(\iota_1+\iota_2)\Big)\Big(1 + C_{\eta}\eps_{\rm n}^{2/3} \Big) {\bf E}^*_{\M;\eps+2\iota_2}(f), \quad &&\forall f\in L^2(\M), \\
    \label{localtodiscrete} {\bf E}^*_{\M;\eps-2\iota_2}(P^*f) &\leq \Big(1 + \frac{C\iota_2}{\eps}\Big)\Big(1 + C(\iota_1+\iota_2)\Big)\Big(1 + C_{\eta}\eps_{\rm n}^{2/3} \Big) E_{\mathcal{Q}_{\XX}}(f), \quad &&\forall f\in  L^2_{\breve{\vartheta}_n}(\mathcal{Q}_{\XX}),
\end{alignat}
where, both \eqref{discretetolocal}, \eqref{localtodiscrete} hold with probability at least $1-n\exp(-cn\iota_1^2\iota_2^m)$.
Then, applying the arguments from \cite[Lemmas~5, 9]{garcia2020error}, we deduce, respectively for $f\in L^2(\M)$ and $r>0$ sufficiently small,
\begin{align}
    \label{localtoeng} {\bf E}^*_{\M;r}(f) &\leq \alpha\beta\Big(1 + Cr+Cr^2\Big) E_{\M}(f), \\
    \label{engtolocal} \alpha\beta E_{\M}(f)(\Lambda_r f) &\leq \Big(1 + Cr + Cr^2\Big)\Big(1 + Cr^2\Big) {\bf E}^*_{\M;r}(f).
\end{align}
With minor simplifications, combining \eqref{discretetolocal}, \eqref{localtoeng} for $r=\eps+2\iota_2$, gives us \eqref{discretetoeng}, while combining \eqref{localtodiscrete}, \eqref{engtolocal} for $r=\eps-2\iota_2$, gives us \eqref{engtodiscrete}. \qed

It remains for us to prove Lemma~\ref{lem:discretetolocal}.

\begin{proof}[Proof of Lemma~\ref{lem:discretetolocal}]
We recall from Lemma~\ref{lem:necessityTay} and its proof, that if $d(x,y)\leq r$, for $r>0$ sufficiently small, then 
\begin{equation} \label{recallvarsigma}
    \frac{\varsigma(x,y)}{\eps_{\rm w}^m} = \alpha\rho(x)^{1/2}\rho(y)^{1/2} + \mathcal{O}(r^{1/2}).
\end{equation}
Let $\eps^*:= \eps(1+2\eps_{\rm w})$, $\eps_* := \eps(1-2\eps_{\rm w})$, and $\epsilon^* := \eps^* + \frac{8}{R^2}(\eps^*)^3$; all are small, positive quantities comparable to $\eps$.
We start with \eqref{discretetononlocal}. The proof mirrors the approach outlined in \cite[Lemma~13]{garcia2020error}, albeit with a few modifications due to the adaptive definitions of discretization and extension maps, as well as the fluctuation of edge lengths in the graph $G_{\mathcal{Q}_{\XX}}$, as pointed out earlier.
If $f\in L^2(\M)$, then by definition $Pf\in L^2_{\breve{\vartheta}_n}(\mathcal{Q}_{\XX})$.
Thus from \eqref{Qdistanceexp} and definition \eqref{eqdef:EQ},
\begin{equation*}
    E_{\mathcal{Q}_{\XX}}(Pf) \leq \frac{1}{n^2\eps_{\rm w}^m\eps^{m+2}} \sum_{i,j=1}^n \varsigma(Q_{\bar{x}_i}, Q_{\bar{x}_j}) \mathbbm{1}_{\{0<\|Q_{\bar{x}_i}-Q_{\bar{x}_j}\|\leq \eps^*\}} (Pf(Q_{\bar{x}_i})-Pf(Q_{\bar{x}_j}))^2,
\end{equation*}
which, using \eqref{eq:distancecompare}, \eqref{recallvarsigma} with $r=\eps^*$, simplifies to
\begin{equation} \label{Etransform1}
    E_{\mathcal{Q}_{\XX}}(Pf) \leq \bar{E}_1(Pf) + C(\epsilon^*)^{1/2} \bar{E}_2(Pf),
\end{equation}
where
\begin{equation*} 
    \begin{split}
        \bar{E}_1(Pf) &:= \frac{\alpha}{n^2\eps^{m+2}} \sum_{i,j=1}^n \rho(Q_{\bar{x}_i})^{1/2} \rho(Q_{\bar{x}_j})^{1/2} \mathbbm{1}_{\{0<\|Q_{\bar{x}_i}-Q_{\bar{x}_j}\|\leq \eps^*\}} (Pf(Q_{\bar{x}_i})-Pf(Q_{\bar{x}_j}))^2 \\
        \bar{E}_2(Pf) &:= \frac{1}{n^2\eps^{m+2}} \sum_{i,j=1}^n \mathbbm{1}_{\{0<\|Q_{\bar{x}_i}-Q_{\bar{x}_j}\|\leq \eps^*\}} (Pf(Q_{\bar{x}_i})-Pf(Q_{\bar{x}_j}))^2.
    \end{split}
\end{equation*}
Considering $\bar{E}_1(Pf)$, we directly obtain from the definition \eqref{eqdef:P} of the discretization map $P$ that 
\begin{align*} 
    \bar{E}_1(Pf) 
    &\leq \frac{\alpha}{n\eps^{m+2}} \sum_{j=1}^n \int_{U_j} \rho(Q_{\bar{x}_j})^{1/2} \mathbbm{1}_{\{d(x,Q_{\bar{x}_j})\leq \epsilon^*+2\iota_2\}} (f(x)-Pf(Q_{\bar{x}_j}))^2 p_n(x) d\mathcal{V}(x) \\
    &= \frac{\alpha}{\eps^{m+2}} \int_{\M}\int_{\M} \mathbbm{1}_{\{d(x,y)\leq \epsilon^*+2\iota_2\}} (f(x)-f(y))^2 p_n(x) p_n(y) d\mathcal{V}(x) d\mathcal{V}(y).
\end{align*}
Then from this point on, we proceed identically to the proof of \cite[Lemma~13]{garcia2020error} (and applying the technical \cite[Lemma~5]{garcia2020error} to exchange $\epsilon^*+2\iota_2\approx \eps+2\iota_2$) to get
\begin{equation} \label{barE1consideration}
    \bar{E}_1(Pf)
    \leq \Big(1 + \frac{C\iota_2}{\eps}\Big){\bf E}^*_{\M;\eps+2\iota_2}(f).
\end{equation}
Regarding $\bar{E}_2(Pf)$, we observe by \eqref{boundeddensity} that
\begin{equation*} 
    \bar{E}_2(Pf) \leq \frac{C}{n^2\eps^{m+2}} \sum_{i,j=1}^n \rho(Q_{\bar{x}_i})^{1/2} \rho(Q_{\bar{x}_j})^{1/2} \mathbbm{1}_{\{0<\|Q_{\bar{x}_i}-Q_{\bar{x}_j}\|\leq \eps^*\}} (Pf(Q_{\bar{x}_i})-Pf(Q_{\bar{x}_j}))^2 
    = C \bar{E}_1(Pf);
\end{equation*}
thus from \eqref{barE1consideration}
\begin{equation} \label{barE2consideration}
    (\epsilon^*)^{1/2}\bar{E}_2(Pf) \leq C(\epsilon^*)^{1/2}\Big(1 + \frac{C\iota_2}{\eps}\Big) {\bf E}^*_{\M;\eps+2\iota_2}(f).
\end{equation}
Since $C\eps^{3/2}\leq\iota_2$ implies that $C(\epsilon^*)^{1/2}\leq \iota_2/\eps$, we combine \eqref{Etransform1}, \eqref{barE1consideration}, \eqref{barE2consideration} to conclude \eqref{discretetononlocal}.

For \eqref{nonlocaltodiscrete}, note that if $f\in  L^2_{\breve{\vartheta}_n}(\mathcal{Q}_{\XX})$ then $P^*f\in L^2(\M)$.
We derive from \eqref{eq:distancecompare}, \eqref{Qdistanceshr}, \eqref{recallvarsigma} with $r=\eps_*$, and definition \eqref{eqdef:EQ} that
\begin{align*}
    E_{\mathcal{Q}_{\XX}}(f) &\geq \frac{1}{n^2\eps_{\rm w}^m\eps^{m+2}} \sum_{Q_{\bar{x}_i}, Q_{\bar{x}_j}\in\mathcal{Q}_{\XX}} \varsigma(Q_{\bar{x}_i}, Q_{\bar{x}_j}) \mathbbm{1}_{\{0<\|Q_{\bar{x}_i}-Q_{\bar{x}_j}\|\leq \eps_{*}\}} (f(Q_{\bar{x}_i})-f(Q_{\bar{x}_j}))^2 \\
    &\geq \frac{1}{n^2\eps_{\rm w}^m\eps^{m+2}} \sum_{Q_{\bar{x}_i}, Q_{\bar{x}_j}\in\mathcal{Q}_{\XX}}
    (\alpha\rho(Q_{\bar{x}_i})^{1/2}\rho(Q_{\bar{x}_j})^{1/2} - C\eps_*^{1/2}) \mathbbm{1}_{\{0<\|Q_{\bar{x}_i}-Q_{\bar{x}_j}\|\leq \eps_*\}} (f(Q_{\bar{x}_i})-f(Q_{\bar{x}_j}))^2.
\end{align*}
The proof closely parallels that of \cite[Lemma~14]{garcia2020error}; thus adopting a similar approach to the one used earlier, we obtain
\begin{equation*}
    E_{\mathcal{Q}_{\XX}}(f) \geq \Big(1 - \frac{C\iota_2}{\eps}\Big) {\bf E}^*_{\M;\eps-2\iota_2}(P^*f),
\end{equation*}
which is \eqref{nonlocaltodiscrete}.
\end{proof}

\subsection{Proof of Proposition~\ref{prop:eignveccompare}} \label{sec:eignveccompare}

The proof is nearly word-for-word identical to that of \cite[Proposition~4.2]{calder2022improved}, with modifications similar to those encountered in the proof of Lemma~\ref{lem:discretetolocal}.
One modification involves incorporating the adaptive definitions \eqref{eqdef:P}, \eqref{eqdef:P*}, which yields
\begin{equation*}
    \|P^*Pf\|_{L^2_{\upsilon_n}(\M)}^2 = \|(Pf)\rho^{1/2}\|_{L^2_{\vartheta_n}(\mathcal{Q}_{\XX})}^2,
\end{equation*}
with the extra density above being the distinguishing factor from the corresponding quantities in \cite[Proposition~4.2]{calder2022improved}.
Another modification appears in the application of \cite[Lemma~8]{garcia2020error} near the end.
Specifically, $\eps$ needs to be replaced by $\eps(1+2\eps_{\rm w})$, and this small difference is again accounted for by the technical \cite[Lemma~5]{garcia2020error}. 
After these modifications, every other step in \cite[Proposition~4.2]{calder2022improved} remains unchanged.
Lastly, we mention that the probability $1-n\exp(-cn\iota_1^2\iota_2^m)$ aligns with the probability of the existence of $\upsilon_n$ in Proposition~\ref{prop:inftydistance}. 
Further details are omitted. \qed

\subsection{Proof of Proposition~\ref{prop:augtodiscret}} \label{sec:augtodiscret}

Since \eqref{eignveccompare31}, \eqref{eignveccompare32} follow directly from the definition of the matching map ${\rm M}$, and \eqref{discretetoaug} is similar but less complex than \eqref{augtodiscrete}, we will prove only \eqref{augtodiscrete} and briefly address \eqref{discretetoaug} afterward.
Let $f\in L^2_{\vartheta_n}(\XX)$, and let $\bar{f}:= f\circ {\rm M}^{-1}\in L^2_{\breve{\vartheta}_n}(\mathcal{Q}_{\XX})$.  
Then
\begin{equation} \label{EXexpand}
    E_{\XX}(f) = 2\langle \mathcal{L}_{\rm aug} f, f\rangle_{ L^2_{\vartheta_n}(\XX)} = 2\langle \mathcal{L}_{\rm aug} f - (\mathcal{L}_{\rm proj} \bar{f})\circ {\rm M}, f\rangle_{ L^2_{\vartheta_n}(\XX)} + 2\langle (\mathcal{L}_{\rm proj} \bar{f})\circ {\rm M}, f\rangle_{ L^2_{\vartheta_n}(\XX)}.
\end{equation}
Observe, since $f(\bar{x}) = \bar{f}(Q_{\bar{x}})$, we have
\begin{equation} \label{EXexpandterm1}
    \langle (\mathcal{L}_{\rm proj} \bar{f})\circ {\rm M}, f\rangle_{ L^2_{\vartheta_n}(\XX)} 
    = \frac{1}{n}\sum_{i=1}^n ((\mathcal{L}_{\rm proj} \bar{f})\circ {\rm M})(\bar{x}_i) f(\bar{x}_i) 
    = \frac{1}{n}\sum_{i=1}^n (\mathcal{L}_{\rm proj} \bar{f})(Q_{\bar{x}_i}) \bar{f}(Q_{\bar{x}_i})
    = \frac{1}{2} E_{\mathcal{Q}_{\XX}} (\bar{f}),
\end{equation}
and moreover,
\begin{multline} \label{EXexpandterm2}
    \langle \mathcal{L}_{\rm aug} f - (\mathcal{L}_{\rm proj} \bar{f})\circ {\rm M}, f\rangle_{ L^2_{\vartheta_n}(\XX)} \\
    = \frac{1}{n^2\eps_{\rm w}^m\eps^{m+2}}\sum_{i=1}^n \sum_{j=1}^n (\omega(\bar{x}_i,\bar{x}_j) -\varsigma(Q_{\bar{x}_i},Q_{\bar{x}_j})) \mathbbm{1}_{\{0<\|\bar{x}_i-\bar{x}_j\|\leq\eps\}} (\bar{f}(Q_{\bar{x}_i})-\bar{f}(Q_{\bar{x}_j})) \bar{f}(Q_{\bar{x}_i}).
\end{multline}
Now, recalling the steps in the proof of Proposition~\ref{prop:augtoproj}, particularly \eqref{compareweights1}, \eqref{compareweights2}, we obtain
\begin{equation} \label{recallcompareweights}
    -C\eps^2\varsigma(Q_{\bar{x}_i},Q_{\bar{x}_j}) \leq \omega(\bar{x}_i,\bar{x}_j) -\varsigma(Q_{\bar{x}_i},Q_{\bar{x}_j}) \leq C\eps^2\varsigma(Q_{\bar{x}_i},Q_{\bar{x}_j}) + \exp\Big(-\frac{c\eps^{4/3}}{\eps^{2\tau/3}}\Big).
\end{equation}
By applying Lemma~\ref{lem:varcalder1} (with $\delta=\eps^2$), combined with a union bound, and adapting the argument leading to \eqref{inch1}, we conclude that
\begin{align} \label{EXexpandterm2part1}
   \nonumber \frac{1}{n^2\eps_{\rm w}^m\eps^{m+2}}\sum_{i=1}^n \sum_{j=1}^n \mathbbm{1}_{\{0<\|\bar{x}_i-\bar{x}_j\|\leq\eps\}} (\bar{f}(Q_{\bar{x}_i}))^2
   &= \frac{1}{n^2\eps_{\rm w}^m\eps^{m+2}} \sum_{i=1}^n (\bar{f}(Q_{\bar{x}_i}))^2 \sum_{Q_{\bar{x}_j}: 0<\|\bar{x}_i-\bar{x}_j\|\leq\eps} 1 \\
   &\leq \frac{C_{\eta}(1+\eps^2)\|f\|_{ L^2_{\vartheta_n}(\XX)}^2}{\eps_{\rm w}^m\eps^2}
\end{align}
holds with probability at least $1-2n\exp(-c_{\eta}n\eps^{m+4})$. 
Similarly, 
\begin{multline} \label{EXexpandterm2part2}
    \frac{1}{n^2\eps_{\rm w}^m\eps^{m+2}}\sum_{i=1}^n \sum_{j=1}^n \mathbbm{1}_{\{0<\|\bar{x}_i-\bar{x}_j\|\leq\eps\}} |\bar{f}(Q_{\bar{x}_i})| |\bar{f}(Q_{\bar{x}_j})| \\
    \leq \frac{1}{2n^2\eps_{\rm w}^m\eps^{m+2}}\sum_{i=1}^n \sum_{j=1}^n \mathbbm{1}_{\{0<\|\bar{x}_i-\bar{x}_j\|\leq\eps\}} (|\bar{f}(Q_{\bar{x}_i})|^2+ |\bar{f}(Q_{\bar{x}_j})|^2) \leq \frac{C_{\eta}(1+\eps^2)\|f\|_{ L^2_{\vartheta_n}(\XX)}^2}{\eps_{\rm w}^m\eps^2}
\end{multline}
holds with the same probability.
We now substitute the second inequality in \eqref{recallcompareweights} into \eqref{EXexpandterm2}, integrating \eqref{EXexpandterm2part1}, \eqref{EXexpandterm2part2}, as well as invoking \eqref{thecondition3}, to derive
\begin{align} \label{EXexpandterm2final}
    \nonumber \langle \mathcal{L}_{\rm aug} f - (\mathcal{L}_{\rm proj} \bar{f})\circ {\rm M}, f\rangle_{ L^2_{\vartheta_n}(\XX)} 
    &\leq C\eps^2 E_{\mathcal{Q}_{\XX}}(\bar{f}) + C_{\eta} \exp\Big(-\frac{c\eps^{4/3}}{\eps^{2\tau/3}}\Big) \frac{\|f\|_{ L^2_{\vartheta_n}(\XX)}^2}{\eps_{\rm w}^m\eps^2} \\
    &\leq C\eps^2 E_{\mathcal{Q}_{\XX}}(\bar{f}) + C_{\eta} \eps \|f\|_{ L^2_{\vartheta_n}(\XX)}^2.
\end{align}
Then, combining \eqref{EXexpandterm2final} with \eqref{EXexpand}, \eqref{EXexpandterm1} yields \eqref{augtodiscrete}.
It remains to see that the proof of \eqref{discretetoaug} follows in the same manner, but utilizes the first inequality in \eqref{recallcompareweights}. \qed

\section{Proof of Theorem~\ref{thm:contrastive}}  \label{sec:contrastiveproof}

Let $l=1,\dots,k$.
Let $\hat{f}^{\rm aug}_l$ be the $j$th normalized eigenvector (with respect to $L^2_{\vartheta_n}(\XX)$) of $\mathcal{L}_{\rm aug}$, and, by Theorem~\ref{thm:spectralconvergence}, let $f_l$ be the $j$th normalized eigenvector (with respect to $L^2(\M)$) of $\Delta_{\rm aug}$ that satisfies \eqref{L2closeness}.
We have
\begin{equation} \label{contrastive1}
    \frac{1}{\sqrt{n}} \|\hat{f}^{\rm aug}_l(\cdot) - f_l(Q_{\cdot})\|_{\ell^{\infty}(\XX)} 
    \leq \|\hat{f}^{\rm aug}_l(\cdot) - f_l(Q_{\cdot})\|_{L^2_{\vartheta_n}(\XX)} \leq C_{\eta,l}\eps,
\end{equation}
and
\begin{equation} \label{contrastive1a}
    \frac{1}{\sqrt{n}} \|\hat{f}^{\rm aug}_l(\cdot) - f_l(Q_{\cdot})\| 
    \leq \|\hat{f}^{\rm aug}_l(\cdot) - f_l(Q_{\cdot})\|_{L^2_{\vartheta_n}(\XX)} \leq C_{\eta,l}\eps,
\end{equation}
where $\|\cdot\|$ in \eqref{contrastive1a} denotes the Euclidean vector norm. 
Based on the elliptic theory of PDEs \cite[Section~6.3]{gilbarg1977elliptic} and Proposition \ref{prop:qtrunc}, $f_l$ is a Lipschitz function. 
Then as a consequence of \cite[Theorem 3.1]{chen2022nonparametric}, there exists a ReLU NN $(f_l)_{\theta}: \mathbb{R}^d\to\mathbb{R}$ of the form \eqref{eq-def:ReLU}, where $\theta\in\Theta$ given in \eqref{bigtheta}, such that for every $\tilde{\delta}\in (0,1)$
\begin{equation} \label{contrastive2}
    \frac{1}{\sqrt{n}} \|f_l - (f_l)_{\theta}\|_{L^{\infty}(\M)} \leq \|f_l - (f_l)_{\theta}\|_{L^{\infty}(\M)} \leq \tilde{\delta},
\end{equation}
and moreover
\begin{equation} \label{contrastive2a}
    \frac{1}{\sqrt{n}} \|f_l(Q_{\cdot}) - (f_l)_{\theta}(Q_{\cdot})\| \leq \|f_l - (f_l)_{\theta}\|_{L^{\infty}(\M)} \leq \tilde{\delta}.
\end{equation}
Thus, we obtain, from \eqref{contrastive1}, \eqref{contrastive2},
\begin{equation} \label{contrastive3}
    \frac{1}{\sqrt{n}} \|\hat{f}^{\rm aug}_l(\cdot) - (f_l)_{\theta}(Q_{\cdot})\|_{\ell^{\infty}(\XX)} \leq C_{\eta,l}\eps + \tilde{\delta},
\end{equation}
and from \eqref{contrastive1a}, \eqref{contrastive2a},
\begin{equation} \label{contrastive3a}
    \frac{1}{\sqrt{n}} \|\hat{f}^{\rm aug}_l(\cdot) - (f_l)_{\theta}(Q_{\cdot})\| \leq C_{\eta,l}\eps + \tilde{\delta},
\end{equation}
Further, by Eckart-Young-Minsky theorem, a solution to \eqref{eq:optimization} takes the form
\begin{equation*}
    Y^* = \frac{1}{\sqrt{n}} \begin{bmatrix} \sqrt{\hat{\lambda}^{\rm aug}_l} \hat{f}^{\rm aug}_l(\bar{x}_i) \end{bmatrix}_{i=1,\dots,n; l=1,\dots,k} \in\mathbb{R}^{n\times k},
\end{equation*}
assuming the rotation matrix is the $k\times k$ identity matrix.
Stack the ReLU NNs $\sqrt{\hat{\lambda}^{\rm aug}_l} (f_l)_{\theta}$, $l=1,\dots,k$, to obtain a ReLU NN $f_{\theta}:\R^d\to\R^l$, and let $Y_{\theta} = \frac{1}{\sqrt{n}}(f_{\theta}(\bar{x}_1),\dots, f_{\theta}(\bar{x}_n))^{\top}\in\mathbb{R}^{n\times l}$. We conclude respectively from \eqref{contrastive3}, \eqref{contrastive3a},
\begin{equation} \label{contrastive4}
    \|Y^* - Y_{\theta}\|_{\infty} \leq C_{\eta,k}(\eps + \tilde{\delta}) \quad\text{ and }\quad
    \|Y^* - Y_{\theta}\| \leq C'_{\eta,k}(\eps + \tilde{\delta}),
\end{equation}
as desired. 
Here, in \eqref{contrastive4}, both $C_{\eta,k}$ and $C'_{\eta,k}$ involve a factor of $\max_{l=1,\dots,k} \sqrt{\hat{\lambda}^{\rm aug}_l}$, with $C'_{\eta,k}$ further including a factor of $\sqrt{k}$.
With $\mathsf{L}\big(\sqrt{\hat{\lambda}^{\rm aug}_l}(f_l)_{\theta}\big)$ denoting the network depth of $\sqrt{\hat{\lambda}^{\rm aug}_l}(f_l)_{\theta}$, $\mathsf{p}\big(\sqrt{\hat{\lambda}^{\rm aug}_l}(f_l)_{\theta}\big)$ its width, and $\mathsf{m}\big(\sqrt{\hat{\lambda}^{\rm aug}_l}(f_l)_{\theta}\big)$, $\mathsf{M}\big(\sqrt{\hat{\lambda}^{\rm aug}_l}(f_l)_{\theta}\big)$ the complexity parameters from \eqref{maxcondition}, \eqref{nuclearcondition} respectively, we obtain in addition from \cite[Theorem 3.1]{chen2022nonparametric},
\begin{equation} \label{sizecontrastive}
    \mathsf{L}\Big(\sqrt{\hat{\lambda}^{\rm aug}_l}(f_l)_{\theta}\Big) \leq C\Big(\log\Big(\frac{1}{\tilde{\delta}}\Big) + \log d\Big) \quad\text{ and }\quad
    \mathsf{p}\Big(\sqrt{\hat{\lambda}^{\rm aug}_l}(f_l)_{\theta}\Big) \leq C(\tilde{\delta}^{-m} + d),
\end{equation}
and
\begin{equation} \label{complexcontrastive}
    \mathsf{m}\Big(\sqrt{\hat{\lambda}^{\rm aug}_l}(f_l)_{\theta}\Big) \leq C\max_{l=1,\dots,k} \sqrt{\hat{\lambda}^{\rm aug}_l} \quad\text{ and }\quad \mathsf{M}\Big(\sqrt{\hat{\lambda}^{\rm aug}_l}(f_l)_{\theta}\Big) \leq \Big((\tilde{\delta}^{-m}+d) \log\Big(\frac{1}{\tilde{\delta}}\Big) + d\log d\Big),
\end{equation}
where all the constants $C$ depend only on $\M$.
Therefore, it follows from \eqref{sizecontrastive}, \eqref{complexcontrastive} respectively that
\begin{equation*} 
    \mathsf{L}\Big(\frac{1}{\sqrt{n}} f_{\theta}\Big) \leq C\Big(\log\Big(\frac{1}{\tilde{\delta}}\Big) + \log d\Big) \quad\text{ and }\quad
    \mathsf{p}\Big(\frac{1}{\sqrt{n}} f_{\theta}\Big) \leq Ck(\tilde{\delta}^{-m} + d),
\end{equation*}
and
\begin{equation*} 
    \mathsf{m}\Big(\frac{1}{\sqrt{n}} f_{\theta}\Big) \leq \frac{C\max_{l=1,\dots,k} \sqrt{\hat{\lambda}^{\rm aug}_l}}{\sqrt{n}} \leq C \quad\text{ and }\quad \mathsf{M}\Big(\frac{1}{\sqrt{n}} f_{\theta}\Big) \leq k\Big((\tilde{\delta}^{-m}+d) \log\Big(\frac{1}{\tilde{\delta}}\Big) + d\log d\Big),
\end{equation*}
when $n$ is sufficiently large. 
Finally, we perform a union bound over the probabilities of \eqref{L2closeness} occurring for all $l=1,\dots,k$ to complete the proof. \qed


\section{Acknowledgment}

CL gratefully acknowledges the support from the IFDS at UW-Madison and the NSF through the TRIPODS grant 2023239. 
AMN is supported by the Austrian Science Fund (FWF) Project P-37010.

\bibliographystyle{unsrt}  
\bibliography{references}

\begin{thebibliography}{10}

\bibitem{mumuni2022data}
Alhassan Mumuni and Fuseini Mumuni.
\newblock Data augmentation: A comprehensive survey of modern approaches.
\newblock {\em Array}, page 100258, 2022.

\bibitem{shorten2021text}
Connor Shorten, Taghi~M Khoshgoftaar, and Borko Furht.
\newblock Text data augmentation for deep learning.
\newblock {\em Journal of big Data}, 8:1--34, 2021.

\bibitem{chen2022nonparametric}
Minshuo Chen, Haoming Jiang, Wenjing Liao, and Tuo Zhao.
\newblock Nonparametric regression on low-dimensional manifolds using deep relu
  networks: Function approximation and statistical recovery.
\newblock {\em Information and Inference: A Journal of the IMA},
  11(4):1203--1253, 2022.

\bibitem{arora2019theoretical}
Sanjeev Arora, Hrishikesh Khandeparkar, Mikhail Khodak, Orestis Plevrakis, and
  Nikunj Saunshi.
\newblock A theoretical analysis of contrastive unsupervised representation
  learning.
\newblock {\em arXiv preprint arXiv:1902.09229}, 2019.

\bibitem{tosh2020contrastive}
Christopher Tosh, Akshay Krishnamurthy, and Daniel Hsu.
\newblock Contrastive estimation reveals topic posterior information to linear
  models.
\newblock {\em Journal of Machine Learning Research}, 22(281):1--31, 2021.

\bibitem{tosh2021contrastive}
Christopher Tosh, Akshay Krishnamurthy, and Daniel Hsu.
\newblock Contrastive learning, multi-view redundancy, and linear models.
\newblock In {\em Algorithmic Learning Theory}, pages 1179--1206. PMLR, 2021.

\bibitem{haochen2021provable}
Jeff~Z HaoChen, Colin Wei, Adrien Gaidon, and Tengyu Ma.
\newblock Provable guarantees for self-supervised deep learning with spectral
  contrastive loss.
\newblock {\em Advances in Neural Information Processing Systems}, 34, 2021.

\bibitem{gilbarg1977elliptic}
David Gilbarg, Neil~S Trudinger, David Gilbarg, and NS~Trudinger.
\newblock {\em Elliptic partial differential equations of second order}, volume
  224.
\newblock Springer, 1977.

\bibitem{Dmitri2014Graph}
Dmitri Burago, Sergei Ivanov, and Yaroslav Kurylev.
\newblock A graph discretization of the {L}aplace-{B}eltrami operator.
\newblock {\em Journal of Spectral Theory}, 4(4):675--714, 2014.

\bibitem{trillos2018error}
Nicolás {García Trillos}, Moritz Gerlach, Matthias Hein, and Dejan Slepčev.
\newblock Error {Estimates} for {Spectral} {Convergence} of the {Graph}
  {Laplacian} on {Random} {Geometric} {Graphs} {Toward} the
  {Laplace}–{Beltrami} {Operator}.
\newblock {\em Foundations of Computational Mathematics}, 20(4):827--887,
  August 2020.

\bibitem{calder2022improved}
Jeff Calder and Nicolás {García Trillos}.
\newblock Improved spectral convergence rates for graph laplacians on
  $\varepsilon$-graphs and k-nn graphs.
\newblock {\em Applied and Computational Harmonic Analysis}, 60:123--175, 2022.

\bibitem{Dunson2021Spectral}
David~B Dunson, Hau~Tieng Wu, and Nan Wu.
\newblock Spectral convergence of graph laplacian and heat kernel
  reconstruction in $l^\infty$ from random samples.
\newblock {\em Applied and Computational Harmonic Analysis}, 55:282--336, 2021.

\bibitem{Wormell2021Spectral}
Caroline~L Wormell and Sebastian Reich.
\newblock Spectral convergence of diffusion maps: Improved error bounds and an
  alternative normalization.
\newblock {\em SIAM Journal on Numerical Analysis}, 59(3):1687--1734, 2021.

\bibitem{trillos2023large}
Nicolas~Garcia Trillos, Pengfei He, and Chenghui Li.
\newblock Large sample spectral analysis of graph-based multi-manifold
  clustering.
\newblock {\em Journal of Machine Learning Research}, 24(143):1--71, 2023.

\bibitem{calder2022lipschitz}
Jeff Calder, Nicolas Garcia~Trillos, and Marta Lewicka.
\newblock Lipschitz regularity of graph laplacians on random data clouds.
\newblock {\em SIAM Journal on Mathematical Analysis}, 54(1):1169--1222, 2022.

\bibitem{ayad2003refined}
Hanan Ayad and Mohamed Kamel.
\newblock Refined shared nearest neighbors graph for combining multiple data
  clusterings.
\newblock In {\em International Symposium on Intelligent Data Analysis}, pages
  307--318. Springer, 2003.

\bibitem{li2023spectral}
Chenghui Li, Rishi Sonthalia, and Nicolas~Garcia Trillos.
\newblock Spectral neural networks: Approximation theory and optimization
  landscape.
\newblock {\em arXiv preprint arXiv:2310.00729}, 2023.

\bibitem{yun2019small}
Chulhee Yun, Suvrit Sra, and Ali Jadbabaie.
\newblock Small relu networks are powerful memorizers: a tight analysis of
  memorization capacity.
\newblock {\em Advances in Neural Information Processing Systems}, 32, 2019.

\bibitem{hoeffding1994probability}
Wassily Hoeffding.
\newblock Probability inequalities for sums of bounded random variables.
\newblock {\em The collected works of Wassily Hoeffding}, pages 409--426, 1994.

\bibitem{do1992riemannian}
Manfredo~Perdigao Do~Carmo and J~Flaherty~Francis.
\newblock {\em Riemannian geometry}, volume~6.
\newblock Springer, 1992.

\bibitem{hebey1996sobolev}
Emmanuel Hebey.
\newblock {\em Sobolev spaces on Riemannian manifolds}, volume 1635.
\newblock Springer Science \& Business Media, 1996.

\bibitem{von2007tutorial}
Ulrike Von~Luxburg.
\newblock A tutorial on spectral clustering.
\newblock {\em Statistics and computing}, 17(4):395--416, 2007.

\bibitem{eckart1936approximation}
Carl Eckart and Gale Young.
\newblock The approximation of one matrix by another of lower rank.
\newblock {\em Psychometrika}, 1(3):211--218, 1936.

\bibitem{chen2021exploring}
Xinlei Chen and Kaiming He.
\newblock Exploring simple siamese representation learning.
\newblock In {\em Proceedings of the IEEE/CVF conference on computer vision and
  pattern recognition}, pages 15750--15758, 2021.

\bibitem{krizhevsky2009learning}
Alex Krizhevsky, Geoffrey Hinton, et~al.
\newblock Learning multiple layers of features from tiny images.
\newblock 2009.

\bibitem{chen2020big}
Ting Chen, Simon Kornblith, Kevin Swersky, Mohammad Norouzi, and Geoffrey~E
  Hinton.
\newblock Big self-supervised models are strong semi-supervised learners.
\newblock {\em Advances in neural information processing systems},
  33:22243--22255, 2020.

\bibitem{chen2020simple}
Ting Chen, Simon Kornblith, Mohammad Norouzi, and Geoffrey Hinton.
\newblock A simple framework for contrastive learning of visual
  representations.
\newblock In {\em International conference on machine learning}, pages
  1597--1607. PMLR, 2020.

\bibitem{garcia2020error}
Nicol{\'a}s Garc{\'\i}a~Trillos, Moritz Gerlach, Matthias Hein, and Dejan
  Slep{\v{c}}ev.
\newblock Error estimates for spectral convergence of the graph laplacian on
  random geometric graphs toward the laplace--beltrami operator.
\newblock {\em Foundations of Computational Mathematics}, 20(4):827--887, 2020.

\bibitem{calder2018game}
Jeff Calder.
\newblock The game theoretic p-laplacian and semi-supervised learning with few
  labels.
\newblock {\em Nonlinearity}, 32(1):301, 2018.

\bibitem{janson2004large}
Svante Janson.
\newblock Large deviations for sums of partly dependent random variables.
\newblock {\em Random Structures \& Algorithms}, 24(3):234--248, 2004.

\bibitem{guillemin2010differential}
Victor Guillemin and Alan Pollack.
\newblock {\em Differential topology}, volume 370.
\newblock American Mathematical Soc., 2010.

\bibitem{struik1961lectures}
Dirk~Jan Struik.
\newblock {\em Lectures on classical differential geometry}.
\newblock Courier Corporation, 1961.

\bibitem{evans2022partial}
Lawrence~C Evans.
\newblock {\em Partial differential equations}, volume~19.
\newblock American Mathematical Society, 2022.

\end{thebibliography}

\appendix 

\section{Relevant concentration results}

We present the following two known concentration lemmas.

\begin{lemma} \label{lem:calder1} \cite[Lemma 3.1 and its proof]{calder2022improved} (see also \cite[Lemma~1]{calder2018game})
Let $\M$ satisfy the assumptions made in Section~\ref{sec:assumption}.
Let $\psi:\mathbb{R}^d\to\mathbb{R}$ be bounded and Borel measurable with compact support in a bounded open set $\Omega\subset\mathbb{R}^d$ such that $\Omega\cap\M\not=\emptyset$. 
Let $x_1,\dots,x_n$ be a sample of i.i.d points on $\M$ following a distribution on $\M$ with a positive continuous density $p$, such that $p(x)\leq p_{\rm max}<\infty$ for all $x\in\M$.
Let
\begin{equation*} 
    \Psi := \frac{1}{n} \sum_{i=1}^n \psi(x_{i}),
\end{equation*}
and let $a := \int_{\Omega\cap\M} \psi(z)p(z)\dd \mathcal{V}(z)$.
Then for every $0<\delta\leq 1$,
\begin{equation*} 
    \mathbb{P} \big(|\Psi- a| > C p_{\rm max}\|\psi\|_{L^{\infty}(\Omega)}\mathcal{V}(\Omega\cap\M)\, \delta \big)\leq 2\exp\big(-cp_{\rm max} n \mathcal{V}(\Omega\cap\M) \, \delta^2 \big),
\end{equation*}
where $C,c>0$ depend only on $\M$. 
\end{lemma}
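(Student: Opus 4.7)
The plan is to prove this concentration result by a direct application of Bernstein's inequality to the i.i.d.\ sequence $\psi(x_i)$. The key structural observation is that since $\psi$ is supported in $\Omega$, the random variables $\psi(x_i)$ are zero with probability at least $1-\mathbb{P}(x_i\in\Omega\cap\M)\geq 1-p_{\max}\mathcal{V}(\Omega\cap\M)$, which means the variance of the summands scales with the small quantity $p_{\max}\mathcal{V}(\Omega\cap\M)$, not simply with $\|\psi\|_{L^\infty(\Omega)}^2$. This is exactly what drives the scaling in the exponent on the right-hand side of the claimed bound.

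First, I would center: set $Y_i := \psi(x_i) - a$, and verify $\mathbb{E}[Y_i]=0$ by noting that $\mathbb{E}[\psi(x_i)] = \int_{\M}\psi(z)p(z)\,d\mathcal{V}(z) = \int_{\Omega\cap\M}\psi(z)p(z)\,d\mathcal{V}(z) = a$, where the restriction of the integration domain follows from $\mathrm{supp}(\psi)\subset\Omega$. Next, I would establish the almost-sure bound $|Y_i|\leq 2\|\psi\|_{L^\infty(\Omega)}$, using that $|a|\leq\|\psi\|_{L^\infty(\Omega)}\cdot\mathbb{P}(x_i\in\Omega\cap\M)\leq\|\psi\|_{L^\infty(\Omega)}$. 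Then I would bound the per-sample variance:
\begin{equation*}
\mathrm{Var}(Y_i) \;\leq\; \mathbb{E}[\psi(x_i)^2] \;=\; \int_{\Omega\cap\M}\psi(z)^2 p(z)\,d\mathcal{V}(z) \;\leq\; \|\psi\|_{L^\infty(\Omega)}^2\, p_{\max}\,\mathcal{V}(\Omega\cap\M),
\end{equation*}
which is the essential small-variance estimate.

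With these ingredients, I would invoke Bernstein's inequality for bounded i.i.d.\ random variables, giving, for every $t>0$,
\begin{equation*}
\mathbb{P}\Big(|\Psi - a| > t\Big) \;\leq\; 2\exp\!\Big(-\frac{n t^2}{2\sigma^2 + \tfrac{2}{3}M t}\Big),
\end{equation*}
where $\sigma^2 = \|\psi\|_{L^\infty(\Omega)}^2 p_{\max}\mathcal{V}(\Omega\cap\M)$ and $M = 2\|\psi\|_{L^\infty(\Omega)}$. Substituting $t = C p_{\max}\|\psi\|_{L^\infty(\Omega)}\mathcal{V}(\Omega\cap\M)\delta$ and using $\delta\leq 1$ together with $p_{\max}\mathcal{V}(\Omega\cap\M)$ being bounded (which holds since this quantity is at most $1$), the linear-in-$t$ term $\tfrac{2}{3}Mt$ is dominated by (a constant times) $\sigma^2$. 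The exponent then simplifies to $-c\, p_{\max} n\, \mathcal{V}(\Omega\cap\M)\, \delta^2$ for a suitable $c>0$ depending only on $\M$, which is the claimed bound.

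The proof is essentially routine; there is no real obstacle beyond carefully tracking the constants to confirm the scaling factor $p_{\max}\mathcal{V}(\Omega\cap\M)$ in both $t$ and the exponent. The one point requiring a moment of attention is ensuring that the Bernstein linear term does not degrade the exponent: this is where the hypothesis $\delta\leq 1$ is used, together with the elementary bound $p_{\max}\mathcal{V}(\Omega\cap\M)\leq 1$ (since it is dominated by the total probability mass on $\M$).
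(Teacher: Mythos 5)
Your argument via Bernstein's inequality, with the variance bound $\mathrm{Var}(\psi(x_i)) \leq \|\psi\|_{L^\infty(\Omega)}^2 p_{\max}\mathcal{V}(\Omega\cap\M)$ driving the scaling of the exponent, is correct and is exactly the route taken in the cited reference (the paper itself simply cites \cite{calder2022improved} and does not reprove the lemma). One side remark in your write-up, however, is both unnecessary and false: $p_{\max}\mathcal{V}(\Omega\cap\M)$ need \emph{not} be at most $1$ (the total-mass bound gives $\int_{\Omega\cap\M} p\,d\mathcal{V}\leq 1$, but replacing $p$ by its maximum can overshoot arbitrarily when $p$ is peaked). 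Fortunately your argument does not actually use this claim: once you substitute $t = C p_{\max}\|\psi\|_{L^\infty(\Omega)}\mathcal{V}(\Omega\cap\M)\delta$, the ratio $\tfrac{2}{3}Mt/\sigma^2 = \tfrac{4C}{3}\delta$ is controlled by $\delta\leq 1$ alone, so the linear Bernstein term is dominated by a constant multiple of $\sigma^2$ without any bound on $p_{\max}\mathcal{V}(\Omega\cap\M)$. Simply delete that aside and the proof is clean.
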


When $\Omega = B^d(x,\eps)$ for $x\in\M$, Lemma~\ref{lem:varcalder2} below can be seen as a special case of Lemma~\ref{lem:varcalder1}.

\begin{lemma} \label{lem:calder2} \cite[Lemma 3.1 and its proof]{calder2022improved}
Let $\M$ satisfy the assumptions made in Section~\ref{sec:assumption}.
Let $\psi:\M\to\mathbb{R}$ be bounded and Borel measurable. Let $x_1,\dots,x_n$ be a sample of i.i.d points on $\M$ following a distribution on $\M$ with a positive continuous density $p$, such that $p(x)\leq p_{\rm max}<\infty$ for all $x\in\M$.
Let $\eps>0$. For $x\in\M$, we define $\Psi = \Psi(x)$ to be
\begin{equation*} 
    \Psi := \frac{1}{n} \sum_{i: \|x_{i}-x\|\leq\eps} \psi(x_{i}).
\end{equation*}
Let
\begin{equation*}
    a := \int_{B^d(x,\eps)\cap\M} \psi(z)p(z)\dd \mathcal{V}(z) 
    \quad\text{ and }\quad
    b := \int_{\mathcal{B}(x,\eps)} \psi(z)p(z)\dd \mathcal{V}(z).
\end{equation*}
Then we have
\begin{equation*}
    |a-b|\leq Cp_{\rm max}\|\psi\|_{L^{\infty}(\mathcal{B}(x,2\eps))} \eps^{m+2}.
\end{equation*}
Additionally, for any $\eps^2\leq\delta\leq 1$, 
\begin{equation*} 
    \mathbb{P}(|\Psi- b|\geq Cp_{\rm max}\|\psi\|_{L^{\infty}(\mathcal{B}(x,2\eps))}\delta \eps^{m}) \leq 2\exp(-cp_{\rm max}\delta^2 n\eps^{m}),
\end{equation*}
where $C,c>0$ depend only on $\M$.
\end{lemma}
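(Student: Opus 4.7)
The plan is to prove the two assertions of Lemma~\ref{lem:calder2} separately: (i) a deterministic geometric comparison between the Euclidean and geodesic balls to bound $|a-b|$, and (ii) a Bernstein-type concentration of the sum $\Psi$ around its mean $a$; the stated estimate on $|\Psi - b|$ then follows from the triangle inequality.

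For the deterministic bound, I would invoke the distance comparison \eqref{eq:distancecompare} available under $\eps \leq R/2$. The first inequality of \eqref{eq:distancecompare} gives $\mathcal{B}(x,\eps) \subset B^d(x,\eps)\cap \M$, while the second gives $B^d(x,\eps)\cap\M \subset \mathcal{B}(x,\eps + \tfrac{8}{R^2}\eps^3)$. Hence the symmetric difference $(B^d(x,\eps)\cap\M)\,\triangle\,\mathcal{B}(x,\eps)$ is contained in a geodesic annulus whose volume is controlled by \eqref{volexchange} as $\mathcal{V}(\mathcal{B}(x,\eps + C\eps^3)) - \mathcal{V}(\mathcal{B}(x,\eps)) \leq C\eps^{m+2}$. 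Multiplying by $p_{\max}\|\psi\|_{L^\infty(\mathcal{B}(x,2\eps))}$ yields the claimed $|a-b| \leq C p_{\max}\|\psi\|_{L^\infty(\mathcal{B}(x,2\eps))}\eps^{m+2}$.

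For the concentration part, set $Y_i := \psi(x_i)\mathbbm{1}_{\{\|x_i - x\|\leq\eps\}}$ so that $\Psi = \frac{1}{n}\sum_{i=1}^n Y_i$ and $\mathbb{E}[Y_i] = a$. Each $Y_i$ is bounded in magnitude by $\|\psi\|_{L^\infty}$, and its variance satisfies
\begin{equation*}
\mathrm{Var}(Y_i) \leq \mathbb{E}[Y_i^2] \leq \|\psi\|_{L^\infty(\mathcal{B}(x,2\eps))}^2\, p_{\max}\, \mathcal{V}(B^d(x,\eps)\cap\M) \leq C\, p_{\max}\,\|\psi\|_{L^\infty(\mathcal{B}(x,2\eps))}^2\, \eps^{m},
\end{equation*}
where in the last step I reuse the geometric inclusion together with \eqref{volexchange}. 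Bernstein's inequality then gives, for any $t>0$,
\begin{equation*}
\mathbb{P}(|\Psi - a|\geq t) \leq 2\exp\!\left(-\frac{n t^2/2}{C\,p_{\max}\|\psi\|_{L^\infty}^2\eps^m + \|\psi\|_{L^\infty} t/3}\right).
\end{equation*}
Setting $t = C'\,p_{\max}\|\psi\|_{L^\infty}\delta\eps^m$ with $\delta\leq 1$ makes the variance term dominate the $t$-term in the denominator, and a direct computation yields $\mathbb{P}(|\Psi - a|\geq t)\leq 2\exp(-c\, p_{\max}\,\delta^2\, n\,\eps^m)$.

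To finish, combine the two bounds through $|\Psi - b|\leq |\Psi - a| + |a - b|$: the lower restriction $\delta \geq \eps^2$ precisely ensures that the deterministic error $C\,p_{\max}\|\psi\|_{L^\infty}\eps^{m+2}$ is absorbed into $C\,p_{\max}\|\psi\|_{L^\infty}\delta\eps^m$, so after adjusting constants we obtain the stated tail bound. The only mildly delicate point is calibrating the Bernstein denominator so that the variance term (scaling as $\eps^m$) dominates the uniform-bound term (scaling as $\delta\eps^m$) uniformly in the range $\delta\in[\eps^2,1]$; this is automatic as $\delta\leq 1$ keeps $t$ comparable in scale to the standard deviation $\sqrt{\mathrm{Var}(Y_i)/n}\cdot\sqrt{n}$, so the sub-Gaussian regime is active throughout and no transition to sub-exponential behavior occurs.
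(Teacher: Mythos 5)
Your proof is correct and follows essentially the same route as the cited reference (Bernstein's inequality applied to the i.i.d. bounded variables $Y_i=\psi(x_i)\mathbbm{1}_{\{\|x_i-x\|\leq\eps\}}$, together with the Euclidean-versus-geodesic ball volume comparison via \eqref{eq:distancecompare} and \eqref{volexchange}, and the triangle inequality using $\delta\geq\eps^2$ to absorb the $\eps^{m+2}$ deterministic error into $\delta\eps^m$). The only minor cosmetic point is that "makes the variance term dominate" should read "keeps the linear term comparable to the variance term," since at $\delta=1$ the two are of the same order; the final bound is unaffected.
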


Both Lemmas~\ref{lem:calder1}, \ref{lem:calder2} apply to i.i.d. random variables. However, in the context of our analysis, we require concentration results for the collection $\mathcal{Q}_{\XX}$ of projection points $Q_{\bar{x}}$. While these points can be viewed as samples from the measure $\nu$ with density $q$ \eqref{eqdef:q}, they are not independent. 
Nevertheless, under Assumption~\ref{assum:partindependent}, a similar partial independence condition holds for $\mathcal{Q}_{\XX}$, in that for every $Q_{\bar{x}}$, there exist at most $C_0\in\mathbb{N}$ other $Q_{\bar{x}'}$ dependent on $Q_{\bar{x}}$.
We present variants of both lemmas to address this scenario, as follows.

\begin{lemma} \label{lem:varcalder1}
Let $\M$ satisfy the assumptions made in Section~\ref{sec:assumption}.
Let $\psi:\mathbb{R}^d\to\mathbb{R}$ be bounded and Borel measurable with compact support in a bounded open set $\Omega\subset\mathbb{R}^d$ such that $\Omega\cap\M\not=\emptyset$. 
Let $x_1,\dots,x_n$ be a sample of i.i.d points on $\M$ following a distribution on $\M$ with a positive continuous density $p$, such that $p(x)\leq p_{\rm max}<\infty$ for all $x\in\M$.
Suppose further that for each point $x_i$ there are at most $C_0$ other points $x_j$ dependent on $x_i$.
Let
\begin{equation*} 
    \Psi := \frac{1}{n} \sum_{i=1}^n \psi(x_{i}),
\end{equation*}
and let $a := \int_{\Omega\cap\M} \psi(z)p(z)\dd \mathcal{V}(z)$.
Then for every $0<\delta\leq 1$,
\begin{equation} \label{varcalder1prob}
    \mathbb{P} \big(|\Psi- a| > C p_{\rm max}\|\psi\|_{L^{\infty}(\Omega)}\mathcal{V}(\Omega\cap\M)\, \delta \big)\leq 2\exp\big(-cp_{\rm max} n \mathcal{V}(\Omega\cap\M) \, \delta^2 \big),
\end{equation}
where $C,c>0$ depend only on $\M$.
\end{lemma}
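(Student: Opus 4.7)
The strategy is to reduce Lemma~\ref{lem:varcalder1} to the i.i.d. Lemma~\ref{lem:calder1} via a chromatic decomposition of the dependency graph, combined with Bernstein's inequality applied to each independent color class. No new analytic idea beyond Bernstein is required; the proof is essentially Lemma~\ref{lem:calder1} applied $\chi$ times in parallel.

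First, I would form the dependency graph $G_{\mathrm{dep}}$ on $\{1,\dots,n\}$ by connecting $i$ and $j$ whenever $x_i$ and $x_j$ are dependent. By hypothesis, each vertex of $G_{\mathrm{dep}}$ has degree at most $C_0$, so greedy coloring yields a proper coloring with $\chi\leq C_0+1$ colors; denote the color classes by $I_1,\dots,I_\chi$. Within each $I_k$, the variables $\{x_i\}_{i\in I_k}$ are mutually independent and each is distributed according to the density $p$ on $\M$. Writing $Y_i:=\psi(x_i)-a$, we have $\E[Y_i]=0$, $|Y_i|\leq 2\|\psi\|_{L^\infty(\Omega)}$, and, crucially, because $\psi$ is supported in $\Omega$,
\[
\E[Y_i^{2}]\;\leq\;\|\psi\|_{L^\infty(\Omega)}^{2}\int_{\Omega\cap\M} p(z)\,\dd\mathcal{V}(z)\;\leq\;\|\psi\|_{L^\infty(\Omega)}^{2}\,p_{\max}\,\mathcal{V}(\Omega\cap\M).
\]

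Next I would decompose $\Psi-a=\frac{1}{n}\sum_{k=1}^\chi T_k$, where $T_k:=\sum_{i\in I_k} Y_i$ is a sum of independent centered bounded random variables. Set $t:=Cp_{\max}\|\psi\|_{L^\infty(\Omega)}\mathcal{V}(\Omega\cap\M)\,\delta$ for a constant $C>0$ to be fixed. By pigeonhole, $\{|\Psi-a|>t\}\subset\bigcup_k\{|T_k|>nt/\chi\}$. Bernstein's inequality applied to each $T_k$ gives
\[
\P\!\left(|T_k|>\tfrac{nt}{\chi}\right)\;\leq\;2\exp\!\left(-\frac{(nt/\chi)^{2}/2}{|I_k|\,\|\psi\|_{L^\infty(\Omega)}^{2}\,p_{\max}\mathcal{V}(\Omega\cap\M)+\tfrac{2}{3}\|\psi\|_{L^\infty(\Omega)}\cdot\tfrac{nt}{\chi}}\right).
\]
Using $|I_k|\leq n$, $\delta\leq 1$, and $\chi\leq C_0+1$, the denominator is dominated by the variance term, so the exponent simplifies to at most $-c\,p_{\max}\,n\,\mathcal{V}(\Omega\cap\M)\,\delta^{2}$ for a constant $c>0$ depending on $\M$ and $C_0$.

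Finally, a union bound over $k=1,\dots,\chi$ introduces a multiplicative factor $\chi\leq C_0+1$, which I would absorb into the constant $c$ by slightly shrinking it: either the target bound already exceeds $1$ (in which case it is trivial) or the exponent is large enough that $\log(C_0+1)$ is negligible. This delivers exactly \eqref{varcalder1prob}. The main obstacle I anticipate is purely bookkeeping---ensuring that the volume factor $\mathcal{V}(\Omega\cap\M)$ propagates correctly through the Bernstein ratio (which requires genuinely invoking the variance estimate, not merely the $L^\infty$ bound, exactly as in the proof of Lemma~\ref{lem:calder1}) and confirming that the combinatorial factor $\chi$ fits into the stated constants, an issue only because $C_0$ is treated as a fixed parameter of the setup rather than an explicit dependence in the lemma's constants.
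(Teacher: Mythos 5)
Your proposal is correct and matches the paper's intended argument: the paper does not spell out the proof but cites \cite[Theorem 2.1]{janson2004large}, whose underlying mechanism is exactly the proper-coloring decomposition of the bounded-degree dependency graph that you use, and the variance (not just $L^\infty$) bound $\E[Y_i^2]\leq \|\psi\|_{L^\infty(\Omega)}^2 p_{\max}\mathcal{V}(\Omega\cap\M)$ carried through Bernstein is precisely what \cite[Lemma 3.1]{calder2022improved} does in the i.i.d.\ case, so stitching them together as you describe is what ``easily adapted from'' refers to. Your closing bookkeeping worry about absorbing $\chi\leq C_0+1$ into $c$ is already acknowledged in the paper's remark directly after the two lemmas, which states that $c$ implicitly carries a factor of $C_0^{-1}$.
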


\begin{lemma} \label{lem:varcalder2}
Let $\M$ satisfy the assumptions made in Section~\ref{sec:assumption}.
Let $\psi:\M\to\mathbb{R}$ be bounded and Borel measurable. Let $x_1,\dots,x_n$ be a sample of points on $\M$ following an identical distribution on $\M$ with a positive continuous density $p$, such that $p(x)\leq p_{\rm max}<\infty$ for all $x\in\M$. 
Suppose further that for each point $x_i$ there are at most $C_0$ other points $x_j$ dependent on $x_i$.
Let $\eps>0$. For $x\in\M$, we define $\Psi = \Psi(x)$ to be
\begin{equation*} 
    \Psi := \frac{1}{n} \sum_{i: \|x_{i}-x\|\leq\eps} \psi(x_{i}).
\end{equation*}
Let
\begin{equation*}
    a := \int_{B^d(x,\eps)\cap\M} \psi(z)p(z)\dd \mathcal{V}(z) 
    \quad\text{ and }\quad
    b := \int_{\mathcal{B}(x,\eps)} \psi(z)p(z)\dd \mathcal{V}(z).
\end{equation*}
Then we have
\begin{equation*}
    |a-b|\leq Cp_{\rm max}\|\psi\|_{L^{\infty}(\mathcal{B}(x,2\eps))} \eps^{m+2}.
\end{equation*}
Additionally, for any $\eps^2\leq\delta\leq 1$, 
\begin{equation} \label{varcalder2prob}
    \mathbb{P}(|\Psi- b|\geq Cp_{\rm max}\|\psi\|_{L^{\infty}(\mathcal{B}(x,2\eps))}\delta \eps^{m}) \leq 2\exp(-cp_{\rm max}\delta^2 n\eps^{m}),
\end{equation}
where $C,c>0$ depend only on $\M$.
\end{lemma}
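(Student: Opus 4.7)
\textbf{Proof proposal for Lemma~\ref{lem:varcalder2}.}
My plan is to reduce the partially dependent case to the i.i.d.\ case of Lemma~\ref{lem:calder2} by decomposing the sample according to a proper coloring of the dependency graph. The deterministic comparison $|a-b| \leq C p_{\rm max}\|\psi\|_{L^\infty(\mathcal{B}(x,2\eps))}\eps^{m+2}$ is exactly the one inherited from the corresponding step of Lemma~\ref{lem:calder2}: it follows from \eqref{eq:distancecompare} and the reach hypothesis, since the symmetric difference between $\mathcal{B}(x,\eps)$ and $B^d(x,\eps)\cap\M$ is contained in a geodesic annulus of width $\mathcal{O}(\eps^3/R^2)$, whose Riemannian volume is controlled by $\eps^{m+2}$. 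Once this is in hand, it suffices to bound $|\Psi - a|$.

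For the concentration step, form the dependency graph $G$ on $\{1,\dots,n\}$ whose edges connect dependent pairs $(x_i,x_j)$. By the hypothesis, each vertex of $G$ has degree at most $C_0$, so $G$ admits a proper $(C_0{+}1)$-coloring, giving a partition $\{1,\dots,n\}=I_1\sqcup\dots\sqcup I_{C_0+1}$ with $|I_k|\geq n/(C_0+1)$ and with $\{x_i: i\in I_k\}$ mutually independent (and identically distributed with density $p$) for each $k$. Set
\[
\Psi^{(k)} := \frac{1}{|I_k|}\sum_{\substack{i\in I_k\\\|x_i-x\|\leq\eps}}\psi(x_i),
\]
so that $\Psi$ is the convex combination $\Psi = \sum_k (|I_k|/n)\Psi^{(k)}$.

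Since the $x_i$ with $i\in I_k$ are i.i.d., Lemma~\ref{lem:calder2} applied within the color class (with sample size $|I_k|$ in place of $n$) yields, for every $\eps^2 \leq \delta \leq 1$,
\[
\mathbb{P}\bigl(|\Psi^{(k)}-b|\geq Cp_{\rm max}\|\psi\|_{L^\infty(\mathcal{B}(x,2\eps))}\,\delta\eps^m\bigr)\leq 2\exp\bigl(-c\,p_{\rm max}\,\delta^2\,|I_k|\,\eps^m\bigr)\leq 2\exp\bigl(-c'\,p_{\rm max}\,\delta^2\,n\,\eps^m\bigr),
\]
where the last step absorbs the factor $1/(C_0+1)$ into $c'$. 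Because $\Psi-b$ is a convex combination of the $\Psi^{(k)}-b$, one has $|\Psi-b| \leq \max_k |\Psi^{(k)}-b|$, and a union bound over $k=1,\dots,C_0+1$ gives precisely \eqref{varcalder2prob}, with the prefactor $(C_0+1)$ absorbed into the constants by treating $C_0$ as universal (as per Assumption~\ref{assum:partindependent}).

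\textbf{Expected main obstacle.} The analytic content of the statement is identical to that of Lemma~\ref{lem:calder2}, so no new concentration inequality needs to be proved; the only genuinely new ingredient is the chromatic-number argument that replaces independence by a structured partition into independent sub-samples. The delicate point is merely bookkeeping: one must verify that the color classes each retain a positive fraction of the samples (so that $|I_k|\asymp n$) and that the prefactor from the union bound over color classes can be legitimately absorbed into the constant in the exponent. Both are straightforward once $C_0$ is fixed, which is exactly how Assumption~\ref{assum:partindependent} is set up.
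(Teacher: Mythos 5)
Your proof is essentially correct and follows the route the paper intends: the paper cites \cite[Theorem~2.1]{janson2004large}, whose content is precisely a chromatic/dependency-graph reduction of concentration for partly dependent variables to the independent case, and your color-class decomposition combined with Lemma~\ref{lem:calder2} reproduces that mechanism. The deterministic bound $|a-b|\lesssim \eps^{m+2}$ is indeed inherited unchanged from Lemma~\ref{lem:calder2}, since the symmetric difference $\mathcal{B}(x,\eps)\,\Delta\,(B^d(x,\eps)\cap\M)$ sits in the geodesic annulus $\mathcal{B}(x,\eps+8\eps^3/R^2)\setminus\mathcal{B}(x,\eps)$, whose Riemannian volume is $\mathcal{O}(\eps^{m+2})$ by \eqref{eq:distancecompare} and \eqref{volexchange}.

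One detail deserves care. You assert that a proper $(C_0{+}1)$-coloring of the dependency graph can be chosen with $|I_k|\geq n/(C_0+1)$. A generic proper coloring does not give balanced classes (a bipartite graph of max degree $C_0$ could have one color class of size $1$), so your estimate $|I_k|\asymp n$ — which is exactly what makes the per-class application of Lemma~\ref{lem:calder2} nontrivial — requires the \emph{equitable} coloring theorem of Hajnal and Szemer\'edi: any graph with maximum degree $\Delta$ has a proper $(\Delta{+}1)$-coloring whose color classes differ in size by at most one. You should cite this, or replace the union-bound-over-colors step with the argument Janson actually uses, which avoids the balancing issue entirely: rather than applying the i.i.d.\ bound within each class and taking a maximum, Janson bounds the moment generating function of the full sum by a H\"older-type product over color classes (weighted according to a fractional coloring) and then applies Markov; the fractional chromatic number $\chi^*\leq C_0+1$ then enters the exponent directly without any requirement that the classes be of comparable size. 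Either fix is straightforward and yields the stated bound with $C_0$ absorbed into $c$, so this is a gap in the write-up rather than in the idea. A second, more minor point: having each $I_k$ be an independent set in the dependency graph gives only pairwise independence from the stated hypothesis; mutual independence of $\{x_i:i\in I_k\}$ (which Lemma~\ref{lem:calder2} requires) follows once the dependency graph satisfies Janson's stronger defining property, which is indeed the case in the paper's setting since the only dependence comes from augmented points sharing the same natural parent, making the dependency graph a disjoint union of cliques. Worth a sentence so the reader doesn't worry.
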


The constant $c$ in both probabilistic estimates \eqref{varcalder1prob}, \eqref{varcalder2prob} implicitly includes a factor of $C_0^{-1}$, which we assume to be predetermined.
The proofs of Lemma~\ref{lem:varcalder1} and Lemma~\ref{lem:varcalder2} can be easily adapted from those in \cite[Lemma~3.1]{calder2022improved} and \cite[Theorem 2.1]{janson2004large}.
We skip further details.


\section{Relevant geometric construction for Propositions~\ref{prop:qtrunc},~\ref{prop:etaandw}} \label{appx:dg}

Let $r>0$ be sufficiently small. 
Then recall from Section~\ref{sec:assumption} that the logarithm map
\begin{equation*} 
    {\rm Log}_{x}: \M \supset \mathcal{B}(x,r) \to B^m(0,r)\subset \T_{x}\mathcal{M},
\end{equation*}
is a diffeomorphism. 
Let $\mathsf{I}_m: \mathbb{R}^m\to\mathbb{R}^d$, where $\mathsf{I}_m(y^1,\dots,y^m) = (y^1,\dots,y^m,0,\dots,0)$, be a \textit{canonical immersion} map (see \cite[Chapter~3]{guillemin2010differential}).
Let ${\rm A}_x$ denote a composition of a translation by $x$ and an appropriate rotation ${\rm R}_x$, such that ${\rm A}_x$ maps
\begin{equation*}
    \mathbb{R}^m\times\{0\}:=\{(y^1,\dots,y^m,0,\dots,0): y^i\in\mathbb{R}\}
\end{equation*}
onto $\tilde{T}_x\M$. See Figure~\ref{fig:reposition} below for an illustration.
Then ${\rm A}_x\circ \mathsf{I}_m$ maps $T_x\M\cong\mathbb{R}^m$ bijectively and isometrically to $\tilde{T}_x\M\subset\mathbb{R}^d$. 
Expressively, 
\begin{equation} \label{Gx}
    {\rm A}_x\circ \mathsf{I}_m (T_x\M) = \tilde{T}_x\M.
\end{equation}
By letting ${\rm L}_x := {\rm A}_x\circ \mathsf{I}_m\circ {\rm Log}_x$, we obtain for $x'\in\mathcal{B}(x,r)$ a unique image ${\rm L}_x(x')\in\tilde{T}_x\M$, such that
\begin{equation} \label{eq:Lxisom}
    \|{\rm L}_x(x') - x\| = \| {\rm Log}_x(x')\| = d(x,x'). 
\end{equation}

\begin{figure}[ht!]
\caption{A two-dimensional sphere and one of its tangent planes under rotation and translation}
    \begin{subfigure}[b]{0.40\textwidth}
        \centering
        \includegraphics[width=1.0\linewidth]{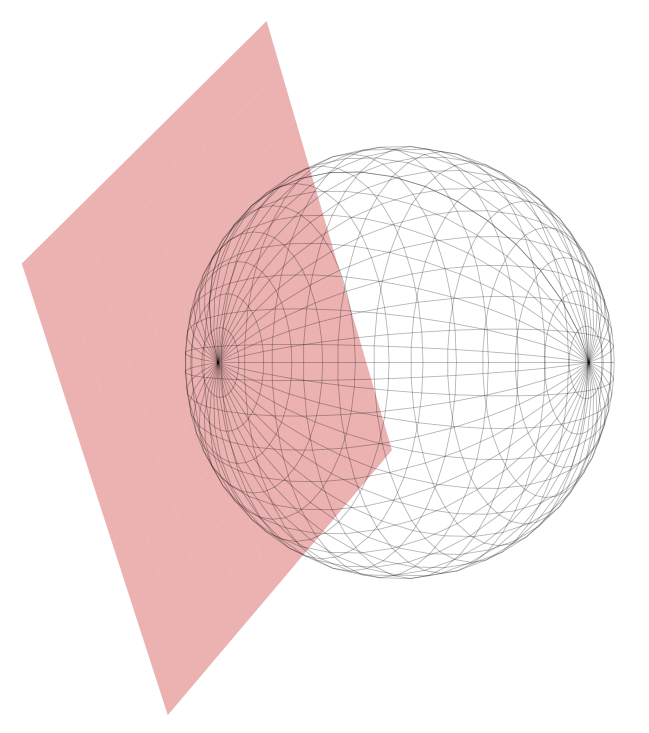}
        \caption{A two-dimensional sphere in $\mathbb{R}^3$ and one of its tangent planes colored in pink}
    \end{subfigure}
    \quad \quad \quad
    \begin{subfigure}[b]{0.47\textwidth}
        \centering
        \includegraphics[width=1.0\linewidth]{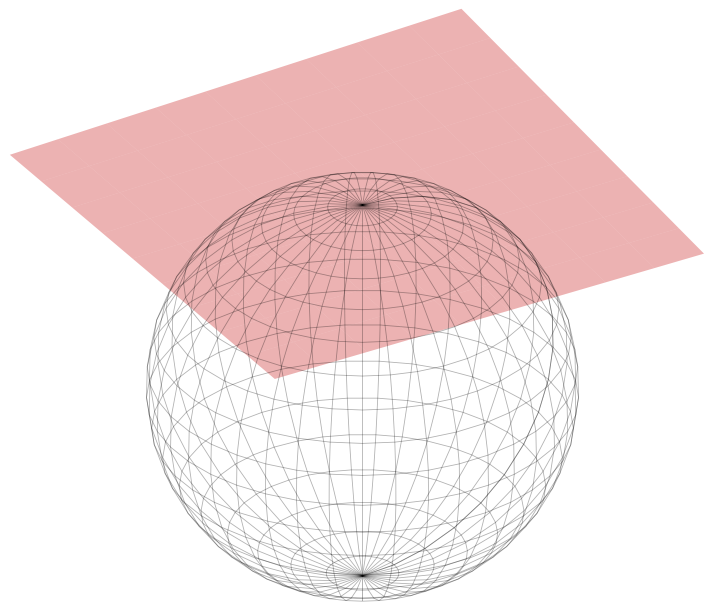}
        \caption{The same sphere now repositioned with the tangent plane as the $xy$-plane}
    \end{subfigure}
\label{fig:reposition}
\end{figure} 

Next, let $s>0$. 
We consider the neighborhood $\mathcal{N}_{s}(\mathcal{B}(x,r))$ of $\mathcal{B}(x,r)$, defined to be
\begin{equation*}
    \mathcal{N}_{s}(\mathcal{B}(x,r)) 
    := \{ x'+h: x'\in\mathcal{B}(x,r) \text{ and } h \in \tilde{N}_{x'}\M\cap \mathcal{N}_{s}(\M) - x'\}.
\end{equation*}
Note, $\tilde{N}_{x'}\M -x'\cong N_{x'}\M\cong \mathbb{R}^{d-m}$. 
Similarly as before, such identification can be made explicit, for the rotation ${\rm R}_{x'}$ also maps 
\begin{equation*}
    \{0\}\times\mathbb{R}^{d-m} 
    := \{(0,\dots,0,y^1,\dots,y^{d-m}): y^i\in\mathbb{R}\}
\end{equation*}
onto $\tilde{N}_{x'}\M-x'$. 
Let ${\rm J}_{d-m}: \mathbb{R}^{d-m}\to\mathbb{R}^d$ be another immersion map, where ${\rm J}_{d-m}(y^1,\dots,y^{d-m}) = (0,\dots,0,y^1,\dots,y^{d-m})$.
Then, through the bijective isometric map $({\rm R}_{x'}\circ {\rm J}_{d-m})^{-1}$, $h\in \tilde{N}_{x'}\M\cap \mathcal{N}_{s}(\M)-x'$ is uniquely identified with $k=({\rm R}_{x'}\circ {\rm J}_{d-m})^{-1}(h)\in B^{d-m}(0,s)$.
Now using the diffeomorphic map
\begin{equation*} 
    {\rm Exp}_{x}: \T_{x}\mathcal{M} \supset B^m(0,r) \to \mathcal{B}(x,r) \subset \M,
\end{equation*}
we can construct another diffeomorphic map: 
\begin{equation*} 
    \begin{split}
        \overline{{\rm Exp}}_x: B^m(0,r)\times B^{d-m}(0,s) 
        &\to \mathcal{N}_{s}(\mathcal{B}(x,r)) \\
        (v,k) &\mapsto x'+h,
    \end{split}  
\end{equation*}
where $x'={\rm Exp}_x(v)$ and, as noted, $h = {\rm R}_{x'}\circ {\rm J}_{d-m}(k)$.
It is readily checked that the volume form of $\overline{{\rm Exp}}_x$ at $(v,k)$ is $J_x(v)  \dd v\times \dd k$.

\section{Proof of Proposition~\ref{prop:etaandw}} \label{appx:etaandw}

We begin by establishing the necessary groundwork.
Let $\tilde{T}_{Q_{\bar{x}}} \M$ and $\tilde{T}_{Q_{\bar{x}'}} \M$ be the tangent planes of $\M$ at $Q_{\bar{x}}$ and at $Q_{\bar{x}'}$, respectively. 
It follows that 
\begin{equation} \label{perp}
    Q_{\bar{x}}-\bar{x} \perp \tilde{T}_{Q_{\bar{x}}} \M \quad\text{ and }\quad Q_{\bar{x}'}-\bar{x}' \perp \tilde{T}_{Q_{\bar{x}'}} \M,
\end{equation}
where $\perp$ denotes the usual orthogonal notion in $\mathbb{R}^{d}$; see Figure \ref{fig:project}. 
From \eqref{eq:distancecompare}, \eqref{epsassume}, we obtain\footnote{Whether these inequalities are presented as strict or non-strict will not affect the subsequent analysis.}
\begin{align*} 
    d(x,Q_{\bar{x}}) &\leq \|x-Q_{\bar{x}}\| + \frac{8}{R^2}\|x-Q_{\bar{x}}\|^3 \leq C\eps_{\rm n}^{2/3},\\
    d(x,Q_{\bar{x}'}) &\leq \|x-Q_{\bar{x}'}\| + \frac{8}{R^2}\|x-Q_{\bar{x}'}\|^3 \leq C\eps_{\rm n}^{2/3}.
\end{align*}
Hence, by applying \eqref{eq:Lxisom} to the locations $Q_{\bar{x}}$, $Q_{\bar{x}'}$, we acquire for $x$, respective unique images ${\rm L}_{Q_{\bar{x}}}(x)$, ${\rm L}_{Q_{\bar{x}'}}(x)$, such that
\begin{multline} \label{QxlogQxdistance}
    \|{\rm L}_{Q_{\bar{x}}}(x) - Q_{\bar{x}}\| = \|{\rm Log}_{Q_{\bar{x}}}(x)\| = d(x,Q_{\bar{x}})\leq C\eps_{\rm n}^{2/3}, \text{ and } \\
    \|{\rm L}_{Q_{\bar{x}'}}(x) - Q_{\bar{x}'}\| = \|{\rm Log}_{Q_{\bar{x}'}}(x)\| = d(x,Q_{\bar{x}'})\leq C\eps_{\rm n}^{2/3}.
\end{multline}
We now present the following two technical lemmas that will be indispensable in our analysis.
Their proofs are provided in Appendices \ref{sec:quadbehaviorproof}, \ref{sec:xQxlogQxproof}, respectively in the order they appear. 

\begin{figure}
	\centering
	\includegraphics[scale=0.6]{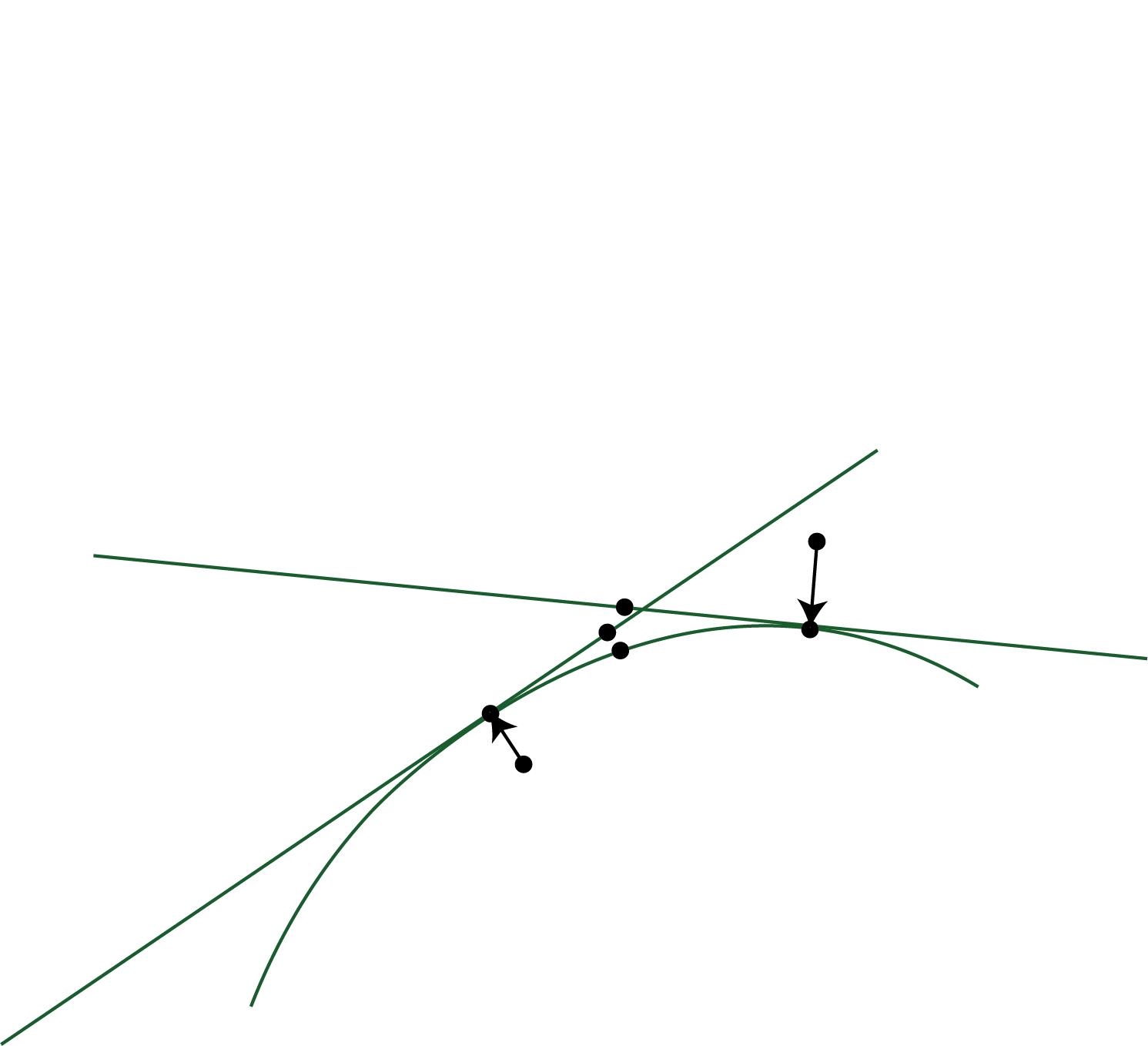}
        \put(-68,65){$Q_{\bar{x}}$}
        \put(-60,95){$\bar{x}$}
        \put(-137,68){$Q_{\bar{x}'}$}
        \put(-117,43){$\bar{x}'$}
        \put(-165,10){$\M$}
        \put(-25,79){$\tilde{T}_{Q_{\Bar{x}}}\M$}
        \put(-228,20){$\tilde{T}_{Q_{\Bar{x}'}}\M$}
        \put(-100,67){$x$}
	\caption{$Q_{\bar{x}}$ and $Q_{\bar{x}'}$ are the closest points to $\bar{x}$ and $\bar{x}'$ on $\M$, respectively. $\tilde{T}_{Q_{\Bar{x}}}\M$ and $\tilde{T}_{Q_{\Bar{x}'}}\M$ are the tangent planes of $\M$ on $Q_{\Bar{x}}$ and $Q_{\Bar{x}'}$, respectively.} 
		\label{fig:project}
\end{figure}

\begin{lemma} \label{lem:quadbehavior} Let $x$, $Q_{\bar{x}}$, $Q_{\bar{x}'}$, ${\rm L}_{Q_{\bar{x}}}(x)$, ${\rm L}_{Q_{\bar{x}'}}(x)$ be as above. Then it holds that
\begin{equation} \label{quadbehavior}
    \|x-{\rm L}_{Q_{\bar{x}}}(x)\|=\mathcal{O}(\|{\rm L}_{Q_{\bar{x}}}(x)-Q_{\bar{x}}\|^2) \quad\text{ and }\quad \|x-{\rm L}_{Q_{\bar{x}'}}(x)\|=\mathcal{O}(\|{\rm L}_{Q_{\bar{x}'}}(x)-Q_{\bar{x}'}\|^2)
\end{equation}
where the majorant constants depend only on $\M$. 
\end{lemma}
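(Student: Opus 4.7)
The strategy is a second-order Taylor expansion of the ambient position of a unit-speed geodesic emanating from $Q_{\bar{x}}$. Set $v := {\rm Log}_{Q_{\bar{x}}}(x) \in T_{Q_{\bar{x}}}\M$, so that $x = {\rm Exp}_{Q_{\bar{x}}}(v)$, and let $\tilde{v} := {\rm A}_{Q_{\bar{x}}} \circ \mathsf{I}_m(v) \in \tilde{T}_{Q_{\bar{x}}}\M - Q_{\bar{x}} \subset \mathbb{R}^d$ denote the isometric embedding of $v$ as a vector attached to the affine tangent plane. By the very definition of ${\rm L}_{Q_{\bar{x}}}$, we have ${\rm L}_{Q_{\bar{x}}}(x) = Q_{\bar{x}} + \tilde{v}$, and \eqref{eq:Lxisom} gives $\|{\rm L}_{Q_{\bar{x}}}(x) - Q_{\bar{x}}\| = \|\tilde{v}\| = \|v\|$. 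Thus the claim collapses to establishing
\[
\bigl\|{\rm Exp}_{Q_{\bar{x}}}(v) - Q_{\bar{x}} - \tilde{v}\bigr\| = \mathcal{O}(\|v\|^2).
\]

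Next, I would consider the ambient curve $\gamma(t) := {\rm Exp}_{Q_{\bar{x}}}(tv/\|v\|) \in \mathbb{R}^d$ for $t \in [0,\|v\|]$. Because of \eqref{QxlogQxdistance} and \eqref{epsassume}, we have $\|v\| = d(Q_{\bar{x}},x) \leq C\eps_{\rm n}^{2/3} \leq \min\{i_0, K^{-1/2}\}$, so $\gamma$ lies within the normal neighborhood where the exponential map is a diffeomorphism. Standard differential geometry then gives $\gamma(0) = Q_{\bar{x}}$ and $\gamma'(0) = \tilde{v}/\|v\|$ under the identifications; moreover, since $\gamma$ is an intrinsic geodesic of $\M$, its covariant acceleration vanishes, and hence the ambient acceleration coincides with the second fundamental form applied to the tangent vector, i.e. $\gamma''(s) = II_{\gamma(s)}(\gamma'(s),\gamma'(s))$, which lies in the normal space and has magnitude bounded above by $\|II_{\gamma(s)}\|$. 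This last quantity is controlled by the manifold reach $R$ and the sectional curvature bound $K$ from Subsection~\ref{sec:assumption}, hence by a constant $C$ depending only on $\M$; in particular $\sup_{s\in[0,\|v\|]}\|\gamma''(s)\| \leq C$.

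Taylor's theorem with integral remainder then yields
\[
\bigl\|\gamma(\|v\|) - \gamma(0) - \|v\|\gamma'(0)\bigr\| \leq \tfrac{1}{2}\|v\|^2 \sup_{s\in[0,\|v\|]}\|\gamma''(s)\| \leq C\|v\|^2,
\]
which, after substituting $\gamma(\|v\|) = x$, $\gamma(0) = Q_{\bar{x}}$, and $\|v\|\gamma'(0) = \tilde{v}$, is exactly $\|x - {\rm L}_{Q_{\bar{x}}}(x)\| \leq C\|{\rm L}_{Q_{\bar{x}}}(x) - Q_{\bar{x}}\|^2$. Repeating the identical argument with $Q_{\bar{x}'}$ replacing $Q_{\bar{x}}$ gives the second estimate of \eqref{quadbehavior}. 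The main piece of bookkeeping is verifying that under the isometric identification ${\rm A}_{Q_{\bar{x}}} \circ \mathsf{I}_m$ discussed in Appendix~\ref{appx:dg}, the ambient derivative $\gamma'(0)$ really is $\tilde{v}/\|v\|$ and that the norm of the second fundamental form admits a uniform bound in terms of the intrinsic quantities $K, R, i_0$; everything else is a direct application of Taylor's theorem.
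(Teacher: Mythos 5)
Your proof is correct and follows essentially the same route as the paper: both expand the ambient position of the geodesic from $Q_{\bar{x}}$ to $x$ to second order, identify the first-order term with ${\rm L}_{Q_{\bar{x}}}(x)-Q_{\bar{x}}$, and bound the second-order term by a curvature constant depending only on $\M$. The only difference is cosmetic: the paper bounds the curvature of the geodesic viewed as a plane curve in the plane through $x$, $Q_{\bar{x}}$, ${\rm L}_{Q_{\bar{x}}}(x)$, whereas you bound the ambient acceleration by the second fundamental form and apply Taylor's theorem with integral remainder, which is arguably the tidier formulation.
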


\begin{lemma} \label{lem:xQxlogQx} Let $x$, $Q_{\bar{x}}$, $Q_{\bar{x}'}$, ${\rm L}_{Q_{\bar{x}}}(x)$, ${\rm L}_{Q_{\bar{x}'}}(x)$ be as above. 
Then under Assumption~\ref{assum:nearness}, the following hold,
\begin{equation} \label{xQxlogQxupbd}
    \begin{split}
        \|x-\bar{x}\|^2 &\leq \|\bar{x}-Q_{\bar{x}}\|^2 + (1+C\eps_{\rm n}^{2/3})\|Q_{\bar{x}} - {\rm L}_{Q_{\bar{x}}}(x)\|^2,\\
        \|x-\bar{x}'\|^2 &\leq \|\bar{x}'-Q_{\bar{x}'}\|^2 + (1+C\eps_{\rm n}^{2/3})\|Q_{\bar{x}'}-{\rm L}_{Q_{\bar{x}'}}(x)\|^2,
    \end{split}   
\end{equation}
and moreover,
\begin{equation} \label{xQxlogQxlwbd}
    \begin{split}
        \|x-\bar{x}\|^2 &\geq \|\bar{x}-Q_{\bar{x}}\|^2 + (1-C'\eps_{\rm n}^{2/3})\|Q_{\bar{x}}-{\rm L}_{Q_{\bar{x}}}(x)\|^2,\\
        \|x-\bar{x}'\|^2 &\geq \|\bar{x}'-Q_{\bar{x}'}\|^2 + (1 - C'\eps_{\rm n}^{2/3})\|Q_{\bar{x}'}-{\rm L}_{Q_{\bar{x}'}}(x)\|^2,
    \end{split}   
\end{equation}
where $C, C'>0$ dependent on $\M$.
\end{lemma}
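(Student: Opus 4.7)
The strategy is a Pythagorean-style decomposition that exploits the orthogonality between $\bar{x}-Q_{\bar{x}}$ and the affine tangent plane $\tilde{T}_{Q_{\bar{x}}}\M$. I would introduce the three displacement vectors
\begin{equation*}
    a := x-{\rm L}_{Q_{\bar{x}}}(x),\qquad b := {\rm L}_{Q_{\bar{x}}}(x)-Q_{\bar{x}},\qquad c := Q_{\bar{x}}-\bar{x},
\end{equation*}
so that $x-\bar{x} = a+b+c$ and expand
\begin{equation*}
    \|x-\bar{x}\|^2 = \|a\|^2+\|b\|^2+\|c\|^2+2\langle a,b\rangle+2\langle a,c\rangle+2\langle b,c\rangle.
\end{equation*}
The first key observation is that $b$ lies in the linear subspace $\tilde{T}_{Q_{\bar{x}}}\M-Q_{\bar{x}}$: this follows because ${\rm L}_{Q_{\bar{x}}} = {\rm A}_{Q_{\bar{x}}}\circ\mathsf{I}_m\circ{\rm Log}_{Q_{\bar{x}}}$ takes values in $\tilde{T}_{Q_{\bar{x}}}\M$ by \eqref{Gx}. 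Meanwhile $c$ is normal to $\tilde{T}_{Q_{\bar{x}}}\M$ by \eqref{perp}. Hence $\langle b,c\rangle=0$, and since $\|b\|^2=\|Q_{\bar{x}}-{\rm L}_{Q_{\bar{x}}}(x)\|^2$ and $\|c\|^2=\|\bar{x}-Q_{\bar{x}}\|^2$, the desired inequalities \eqref{xQxlogQxupbd} and \eqref{xQxlogQxlwbd} reduce to a single estimate
\begin{equation*}
    |R|\leq C\eps_{\rm n}^{2/3}\|b\|^2,\qquad R := \|a\|^2+2\langle a,b\rangle+2\langle a,c\rangle.
\end{equation*}

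To control $R$, I would invoke the quadratic deviation bound $\|a\|\leq C\|b\|^2$ from Lemma~\ref{lem:quadbehavior}, together with the size constraints $\|b\|\leq C\eps_{\rm n}^{2/3}$ from \eqref{QxlogQxdistance} and $\|c\|<\eps_{\rm n}$ from Assumption~\ref{assum:nearness}. Then $\|a\|^2\leq C\|b\|^4\leq C\eps_{\rm n}^{4/3}\|b\|^2$, and by Cauchy--Schwarz $|2\langle a,b\rangle|\leq 2\|a\|\|b\|\leq C\|b\|^3\leq C\eps_{\rm n}^{2/3}\|b\|^2$. The only slightly delicate contribution is $|2\langle a,c\rangle|\leq 2\|a\|\|c\|\leq C\|b\|^2\,\eps_{\rm n}$, which I would absorb into $C\eps_{\rm n}^{2/3}\|b\|^2$ using that $\eps_{\rm n}<1$ (so $\eps_{\rm n}\leq\eps_{\rm n}^{2/3}$) by \eqref{epsassume}. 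Summing the three terms gives the required bound on $R$; inserting it into the Pythagorean expansion yields the first lines of both \eqref{xQxlogQxupbd} and \eqref{xQxlogQxlwbd}. The second lines (for $\bar{x}'$, $Q_{\bar{x}'}$) then follow by repeating the argument verbatim with the primed quantities.

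The main conceptual step is the orthogonality $\langle b,c\rangle=0$; everything else is bookkeeping with Cauchy--Schwarz. The one place where care is needed is verifying that ${\rm L}_{Q_{\bar{x}}}(x)$ really does sit in the affine tangent plane (not merely in a copy of $\mathbb{R}^m$), which is why I would explicitly unwind the definition ${\rm L}_{Q_{\bar{x}}} = {\rm A}_{Q_{\bar{x}}}\circ\mathsf{I}_m\circ{\rm Log}_{Q_{\bar{x}}}$ from Appendix~\ref{appx:dg} and cite \eqref{Gx}. The exponents $\eps_{\rm n}^{2/3}$ appearing in the error are dictated by the a priori bound $\|b\|\leq C\eps_{\rm n}^{2/3}$, which in turn traces back through \eqref{eq:distancecompare} to Assumption~\ref{assum:nearness}; the fact that all three pieces of $R$ admit a common bound of the form $\eps_{\rm n}^{2/3}\|b\|^2$ is what makes the decomposition close cleanly.
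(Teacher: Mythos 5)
Your proposal is correct and uses essentially the same ingredients as the paper's own proof: the orthogonality from \eqref{perp}, the quadratic deviation bound of Lemma~\ref{lem:quadbehavior}, and the size bounds $\|b\|\leq C\eps_{\rm n}^{2/3}$ from \eqref{QxlogQxdistance} and $\|c\|<\eps_{\rm n}$ from Assumption~\ref{assum:nearness}. The only (cosmetic) difference is organizational: the paper derives the upper and lower bounds by two separate applications of the triangle inequality on $\|x-\bar{x}\|=\|a+d\|$ with $d:=b+c$, followed by the Pythagorean identity $\|d\|^2=\|b\|^2+\|c\|^2$, whereas you expand $\|a+b+c\|^2$ once and isolate a single remainder $R$ that you bound in absolute value — which yields both inequalities from the same estimate and is arguably the cleaner bookkeeping.
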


\begin{proof}[Proof of Proposition \ref{prop:etaandw}] 
Recalling the random variable $X=\mathbbm{1}_{\{\|x-Q_{\bar{x}}\|< \eps_{\rm n}^{2/3}\} \wedge \{\|x-Q_{\bar{x}'}\|< \eps_{\rm n}^{2/3}\}}$ from \eqref{eqdef:X}, we define another, related variable
\begin{equation} \label{eqdef:Y}
    Y:=1-X=\mathbbm{1}_{\{\|x-Q_{\bar{x}}\|\geq \eps_{\rm n}^{2/3}\} \vee \{\|x-Q_{\bar{x}'}\|\geq \eps_{\rm n}^{2/3}\}}.
\end{equation}
By expanding upon \eqref{def:edgeweight}, we attain
\begin{equation} \label{expansion}
    \omega(\bar{x},\bar{x}')
    = \E_{x\sim \mu} \Big[ \exp\Big(\frac{-\|x-\bar{x}\|^2-\|x-\bar{x}'\|^2}{2\eps_{\rm w}^2}\Big) \Big]
    = \E_{x\sim \mu} \Big[ \Big( X + Y\Big) \exp\Big(\frac{-\|x-\bar{x}\|^2-\|x-\bar{x}'\|^2}{2\eps_{\rm w}^2}\Big) \Big].
\end{equation}
We will first prove the lower bound in \eqref{etaandwNr}. 
By virtue of Lemma \ref{lem:xQxlogQx}, we have
\begin{multline} \label{lwbdstep1} 
    \Big( X + Y\Big) \exp\Big(\frac{-\|x-\bar{x}\|^2-\|x-\bar{x}'\|^2}{2\eps_{\rm w}^2}\Big) \\
    \geq X\,  \exp\Big(\frac{-\|\bar{x}-Q_{\bar{x}}\|^2-\|\bar{x}'-Q_{\bar{x}'}\|^2- (1+C\eps_{\rm n}^{2/3})(\|Q_{\bar{x}}-{\rm L}_{Q_{\bar{x}}}(x)\|^2+\|Q_{\bar{x}'}-{\rm L}_{Q_{\bar{x}'}}(x)\|^2)}{2\eps_{\rm w}^2}\Big).
\end{multline}
Further, by Lemma~\ref{lem:quadbehavior} along with \eqref{QxlogQxdistance}, it holds that
\begin{equation} \label{lwbdstep2}
    \begin{split}
        \|Q_{\bar{x}}-{\rm L}_{Q_{\bar{x}}}(x)\|^2 &\leq \|Q_{\bar{x}}-x\|^2 + 2\|Q_{\bar{x}}-x\|\|x-{\rm L}_{Q_{\bar{x}}}(x)\| + \|x-{\rm L}_{Q_{\bar{x}}}(x)\|^2 \leq \|Q_{\bar{x}}-x\|^2 + C\eps_{\rm n}^2 \\
        \|Q_{\bar{x}'}-{\rm L}_{Q_{\bar{x}'}}(x)\|^2 &\leq \|Q_{\bar{x}'}-x\|^2 + 2\|Q_{\bar{x}'}-x\|\|x-{\rm L}_{Q_{\bar{x}'}}(x)\| + \|x-{\rm L}_{Q_{\bar{x}'}}(x)\|^2 \leq \|Q_{\bar{x}'}-x\|^2 + C\eps_{\rm n}^2,
    \end{split}
\end{equation}
and that
\begin{equation} \label{extraterms}
    \begin{split}
        C\eps_{\rm n}^{2/3} \|Q_{\bar{x}}-{\rm L}_{Q_{\bar{x}}}(x)\|^2 &\leq  C\eps_{\rm n}^2\\
        C\eps_{\rm n}^{2/3} \|Q_{\bar{x}'}-{\rm L}_{Q_{\bar{x}'}}(x)\|^2 &\leq  C\eps_{\rm n}^2.
    \end{split}
\end{equation}
Hence, combining \eqref{lwbdstep1}, \eqref{lwbdstep2}, \eqref{extraterms} gives us the desired lower bound
\begin{equation*} 
    \omega(\bar{x},\bar{x}') \geq \exp\Big(-\frac{C\eps_{\rm n}^2}{\eps_{\rm w}^2}\Big)\xi(\bar{x},\bar{x}') = \exp(-C\eps^2)\xi(\bar{x},\bar{x}').
\end{equation*}
Now, for the upper bound in \eqref{etaandwNr}, we first consider the case when
\begin{equation*} 
    \|x-Q_{\bar{x}}\|\geq \eps_{\rm n}^{2/3} \quad\text{ or }\quad \|x-Q_{\bar{x}'}\|\geq \eps_{\rm n}^{2/3},
\end{equation*}
i.e. when $Y=1$ in \eqref{eqdef:Y}.
Suppose the first inequality holds.
Then since $\eps_{\rm n}^{2/3}\gg\eps_{\rm n}$, $\|x-Q_{\bar{x}}\| - \|\bar{x}-Q_{\bar{x}}\| \geq c\eps_{\rm n}^{2/3}> 0$, and consequently,
\begin{equation*}
    \|\bar{x} - x\|^2 \geq (\|x-Q_{\bar{x}}\| - \|\bar{x}-Q_{\bar{x}}\|)^2 \geq c\eps_{\rm n}^{4/3}.
\end{equation*}
Therefore,
\begin{equation} \label{eq:case2Y}
    \E_{x\sim \mu} \Big[ Y \exp\Big(\frac{-\|x-\bar{x}\|^2-\|x-\bar{x}'\|^2}{2\eps_{\rm w}^2}\Big) \Big]\leq \exp\Big(-\frac{c\eps_{\rm n}^{4/3}}{\eps_{\rm w}^2}\Big) = \exp\Big(-\frac{c\eps^{4/3}}{\eps^{2\tau/3}}\Big).
\end{equation}
Next, suppose
\begin{equation*} 
    \|x-Q_{\bar{x}}\|< \eps_{\rm n}^{2/3} \quad\text{ and }\quad \|x-Q_{\bar{x}'}\|< \eps_{\rm n}^{2/3},
\end{equation*}
i.e. when $X=1$ in \eqref{eqdef:X}.
Then, due to
\begin{align*}
    \|Q_{\bar{x}}-{\rm L}_{Q_{\bar{x}}}(x)\|^2 &\geq \|Q_{\bar{x}}-x\|^2 - 2\|Q_{\bar{x}}-x\|\|x-{\rm L}_{Q_{\bar{x}}}(x)\| + \|x-{\rm L}_{Q_{\bar{x}}}(x)\|^2 \\
    \|Q_{\bar{x}'}-{\rm L}_{Q_{\bar{x}'}}(x)\|^2 &\geq \|Q_{\bar{x}'}-x\|^2 - 2\|Q_{\bar{x}'}-x\|\|x-{\rm L}_{Q_{\bar{x}'}}(x)\| + \|x-{\rm L}_{Q_{\bar{x}'}}(x)\|^2,
\end{align*}
we get, from \eqref{QxlogQxdistance} and Lemma~\ref{lem:quadbehavior},
\begin{equation} \label{eq:case2Xstep2}
    \begin{split}
        \|Q_{\bar{x}}-{\rm L}_{Q_{\bar{x}}}(x)\|^2 &\geq \|Q_{\bar{x}}-x\|^2 - C\eps_{\rm n}^2\\
        \|Q_{\bar{x}'}-{\rm L}_{Q_{\bar{x}'}}(x)\|^2 &\geq \|Q_{\bar{x}'}-x\|^2 - C\eps_{\rm n}^2.
    \end{split}
\end{equation}
Further, by virtue of Lemma \ref{lem:xQxlogQx},
\begin{align*}
    -\|\bar{x}-x\|^2 &\leq -\|\bar{x}-Q_{\bar{x}}\|^2- (1-C\eps_{\rm n}^{2/3})\|Q_{\bar{x}}-{\rm L}_{Q_{\bar{x}}}(x)\|^2 \\
    -\|\bar{x}'-x\|^2 &\leq -\|\bar{x}'-Q_{\bar{x}'}\|^2 -(1-C\eps_{\rm n}^{2/3})\|Q_{\bar{x}'}-{\rm L}_{Q_{\bar{x}'}}(x)\|^2,
\end{align*}
we get from another application of \eqref{QxlogQxdistance}
\begin{equation} \label{case2Xstep1}
    \begin{split}
        -\|\bar{x}-x\|^2 &\leq -\|\bar{x}-Q_{\bar{x}}\|^2 - \|Q_{\bar{x}}-{\rm L}_{Q_{\bar{x}}}(x)\|^2 + C\eps_{\rm n}^2 \\
        -\|\bar{x}'-x\|^2 &\leq -\|\bar{x}'-Q_{\bar{x}'}\|^2 - \|Q_{\bar{x}'}-{\rm L}_{Q_{\bar{x}'}}(x)\|^2 + C\eps_{\rm n}^2.
    \end{split} 
\end{equation}
Altogether, \eqref{eq:case2Xstep2}, \eqref{case2Xstep1} demonstrate
\begin{equation} \label{eq:case2Xstep3}
    \E_{x\sim \mu} \Big[ X \exp\Big(\frac{-\|x-\bar{x}\|^2-\|x-\bar{x}'\|^2}{2\eps_{\rm w}^2}\Big) \Big]
    \leq \exp\Big(\frac{C\eps_{\rm n}^2}{\eps_{\rm w}^2}\Big)\xi(\bar{x},\bar{x}') = \exp(C\eps^2)\xi(\bar{x},\bar{x}'). 
\end{equation}
Combining \eqref{expansion}, \eqref{eq:case2Y}, \eqref{eq:case2Xstep3}, we conclude
\begin{equation*}
    \omega(\bar{x},\bar{x}') \leq \exp(C\eps^2)\xi(\bar{x},\bar{x}') + \exp\Big(-\frac{c\eps^{4/3}}{\eps^{2\tau/3}}\Big),
\end{equation*}
which is the desired upper bound. 
\end{proof}

\subsection{Proof of Lemma \ref{lem:quadbehavior}} \label{sec:quadbehaviorproof}

We will only prove the first inequality in \eqref{quadbehavior}; the second inequality is handled similarly. 
Let $r>0$. 
Let $y\in B^d(Q_{\bar{x}},r)\cap \tilde{T}_{Q_{\bar{x}}}\M$, where $\tilde{T}_{Q_{\bar{x}}}\M$ denotes the affine plane tangent to $\M$ at $Q_{\bar{x}}$. 
Recalling further that $({\rm A}_{Q_{\bar{x}}}\circ \mathsf{I}_m)^{-1}(\tilde{T}_{Q_{\bar{x}}}\M)=T_{Q_{\bar{x}}}\M$ (see \eqref{Gx}), we define ${\rm E}_{Q_{\bar{x}}}:= {\rm Exp}_{Q_{\bar{x}}}\circ ({\rm A}_{Q_{\bar{x}}}\circ \mathsf{I}_m)^{-1}$ and write
\begin{equation} \label{Taylorexp}
    {\rm E}_{Q_{\bar{x}}}(y)=Q_{\bar{x}} + I_{Q_{\bar{x}}}(y)+\frac{1}{2}P_{Q_{\bar{x}}}(y)+\dots
\end{equation}
where $I_{Q_{\bar{x}}}, P_{Q_{\bar{x}}}$ are linear and quadratic forms on $\tilde{T}_{Q_{\bar{x}}}\mathcal{M}$, respectively, with values in $\mathbb{R}^{d}$. 
Consider the plane $\mathcal{P}$ determined by the points $x$, $Q_{\bar{x}}$, ${\rm L}_{Q_{\bar{x}}}(x)$, and let $\gamma$ be the geodesic curve connecting $Q_{\bar{x}}$, $x$ that lies in the intersection $\mathcal{P}\cap\mathcal{B}(Q_{\bar{x}},r)$. 
Let $v$ be the unit vector in the positive direction of ${\rm L}_{Q_{\bar{x}}}(x) - Q_{\bar{x}}$. 
We can suppose that $\gamma$ is parametrized by its arc length $s$, such that
\begin{equation*}
    \gamma(0) = Q_{\bar{x}} \quad\text{ and }\quad \frac{d}{ds}\gamma(0)=v.
\end{equation*}
Now, setting $y=vs$ in \eqref{Taylorexp} leads to
\begin{equation} \label{Taylorcurve}
    \gamma(s)={\rm E}_{Q_{\bar{x}}}(vs)=Q_{\bar{x}} + I_{Q_{\bar{x}}}(v)s+\frac{1}{2}P_{Q_{\bar{x}}}(v)s^2+\dots
\end{equation}
where $I_{Q_{\bar{x}}}(v)=v=\frac{d}{ds}\gamma(0)$ and $P_{Q_{\bar{x}}}(v)=\frac{d^2}{ds^2}\gamma(0)$. 
Furthermore, since $\gamma$ is a geodesic curve on $\mathcal{M}$, we have
\begin{equation*}
    \frac{d^2}{ds^2}\gamma(0)\perp \tilde{T}_{\gamma(0)}\mathcal{M}=\tilde{T}_{Q_{\bar{x}}}\mathcal{M},
\end{equation*}
and since $\gamma$ is a plane curve on $\mathcal{P}$, we can write \cite[Section~1-4]{struik1961lectures} 
\begin{equation*}
    \frac{d^2}{ds^2}\gamma(0) = \kappa\mathfrak{n}
\end{equation*}
where $\mathfrak{n}$ is the unit normal vector of $\gamma$ in $\mathcal{P}$ at $s=0$, and $\kappa$ is the curvature of $\gamma$ at $s=0$. By the assumptions made on $\M$ in Section \ref{sec:assumption}, $|\kappa|\leq K$. Hence, if we let $s=s^{*}>0$ be such that ${\rm E}_{Q_{\bar{x}}}(vs^{*})=x$ in \eqref{Taylorcurve}, we can conclude that
\begin{equation} \label{Os2}
    \|x-Q_{\bar{x}}-vs^{*}\|=\mathcal{O} ((s^{*})^2),
\end{equation}
and the majorant constant is bounded by a multiple of $K$. 
Since $Q_{\bar{x}}+vs^{*}={\rm L}_{Q_{\bar{x}}}(x)$, we obtain from \eqref{Os2}, 
\begin{equation*}
    \|x-{\rm L}_{Q_{\bar{x}}}(x)\|=\mathcal{O}(\|{\rm L}_{Q_{\bar{x}}}(x)-Q_{\bar{x}}\|^2),
\end{equation*}    
and the proof is completed. \qed

\subsection{Proof of Lemma \ref{lem:xQxlogQx}} \label{sec:xQxlogQxproof}

We will only prove the first inequalities in \eqref{xQxlogQxupbd} and in \eqref{xQxlogQxlwbd}. We prepare two facts.
First, it follows from the orthogonality in \eqref{perp} that 
\begin{equation} \label{pythagorean}
    \|\bar{x}-{\rm L}_{Q_{\bar{x}}}(x)\|^2 = \|\bar{x}-Q_{\bar{x}}\|^2+\|Q_{\bar{x}}-{\rm L}_{Q_{\bar{x}}}(x)\|^2,
\end{equation}
and second, from Lemma \ref{lem:quadbehavior},
\begin{align}\label{quadrecall}
    \|x-{\rm L}_{Q_{\bar{x}}}(x)\|\le C\|Q_{\bar{x}}-{\rm L}_{Q_{\bar{x}}}(x)\|^2.
\end{align}
For the first inequality in \eqref{xQxlogQxupbd}, we have,
\begin{equation} \label{triangleineq}
    \begin{split}
        \|x-\bar{x}\|^2 &\leq (\|{\rm L}_{Q_{\bar{x}}} (x)-\bar{x}\| + \|{\rm L}_{Q_{\bar{x}}}(x)-x\|)^2\\
        &=\|{\rm L}_{Q_{\bar{x}}}(x)-\bar{x}\|^2 + \|{\rm L}_{Q_{\bar{x}}}(x)-x\|^2 + 2\|{\rm L}_{Q_{\bar{x}}}(x)-\bar{x}\|\|{\rm L}_{Q_{\bar{x}}} (x)-x\|.
    \end{split}
\end{equation}
Observe that
\begin{equation} \label{middleterm}
    \begin{split}
        \|{\rm L}_{Q_{\bar{x}}}(x)-\bar{x}\|\|{\rm L}_{Q_{\bar{x}}} (x)-x\| &\leq C\|Q_{\bar{x}} - {\rm L}_{Q_{\bar{x}}}(x)\|^2\sqrt{\|\bar{x}-Q_{\bar{x}}\|^2 + \|Q_{\bar{x}}-{\rm L}_{Q_{\bar{x}}}(x)\|^2}\\
        &\leq C\eps_{\rm n}^{2/3} \|Q_{\bar{x}} - {\rm L}_{Q_{\bar{x}}}(x)\|^2,
    \end{split}
\end{equation}
where the first inequality is due to \eqref{pythagorean}, \eqref{quadrecall} and the second to \eqref{QxlogQxdistance}. 
Putting \eqref{middleterm} back in \eqref{triangleineq}, and invoking all \eqref{QxlogQxdistance}, \eqref{pythagorean}, \eqref{quadrecall} again, allows us to obtain
\begin{equation*}
    \begin{split}
        \|x-\bar{x}\|^2 &\leq \|Q_{\bar{x}}-\bar{x}\|^2 + \|{\rm L}_{Q_{\bar{x}}}(x)-Q_{\bar{x}}\|^2 +  \|{\rm L}_{Q_{\bar{x}}}(x)-x\|^2 + C\eps_{\rm n}^{2/3}\|Q_{\bar{x}} - {\rm L}_{Q_{\bar{x}}}(x)\|^2\\
        &\leq \|Q_{\bar{x}}-\bar{x}\|^2 + \|{\rm L}_{Q_{\bar{x}}}(x)-Q_{\bar{x}}\|^2 +  C\eps_{\rm n}^{2/3}\|Q_{\bar{x}} - {\rm L}_{Q_{\bar{x}}}(x)\|^2 + C\eps_{\rm n}^{2/3}\|Q_{\bar{x}} - {\rm L}_{Q_{\bar{x}}}(x)\|^2\\
        &\leq \|Q_{\bar{x}}-\bar{x}\|^2 + (1+C\eps_{\rm n}^{2/3})\|Q_{\bar{x}} - {\rm L}_{Q_{\bar{x}}}(x)\|^2.
    \end{split}
\end{equation*}
Similarly, to handle the first inequality in \eqref{xQxlogQxlwbd}, we note
\begin{align*} 
   \|x-\bar{x}\|^2 &\geq (\|{\rm L}_{Q_{\bar{x}}}(x)-\bar{x}\|-\|{\rm L}_{Q_{\bar{x}}}(x)-x\|)^2\\
   &=\|{\rm L}_{Q_{\bar{x}}}(x)-\bar{x}\|^2 + \|{\rm L}_{Q_{\bar{x}}}(x)-x\|^2 - 2\|{\rm L}_{Q_{\bar{x}}}(x)-\bar{x}\|\|{\rm L}_{Q_{\bar{x}}}(x)-x\|\\
   &\geq \|{\rm L}_{Q_{\bar{x}}}(x)-\bar{x}\|^2 - 2\|{\rm L}_{Q_{\bar{x}}}(x)-\bar{x}\|\|{\rm L}_{Q_{\bar{x}}}(x)-x\|\\
   &\geq \|\bar{x}-Q_{\bar{x}}\|^2+\|Q_{\bar{x}}-{\rm L}_{Q_{\bar{x}}}(x)\|^2 -C\eps_{\rm n}^{2/3} \|Q_{\bar{x}} - {\rm L}_{Q_{\bar{x}}}(x)\|^2 \\
   &= \|\bar{x}-Q_{\bar{x}}\|^2+ (1-C\eps_{\rm n}^{2/3}) \|Q_{\bar{x}} - {\rm L}_{Q_{\bar{x}}}(x)\|^2,
\end{align*}
where the last inequality is an application of \eqref{pythagorean}, \eqref{middleterm}. \qed

\section{Proof of Proposition~\ref{prop:qtrunc}} \label{appx:qtrunc}

In Appendix~\ref{appx:pathderivatives}, we express the relevant partial derivatives of $q$ as path derivatives and address their continuity.
We will prove \eqref{eq:qprimebounded}, \eqref{eq:qdoubleprimebounded} in Appendices~\ref{appx:qprimebounded},~\ref{appx:qdoubleprimebounded}, respectively.
The proof of \eqref{eq:qbounded} and \eqref{eq:qboundedextra} relies on arguments similar to those in Appendix~\ref{appx:qprimebounded} and is presented in Appendix~\ref{appx:qtruncbounded}.
\qed

\subsection{First and second partial derivatives of $q$ as path derivatives} \label{appx:pathderivatives}

Let $z\in\M$.
Consider the diffeomorphic map
\begin{align*} 
    \overline{{\rm Exp}}_z: B^m(0,\eps_{\rm n})\times B^{d-m}(0,\eps_{\rm n}) 
    &\to \mathcal{N}_{\eps_{\rm n}}(\mathcal{B}(z,\eps_{\rm n})) \\
    (v,k) &\mapsto z' + h
\end{align*}
constructed in Appendix~\ref{appx:dg}, with $r=s=\eps_{\rm n}$. 
Here, $z'={\rm Exp}_z(v)$ and $h = {\rm R}_{z'}\circ {\rm J}_{d-m}(k)$.
Under this map, a straight line $\Gamma: [0,1]\to B^m(0,\eps_{\rm n})$, such that $\Gamma(t) = tu$, with $u\in\mathbb{S}^{m-1}$, is mapped to a geodesic curve $\gamma: [0,1]\to\mathcal{B}(z,\eps_{\rm n})$, such that $\gamma(0)=z$ and
\begin{equation*}
    \frac{d}{dt}\gamma(t)|_{t=0} =\tilde{u}= \langle \nabla {\rm Exp}_z(0), u\rangle \in \tilde{T}_z\M-z
\end{equation*} 
is a unit vector. 
Letting $z'$ denote a generic point on $\gamma$, and following definition \eqref{eqdef:q}, we express $q(z')$ as
\begin{equation*}  
    q(\gamma(t)) 
    = \frac{1}{(\eps_{\rm p}\sqrt{2\pi})^d} \int_{B^{d-m}(0,\eps_{\rm n})} \int_{\M} \exp\Big(-\frac{\|{\rm Exp}_z(tu) + {\rm R}_{\gamma(t)}\circ {\rm J}_{d-m}(k) -x\|^2}{2\eps_{\rm p}^2}\Big)\rho(x)\dd\mathcal{V}(x)\dd k.
\end{equation*}
Hence, 
\begin{multline} \label{eq:qz2}
    \frac{d}{dt} q(\gamma(t))|_{t=0} \\
    = \frac{1}{(\eps_{\rm p}\sqrt{2\pi})^d} \int_{B^{d-m}(0,\eps_{\rm n})} \int_{\M} \frac{d}{dt} \exp\Big(-\frac{\|{\rm Exp}_z(tu) + {\rm R}_{\gamma(t)}\circ {\rm J}_{d-m}(k) -x\|^2}{2\eps_{\rm p}^2}\Big)\Big|_{t=0}\rho(x)\dd\mathcal{V}(x)\dd k 
\end{multline}
provides us with the directional derivative $\partial_{\tilde{u}} q(z)$.
Observe, since ${\rm R}_{\gamma(t)}$ is a rotation, we get 
\begin{equation} \label{eq:rotortho}
    \big\langle \frac{d}{dt} {\rm R}_{\gamma(t)} y, {\rm R}_{\gamma(t)} y \big\rangle = 0,
\end{equation}
for all $y\in\mathbb{R}^d$. 
Then, from \eqref{eq:qz2} and a straightforward calculation,
\begin{equation} \label{eq:splitqz2}
    \partial_{\tilde{u}} q(z) = \frac{d}{dt} q(\gamma(t))|_{t=0} = E+Z,
\end{equation}
where, due to the orthogonality $h\perp\tilde{u}$,
\begin{align*}
    E &:= -\frac{1}{(\eps_{\rm p}\sqrt{2\pi})^d} \int_{h+z \in \tilde{N}_{z}\M\cap \mathcal{N}_{\eps_{\rm n}}(\M)}
    \int_{\M} \frac{\langle z + h - x, \tilde{u} \rangle}{\eps_{\rm p}^2} 
    \exp\Big(-\frac{\|z+h-x\|^2}{2\eps_{\rm p}^2}\Big) \rho(x)\dd\mathcal{V}(x)\dd h \\
    &= -\frac{1}{(\eps_{\rm p}\sqrt{2\pi})^d} \int_{h+z \in \tilde{N}_{z}\M\cap \mathcal{N}_{\eps_{\rm n}}(\M)}
    \int_{\M} \frac{\langle z - x, \tilde{u} \rangle}{\eps_{\rm p}^2} 
    \exp\Big(-\frac{\|z+h-x\|^2}{2\eps_{\rm p}^2}\Big) \rho(x)\dd\mathcal{V}(x)\dd h,
\end{align*}
and due to \eqref{eq:rotortho},
\begin{equation*}
    Z := -\frac{1}{(\eps_{\rm p}\sqrt{2\pi})^d} \int_{h+z \in \tilde{N}_{z}\M\cap \mathcal{N}_{\eps_{\rm n}}(\M)}
    \int_{\M} \frac{\langle z - x, \mathsf{A}h \rangle}{\eps_{\rm p}^2} 
    \exp\Big(-\frac{\|z+ h -x\|^2}{2\eps_{\rm p}^2}\Big) \rho(x)\dd\mathcal{V}(x)\dd h,
\end{equation*}
where $\mathsf{A} := \frac{d}{dt} {\rm R}_{\gamma(t)}|_{t=0}$.

Let $w\in\mathbb{S}^{m-1}$ and $\tilde{w} = \nabla {\rm Exp}_z(0) w \in \tilde{T}_z\M-z$.
Repeating this process for $\partial_{\tilde{u}}q$ in place of $q$, and $w$, $\tilde{w}$ in place of $u$, $\tilde{u}$, we interpret the second partial derivative $\partial_{\tilde{w}}\partial_{\tilde{u}}q$ as the time derivative of $\partial_{\tilde{u}}q(\zeta(t))$ at $t=0$, where $\zeta: [0,1]\to \mathcal{B}(z,\eps_{\rm n})$ is a geodesic curve on $\M$ starting at $z$ with velocity $\tilde{w}$. 
Thus, from \eqref{eq:splitqz2}
\begin{equation} \label{eq:splitqz3}
    \partial_{\tilde{w}}\partial_{\tilde{u}} q(z) = \frac{d}{dt} \partial_{\tilde{u}} q(\zeta(t))|_{t=0} = D + V + F + S
\end{equation}
where\footnote{The letters $E, Z, D, V, F, S$ are chosen correspondingly to the German words for ``first, second, third, fourth, fifth, sixth'' which are ``Erster, Zweiter, Dritter, Vierter, F\"unfter, Sechster'' respectively.}
\begin{equation*}
    D := \frac{1}{(\eps_{\rm p}\sqrt{2\pi})^d} \int_{h+z \in \tilde{N}_{z}\M\cap \mathcal{N}_{\eps_{\rm n}}(\M)} 
    \int_{\M} \frac{\langle z - x, \tilde{u} + \mathsf{A}h \rangle \langle z - x, \tilde{w} + \mathsf{A}h \rangle}{\eps_{\rm p}^4} \exp\Big(-\frac{\| z +h-x\|^2}{2\eps_{\rm p}^2}\Big) \rho(x)\dd\mathcal{V}(x)\dd h,
\end{equation*}
and
\begin{align*}
    V &:= -\frac{1}{(\eps_{\rm p}\sqrt{2\pi})^d} \int_{h+z \in \tilde{N}_{z}\M\cap \mathcal{N}_{\eps_{\rm n}}(\M)} 
    \int_{\M} \frac{\langle \tilde{w}, \tilde{u} \rangle}{\eps_{\rm p}^2} \exp\Big(-\frac{\| z +h-x\|^2}{2\eps_{\rm p}^2}\Big) \rho(x)\dd\mathcal{V}(x)\dd h \\
    F &:= -\frac{1}{(\eps_{\rm p}\sqrt{2\pi})^d} \int_{h+z \in \tilde{N}_{z}\M\cap \mathcal{N}_{\eps_{\rm n}}(\M)} 
    \int_{\M} \frac{\langle \tilde{w}, \mathsf{A}h \rangle}{\eps_{\rm p}^2} \exp\Big(-\frac{\| z +h-x\|^2}{2\eps_{\rm p}^2}\Big) \rho(x)\dd\mathcal{V}(x)\dd h \\
    S &:= \frac{1}{(\eps_{\rm p}\sqrt{2\pi})^d} \int_{h+z \in \tilde{N}_{z}\M\cap \mathcal{N}_{\eps_{\rm n}}(\M)} 
    \int_{\M} \frac{\langle z-x, \mathsf{B}h \rangle}{\eps_{\rm p}^2} \exp\Big(-\frac{\| z +h-x\|^2}{2\eps_{\rm p}^2}\Big) \rho(x)\dd\mathcal{V}(x)\dd h,
\end{align*}
where $\mathsf{B}:=\frac{d^2}{dt^2} {\rm R}_{\gamma(t)}|_{t=0}$.

We conclude by observing from \eqref{eq:splitqz2}, \eqref{eq:splitqz3} that $\partial_{\tilde{u}} q$, $\partial_{\tilde{w}}\partial_{\tilde{u}} q$ are both continuous in terms of $z$.

\subsection{Proof of \eqref{eq:qprimebounded} in Proposition~\ref{prop:qtrunc}} \label{appx:qprimebounded}

From \eqref{eq:splitqz2}, let $E=E_1+E_2$, where
\begin{align*}
    E_1 &:= -\frac{1}{(\eps_{\rm p}\sqrt{2\pi})^d} \int_{h+z \in \tilde{N}_{z}\M\cap \mathcal{N}_{\eps_{\rm n}}(\M)} \int_{\mathcal{B}(z,\eps_{\rm n}^{2/3})} \frac{\langle z-x,\tilde{u}\rangle}{\eps_{\rm p}^2} \exp\Big(-\frac{\|z+h-x\|^2}{2\eps_{\rm p}^2}\Big)\rho(x)\dd\mathcal{V}(x) \dd h, \\
    E_2 &:= -\frac{1}{(\eps_{\rm p}\sqrt{2\pi})^d} \int_{h+z \in \tilde{N}_{z}\M\cap \mathcal{N}_{\eps_{\rm n}}(\M)} \int_{\M\setminus\mathcal{B}(z,\eps_{\rm n}^{2/3})} \frac{\langle z-x,\tilde{u}\rangle}{\eps_{\rm p}^2} \exp\Big(-\frac{\|z+h-x\|^2}{2\eps_{\rm p}^2}\Big)\rho(x)\dd\mathcal{V}(x) \dd h.
\end{align*}
Starting with $E_2$, we note from \eqref{eq:distancecompare} that, $d(z,x)\geq \eps_{\rm n}^{2/3}$ implies $\|x-z\|\geq c\eps_{\rm n}^{2/3}$. 
Further, from the fact that $\|h\|\leq\eps_{\rm n}$, we also acquire
\begin{equation} \label{ineqchain}
    c\eps_{\rm n}^{2/3} \leq cd(z,x) \leq \|z-x\| \leq c'(\|z-x\|-\|h\|) \leq c'\|z+h-x\|. 
\end{equation}
Recall the quantity $\mathfrak{D} = \sup\{\|x-x'\|: x,x'\in\M\}$ in \eqref{mandiam}. Since $\M$ is a closed manifold, $\mathfrak{D} <\infty$. 
Then gathering all in \eqref{ineqchain}, we deduce,
\begin{align}
    \nonumber |E_2| &\leq \frac{1}{(\eps_{\rm p}\sqrt{2\pi})^d}\int_{h+z \in \tilde{N}_{z}\M\cap \mathcal{N}_{\eps_{\rm n}}(\M)} \int_{\M\setminus\mathcal{B}(z,\eps_{\rm n}^{2/3})} \frac{\| z - x\|}{\eps_{\rm p}^2} \exp\Big(-\frac{\| z + h - x\|^2}{2\eps_{\rm p}^2}\Big)\rho(x)\dd\mathcal{V}(x) \dd h \\
    \label{eq:qE2bound} &\leq \frac{\mathfrak{D}}{(\eps_{\rm p}\sqrt{2\pi})^d} \int_{h+z \in \tilde{N}_{z}\M\cap \mathcal{N}_{\eps_{\rm n}}(\M)} \int_{\M\setminus\mathcal{B}(z,\eps_{\rm n}^{2/3})} \frac{1}{\eps_{\rm p}^2} \exp\Big(-\frac{c}{\eta^{2/d} \eps_{\rm n}^{2/3}}\Big)\rho(x)\dd\mathcal{V}(x) \dd h.
\end{align}
By taking $\eps$ sufficiently small satisfying \eqref{thecondition1}, we acquire from \eqref{eq:qE2bound}, 
\begin{equation} \label{eq:boundingqE2}
    |E_2| \leq \frac{C_{\eta}\eps_{\rm n}^m}{\eta^{2/d} (\eps_{\rm p}\sqrt{2\pi})^d} \int_{h+z \in \tilde{N}_{z}\M\cap \mathcal{N}_{\eps_{\rm n}}(\M)} \dd h \leq C_{\eta}.
\end{equation}
Moving on to $E_1$, recall that we can associate for each $x\in\mathcal{B}(z,\eps_{\rm n}^{2/3})$, a unique ${\rm L}_z(x)-z \in \tilde{T}_z\M-z$ such that Lemma~\ref{lem:quadbehavior} holds.
To simplify the notation, we will write $v_x:={\rm L}_z(x)-z$ from here onward. 
Then
\begin{equation} \label{eq:quadbehavior2}
    \|v_x\| = d(z,x) \quad\text{ and } \quad \|x-(v_x+z)\|=\|x-{\rm L}_z(x)\|\leq C\| {\rm L}_z(x) - z\|^2 = Cd(z,x)^2, 
\end{equation}
and hence, due to the orthogonality $v_x\perp h$,
\begin{equation*} 
    \|z+h-x\|^2 = \|z-x\|^2 + \|h\|^2 + 2\langle z-x,h\rangle 
     = \|z-x\|^2 + \|h\|^2 + 2\langle z+v_x-x,h\rangle.
\end{equation*}
The following straightforward lemma holds.

\begin{lemma} \label{lem:keyTayexp}
For $x\in\mathcal{B}(z,\eps_{\rm n}^{2/3})$, it holds that
\begin{equation} \label{eq:coreexpfactor2}
    \exp\Big(-\frac{\|x-z\|^2}{2\eps_{\rm p}^2} + \frac{d(z,x)^2}{2\eps_{\rm p}^2}\Big) \leq \exp\Big(\frac{C\|x-z\|^4}{\eps_{\rm p}^2}\Big) \leq 1 + \frac{C\|x-z\|^4}{\eps_{\rm p}^2},
\end{equation}
and that
\begin{equation} \label{eq:coreexpfactor1}
    \exp\Big(-\frac{Cd(z,x)^2\|h\|}{\eps_{\rm p}^2}\Big)
    \leq \exp\Big(-\frac{\langle z+v_x-x,h\rangle}{\eps_{\rm p}^2}\Big) 
    \leq \exp\Big(\frac{Cd(z,x)^2\|h\|}{\eps_{\rm p}^2}\Big).
\end{equation}
Furthermore, if $h+z\in \tilde{N}_z\M\cap\mathcal{N}_{\eps_{\rm n}}(\M)$, then
\begin{align} \label{eq:keyTayexp}
    \nonumber \exp\Big(-\frac{\langle z+v_x-x,h\rangle}{\eps_{\rm p}^2}\Big)
    = 1 - \frac{\langle z+v_x-x,h\rangle}{\eps_{\rm p}^2} 
    &+ \frac{\langle z+v_x-x,h\rangle^2}{2\eps_{\rm p}^4} - \frac{\langle z+v_x-x,h\rangle^3}{3!\eps_{\rm p}^6} \\
    &+ \frac{\langle z+v_x-x,h\rangle^4}{4!\eps_{\rm p}^8} -
    \frac{\langle z+v_x-x,h\rangle^5}{5!\eps_{\rm p}^{10}} +
    \mathcal{O}_{\eta}(\eps_{\rm n}^2).
\end{align} 
\end{lemma}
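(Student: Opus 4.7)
The plan is to prove the three assertions by combining the distance comparison \eqref{eq:distancecompare}, the quadratic closeness \eqref{eq:quadbehavior2} from Lemma~\ref{lem:quadbehavior}, and a standard Taylor expansion of the exponential, while carefully tracking the $\eps_{\rm n}$ and $\eta$ dependencies via the relation $\eps_{\rm p} = \eta^{1/d}\eps_{\rm n}$.

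For \eqref{eq:coreexpfactor2}, the first step is to write
\begin{equation*}
    d(z,x)^2 - \|x-z\|^2 \leq \Big(\|x-z\| + \tfrac{8}{R^2}\|x-z\|^3\Big)^2 - \|x-z\|^2 \leq C\|x-z\|^4,
\end{equation*}
which follows directly from \eqref{eq:distancecompare} together with the smallness assumption $\|x-z\|\leq \eps_{\rm n}^{2/3}\leq R/2$. Dividing by $2\eps_{\rm p}^2$ and exponentiating yields the left-hand inequality of \eqref{eq:coreexpfactor2}. For the right-hand inequality, I observe that $\|x-z\|^4/\eps_{\rm p}^2 \leq \eps_{\rm n}^{8/3}/(\eta^{2/d}\eps_{\rm n}^2) = \eta^{-2/d}\eps_{\rm n}^{2/3}$, so that for $\eps$ sufficiently small the argument of the exponential is bounded, allowing the use of $\exp(t)\leq 1+C t$ for $0\leq t\leq 1$.

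For \eqref{eq:coreexpfactor1}, the orthogonality $v_x\perp h$ is used to rewrite $\langle z-x,h\rangle = \langle z+v_x-x,h\rangle$, and then the quadratic estimate \eqref{eq:quadbehavior2} gives $\|z+v_x-x\| = \|x-{\rm L}_z(x)\|\leq Cd(z,x)^2$. An application of Cauchy-Schwarz then produces $|\langle z+v_x-x,h\rangle|\leq Cd(z,x)^2\|h\|$, and exponentiation of the resulting two-sided bound $-Cd(z,x)^2\|h\|/\eps_{\rm p}^2\leq -\langle z+v_x-x,h\rangle/\eps_{\rm p}^2 \leq Cd(z,x)^2\|h\|/\eps_{\rm p}^2$ yields \eqref{eq:coreexpfactor1}.

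Finally, for \eqref{eq:keyTayexp}, I Taylor-expand $\exp(-s) = \sum_{k=0}^{5}(-s)^k/k! + R_5(s)$ with Lagrange remainder $|R_5(s)|\leq e^{|s|}|s|^6/6!$, and substitute $s = \langle z+v_x-x,h\rangle/\eps_{\rm p}^2$. The main obstacle, though routine, is verifying that the remainder is genuinely of order $\mathcal{O}_\eta(\eps_{\rm n}^2)$: combining Cauchy-Schwarz with $d(z,x)\leq \eps_{\rm n}^{2/3}$, $\|h\|\leq \eps_{\rm n}$, and $\eps_{\rm p}^2 = \eta^{2/d}\eps_{\rm n}^2$ yields
\begin{equation*}
    |s| \leq \frac{Cd(z,x)^2\|h\|}{\eps_{\rm p}^2} \leq C\eta^{-2/d}\eps_{\rm n}^{1/3},
\end{equation*}
so $|s|$ is uniformly bounded (in fact small) and $|s|^6\leq C\eta^{-12/d}\eps_{\rm n}^{2}$. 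Therefore $|R_5(s)| \leq C_\eta |s|^6 = \mathcal{O}_\eta(\eps_{\rm n}^2)$, producing the advertised expansion. The careful accounting of these scalings is the only delicate point; once this is in place, the three estimates follow with no further technical difficulty.
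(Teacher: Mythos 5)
Your proof is correct and fills in exactly the details the paper leaves as a one-line observation (the paper's proof simply cites \eqref{eq:distancecompare}, \eqref{eq:quadbehavior2}, and Taylor's expansion). Your accounting of the scalings — in particular the bound $|s| \leq C\eta^{-2/d}\eps_{\rm n}^{1/3}$ guaranteeing the remainder is $\mathcal{O}_\eta(\eps_{\rm n}^2)$ — is the right verification of what the paper implicitly assumes. One minor redundancy: the observation that $\langle z-x,h\rangle = \langle z+v_x-x,h\rangle$ by orthogonality is used in the surrounding text that invokes the lemma, not in proving \eqref{eq:coreexpfactor1} itself (which already features $\langle z+v_x-x,h\rangle$); it is harmless but not needed at that point.
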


\begin{proof}
Observe that \eqref{eq:coreexpfactor2} follows from a combination of \eqref{eq:distancecompare} and Taylor's expansion, \eqref{eq:coreexpfactor1} from \eqref{eq:quadbehavior2}, and \eqref{eq:keyTayexp} from Taylor's expansion again. 
\end{proof}

Coming back to $E_1$, we split $E_1 = E_{11}+E_{12}$, where
\begin{align*}
    E_{11} &:= \frac{1}{(\eps_{\rm p}\sqrt{2\pi})^d} \int_{h+z \in \tilde{N}_{z}\M\cap \mathcal{N}_{\eps_{\rm n}}(\M)} \int_{\mathcal{B}(z,\eps_{\rm n}^{2/3})} \frac{\langle \tilde{u},z+ v_x -x \rangle}{\eps_{\rm p}^2} \exp\Big(-\frac{\|z+h-x\|^2}{2\eps_{\rm p}^2}\Big)\rho(x)\dd\mathcal{V}(x)  \dd h, \\
    E_{12} &:= \frac{1}{(\eps_{\rm p}\sqrt{2\pi})^d} \int_{h+z \in \tilde{N}_{z}\M\cap \mathcal{N}_{\eps_{\rm n}}(\M)} \int_{\mathcal{B}(z,\eps_{\rm n}^{2/3})} -\frac{\langle \tilde{u},v_x\rangle}{\eps_{\rm p}^2} \exp\Big(-\frac{\|z+h-x\|^2}{2\eps_{\rm p}^2}\Big)\rho(x)\dd\mathcal{V}(x)  \dd h.
\end{align*}
Guided by Lemma~\ref{lem:keyTayexp}, we let 
\begin{align*}
    E_{11}^* &:= \frac{1}{(\eps_{\rm p}\sqrt{2\pi})^d} \int_{h+z\in \tilde{N}_{z}\M\cap \mathcal{N}_{\eps_{\rm n}}(\M)} \int_{\mathcal{B}(z,\eps_{\rm n}^{2/3})} \frac{\langle \tilde{u},z+ v_x -x \rangle}{\eps_{\rm p}^2} \exp\Big(-\frac{\|h\|^2}{2\eps_{\rm p}^2}\Big)\exp\Big(-\frac{\|x-z\|^2}{2\eps_{\rm p}^2}\Big) \rho(x)\dd\mathcal{V}(x)  \dd h, \\
    E_{12}^* &:= \frac{1}{(\eps_{\rm p}\sqrt{2\pi})^d} \int_{h+z\in \tilde{N}_{z}\M\cap \mathcal{N}_{\eps_{\rm n}}(\M)} \int_{\mathcal{B}(z,\eps_{\rm n}^{2/3})} -\frac{\langle \tilde{u},v_x\rangle}{\eps_{\rm p}^2} \exp\Big(-\frac{\|h\|^2}{2\eps_{\rm p}^2}\Big) \exp\Big(-\frac{\|x-z\|^2}{2\eps_{\rm p}^2}\Big) \rho(x)\dd\mathcal{V}(x)  \dd h.
\end{align*}
Then another lemma applies.

\begin{lemma} \label{lem:Ii1s}
Let $E^*_{1i}$ be defined as above, for $i=1,2$.
Then $|E^*_{1i}| \leq C$.
\end{lemma}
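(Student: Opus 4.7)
The two integrals are set up so that the $h$- and $x$-variables separate completely, and the plan is to exploit this product structure to reduce everything to standard Gaussian moment estimates on $\mathbb{R}^{d-m}$ and on $B^m(0,\eps_{\rm n}^{2/3})\subset T_z\M$. Concretely, the $h$-integral on $\tilde N_z\M\cap\mathcal{N}_{\eps_{\rm n}}(\M) - z$ is always bounded by $\int_{\mathbb{R}^{d-m}} \exp(-\|h\|^2/(2\eps_{\rm p}^2))\,\dd h = (\eps_{\rm p}\sqrt{2\pi})^{d-m}$, which will eat $d-m$ of the $d$ factors of $\eps_{\rm p}\sqrt{2\pi}$ in the overall normalisation. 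This leaves only an $m$-dimensional Gaussian in the $x$-variable to bound, and the proof reduces to checking that the remaining $x$-integrals behave like $\eps_{\rm p}^m$ (up to constants).

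For $E^*_{11}$, I would first use \eqref{eq:quadbehavior2} to dominate the kernel:
\begin{equation*}
    |\langle \tilde u,\, z+v_x-x\rangle| \leq \|z+v_x-x\| = \|x-{\rm L}_z(x)\| \leq C d(z,x)^2 \leq C\|x-z\|^2.
\end{equation*}
Then pass to tangent-space coordinates via ${\rm Exp}_z$, write $x={\rm Exp}_z(y)$ with $y\in B^m(0,\eps_{\rm n}^{2/3})$, and use $\|x-z\|\leq d(z,x) = \|y\|$, $\rho\leq\rho_{\max}$, and $J_z(y)\leq 1+CmK\|y\|^2\leq C$ from \eqref{eq:Rauch}. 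After the substitution $y=\eps_{\rm p} u$, the $x$-integral is controlled by $\eps_{\rm p}^m \int_{\mathbb{R}^m} \|u\|^2\exp(-c\|u\|^2)\,\dd u$. Multiplying by the $h$-integral $(\eps_{\rm p}\sqrt{2\pi})^{d-m}$ and by the normalising constant $(\eps_{\rm p}\sqrt{2\pi})^{-d}$ in front, all $\eps_{\rm p}$ powers cancel and yield $|E^*_{11}|\leq C$.

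For $E^*_{12}$ the estimate $|\langle \tilde u, v_x\rangle|\leq \|v_x\|\leq \|x-z\|$ is one power short of closing by brute force; this is the genuine difficulty. The cure is a symmetry argument in the tangent plane: after substituting $x={\rm Exp}_z(y)$ the leading-order piece of the integrand is odd in $y$. Precisely, write $\langle\tilde u,v_x\rangle=\langle u,y\rangle$ (recall $\tilde u = \nabla{\rm Exp}_z(0)u$), and Taylor expand
\begin{equation*}
    \exp\Bigl(-\tfrac{\|x-z\|^2}{2\eps_{\rm p}^2}\Bigr)\rho(x)J_z(y) = \exp\Bigl(-\tfrac{\|y\|^2}{2\eps_{\rm p}^2}\Bigr)\bigl[\rho(z) + \langle\nabla\rho(z),y\rangle + R(y)\bigr],
\end{equation*}
where the remainder satisfies $|R(y)|\leq C(\|y\|^2 + \|y\|^4/\eps_{\rm p}^2)$ — the quadratic term coming from the next Taylor coefficient of $\rho\,J_z$ and the $\|y\|^4/\eps_{\rm p}^2$ term from \eqref{eq:coreexpfactor2}. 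The $\rho(z)$-piece integrates to zero in $y$ by odd symmetry of $\langle u,y\rangle$ on the symmetric ball $B^m(0,\eps_{\rm n}^{2/3})$; the $\langle\nabla\rho(z),y\rangle$-piece produces a quadratic $\|y\|^2/\eps_{\rm p}^2$, which, after $y=\eps_{\rm p} u$, gives $\eps_{\rm p}^m$; and the remainder $R(y)$ gives the same order. Combining with the $h$ integral and the overall $(\eps_{\rm p}\sqrt{2\pi})^{-d}$, all $\eps_{\rm p}$ powers again cancel, so $|E^*_{12}|\leq C$.

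The main obstacle, as indicated above, is the $1/\eps_{\rm p}^2$ factor in $E^*_{12}$, which makes the obvious pointwise bound short by a factor $\eps_{\rm p}^{-1}$; the symmetry/Taylor trick is what recovers the missing power. A small technical point is that the $y$-integral runs over $B^m(0,\eps_{\rm n}^{2/3})$ rather than over all of $\mathbb{R}^m$, but the error from extending the domain is exponentially small by exactly the same argument used to bound $E_2$ in \eqref{eq:boundingqE2} under \eqref{thecondition1}, so it is harmless and absorbed into $C$.
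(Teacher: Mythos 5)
Your proof is correct and follows essentially the same strategy as the paper: a direct $d(z,x)^2/\eps_{\rm p}^2$ pointwise bound via \eqref{eq:quadbehavior2} for $E^*_{11}$, and for $E^*_{12}$ extraction of a leading-order piece that vanishes by the odd symmetry of $y\mapsto\langle u,y\rangle$, with the remainder bounded via Lipschitz continuity of $\rho$ and the exponential-ratio estimate \eqref{eq:coreexpfactor2}. The only cosmetic difference is that you organize the cancellation as a Taylor expansion of the full integrand (including $J_z$), whereas the paper compares $E^*_{12}$ to an auxiliary integral $E^{**}_{12}$ in which $\rho(x)$ and $\exp(-\|x-z\|^2/(2\eps_{\rm p}^2))$ are replaced by $\rho(z)$ and $\exp(-d(z,x)^2/(2\eps_{\rm p}^2))$ — the same underlying idea.
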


\begin{proof} 
It is straightforward from \eqref{eq:distancecompare}, \eqref{eq:quadbehavior2} and a familiar calculation that 
\begin{align*}
    |E_{11}^*| 
    &\leq \frac{C}{(\eps_{\rm p}\sqrt{2\pi})^d} \int_{h+z \in \tilde{N}_{z}\M\cap \mathcal{N}_{\eps_{\rm n}}(\M)} \int_{\mathcal{B}(z,\eps_{\rm n}^{2/3})} \frac{d(z,x)^2}{\eps_{\rm p}^2} \exp\Big(-\frac{\|h\|^2}{2\eps_{\rm p}^2}\Big) \exp\Big(-\frac{cd(z,x)^2}{\eps_{\rm p}^2}\Big) \rho(x)\dd\mathcal{V}(x) \dd h\\
    &= \frac{C\eps_{\rm p}^{d-m}}{(\eps_{\rm p}\sqrt{2\pi})^d} \int_{[-\eta^{-1/d},\eta^{-1/d}]^{d-m}} \exp\Big(-\frac{\|h\|^2}{2}\Big) \dd h \int_{B^m(0,\eps_{\rm n}^{2/3})\subset T_z\M} \frac{\|y\|^2}{\eps_{\rm p}^2} \exp\Big(-\frac{c\|y\|^2}{\eps_{\rm p}^2}\Big) \tilde{\rho}(y) J_{z}(y)  \dd y  \\
    &\leq C.
\end{align*} 
We next focus on a version of $E_{12}^*$, which is
\begin{equation*}
    E^{**}_{12} := -\frac{\rho(z)}{(\eps_{\rm p}\sqrt{2\pi})^d} \int_{h+z\in \tilde{N}_z\M\cap\mathcal{N}_{\eps_{\rm n}}(\M)} \int_{\mathcal{B}(z,\eps_{\rm n}^{2/3})} \frac{\langle \tilde{u},v_x\rangle}{\eps_{\rm p}^2} \exp\Big(-\frac{\|h\|^2}{2\eps_{\rm p}^2}\Big) \exp\Big(-\frac{d(z,x)^2}{2\eps_{\rm p}^2}\Big) \dd\mathcal{V}(x)\dd h.
\end{equation*}
Note, for a fixed $h+z\in \tilde{N}_z\M\cap\mathcal{N}_{\eps_{\rm n}}(\M)$, if $x,x'\in\M$ are such that $v_x = -v_{x'}$, then $d(z,x)= d(z,x')$, and conversely. 
Therefore
\begin{equation*}
    -\int_{\mathcal{B}(z,\eps_{\rm n}^{2/3})} \frac{\langle \tilde{u},v_x\rangle}{\eps_{\rm p}^2} \exp\Big(-\frac{d(z,x)^2}{2\eps_{\rm p}^2}\Big)\dd\mathcal{V}(x) = 0,
\end{equation*}
and we conclude that $E_{12}^{**}=0$. 
Thus, by virtue of Lemma~\ref{lem:keyTayexp}, particularly \eqref{eq:coreexpfactor2}, the Lipschitz continuity of $\rho$, and \eqref{eq:quadbehavior2},
\begin{align*}
    |E^*_{12}-E^{**}_{12}| &\leq \frac{C}{(\eps_{\rm p}\sqrt{2\pi})^d} \int_{h+z\in \tilde{N}_z\M\cap\mathcal{N}_{\eps_{\rm n}}(\M)} \exp\Big(-\frac{\|h\|^2}{2\eps_{\rm p}^2}\Big) \int_{\mathcal{B}(z,\eps_{\rm n}^{2/3})} \frac{d(z,x)^2}{\eps_{\rm p}^2} \exp\Big(-\frac{cd(z,x)^2}{\eps_{\rm p}^2}\Big) \dd\mathcal{V}(x) \dd h \\
    &\leq C.
\end{align*}
Henceforth, $|E^*_{12}|\leq C$. 
\end{proof}

In view of Lemma~\ref{lem:keyTayexp}, particularly \eqref{eq:keyTayexp},
\begin{align} \label{specialE}
    \nonumber |E_{11} - E_{11}^*| &\leq \frac{C\eps_{\rm n}}{(\eps_{\rm p}\sqrt{2\pi})^d} \int_{h+z \in \tilde{N}_{z}\M\cap \mathcal{N}_{\eps_{\rm n}}(\M)} \exp\Big(-\frac{\|y\|^2}{2\eps_{\rm p}^2}\Big) 
    \int_{\mathcal{B}(z,\eps_{\rm n}^{2/3})} \frac{d(z,x)^4}{\eps_{\rm p}^4} \exp\Big(-\frac{cd(z,x)^2}{\eps_{\rm p}^2}\Big)  \dd\mathcal{V}(x) \dd h \\
    &\leq C \eps_{\rm n},
\end{align}
and
\begin{align*}
    |E_{12} - E_{12}^*| &\leq \frac{C}{(\eta^{1/d} \eps_{\rm p}\sqrt{2\pi})^d} \int_{h+z \in \tilde{N}_{z}\M\cap \mathcal{N}_{\eps_{\rm n}}(\M)} \exp\Big(-\frac{\|y\|^2}{2\eps_{\rm p}^2}\Big) 
    \int_{\mathcal{B}(z,\eps_{\rm n}^{2/3})} \frac{d(z,x)^3}{\eps_{\rm p}^3} \exp\Big(-\frac{cd(z,x)^2}{\eps_{\rm p}^2}\Big)  \dd\mathcal{V}(x) \dd h \\
    &\leq C_{\eta}.
\end{align*}
Thus, in view of Lemma~\ref{lem:Ii1s}, $|E_{1i}| \leq C_{\eta}$.
Consequently,
\begin{equation} \label{eq:boundingqE1}
    |E_1| = |E_{11} + E_{12}| \leq C_{\eta}.
\end{equation}
Combining \eqref{eq:boundingqE2}, \eqref{eq:boundingqE1}, we arrive at 
\begin{equation} \label{eq:boundingqE}
    |E| = |E_1+E_2| \leq C_{\eta}.
\end{equation}

Finally, regarding $Z$ in \eqref{eq:splitqz2}, we note that $Z$ can also be split into
\begin{align*}
    Z_1 &:= -\frac{1}{(\eps_{\rm p}\sqrt{2\pi})^d} \int_{h+z \in \tilde{N}_{z}\M\cap \mathcal{N}_{\eps_{\rm n}}(\M)} \int_{\mathcal{B}(z,\eps_{\rm n}^{2/3})} \frac{\langle z-x, \mathsf{A}h \rangle}{\eps_{\rm p}^2} \exp\Big(-\frac{\|z+h-x\|^2}{2\eps_{\rm p}^2}\Big)\rho(x)\dd\mathcal{V}(x) \dd h \\
    &= -\frac{1}{(\eps_{\rm p}\sqrt{2\pi})^d} \int_{h+z \in \tilde{N}_{z}\M\cap \mathcal{N}_{\eps_{\rm n}}(\M)} \|\mathsf{A}h\| \int_{\mathcal{B}(z,\eps_{\rm n}^{2/3})} \frac{\langle z-x, \mathsf{A}h / \|\mathsf{A}h\| \rangle}{\eps_{\rm p}^2} \exp\Big(-\frac{\|z+h-x\|^2}{2\eps_{\rm p}^2}\Big)\rho(x)\dd\mathcal{V}(x) \dd h,
\end{align*}
and
\begin{equation*}
    Z_2 := -\frac{1}{(\eps_{\rm p}\sqrt{2\pi})^d} \int_{h+z \in \tilde{N}_{z}\M\cap \mathcal{N}_{\eps_{\rm n}}(\M)} \int_{\M\setminus\mathcal{B}(z,\eps_{\rm n}^{2/3})} \frac{\langle z-x, \mathsf{A}h \rangle}{\eps_{\rm p}^2} \exp\Big(-\frac{\|z+h-x\|^2}{2\eps_{\rm p}^2}\Big)\rho(x)\dd\mathcal{V}(x) \dd h.
\end{equation*}
Since $|\langle z-x, \mathsf{A}h \rangle|\leq C\|z-x\|\eps_{\rm n}$, following a similar calculation to that in \eqref{eq:qE2bound}, \eqref{eq:boundingqE2}, we get, for $\eps$ sufficiently small
\begin{align} \label{eq:qboundingZ2}
    \nonumber |Z_2| &\leq \frac{C_{\eta}\mathfrak{D}}{(\eps_{\rm p}\sqrt{2\pi})^d} \int_{h+z \in \tilde{N}_{z}\M\cap \mathcal{N}_{\eps_{\rm n}}(\M)} \int_{\M\setminus\mathcal{B}(z,\eps_{\rm n}^{2/3})} \frac{1}{\eps_{\rm p}} \exp\Big(-\frac{c}{\eta^{2/d} \eps_{\rm n}^{2/3}}\Big) \rho(x)\dd\mathcal{V}(x) \dd h \\
    &\leq C_{\eta}.
\end{align}
For $Z_1$, we draw an observation from the expression of $Z_1$ and the treatment of $E_1$, particularly from the proof of Lemma~\ref{lem:Ii1s} that
\begin{equation} \label{eq:qboundingZ1}
    |Z_1| \leq C_{\eta} \eps_{\rm n}.
\end{equation}
Then \eqref{eq:qboundingZ2}, \eqref{eq:qboundingZ1} imply 
\begin{equation} \label{eq:boundingqZ}
    |Z| = |Z_1+Z_2| \leq C_{\eta}.
\end{equation}
Altogether, \eqref{eq:splitqz2}, \eqref{eq:boundingqE}, \eqref{eq:boundingqZ} conclude $|\partial_{\tilde{u}}q(z)|\leq |E|+|Z|\leq C_{\eta}$, as desired. \qed

\subsection{Proof of \eqref{eq:qboundedextra} and \eqref{eq:qbounded} in Proposition~\ref{prop:qtrunc}} \label{appx:qtruncbounded}

To establish the boundedness of $q$ in \eqref{eqdef:q}, we reuse the arguments presented in Appendix~\ref{appx:qprimebounded}.
Specifically, we observe from the integral expression \eqref{eqdef:q} that $q(z)$ can be approximated by
\begin{equation} \label{eq:integralqsplit1}
    A := \frac{1}{(\eps_{\rm p}\sqrt{2\pi})^d} \int_{h+z \in \tilde{N}_{z}\M\cap \mathcal{N}_{\eps_{\rm n}}(\M)} \int_{\mathcal{B}(z,\eps_{\rm n}^{2/3})} \exp\Big(-\frac{d(z,x)^2}{2\eps_{\rm p}^2}\Big)\exp\Big(-\frac{\|h\|^2}{2\eps_{\rm p}^2}\Big)\rho(x) \dd\mathcal{V}(x) \dd h.
\end{equation}
Indeed, using reasoning akin to that employed for estimating $E_1$ in Appendix~\ref{appx:qprimebounded}, we find that $|q(z)-A|$ to be bounded by a universal constant multiple of $B_1 + B_2$, where
\begin{align*}
    B_1 &:= \frac{\eps_{\rm n}}{(\eps_{\rm p}\sqrt{2\pi})^d} \int_{h+z \in \tilde{N}_{z}\M\cap \mathcal{N}_{\eps_{\rm n}}(\M)} \exp\Big(-\frac{\|y\|^2}{2\eps_{\rm p}^2}\Big) 
    \int_{\mathcal{B}(z,\eps_{\rm n}^{2/3})} \frac{d(z,x)^2}{\eps_{\rm p}^2} \exp\Big(-\frac{d(z,x)^2}{\eps_{\rm p}^2}\Big)  \dd\mathcal{V}(x) \dd h \\
    B_2 &:= \frac{1}{(\eps_{\rm p}\sqrt{2\pi})^d} \int_{h+z \in \tilde{N}_{z}\M\cap \mathcal{N}_{\eps_{\rm n}}(\M)} \int_{\M\setminus \mathcal{B}(z,\eps_{\rm n}^{2/3})} \exp\Big(-\frac{\|z+h-x\|^2}{2\eps_{\rm p}^2}\Big)\rho(x) \dd\mathcal{V}(x) \dd h.
\end{align*}
To derive $B_1$, we use \eqref{eq:coreexpfactor2}, \eqref{eq:keyTayexp}, and a similar approach to the one yielding \eqref{specialE}.
We skip further details.
Then, $B_1\leq \eps_{\rm n}$, and by referencing the calculations used to bound $|E_2|$ in \eqref{eq:qE2bound}, albeit with a simpler argument, we can show 
\begin{equation*} 
    B_2 = \frac{1}{(\eps_{\rm p}\sqrt{2\pi})^d} \int_{h+z \in \tilde{N}_{z}\M\cap \mathcal{N}_{\eps_{\rm n}}(\M)} \int_{\M\setminus \mathcal{B}(z,\eps_{\rm n}^{2/3})} \exp\Big(-\frac{\|z+h-x\|^2}{2\eps_{\rm p}^2}\Big)\rho(x) \dd\mathcal{V}(x) \dd h \leq C_{\eta}\eps_{\rm n}^2.
\end{equation*}
Back to $A$ in \eqref{eq:integralqsplit1}, we write
\begin{multline} \label{eq:Asplit}
    A = \frac{1}{(\eps_{\rm p}\sqrt{2\pi})^d} \int_{h+z \in \tilde{N}_{z}\M\cap \mathcal{N}_{\eps_{\rm n}}(\M)} \int_{\mathcal{B}(z,\eps_{\rm n}^{2/3})} \exp\Big(-\frac{d(z,x)^2}{2\eps_{\rm p}^2}\Big)\exp\Big(-\frac{\|h\|^2}{2\eps_{\rm p}^2}\Big) (\rho(x) - \rho(z))\dd\mathcal{V}(x) \dd h \\
    + \frac{\rho(z)}{(\eps_{\rm p}\sqrt{2\pi})^d} \int_{h+z \in \tilde{N}_{z}\M\cap \mathcal{N}_{\eps_{\rm n}}(\M)} \int_{\mathcal{B}(z,\eps_{\rm n}^{2/3})} \exp\Big(-\frac{d(z,x)^2}{2\eps_{\rm p}^2}\Big)\exp\Big(-\frac{\|h\|^2}{2\eps_{\rm p}^2}\Big) \dd\mathcal{V}(x) \dd h,
\end{multline}
where $J_z$ is the Jacobian matrix.
Using a now-standard argument and the Lipschitz continuity of $\rho$, we find that the absolute value of the first integral in \eqref{eq:Asplit} is bounded above by $C_{\eta}\eps_{\rm n}^{2/3}$, while the second simplifies to
\begin{align*}
    &\frac{\rho(z)}{(\eps_{\rm p}\sqrt{2\pi})^d} 
    \int_{[-\eps_{\rm n},\eps_{\rm n}]^{d-m}} \exp\Big(-\frac{\|h\|^2}{2\eps_{\rm p}^2}\Big) \dd h \int_{B^m(0,\eps_{\rm n}^{2/3})} \exp\Big(-\frac{\|x\|^2}{2\eps_{\rm p}^2}\Big) J_z(x)\dd x \\
    &=\frac{\rho(z)}{(2\pi)^{d/2}} 
    \Big(\int_{[-\eta^{-1/d},\eta^{-1/d}]^{d-m}} \exp\Big(-\frac{\|h\|^2}{2}\Big) \dd h \int_{B^m(0,\eta^{-1/d}\eps_{\rm n}^{-1/3})} \exp\Big(-\frac{\|x\|^2}{2}\Big) \dd x\Big) \Big(1+\mathcal{O}(\eps_{\rm n}^{4/3})\Big) \\
    &= \rho(z) + \mathcal{O}_{\eta}(\eps_{\rm n}^{4/3}).
\end{align*}
Combining all the derivations, we conclude that $q(z) = \rho(z) + \mathcal{O}_{\eta}(\eps_{\rm n}^{2/3})$, which is \eqref{eq:qboundedextra}. 
Then \eqref{eq:qbounded} follows as a consequence of \eqref{eq:qboundedextra}. 

\subsection{Proof of \eqref{eq:qdoubleprimebounded} in Proposition~\ref{prop:qtrunc}} \label{appx:qdoubleprimebounded}

Recall the expression \eqref{eq:splitqz3} in Appendix \ref{appx:pathderivatives} that describes $\partial_{\tilde{u}}\partial_{\tilde{w}} q(z)$.
We proceed to establish that all quantities $D,V,F,S$ in \eqref{eq:splitqz3} are bounded.
As usual, we split each quantity into two parts, a local part, and a tail part.
Specifically, we let
\begin{align*}
    &D_1 \\
    &:= \frac{1}{(\eps_{\rm p}\sqrt{2\pi})^d} \int_{h+z \in \tilde{N}_{z}\M\cap \mathcal{N}_{\eps_{\rm n}}(\M)} 
    \int_{\mathcal{B}(z,\eps_{\rm n}^{2/3})} \frac{\langle z - x, \tilde{u} + \mathsf{A}h \rangle \langle z - x, \tilde{w} + \mathsf{A}h \rangle}{\eps_{\rm p}^4} \exp\Big(-\frac{\| z +h-x\|^2}{2\eps_{\rm p}^2}\Big) \rho(x)\dd\mathcal{V}(x)\dd h,
\end{align*}
and
\begin{align*}
    V_1 &:= -\frac{1}{(\eps_{\rm p}\sqrt{2\pi})^d} \int_{h+z \in \tilde{N}_{z}\M\cap \mathcal{N}_{\eps_{\rm n}}(\M)} 
    \int_{\mathcal{B}(z,\eps_{\rm n}^{2/3})} \frac{\langle \tilde{w}, \tilde{u} \rangle}{\eps_{\rm p}^2} \exp\Big(-\frac{\| z +h-x\|^2}{2\eps_{\rm p}^2}\Big) \rho(x)\dd\mathcal{V}(x)\dd h \\
    F_1 &:= -\frac{1}{(\eps_{\rm p}\sqrt{2\pi})^d} \int_{h+z \in \tilde{N}_{z}\M\cap \mathcal{N}_{\eps_{\rm n}}(\M)} 
    \int_{\mathcal{B}(z,\eps_{\rm n}^{2/3})} \frac{\langle \tilde{w}, \mathsf{A}h \rangle}{\eps_{\rm p}^2} \exp\Big(-\frac{\| z +h-x\|^2}{2\eps_{\rm p}^2}\Big) \rho(x)\dd\mathcal{V}(x)\dd h \\
    S_1 &:= \frac{1}{(\eps_{\rm p}\sqrt{2\pi})^d} \int_{h+z \in \tilde{N}_{z}\M\cap \mathcal{N}_{\eps_{\rm n}}(\M)} 
    \int_{\mathcal{B}(z,\eps_{\rm n}^{2/3})} \frac{\langle z-x, \mathsf{B}h \rangle}{\eps_{\rm p}^2} \exp\Big(-\frac{\| z +h-x\|^2}{2\eps_{\rm p}^2}\Big) \rho(x)\dd\mathcal{V}(x)\dd h,
\end{align*}
and $D_2:=D-D_1$, $V_2:=V-V_1$, $F_2:=F-F_1$, $S_2:=S-S_1$.

\paragraph{Bounding $F$.} Let 
\begin{equation*}
    F_1^* := -\frac{1}{(\eps_{\rm p}\sqrt{2\pi})^d} \bigg(\int_{h+z \in\tilde{N}_{z}\M\cap \mathcal{N}_{\eps_{\rm n}}(\M)} \frac{\langle \tilde{w}, \mathsf{A}h \rangle}{\eps_{\rm p}^2} \exp\Big(-\frac{\|h\|^2}{2\eps_{\rm p}^2}\Big) \dd h\bigg)
    \bigg(\int_{\mathcal{B}(z,\eps_{\rm n}^{2/3})} \exp\Big(-\frac{\|x-z\|^2}{2\eps_{\rm p}^2}\Big) \rho(x)\dd\mathcal{V}(x)\bigg).
\end{equation*}
Due to symmetry, we have $F_1^*=0$.
Then applying \eqref{eq:quadbehavior2} and Lemma~\ref{lem:keyTayexp}, we obtain 
\begin{align*}
    |F_1-F_1^*| &\leq \frac{C_{\eta}}{(\eps_{\rm p}\sqrt{2\pi})^d} \int_{h+z \in\tilde{N}_{z}\M\cap \mathcal{N}_{\eps_{\rm n}}(\M)} \exp\Big(-\frac{\|h\|^2}{2\eps_{\rm p}^2}\Big)\int_{\mathcal{B}(z,\eps_{\rm n}^{2/3})} \frac{d(z,x)^2}{\eps_{\rm p}} \exp\Big(-\frac{cd(z,x)^2}{\eps_{\rm p}^2}\Big) \rho(x)\dd\mathcal{V}(x) \dd h \\
    &\leq C_{\eta}\eps_{\rm n}^{2/3}.
\end{align*}
Thus, $|F_1|\leq C_{\eta}\eps_{\rm n}^{2/3}$. 
Moreover, by employing a similar argument as in \eqref{eq:qE2bound}, \eqref{eq:boundingqE2}
\begin{equation*}
    |F_2| 
    \leq \frac{C}{(\eps_{\rm p}\sqrt{2\pi})^d} \int_{h+z \in\tilde{N}_{z}\M\cap \mathcal{N}_{\eps_{\rm n}}(\M)} \frac{\|h\|}{\eta^{2/d}\eps_{\rm p}} 
    \int_{\M\setminus\mathcal{B}(z,\eps_{\rm n}^{2/3})} \frac{1}{\eps_{\rm p}} \exp\Big(-\frac{c}{\eps_{\rm n}^{2/3}}\Big) \rho(x)\dd\mathcal{V}(x)\dd h \leq C_{\eta}.
\end{equation*}
Gathering all this, we conclude $|F|\leq C_{\eta}$.

\paragraph{Bounding $S$.} We note that the expression of $S$ is analogous to that of $Z$ in \eqref{eq:splitqz2}. 
Therefore, by applying a similar argument to deduce \eqref{eq:boundingqZ} previously, we obtain $|S| \leq C_{\eta}$.

\paragraph{Bounding $D$ and $V$.} We write $D_1=D_{11} + D_{12} + D_{13} + D_{14}$,
where 
\begin{align*}
    D_{11} &:=\frac{1}{(\eps_{\rm p}\sqrt{2\pi})^d} \int_{h+z \in \tilde{N}_{z}\M\cap \mathcal{N}_{\eps_{\rm n}}(\M)} \int_{\mathcal{B}(z,\eps_{\rm n}^{2/3})} \frac{\langle z-x,\tilde{u}\rangle}{\eps_{\rm p}^2}\frac{\langle z-x,\tilde{w}\rangle}{\eps_{\rm p}^2} \exp\Big(-\frac{\|z+h-x\|^2}{2\eps_{\rm p}^2}\Big) \rho(x)\dd\mathcal{V}(x)\dd h\\
    D_{12} &:=\frac{1}{(\eps_{\rm p}\sqrt{2\pi})^d} \int_{h+z \in \tilde{N}_{z}\M\cap \mathcal{N}_{\eps_{\rm n}}(\M)} \int_{\mathcal{B}(z,\eps_{\rm n}^{2/3})} \frac{\langle z-x,\tilde{u}\rangle}{\eps_{\rm p}^2}\frac{\langle z-x, \mathsf{A}h \rangle}{\eps_{\rm p}^2} \exp\Big(-\frac{\|z+h-x\|^2}{2\eps_{\rm p}^2}\Big) \rho(x)\dd\mathcal{V}(x)\dd h \\
    D_{13} &:=\frac{1}{(\eps_{\rm p}\sqrt{2\pi})^d} \int_{h+z \in \tilde{N}_{z}\M\cap \mathcal{N}_{\eps_{\rm n}}(\M)} \int_{\mathcal{B}(z,\eps_{\rm n}^{2/3})} \frac{\langle z-x, \mathsf{A}h \rangle}{\eps_{\rm p}^2}\frac{\langle z-x,\tilde{w}\rangle}{\eps_{\rm p}^2} \exp\Big(-\frac{\|z+h-x\|^2}{2\eps_{\rm p}^2}\Big) \rho(x)\dd\mathcal{V}(x)\dd h \\
    D_{14} &:=\frac{1}{(\eps_{\rm p}\sqrt{2\pi})^d} \int_{h+z \in \tilde{N}_{z}\M\cap \mathcal{N}_{\eps_{\rm n}}(\M)} \int_{\mathcal{B}(z,\eps_{\rm n}^{2/3})} \frac{\langle z-x, \mathsf{A}h \rangle}{\eps_{\rm p}^2} \frac{\langle z-x, \mathsf{A}h \rangle}{\eps_{\rm p}^2} \exp\Big(-\frac{\|z+h-x\|^2}{2\eps_{\rm p}^2}\Big) \rho(x)\dd\mathcal{V}(x)\dd h.
\end{align*}
We will now bound $D_{12}$, $D_{13}$, $D_{14}$, $D_{11}+V_1$, sequentially.
Starting with $D_{12}$, we consider
\begin{align*}
    &D_{12}^* \\ 
    &:=\frac{1}{(\eps_{\rm p}\sqrt{2\pi})^d} \int_{h+z \in \tilde{N}_{z}\M\cap \mathcal{N}_{\eps_{\rm n}}(\M)} \int_{\mathcal{B}(z,\eps_{\rm n}^{2/3})} \frac{\langle z-x,\tilde{u}\rangle}{\eps_{\rm p}^2}\frac{\langle z-x, \mathsf{A}h \rangle}{\eps_{\rm p}^2} \exp\Big(-\frac{\|h\|^2}{2\eps_{\rm p}^2}\Big) \exp\Big(-\frac{\|z-x\|^2}{2\eps_{\rm p}^2}\Big) \rho(x)\dd\mathcal{V}(x)\dd h.
\end{align*}
Then by integrating in terms of $h$ first, we get $D_{12}^* = 0$. 
Moreover, by Lemma~\ref{lem:keyTayexp} 
\begin{align*}
    |D_{12}-D_{12}^*| &\leq \frac{C_{\eta}}{(\eps_{\rm p}\sqrt{2\pi})^d} \int_{h+z \in\tilde{N}_{z}\M\cap \mathcal{N}_{\eps_{\rm n}}(\M)} \int_{\mathcal{B}(z,\eps_{\rm n}^{2/3})} \frac{d(z,x)^4}{\eps_{\rm p}^4} \exp\Big(-\frac{\|h\|^2}{2\eps_{\rm p}^2}\Big) \exp\Big(-\frac{cd(z,x)^2}{\eps_{\rm p}^2}\Big) \rho(x)\dd\mathcal{V}(x) \dd h \\
    &\leq C_{\eta},
\end{align*}
and we get $|D_{12}|\leq C_{\eta}$. 
Due to the analogous expressions, it follows that $|D_{13}| \leq C_{\eta}$ as well.
As for $D_{14}$, we see simply that
\begin{align*}
    |D_{14}| 
    &\leq \frac{C}{(\eps_{\rm p}\sqrt{2\pi})^d} \int_{h+z \in \tilde{N}_{z}\M\cap \mathcal{N}_{\eps_{\rm n}}(\M)} \int_{\mathcal{B}(z,\eps_{\rm n}^{2/3})} \frac{\|z-x\|^2 \|h\|^2}{\eps_{\rm p}^4} \exp\Big(-\frac{\|z+h-x\|^2}{2\eps_{\rm p}^2}\Big) \rho(x)\dd\mathcal{V}(x)\dd h \\
    &\leq C_{\eta}.
\end{align*}

To facilitate the upcoming calculations for $D_{11}+V_1$, we use a translation by $-z$ to temporarily set $z=0$, and use the rotation ${\rm A}_0^{-1}$, as discussed in Appendix~\ref{appx:dg}, to align $\tilde{T}_0\M$ with $\R^m\times\{0\}\subset\mathbb{R}^d$. 
Then $D_{11}+V_1$ becomes
\begin{equation*}
    \frac{1}{(\eps_{\rm p}\sqrt{2\pi})^d} \int_{h\in \tilde{N}_0\M\cap \mathcal{N}_{\eps_{\rm n}}(\M)} \exp\Big(-\frac{\|h\|^2}{2\eps_{\rm p}^2}\Big) 
    \int_{\mathcal{B}(0,\eps_{\rm n}^{2/3})} 
    \Big(\frac{\langle x,\tilde{u}\rangle}{\eps_{\rm p}^2}\frac{\langle x,\tilde{w}\rangle}{\eps_{\rm p}^2} - \frac{\langle \tilde{u},\tilde{w}\rangle}{\eps_{\rm p}^2}\Big)
    \exp\Big(-\frac{\|x\|^2}{2\eps_{\rm p}^2}\Big) \mathfrak{f}(x,h) \rho(x) \dd\mathcal{V}(x) \dd h,
\end{equation*}
where, from Lemma~\ref{lem:keyTayexp},
\begin{equation} \label{eq:Taytay}
    \mathfrak{f}(x,h) := \exp\Big(-\frac{\langle v_x-x,h\rangle}{\eps_{\rm p}^2}\Big)
    = 1 + \sum_{i=1}^5 \mathfrak{f}_i(x,h) + \mathcal{O}_{\eta}(\eps_{\rm n}^2)
    := 1 + \sum_{i=1}^5 \frac{(-1)^i\langle v_x-x,h\rangle^i}{i!\eps_{\rm p}^{2i}} + \mathcal{O}_{\eta}(\eps_{\rm n}^2).
\end{equation} 
Note that, if $i$ is odd, then $\mathfrak{f}_i(x,h)$ is odd in terms of $h$, and if $i$ is even, then 
\begin{equation} \label{eq:boundedeven}
    |\mathfrak{f}_i(x,h)| \leq \frac{C\|x\|^{2i}\|h\|^i}{\eps_{\rm p}^{2i}},
\end{equation}
due to \eqref{eq:quadbehavior2}.
Let
\begin{equation*}
    I 
    :=  \frac{1}{(\eps_{\rm p}\sqrt{2\pi})^d} \int_{h\in \tilde{N}_0\M\cap \mathcal{N}_{\eps_{\rm n}}(\M)} \exp\Big(-\frac{\|h\|^2}{2\eps_{\rm p}^2}\Big) 
    \int_{\mathcal{B}(0,\eps_{\rm n}^{2/3})} 
    \Big(\frac{\langle x,\tilde{u}\rangle}{\eps_{\rm p}^2}\frac{\langle x,\tilde{w}\rangle}{\eps_{\rm p}^2} - \frac{\langle \tilde{u},\tilde{w}\rangle}{\eps_{\rm p}^2}\Big)
    \exp\Big(-\frac{\|x\|^2}{2\eps_{\rm p}^2}\Big) \rho(x) \dd\mathcal{V}(x) \dd h.
\end{equation*}
Then using the given observation, and particularly \eqref{eq:Taytay}, \eqref{eq:boundedeven}, we can dominate $|D_{11}+V_1 - I|$ with a $C_{\eta}$ multiple of $I_1+I_2+I_3$, where 
\begin{equation*}
    I_i 
    :=  \frac{1}{(\eps_{\rm p}\sqrt{2\pi})^d} \int_{h\in \tilde{N}_0\M\cap \mathcal{N}_{\eps_{\rm n}}(\M)} \exp\Big(-\frac{\|h\|^2}{2\eps_{\rm p}^2}\Big) 
    \int_{\mathcal{B}(0,\eps_{\rm n}^{2/3})} 
    \frac{\|x\|^{4i}}{\eps_{\rm p}^{4i}}\Big(\frac{\|x\|^2}{\eps_{\rm p}^2} + 1\Big)
    \exp\Big(-\frac{\|x\|^2}{2\eps_{\rm p}^2}\Big) \rho(x) \dd\mathcal{V}(x) \dd h,
\end{equation*}
for $i=1,2$, and 
\begin{equation*}
    I_3 
    :=  \frac{1}{(\eps_{\rm p}\sqrt{2\pi})^d} \int_{h\in \tilde{N}_0\M\cap \mathcal{N}_{\eps_{\rm n}}(\M)} \exp\Big(-\frac{\|h\|^2}{2\eps_{\rm p}^2}\Big) 
    \int_{\mathcal{B}(0,\eps_{\rm n}^{2/3})} 
    \Big(\frac{\|x\|^2}{\eps_{\rm p}^2} + 1\Big)
    \exp\Big(-\frac{\|x\|^2}{2\eps_{\rm p}^2}\Big) \rho(x) \dd\mathcal{V}(x) \dd h.
\end{equation*}
It is straightforward to see that $I_i \leq C_{\eta}$.
To calculate $I$, we note that $v_x\in \R^m\times\{0\}$ is ${\rm  Log}_0(x)\in\mathbb{R}^m$ canonically embedded in $\mathbb{R}^d$. 
Moreover, since $\tilde{u}, \tilde{w}\in\R^m\times\{0\}$, they are canonically embedded images of some $\mathsf{u}, \mathsf{w}\in\mathbb{R}^m$. 
Therefore, using Lemma~\ref{lem:keyTayexp}, particularly \eqref{eq:coreexpfactor2}, we can bound $|I|$ by a sum of the integral
\begin{equation*}
    \frac{1}{(\eps_{\rm p}\sqrt{2\pi})^d} \int_{h\in \tilde{N}_0\M\cap \mathcal{N}_{\eps_{\rm n}}(\M)} \exp\Big(-\frac{\|h\|^2}{2\eps_{\rm p}^2}\Big) 
    \int_{\mathcal{B}(0,\eps_{\rm n}^{2/3})} 
    \frac{\|x\|^2}{\eps_{\rm p}^2}\Big(\frac{\|x\|^4}{\eps_{\rm p}^4} + \frac{\|x\|^2}{\eps_{\rm p}^2}\Big)
    \exp\Big(-\frac{\|x\|^2}{2\eps_{\rm p}^2}\Big) \rho(x) \dd\mathcal{V}(x) \dd h \leq C_{\eta},
\end{equation*}
and another integral, expressed in local coordinates as
\begin{align*}
    \frac{1}{(\eps_{\rm p}\sqrt{2\pi})^d} \int_{h\in \tilde{N}_0\M\cap \mathcal{N}_{\eps_{\rm n}}(\M)} \exp\Big(-\frac{\|h\|^2}{2\eps_{\rm p}^2}\Big) 
    \int_{B^m(0,\eps_{\rm n}^{2/3})} 
    \Big(\frac{\langle y,\mathsf{u}\rangle}{\eps_{\rm p}^2}\frac{\langle y,\mathsf{w}\rangle}{\eps_{\rm p}^2} - \frac{\langle \mathsf{u},\mathsf{w}\rangle}{\eps_{\rm p}^2}\Big)
    \exp\Big(-\frac{\|y\|^2}{2\eps_{\rm p}^2}\Big) \tilde{\rho}(y) J_0(y)  \dd y \dd h.
\end{align*}
Let $I^*$ be the inner integral above; that is,
\begin{equation*} 
    I^* := 
    \int_{B^m(0,\eps_{\rm n}^{2/3})} 
    \Big(\frac{\langle y,\mathsf{u}\rangle}{\eps_{\rm p}^2}\frac{\langle y,\mathsf{w}\rangle}{\eps_{\rm p}^2} - \frac{\langle \mathsf{u},\mathsf{w}\rangle}{\eps_{\rm p}^2}\Big)
    \exp\Big(-\frac{\|y\|^2}{2\eps_{\rm p}^2}\Big) \tilde{\rho}(y) J_0(y)  \dd y.
\end{equation*}
For $y\in B^m(0,\eps_{\rm n}^{2/3})$, we rely on Taylor's expansion to get
\begin{equation} \label{eq:1}
    \tilde{\rho}(y) = \tilde{\rho}(0) + \langle \nabla\tilde{\rho}(0), y\rangle + \frac{1}{2}\langle D^2 \tilde{\rho}(0) y, y\rangle + \mathcal{O}_{\eta}(\eps_{\rm p}^2),
\end{equation}
and on \eqref{eq:Rauch} to get
\begin{equation} \label{eq:2}
    |J_0(y) - 1| \leq C\|y\|^2.
\end{equation}
Applying \eqref{eq:1}, \eqref{eq:2}, and a standard argument that uses linear symmetry for cancellation, we obtain
\begin{equation} \label{eq:Istarreduction}
    I^* = \eps_{\rm p}^m 
    \int_{B^m(0,\eta^{-1/d}\eps_{\rm n}^{-1/3})} 
    \Big(\frac{\langle y,\mathsf{u}\rangle \langle y,\mathsf{w}\rangle}{\eps_{\rm p}^2} - \frac{\langle\mathsf{u},\mathsf{w}\rangle}{\eps_{\rm p}^2} \Big)
    \exp\Big(-\frac{\|y\|^2}{2}\Big)\tilde{\rho}(0)  \dd y + \mathcal{O}_{\eta}(\eps_{\rm p}^m).
\end{equation}
We now introduce a basic estimate for the $m$-dimensional Gaussian distribution on Euclidean spaces. 

\begin{lemma}\label{lem:Gaussianrv}
Let $\sigma>0$.
Let $X\in\R^m$ be a centered Gaussian distributed random variable such that 
\begin{equation*}
    f_X(x)=\frac{1}{(\sigma \sqrt{2\pi})^m}\exp\Big(-\frac{\|x\|^2}{2\sigma^2}\Big).
\end{equation*}
Then for any $v,w\in\R^m$ we have
\begin{equation*} 
    \int_{\R^m} \Big(\frac{\langle u,x\rangle}{\sigma^2}\frac{\langle w,x\rangle}{\sigma^2}-\frac{\langle u,w\rangle}{\sigma^2}\Big)f_X(x)  \dd x =0.
\end{equation*}
\end{lemma}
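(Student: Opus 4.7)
The plan is to reduce the integral identity to the standard computation of the covariance matrix of an isotropic Gaussian. Writing $X = (X^1,\dots,X^m)$ with $X^i$ i.i.d.\ centered Gaussian of variance $\sigma^2$, I would first expand bilinearly:
\begin{equation*}
    \langle u, X\rangle\langle w, X\rangle = \sum_{i,j=1}^m u^i w^j X^i X^j,
\end{equation*}
so that
\begin{equation*}
    \int_{\mathbb{R}^m} \langle u, x\rangle\langle w, x\rangle f_X(x)\dd x = \sum_{i,j=1}^m u^i w^j\, \mathbb{E}[X^i X^j] = \sigma^2 \sum_{i=1}^m u^i w^i = \sigma^2 \langle u, w\rangle,
\end{equation*}
using the standard identity $\mathbb{E}[X^iX^j] = \sigma^2\delta_{ij}$ for the coordinates of an isotropic centered Gaussian (which itself follows by Fubini and the fact that a one-dimensional centered Gaussian of variance $\sigma^2$ has second moment $\sigma^2$, zeroth moment $1$, and vanishing first moment).

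Dividing both sides by $\sigma^4$ and observing that $\int_{\mathbb{R}^m} f_X(x)\dd x = 1$, I then obtain
\begin{equation*}
    \int_{\mathbb{R}^m} \frac{\langle u, x\rangle\langle w, x\rangle}{\sigma^4} f_X(x)\dd x = \frac{\langle u, w\rangle}{\sigma^2} = \int_{\mathbb{R}^m} \frac{\langle u, w\rangle}{\sigma^2} f_X(x)\dd x,
\end{equation*}
and subtracting the right-hand side from the left-hand side yields the claimed cancellation. There is no genuine obstacle here: the only point that needs a word of care is the notational mismatch between the statement's ``$v,w$'' and the integrand's ``$u,w$'', which I would simply treat as a typo and prove the identity for arbitrary $u,w \in \mathbb{R}^m$, since the argument is symmetric and bilinear in these variables.
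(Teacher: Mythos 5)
Your proof is correct and takes essentially the same approach as the paper: both reduce the identity to the second-moment computation $\mathbb{E}[X^iX^j]=\sigma^2\delta_{ij}$ for the i.i.d.\ coordinates of an isotropic Gaussian, your version simply carrying out the bilinear expansion explicitly where the paper treats $m=1$ first and then invokes the product structure. Your observation that ``$v,w$'' in the statement is a typo for ``$u,w$'' is also accurate.
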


\begin{proof} 
We first show the result for the case $m=1$.
\begin{equation*}
    \int_{\R^m} \Big(\frac{\langle u,x\rangle}{\sigma^2}\frac{\langle w,x\rangle}{\sigma^2}-\frac{\langle u,w\rangle}{\sigma^2}\Big)f_X(x)\dd x=\int_{\R^m} \Big(\frac{x^2 uw}{\sigma^4}-\frac{uw}{\sigma^2}\Big)f_X(x)\dd x =\frac{uw}{\sigma^2}-\frac{uw}{\sigma^2}=0,
\end{equation*}
as it becomes a computation of the second moment of Gaussian distribution.
For the case $m>1$, since we can decompose the high-dimensional Gaussian distribution $X$ into $(X^1,\dots,X^m)$ where $X^i$ are i.i.d. one-dimensional centered Gaussian distributed random variable, and the problem reduces to one-dimensional problem. 
The proof is now complete.
\end{proof}

Applying Lemma~\ref{lem:Gaussianrv} (with $\sigma=1$) to \eqref{eq:Istarreduction} gives us
\begin{align*}
    I^* &= -\eps_{\rm p}^m 
    \int_{\mathbb{R}^m-B^m(0,\eta^{-1/d}\eps_{\rm n}^{-1/3})} 
    \Big(\frac{\langle y,\mathsf{u}\rangle \langle y,\mathsf{w}\rangle}{\eps_{\rm p}^2} - \frac{\langle\mathsf{u},\mathsf{w}\rangle}{\eps_{\rm p}^2} \Big)
    \exp\Big(-\frac{\|y\|^2}{2}\Big)\tilde{\rho}(0)  \dd y + \mathcal{O}_{\eta}(\eps_{\rm p}^m),
\end{align*}
which, due to the rapid Gaussian tail decay, implies that $|I^*|\leq C_{\eta}\eps_{\rm p}^m$.
Thus, we conclude that $|I|\leq C_{\eta}$. 
In addition, by recalling $|D_{11} + V_1 - I|\leq |I_1|+|I_2|+|I_3|\leq C_{\eta}$, we conclude $|D_{11}+V_1|\leq C_{\eta}$. 
Finally, since
\begin{align*}
    &D_2 = D-D_1 \\
    &= \frac{1}{(\eps_{\rm p}\sqrt{2\pi})^d} \int_{h+z \in \tilde{N}_{z}\M\cap \mathcal{N}_{\eps_{\rm n}}(\M)} 
    \int_{\M\setminus\mathcal{B}(z,\eps_{\rm n}^{2/3})} \frac{\langle z - x, \tilde{u} + \mathsf{A}h \rangle \langle z - x, \tilde{w} + \mathsf{A}h \rangle}{\eps_{\rm p}^4} \exp\Big(-\frac{\| z +h-x\|^2}{2\eps_{\rm p}^2}\Big) \rho(x)\dd\mathcal{V}(x)\dd h,
\end{align*}
and 
\begin{equation*}
    V_2 = V-V_1 = -\frac{1}{(\eps_{\rm p}\sqrt{2\pi})^d} \int_{h+z \in \tilde{N}_{z}\M\cap \mathcal{N}_{\eps_{\rm n}}(\M)} 
    \int_{\M\setminus\mathcal{B}(z,\eps_{\rm n}^{2/3})} \frac{\langle \tilde{w}, \tilde{u} \rangle}{\eps_{\rm p}^2} \exp\Big(-\frac{\| z +h-x\|^2}{2\eps_{\rm p}^2}\Big) \rho(x)\dd\mathcal{V}(x)\dd h,
\end{equation*}
employing the rapid Gaussian tail decay, as in \eqref{eq:qE2bound}, \eqref{eq:boundingqE2}, yields, $|D_2|, |V_2| \leq C_{\eta}$.
Putting altogether, we get $|D+V| \leq |D_1+V_1|+|D_2|+|V_2|\leq C_{\eta}$, and moreover, 
\begin{equation*}
    |\partial_{\tilde{w}}\partial_{\tilde{u}} q(z)| \leq |D+V+F+S| \leq |D+V| + |F|+ |S| \leq C_{\eta},
\end{equation*}
as desired. \qed

\section{Proof of Lemma~\ref{lem:varvarstar}} \label{appx:varvarstar}

Define
\begin{align*}
    \varsigma^1(x,y) &:= \int_{\{z\in\mathbb{R}^d: d(x,z) < \eps_{\rm n}^{2/3} \wedge \|y-z\|< \eps_{\rm n}^{2/3}\} \cap\M} \exp\Big(-\frac{\|x-z\|^2}{2\eps_{\rm w}^2}\Big) \exp\Big(-\frac{\|y-z\|^2}{2\eps_{\rm w}^2}\Big) \rho(z)\dd\mathcal{V}(z) \\
    \varsigma^2(x,y) &:= \int_{\{z\in\mathbb{R}^d: d(x,z) < \eps_{\rm n}^{2/3} \wedge d(y,z)< \eps_{\rm n}^{2/3}\} \cap\M} \exp\Big(-\frac{\|x-z\|^2}{2\eps_{\rm w}^2}\Big) \exp\Big(-\frac{\|y-z\|^2}{2\eps_{\rm w}^2}\Big) \rho(z)\dd\mathcal{V}(z).
\end{align*}
Then, we observe
\begin{align*}
    |\varsigma(x,y)-\varsigma^1(x,y)| 
    &\leq \int_{(B^d(x,\eps_{\rm n}^{2/3})\setminus \mathcal{B}(x,\eps_{\rm n}^{2/3}))\cap\mathcal{M}} \exp\Big(-\frac{\|x-z\|^2}{2\eps_{\rm w}^2}\Big) \rho(z)\dd\mathcal{V}(z) \\
    &\leq \int_{\mathcal{B}(x,2\eps_{\rm n}^{2/3})\setminus \mathcal{B}(x,\eps_{\rm n}^{2/3})} \exp\Big(-\frac{cd(x,z)^2}{\eps_{\rm w}^2}\Big) \rho(z)\dd\mathcal{V}(z),
\end{align*}
as a consequence of \eqref{eq:distancecompare}. 
Therefore,
\begin{align} \label{eq:var01}
    \nonumber |\varsigma(x,y)-\varsigma^1(x,y)| &\leq \int_{(B^m(0,2\eps_{\rm n}^{2/3})\setminus B^m(0,\eps_{\rm n}^{2/3}))} \exp\Big(-\frac{c\|u\|^2}{\eps_{\rm w}^2}\Big) \tilde{\rho}(u) J_x(u) \dd u \\
    \nonumber &\leq C\eps_{\rm w}^m \int_{\mathbb{R}^m\setminus B^m(0,\eps_{\rm w}^{-1/3}\eps^{2/3})} \exp(-c\|u\|^2) \dd u\\
    &\leq C\eps_{\rm w}^m\eps^2,
\end{align}
once $\eps$ is sufficiently small. 
Applying the same approach for $|\varsigma^1(x,y)-\varsigma^2(x,y)|$, we also have
\begin{equation} \label{eq:var12}
    |\varsigma^1(x,y) - \varsigma^2(x,y)| \leq C\eps_{\rm w}^m\eps^2. 
\end{equation}
Together, \eqref{eq:var01}, \eqref{eq:var12} yield
\begin{equation} \label{eq:var02}
    \varsigma(x,y) = \varsigma^2(x,y) + \mathcal{O}(\eps_{\rm w}^m\eps^2).
\end{equation}
Continuing, we define
\begin{align*}
    \varsigma^3(x,y) := \int_{\{z\in\mathbb{R}^d: d(x,z) < \eps_{\rm n}^{2/3} \wedge d(y,z)< \eps_{\rm n}^{2/3}\} \cap\M} \exp\Big(-\frac{d(x,z)^2}{2\eps_{\rm w}^2}\Big) \exp\Big(-\frac{\|y-z\|^2}{2\eps_{\rm w}^2}\Big) \rho(z)\dd\mathcal{V}(z) \\
    \varsigma^4(x,y) := \int_{\{z\in\mathbb{R}^d: d(x,z) < \eps_{\rm n}^{2/3} \wedge d(y,z)< \eps_{\rm n}^{2/3}\} \cap\M} \exp\Big(-\frac{d(x,z)^2}{2\eps_{\rm w}^2}\Big) \exp\Big(-\frac{d(y,z)^2}{2\eps_{\rm w}^2}\Big) \rho(z)\dd\mathcal{V}(z).
\end{align*}
Note that $\varsigma^*(x,y) = \varsigma^4(x,y)$.
Then it can be readily checked from \eqref{eq:distancecompare} that $\varsigma^3(x,y) \leq \varsigma^2(x,y)$.
Moreover, for $\|x-z\|<\eps_{\rm n}^{2/3}$,
\begin{equation} \label{eq:twoGaussians}
    \exp\Big(-\frac{\|x-z\|^2}{2\eps_{\rm w}^2}\Big) 
    \leq \exp\Big(-\frac{d(x,z)^2}{2\eps_{\rm w}^2}\Big) \exp\Big(\frac{C\|x-z\|^4}{\eps_{\rm w}^2}\Big) \leq \exp\Big(-\frac{d(x,z)^2}{2\eps_{\rm w}^2}\Big)\Big(1+ C\eps_{\rm w}^{2/3}\eps^{8/3}\Big),
\end{equation}
where Taylor expansion has been applied to the last inequality, akin to the derivation of \eqref{eq:coreexpfactor2}.
Hence, from \eqref{eq:twoGaussians}, together with \eqref{omegamagext}, 
\begin{equation} \label{eq:var23}
    \varsigma^2(x,y) 
    \leq \varsigma^3(x,y) + C\eps_{\rm w}^m(\eps_{\rm w}^{2/3}\eps^{8/3}).
\end{equation}
By a similar argument, we can show that 
\begin{equation} \label{eq:var34}
    \varsigma^3(x,y) = \varsigma^4(x,y) + \mathcal{O}(\eps_{\rm w}^m(\eps_{\rm w}^{2/3}\eps^{8/3})).
\end{equation}
Combining \eqref{eq:var02}, \eqref{eq:var23}, \eqref{eq:var34}, we get
\begin{equation*}
    \varsigma(x,y) = \varsigma^4(x,y) + \mathcal{O}(\eps_{\rm w}^m\eps^2) = \varsigma^*(x,y) + \mathcal{O}(\eps_{\rm w}^m\eps^2),
\end{equation*}
which is what we want. \qed

\section{Proof of Lemma~\ref{lem:necessityTay}} \label{appx:necessityTay}

We first demonstrate \eqref{eq:alphaclaim}. 
Express $\varsigma^*_x$, via a change of variables using the exponential map ${\rm Exp}_x$ \eqref{expmap}, as
\begin{equation} \label{eq:varsigmaEu}
    \tilde{\varsigma}^*_{0}(v) 
    = \int_{\tilde{\mathcal{U}}_{0,v}} \exp\Big(-\frac{\|u\|^2}{2\eps_{\rm w}^2}\Big) \exp\Big(-\frac{\|u-v\|^2}{2\eps_{\rm w}^2}\Big) \tilde{\rho}(u) J_x(u) \dd u,
\end{equation}
where $\tilde{\mathcal{U}}_{0,v}:=\{u: \|u\| <\eps_{\rm n}^{2/3} \wedge \|u-v\|<\eps_{\rm n}^{2/3}\}\cap\M$, and ${\rm Exp}_x(v) = y$. 
For instance, $\tilde{\mathcal{U}}_{0,0}=B^m(0,\eps_{\rm n}^{2/3})$.
Then by an application of Taylor's expansion and a change in variables,
\begin{align}
    \nonumber \tilde{\varsigma}^*_0(0) &= 
    \int_{B^m(0,\eps_{\rm n}^{2/3})} \exp\Big(-\frac{\|u\|^2}{\eps_{\rm w}^2}\Big) \tilde{\rho}(u) J_x(u) \dd u \\
    \label{eq:varsigma0} &= \eps_{\rm w}^m \int_{B^m(0,\eps_{\rm w}^{-1/3}\eps^{2/3})} \exp (-\|u\|^2) (\tilde{\rho}(0) + \mathcal{O}(\eps_{\rm n}^{2/3}))(1+\mathcal{O}(\eps_{\rm n}^{2/3})) \dd u.
\end{align}
It is readily apparent that 
\begin{align}
    \nonumber \int_{B^m(0,\eps_{\rm w}^{-1/3}\eps^{2/3})} \exp (-\|u\|^2) \tilde{\rho}(0)  \dd u 
    &= \tilde{\rho}(0)\int_{\mathbb{R}^m} \exp (-\|u\|^2)  \dd u - \tilde{\rho}(0) \int_{\mathbb{R}^m\setminus B^m(0,\eps_{\rm w}^{-1/3}\eps^{2/3})} \exp (-\|u\|^2)  \dd u\\
    \label{eq:varsigmaalpha} &= \alpha \rho(x) + \mathcal{O}(\eps) 
\end{align}
due to the rapid Gaussian decay, once $\eps$ is sufficiently small. 
The claim \eqref{eq:alphaclaim} now follows from a combination of \eqref{eq:varsigma0}, \eqref{eq:varsigmaalpha}.

We demonstrate \eqref{eq:derivativeclaim} next.
For $v\in B^m(0,\eps_{\rm n}^{2/3})$, we write $v=\sum_{i=1}^d v^ie_{i}$, where $v^i\in\mathbb{R}$, and $e_{i}$ is the standard $i$th basis vector.
Let $\partial_{i}$ denote the partial derivative along the $i$th axis.  
By the Reynold's transport equation \cite[Appendix~C, Theorem~6]{evans2022partial}, the partial derivative $\partial_i\tilde{\varsigma}^*_0$ of $\tilde{\varsigma}^*_0$ \eqref{eq:varsigmaEu} at $v$ is
\begin{equation} \label{eq:firstderivativesaslims}
    \partial_{i}\tilde{\varsigma}^*_0(v) = \frac{d}{ds} \Big(\int_{\tilde{\mathcal{U}}_{0,v+se_i}} \exp\Big(-\frac{\|u\|^2}{2\eps_{\rm w}^2}\Big) \exp\Big(-\frac{\|u-v-se_i\|^2}{2\eps_{\rm w}^2}\Big) \tilde{\rho}(u) J_x(u) \dd u\Big)|_{s=0} = L_1 + L_2,
\end{equation}
where
\begin{equation} \label{eq:theLs}
    \begin{split}
        L_1 &:= \int_{\tilde{\mathcal{U}}_{0,v}} \frac{u^i-v^i}{\eps_{\rm w}^2}\exp\Big(-\frac{\|u-v\|^2}{2\eps_{\rm w}^2}\Big) \exp\Big(-\frac{\|u\|^2}{2\eps_{\rm w}^2}\Big) \tilde{\rho}(u) J_x(u) \dd u,\\
        L_2 &:= \int_{\partial \tilde{\mathcal{U}}_{0,v}} \exp\Big(-\frac{\|u\|^2}{2\eps_{\rm w}^2}\Big) \exp\Big(-\frac{\|u-v\|^2}{2\eps_{\rm w}^2}\Big) \tilde{\rho}(u) J_x(u) \langle e_i, \vec{n}\rangle \dd S.
    \end{split}  
\end{equation}
Here, $\vec{n}$ denotes the outward pointing unit normal, while $\partial \tilde{\mathcal{U}}_{0,v}$ denote the surface of $\tilde{\mathcal{U}}_{0,v}$. 
Moreover, it can be seen from \eqref{eq:theLs} that both $L_1$, $L_2$ are continuous in terms of $v$.
Thus, we conclude that $\nabla\tilde{\varsigma}^*_0(0)$ exists.
To show that
\begin{equation} \label{eq:vanishfirstderivative}
    \nabla \tilde{\varsigma}^*_0(0) = 0,
\end{equation}
we make an observation from \eqref{eq:varsigmaEu} that $\tilde{\varsigma}^*_0(v)\leq\tilde{\varsigma}^*_0(0)$, for any $v\in B^m(0,\eps_{\rm n}^{2/3})$.
Hence in particular $0$ is a global maximum of $\tilde{\varsigma}^*_0$ along the $i$th axis. 
Therefore, $\partial_{i}\tilde{\varsigma}^*_0(0)=0$, and \eqref{eq:vanishfirstderivative} must hold.

Continuing, we prove that
\begin{equation} \label{eq:boundedsecondderivative}
    |D^2\tilde{\varsigma}^*_0(0)| = \mathcal{O}(\eps_{\rm w}^m).
\end{equation}
From \eqref{eq:firstderivativesaslims}, \eqref{eq:theLs}, and the Gauss-Green theorem \cite[Appendix~C, Theorem~1]{evans2022partial}, 
\begin{multline} \label{eq:firstderivrecap}
    \partial_i\tilde{\varsigma}^*_0(v) 
    = \int_{\tilde{\mathcal{U}}_{0,v}} \frac{u^i-v^i}{\eps_{\rm w}^2}\exp\Big(-\frac{\|u-v\|^2}{2\eps_{\rm w}^2}\Big) \exp\Big(-\frac{\|u\|^2}{2\eps_{\rm w}^2}\Big) \tilde{\rho}(u) J_x(u) \dd u \\
    + \int_{\tilde{\mathcal{U}}_{0,v}} \partial_i\Big(\exp\Big(-\frac{\|u\|^2}{2\eps_{\rm w}^2}\Big) \exp\Big(-\frac{\|u-v\|^2}{2\eps_{\rm w}^2}\Big)\tilde{\rho}(u) J_x(u)\Big) \dd u.
\end{multline}
Then, with a small simplification, repeating the previous process with $v+te_j$ replacing $v$ in \eqref{eq:firstderivrecap}, we obtain
\begin{equation} \label{eq:2ndprimeform}
    \partial_j\partial_i\tilde{\varsigma}^*_0(v) 
    = \frac{d}{dt} \bigg(\int_{\tilde{\mathcal{U}}_{0,v+te_j}} \exp\Big(-\frac{\|u-v-te_j\|^2}{2\eps_{\rm w}^2}\Big) \partial_i\Big(\exp\Big(-\frac{\|u\|^2}{2\eps_{\rm w}^2}\Big) \tilde{\rho}(u) J_x(u)\Big) \dd u \bigg)|_{t=0}.
\end{equation}
The existence of $\partial_j\partial_i\tilde{\varsigma}^*_0(v)$ now follows from the Reynolds transport equation, and its continuous dependence on $v$ is straightforward. 
It remains to calculate individual $\partial_j\partial_i\tilde{\varsigma}^*_0(0)$, which is, from \eqref{eq:2ndprimeform},
\begin{equation} \label{eq:2ndprimeformsplit}
    \partial_j\partial_i\tilde{\varsigma}^*_0(0) = K_1+K_2+K_3
\end{equation}
where
\begin{align*}
    K_1 &:= \int_{B^m(0,\eps_{\rm n}^{2/3})} \frac{u^j}{\eps_{\rm w}^2} \frac{u^i}{\eps_{\rm w}^2} \exp\Big(-\frac{\|u\|^2}{\eps_{\rm w}^2}\Big) \tilde{\rho}(u) J_x(u) \dd u - \int_{B^m(0,\eps_{\rm n}^{2/3})} \frac{\delta_{ij}}{\eps_{\rm w}^2} \exp\Big(-\frac{\|u\|^2}{\eps_{\rm w}^2}\Big) \tilde{\rho}(u) J_x(u) \dd u \\
    K_2 &:= -\int_{B^m(0,\eps_{\rm n}^{2/3})} \frac{u^i}{\eps_{\rm w}^2} \exp\Big(-\frac{\|u\|^2}{\eps_{\rm w}^2}\Big) \partial_j\big(\tilde{\rho}(u) J_x(u)\big) \dd u 
\end{align*}
and
\begin{equation*}
    K_3 := \frac{d}{dt} \bigg(\int_{\tilde{\mathcal{U}}_{0,v+te_j}} \exp\Big(-\frac{\|u-v-te_j\|^2}{2\eps_{\rm w}^2}\Big) \exp\Big(-\frac{\|u\|^2}{2\eps_{\rm w}^2}\Big) \partial_i\big(\tilde{\rho}(u) J_x(u)\big) \dd u \bigg)|_{t=0}.
\end{equation*}
Using a nearly identical argument as in Appendix~\ref{appx:qdoubleprimebounded} to address $|D_{11}+V_1|$, particularly Lemma~\ref{lem:Gaussianrv}, we get
\begin{align*} 
    |K_1| &= \bigg|\int_{B^m(0,\eps_{\rm n}^{2/3})} \frac{u^i}{\eps_{\rm w}^2}\frac{u^j}{\eps_{\rm w}^2} \exp\Big(-\frac{\|u\|^2}{\eps_{\rm w}^2}\Big) \tilde{\rho}(u) J_x(u) \dd u - \int_{B^m(0,\eps_{\rm n}^{2/3})} \frac{\delta_{ij}}{\eps_{\rm w}^2}\exp\Big(-\frac{\|u\|^2}{\eps_{\rm w}^2}\Big) \tilde{\rho}(u) J_x(u) \dd u\bigg| \\
    &= \mathcal{O}(\eps_{\rm w}^m).
\end{align*}
Another similarly straightforward argument, combined with Taylor's expansion, also leads to
\begin{equation*} 
    |K_2| = \bigg|\int_{B^m(0,\eps_{\rm n}^{2/3})} \frac{u^i}{\eps_{\rm w}^2} \exp\Big(-\frac{\|u\|^2}{\eps_{\rm w}^2}\Big) \partial_j\big(\tilde{\rho}(u) J_x(u)\big) \dd u \bigg| = \mathcal{O}(\eps_{\rm w}^m).
\end{equation*}
Lastly, we observe that the expression for $K_3$ mirrors that in \eqref{eq:firstderivativesaslims}. 
Therefore, by the same reasoning used to derive the boundedness of the first derivative, we can express $K_3$ as a sum of two terms analogous to $L_1$, $L_2$, and, through Taylor's expansion, deduce that $|K_3|=\mathcal{O}(\eps_{\rm w}^m)$.
Then putting all the obtained facts that $|K_i|=\mathcal{O}(\eps_{\rm w}^m)$ in \eqref{eq:2ndprimeformsplit}, we arrive at \eqref{eq:boundedsecondderivative}, as wanted. \qed

\end{document}